\newcommand{\stoptocwriting}{%
	\addtocontents{toc}{\protect\setcounter{tocdepth}{-5}}}
\newcommand{\vertiii}[1]{{\left\vert\kern-0.25ex\left\vert\kern-0.25ex\left\vert #1
		\right\vert\kern-0.25ex\right\vert\kern-0.25ex\right\vert}}
\newcommand{\vect}[1]{\ensuremath{\mathbf{#1}}}
\newcommand{\x}{\vect{x}}
\renewcommand{\u}{\vect{u}}
\renewcommand{\v}{\vect{v}}
\newcommand{\mTheta}{\vect{\Theta}}
\newcommand{\bu}{\vect{u}}
\newcommand{\bv}{\vect{v}}
\newcommand{\X}{\vect{X}}
\newcommand{\Y}{\vect{Y}}
\newcommand{\bxi}{\bm{\xi}}
\newcommand{\mSigma}{\bm{\Sigma}}
\newcommand{\mDelta}{\bm{\Delta}}
\newcommand{\y}{\vect{y}}
\newcommand{\z}{\vect{z}}
\newcommand{\M}{{\vect{M}^{\star}}}
\newcommand{\I}{\vect{I}}
\newcommand{\R}{\vect{R}}
\newcommand{\mL}{\vect{L}}
\newcommand{\A}{\vect{A}}
\newcommand{\B}{\vect{B}}
\newcommand{\D}{\vect{D}}
\newcommand{\U}{\vect{U}}
\newcommand{\W}{\vect{W}}
\newcommand{\mH}{\vect{H}}
\newcommand{\V}{\vect{V}}
\newcommand{\Z}{\vect{Z}}
\newcommand{\mO}{\vect{O}}
\newcommand{\mtheta}{\bm{\theta}}
\newcommand{\vdelta}{\bm{\delta}}
\newcommand{\fro}{\mathrm{F}}
\newcommand*{\rom}[1]{%
\textup{\uppercase\expandafter{\romannumeral#1}}%
}
\newcommand{\barX}{\bar{\X}}
\newcommand{\zero}{\bm{0}}
\DeclareMathOperator{\col}{col}
\DeclareMathOperator{\dist}{dist}
\DeclareMathOperator{\poly}{poly}
\DeclareMathOperator{\polylog}{polylog}
\DeclareMathOperator{\vd}{d\!}
\newcommand{\bareg}{\bar\epsilon_{\mathrm{g}}} 
\newcommand{\bareH}{\bar\epsilon_{\mathrm{H}}} 
\newcommand{\bareM}{\bar\epsilon_{\mathcal{M}}}
\newcommand{\rank}{\mathrm{rank}}
\newcommand{\proj}{\cP}
\newcommand{\inner}[2]{\left\langle #1, #2 \right\rangle}
\newcommand{\bignorm}[1]{\left\lVert#1\right\rVert}
\newcommand{\norm}[1]{\left\lVert#1\right\rVert}
\DeclareMathOperator*{\argmin}{arg\,min}
\newcommand{\cA}{\mathcal{A}}
\newcommand{\cD}{\mathcal{D}}
\newcommand{\cM}{\mathcal{M}}
\newcommand{\cN}{\mathcal{N}}
\newcommand{\cO}{\mathcal{O}}
\newcommand{\cP}{\mathcal{P}}
\newcommand{\cR}{\mathcal{R}}
\newcommand{\cX}{\mathcal{X}}
\newcommand{\cY}{\mathcal{Y}}
\newcommand{\bB}{\mathbb{B}}
\newcommand{\bP}{\mathbb{P}}
\newcommand{\bR}{\mathbb{R}}
\newcommand{\XM}{\mathcal{X}_{\mathcal{M}}}
\global\long\def\xsharp{\boldsymbol{\x}^{\sharp}}
\global\long\def\xperp{\boldsymbol{\x}^{\perp}}
\newtheorem{theorem}{Theorem}
\newtheorem{definition}{Definition}
\newtheorem{lemma}{Lemma}
\newtheorem{proposition}{Proposition}
\newtheorem{example}{Example}
\newtheorem{assumption}{Assumption}
\begin{document}

	\title{Understanding the Implicit Regularization of Gradient Descent in Over-parameterized Models}
	
    \author[1]{Jianhao Ma}
\author[2]{Geyu Liang}
\author[3]{Salar Fattahi}

\affil[1]{University of Pennsylvania\\ \texttt{jianhaom@wharton.upenn.edu}}
\affil[2]{Amazon\\ \texttt{lianggy@umich.edu}}
\affil[3]{University of Michigan\\ \texttt{fattahi@umich.edu}}

	\maketitle
	\begin{abstract}
		 Implicit regularization refers to the tendency of local search algorithms to converge to low-dimensional solutions, even when such structures are not explicitly enforced. Despite its ubiquity, the mechanism underlying this behavior remains poorly understood, particularly in over-parameterized settings. We analyze gradient descent dynamics and identify three conditions under which it converges to second-order stationary points within an implicit low-dimensional region: (i) suitable initialization, (ii) efficient escape from saddle points, and (iii) sustained proximity to the region. We show that these can be achieved through \textit{infinitesimal perturbations} and a small \textit{deviation rate}. Building on this, we introduce \textit{Infinitesimally Perturbed Gradient Descent} (IPGD), which satisfies these conditions under mild assumptions. We provide theoretical guarantees for IPGD in over-parameterized matrix sensing and empirical evidence of its broader applicability.
	\end{abstract}

	\stoptocwriting

    \section{Introduction}
\label{sec::intro}
\textit{Over-parameterization}, where the sheer number of variables surpasses what is essential for identifying the optimal solution, is a ubiquitous phenomenon in contemporary optimization problems in machine learning (ML). In the face of over-parameterization, gradient descent (GD) or its variants may encounter significant statistical gaps as they become prone to \textit{overfitting}. However, recent research has postulated that these algorithms exhibit an \textit{implicit regularization} in a wide range of problems---spanning from sparse recovery problem~\cite{vaskevicius2019implicit, zhao2019implicit, ma2022blessing, woodworth2020kernel} to low-rank matrix and tensor optimization~\cite{li2018algorithmic, stoger2021small, soltanolkotabi2023implicit, wang2020beyond, tong2022scaling} to neural network training~\cite{belkin2019reconciling, yang2020rethinking, arora2018optimization, huh2021low, neyshabur2014search, neyshabur2017implicit}---favoring solutions with \textit{low dimensionality}, even when such low-dimensional structures are neither explicitly specified nor encoded in the problem.

More specifically, consider the general task of estimating unknown parameters $\mtheta$ of a model by minimizing a loss function $L:\mathbb{R}^n\to \mathbb{R}$:
\begin{align}\label{eq::loss1}
    \min_{\mtheta\in\mathbb{R}^n} \ L(\mtheta)\qquad \text{subject to}\qquad \mtheta\in \cD.
\end{align}
In the above optimization, $\cD\subseteq \mathbb{R}^n$ denotes the feasible region containing the underlying ground truth parameter $\mtheta^\star$ of the model, frequently presumed to possess a \textit{low dimension}, specifically $\dim(\cD)=k\ll n$. Many learning tasks are naturally endowed with such low-dimensional structures: Consider the following examples:
\begin{itemize}
    \item {\bf Sparse recovery:} In the sparse recovery problem, the goal is to obtain a sparse vector $\mtheta^\star\in \bR^n$ with at most $k\ll n$ nonzero entries that minimizes a loss function $L(\mtheta)$~\cite{mazumder2023subset, hazimeh2020fast}. In this case, the feasible region can be defined as $\cD = \{\mtheta\in \mathbb{R}^n: \mtheta_i=0,\ \text{if $\mtheta^\star_i=0$}\}$, which satisfies $\dim(\cD)=k$.
    \item {\bf Low-rank matrix/tensor recovery:} Another category of problems that reveal naturally low-dimensional structures is found in low-rank matrix recovery \cite{chi2019nonconvex}. A prominent example is the matrix sensing problem, where the goal is to recover an \( n \times n \) positive semi-definite (PSD) matrix of rank at most \( r \) from a limited number of random measurements. In this setting, the feasible region is $\cD=\{\mTheta\in \mathbb{R}^{n\times n}: \mTheta\succeq 0, \rank(\mTheta)\leq r \}$ with $\dim(\cD)=\cO(nr)$, where $\mTheta\succeq 0$ imposes PSD constraint. This setting can be naturally extended to low-rank tensor recovery, where the goal is to recover an order-$L$ tensor of rank $r$~\cite{cai2019nonconvex}. In this scenario, the feasible region $\cD$ can be defined as the set of order-$L$ tensors of rank $r$.
\end{itemize}

A critical challenge in solving \eqref{eq::loss1}, however, is that the feasible region $\cD$ \textit{is rarely known in practice}. For example, in sparse recovery, it is unreasonable to assume prior knowledge of the support of the true parameter $\mtheta^\star$, which is crucial in characterizing $\cD$. Similarly, in low-rank models, the true rank $r$ is rarely known \textit{a priori}. A common workaround is to introduce a regularizer that promotes the low-dimensional structure underlying $\cD$. However, this strategy requires tuning 
 regularization parameters (e.g., chosen via cross-validation~\cite{zhang1993model}), for which the statistically optimal values may still depend on the intrinsic dimension of $\cD$ or on the structure of $\mtheta^\star$~\cite{cawley2010over, raskutti2011minimax, negahban2011estimation}.

Alternatively, a recent widely practiced yet theoretically underexplored approach is to \textit{over-parameterize} the model, that is, to craft a sufficiently large region $\cD'$ that safely encompasses $\cD$. This is typically achieved by overestimating the dimension $\dim(\cD)=k$ with $k'\geq k$. However, directly solving $\min_{\mtheta \in \cD'} L(\mtheta)$ often leads to overfitting, as the optimal solution might be in $\cD' \setminus \cD$, resulting in poor statistical performance. To mitigate this issue, a common strategy is to introduce a {\it reparameterization map} $\varphi: \mathbb{R}^{d} \to \mathbb{R}^n$ whose range covers $\cD'$, i.e., $\cD' \subseteq \{ \varphi(\x): \x \in \mathbb{R}^d \}$. Therefore, the over-parameterization procedure transforms problem~\eqref{eq::loss1} into:
\begin{align}\label{eq:over-loss}
    \underbrace{\min_{\mtheta\in\cD}\ L(\mtheta)}_{\text{\bf Exactly-parameterized}} \stackrel{\text{\bf Over-parameterization}}{\implies} \underbrace{\min_{\mtheta\in\cD'}\ L(\mtheta) \quad \stackrel{\text{\bf Reparameterization}}{\implies}\quad \min_{\x\in\bR^{d}}\  f(\x) := L(\varphi(\x))}_{\text{\bf Over-parameterized}}.
\end{align}
Over-parameterized models have become prominent in sparse recovery, where the reparameterization map is often defined as $\varphi:\bR^{2n}\to \bR^n$ with $\varphi(\u,\v) = \u\odot\v$~\cite{zhao2019implicit} or $\varphi(\u,\v) = \u\odot\u-\v\odot\v$~\cite{vaskevicius2019implicit}. Here, the notation $\odot$ denotes the element-wise product. Additionally, over-parameterized models are widely used in low-rank matrix and tensor models when the true rank $r$ is unknown. In such scenarios, the true rank $r$ is often over-estimated with a {search rank} $r'\geq r$, and the reparameterization map is defined as $\varphi:\bR^{n\times r'}\to \bR^{n\times n}$ with $\varphi(\X)=\X\X^\top$ for PSD matrices~\cite{chi2019nonconvex}.

This over-parameterization approach offers two notable benefits. First, it converts the original constrained problem into an unconstrained one, making it amenable to a broad class of efficient gradient-based optimization methods. Second, its formulation does not require prior knowledge of $\cD$. At the same time, the approach introduces two main challenges. The first is that the composite function $f = L \circ \varphi$ is generally \textit{nonconvex}, even when the original problem~\eqref{eq::loss1} is convex. Consequently, gradient-based algorithms may become trapped in suboptimal solutions of $f$. The second challenge is statistical: the reparameterization trick does not, by construction, prevent overfitting. Even if a global minimum $\hat{\x}$ of $f$ is efficiently found, the corresponding parameter $\hat{\mtheta} = \varphi(\hat{\x})$ may lie outside the true feasible set when $\min_{\x\in \bR^{n}}f(\x)\leq \min_{\mtheta\in\cD'} L(\mtheta)<\min_{\mtheta\in\cD} L(\mtheta)$.

Despite these challenges, recent research has shown that, in a wide range of over-parameterized learning problems with proper reparameterization maps, gradient-based algorithms exhibit an \textit{implicit regularization} that steers them away from solutions outside $\cD$, favoring those in the low-dimensional feasible region $\cD$. For example, in sparse recovery, GD initialized at a small point and applied to the composite function $f(\u,\v) = L(\u\odot\u - \v\odot\v)$ converges to the true $k$-sparse parameter $\mtheta^\star$~\cite{vaskevicius2019implicit, woodworth2020kernel}, even though denser solutions may achieve comparable or even lower loss values. Remarkably, this guarantee holds even when the sparsity level $k$ is unknown. A related phenomenon arises under the reparameterization $\varphi(\u,\v) = \u \odot \v$~\cite{zhao2019implicit, ma2022blessing}. Analogously, in low-rank matrix recovery, GD with a small initialization applied to $f(\X) = L(\X\X^\top)$ converges to the true rank-$r$ matrix $\mTheta^\star$, even when $r$ is unknown and severely overestimated (up to $r' = n$)~\cite{li2018algorithmic, stoger2021small, ma2023global}. Extensions of these results have also been established for low-rank tensor recovery~\cite{anandkumar2017analyzing}, and various types of nonlinear neural networks~\cite{ma2023behind}.

Despite these intriguing and promising findings, the theoretical foundations of the implicit regularization induced by GD in over-parameterized settings remain largely underexplored. In this paper, we aim to address this gap by presenting unified conditions that explain the implicit regularization in GD, potentially extending its applicability to broader classes of problems.

\subsection{Our Goal}
Our objective is to establish general conditions under which GD converges to a second-order stationary point (SOSP) of $f$ within the set $\varphi^{-1}(\cD) = \{\x \in \mathbb{R}^d : \varphi(\x) \in \cD\}$. For brevity, we refer to $\varphi^{-1}(\cD)$ as the \textit{implicit region} and denote it by $\cM := \varphi^{-1}(\cD)$. Accordingly, an SOSP of $f$ within $\cM$ will be called an $\cM$-SOSP and is formally defined as:

\begin{equation}
    \text{$\x$ is $\cM$-SOSP} \qquad \iff \qquad \x\in \cM,\ \ \nabla f(\x)=0,\ \ \nabla^2f(\x)\succeq 0.
\end{equation}
Our interest in an $\cM$-SOSP stems from the key observation that, for generic functions, any $\cM$-SOSP corresponds to a local minimum of $f$ within $\cM$~\cite[Theorem~2.9]{davis2019active}. By construction of $\cM$, any solution $\x \in \cM$ maps to a parameter $\mtheta = \varphi(\x) \in \cD$, which, by definition of $\cD$, is low-dimensional and satisfies the feasibility constraints. In general, however, a local minimum $\x \in \cM$ of $f$ need not map to a local minimum $\varphi(\x) \in \cD$ of $L$; indeed, it is possible for a local minimum of $f$ to map under $\varphi$ to a non-stationary point of $L$~\cite[Example~3.7]{levin2025effect}. Nevertheless, \cite[Theorem~2.3]{levin2025effect} shows that local minima of $f$ do map to local minima of $L$ under the mild assumption that $\varphi$ is an open map. For the canonical reparameterizations used in sparse recovery and low-rank matrix recovery, the openness of $\varphi$ can be verified directly~\cite[Section~2]{levin2025effect}. Therefore, for these instances, any local minimum $\x\in \cM$ of $f$ maps to a local minimum $\varphi(\x)\in \cD$ of $L$. Motivated by this connection, our goal is to address the following question:

\begin{tcolorbox}
{\it Under what conditions on the implicit region $\cM$, the objective function $f$, 
and the initialization $\x_0$ does GD converge to an $\cM$-SOSP of $f$?}
\end{tcolorbox}
In this work, we identify three key conditions that enable the convergence of GD to an \( \cM \)-SOSP: (i) initialization sufficiently close to the implicit region \( \cM \); (ii) efficient escape from strict saddle points (SSPs); and (iii) preservation of proximity to \( \cM \) throughout the iterations.
Next, we explain these conditions and outline the main challenges associated with ensuring that they hold.

\subsection{Major Challenges}\label{subsec:failure}

Somewhat surprisingly, even if $\cD$ (and consequently $\cM$) are unknown \textit{a priori}, it is still possible to select an initialization $\x_0$ within the proximity of $\cM$ through an appropriate choice of $\varphi$. For instance, recall that in low-rank matrix recovery, the feasible region is the set of all PSD matrices with rank at most $r$, and the reparameterization is given by $\varphi(\X) = \X\X^\top$. In this case, the zero matrix belongs to $\cM = \varphi^{-1}(\cD)$ for every $r \leq n$. Similarly, in sparse recovery, the feasible region is the set of all vectors with sparsity level at most $k$, with reparameterizations $\varphi(\u,\v) = \u \odot \u - \v \odot \v$ or $\varphi(\u,\v) = \u \odot \v$. Here too, a pair of all-zero vectors lie in $\varphi^{-1}(\cD)$ for any $k\leq n$. As we will discuss later, analogous initializations can also be constructed for more general models with sparse basis function decompositions. Thus, initializing GD at these points satisfies the first condition.

Now, we turn to the second condition, that is, the efficient escape from SSPs. It has been shown that various gradient-based algorithms, such as \textit{perturbed gradient descent} (PGD)~\cite{ge2015escaping, jin2017escape} and \textit{stochastic gradient descent} (SGD)~\cite{fang2019sharp}, can escape SSPs and converge to an approximate SOSP as effectively as their second-order counterparts. To illustrate the underlying success mechanisms of these algorithms, let us consider PGD.
Starting from an initial point $\x_0$, PGD iteratively updates the solution using the rule $\x_{t+1} = \x_t - \eta\nabla f(\x_t) + \bm{\xi}_t$, where $\bm{\xi}_t$ represents an isotropic perturbation uniformly drawn from the Euclidean ball with radius $\gamma$ centered at zero when $\norm{\nabla f(\x_t)}$ is sufficiently small; otherwise, $\bm{\xi}_t$ is set to zero~\cite[Algorithm 1]{jin2017escape}. The reasoning behind this episodic perturbation is natural: when $\norm{\nabla f(\x_t)}$ is small, the iterate $\x_t$ is either near an SSP or an SOSP. Unlike SOSPs, SSPs are \textit{unstable} fixed points of GD. Consequently, an isotropic random perturbation of sufficient magnitude enables GD to escape from SSPs efficiently \cite[Theorem 3]{jin2017escape}. 

\begin{figure}
\begin{minipage}{.47\textwidth}
\begin{center}
\includegraphics[width=0.85\textwidth]{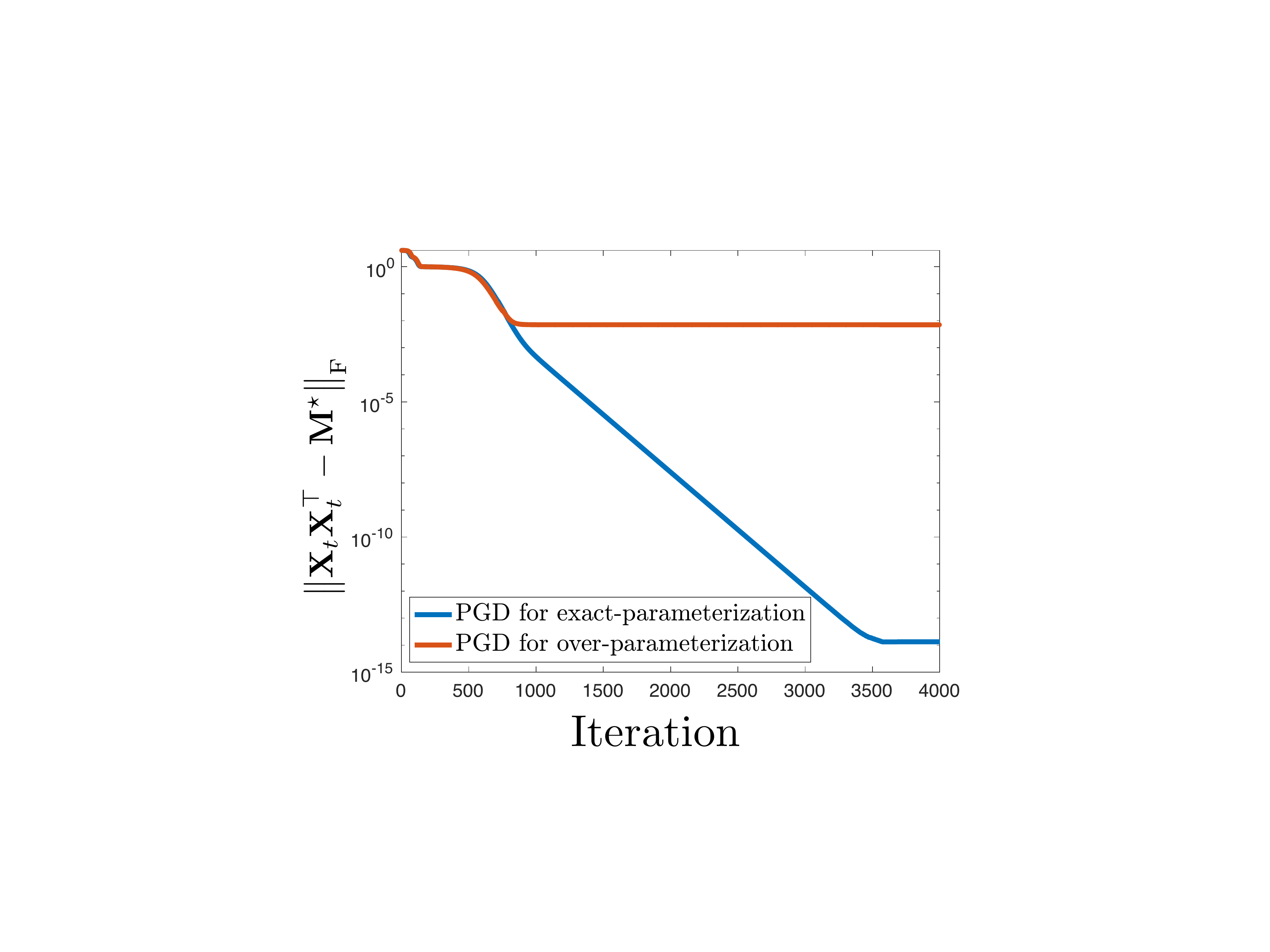}
\caption{PGD applied to matrix sensing converges under exact parameterization but fails to converge with over-parameterization. In this example, the true matrix $\mTheta^\star$ is a $20 \times 20$ PSD matrix with rank $r = 3$. The search rank is set to $r' = 3$ and $r'=4$ for exact- and over-parameterized settings.}\label{fig:comparison}
  \end{center}
\end{minipage}
\hfill 
\begin{minipage}{.48\textwidth}
\vspace{-0.3cm}
\includegraphics[width=\textwidth]{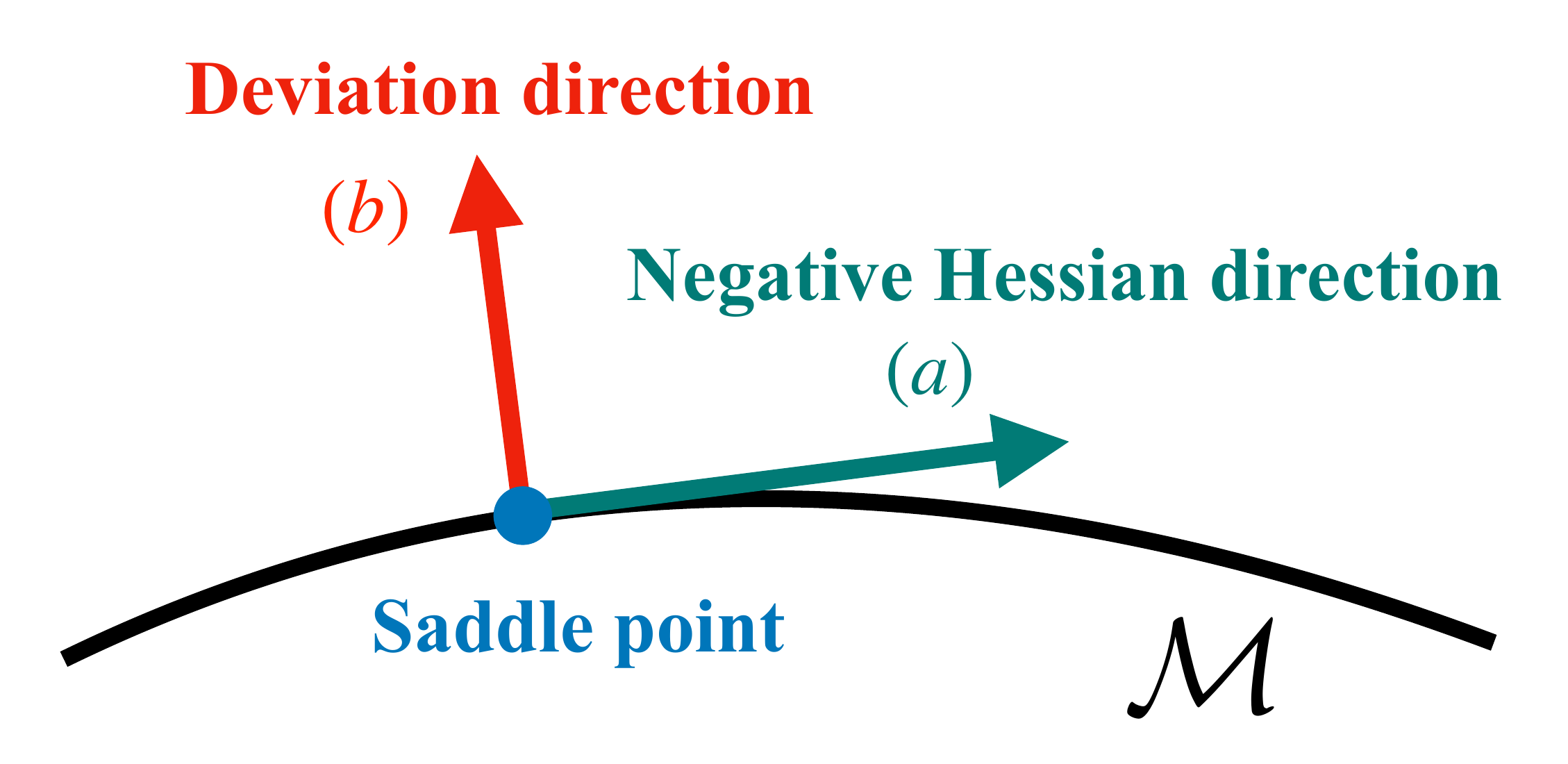}\vspace{0.7cm}
\caption{When perturbation is added to the gradient, it contributes to two directions: deviation direction, which steers the iterations away from the implicit region $\cM$, and the negative Hessian direction, which enables an escape from SSPs.}\label{fig:pgd}
\end{minipage}
\vspace{-0.15in}
\end{figure}

Unfortunately, in over-parameterized settings, guaranteed escape from SSPs may come at the cost of significant deviation from the implicit region $\cM$, thereby violating the third condition and preventing PGD from converging to an $\cM$-SOSP. In fact, even mild over-parameterization can disrupt the convergence of PGD. Figure~\ref{fig:comparison} illustrates this phenomenon on an instance of matrix sensing (introduced in \Cref{sec::intro}). When the rank $r$ of the target matrix is known, PGD initialized at random and applied to $f(\X) = L(\X\X^\top)$ with $\X \in \mathbb{R}^{n \times r}$ converges to the ground-truth rank-$r$ matrix. By contrast, even a slight over-parameterization with a search rank of $r' = r+1$ causes PGD to deviate significantly from $\cM$ and converge to a solution of rank strictly larger than $r$, thereby incurring substantial error.

 The failure of PGD to converge to an $\cM$-SOSP in the over-parameterized regime is precisely due to the impact of perturbations, as shown in Figure~\ref{fig:pgd}. When an isotropic random perturbation is applied, it inevitably affects two directions: a direction spanned by the eigenvectors corresponding to the negative eigenvalues of the Hessian, which pushes the iterate away from the SSP (shown as $(a)$), and another direction that causes the iterate to deviate away from the implicit region $\mathcal{M}$ (shown as $(b)$). When the perturbation is both isotropic and large, these directions become comparable, causing the PGD iterates to escape the saddle point but simultaneously drift away from $\mathcal{M}$, ultimately leading to the failure of the algorithm.

The central premise of this paper is that the second and third conditions can be met simultaneously: SSPs can be escaped efficiently via \textit{infinitesimal perturbations}, while closeness to $\cM$ is maintained by quantifying the \textit{deviation rate} of $f$ from $\cM$. We show that GD exhibits implicit regularization toward an $\cM$-SOSP when initialized near $\cM$ and equipped with both infinitesimal perturbations and a small deviation rate. Moreover, while the guaranteed escape from SSPs via infinitesimal perturbations is largely independent of $\cD$ and the specific choice of $\varphi$, ensuring a small deviation rate depends crucially on $\cD$ and the choice of $\varphi$. We establish that these conditions hold provably for the class of over-parameterized matrix sensing problems, and empirically for broader classes of problems.

\subsection{Overview of Our Contributions}
At the core of our framework is a \textit{signal–residual decomposition}: each iterate $\x_t$ is written as $\x_t = \xsharp_t + \xperp_t$, where $\xsharp_t$ is the \textit{signal}, obtained by projecting $\x_t$ onto the implicit region $\cM$, and $\xperp_t := \x_t - \xsharp_t$ is the \textit{residual}. Since $\xsharp_t \in \cM$, convergence to an approximate $\cM$-SOSP requires showing that (i) $\x_t$ approaches an SOSP and (ii) the residual norm $\|\xperp_t\|$ remains small.
 
\paragraph{Escaping saddle points efficiently.}
We show that large perturbations are \textit{unnecessary} for GD to escape SSPs. Specifically, we prove that, when GD encounters an SSP, a single perturbation of arbitrarily small radius $\gamma>0$, followed by $T=O(\polylog(1/\gamma))$ iterations of GD updates, enables the algorithm to escape from this SSP. This result is an improvement over the analysis of~\cite{jin2017escape}, revealing that the perturbation radius affects the convergence rate {\it only poly-logarithmically}, and hence, can be made arbitrarily small at the cost of only poly-logarithmic increase in the number of iterations (\Cref{thm::IPGD}). Crucially, such infinitesimal isotropic perturbations preserve a small residual norm, overcoming the main limitation of projected GD. We refer to this variant as \textit{Infinitesimally Perturbed Gradient Descent} (IPGD) (\Cref{alg::IPGD}).

\paragraph{Implicit regularization.}
Even with infinitesimal perturbations, the iterates may drift away from $\cM$ during the standard GD updates, causing the residual norm $\norm{\xperp_t}$ to grow. We analyze this behavior by tracking the dynamics of $\|\xperp_t\|$ and show that its evolution is governed by a quantity termed the \textit{deviation rate} (Definition~\ref{def::deviation_rate}). A negative deviation rate contracts the residual, while a positive one causes drift (\Cref{prop::residual_norm}). Combining this with the saddle-escaping property, we establish that IPGD converges to an $\epsilon$-neighborhood of an $\cM$-SOSP within $\tilde{\mathcal{O}}(1/\epsilon^2)$ iterations, under three conditions: (1) closure of $\cM$ under GD updates, (2) initialization near $\cM$, and (3) a sufficiently small deviation rate (\Cref{thm::IPGD-MSOSP}).

\paragraph{Improved convergence under additional structure.}
We show that, when the function satisfies the strict-saddle property within $\cM$, convergence to an $\cM$-SOSP implies convergence to a local minimum (\Cref{cor::IPGD-SSP}). Moreover, under a mild regularity condition, the convergence rate of IPGD improves to nearly linear (\Cref{thm::linear}). Unlike prior analyses that assume global structure, our results only require these properties to hold within the low-dimensional manifold $\cM$.

\paragraph{Over-parameterized matrix sensing.}
We illustrate our framework on a canonical problem: \textit{over-parameterized matrix sensing}. This problem serves as an ideal test-bed because, as discussed in \Cref{subsec:failure}, general-purpose saddle-escaping methods like PGD fail to reach the true solution, while existing GD analyses are often intricate and lengthy~\cite{stoger2021small,li2018algorithmic}. In contrast, our general conditions yield a concise and unified proof of IPGD’s near-linear convergence, greatly simplifying the analysis (\Cref{thm::global-convergence-matrix-sensing}).

\paragraph{Broader applications.}
Finally, we verify empirically that the assumptions ensuring IPGD convergence hold across various over-parameterized problems, including matrix completion (symmetric, asymmetric, and one-bit) and sparse recovery. Notably, we show that in practice, a perturbation radius of \(\gamma = 1 \times 10^{-15}\)---which is close to the round-off error in double precision---suffices to ensure the efficient convergence of IPGD. This underscores the insight that, while theoretically distinct, IPGD can be viewed as a practical implementation of GD under inexact arithmetic.

\subsection{Related Work}
\label{sec::related_work}

\paragraph{Saddle-avoiding algorithms.} 
Second-order methods such as cubic regularization~\cite{nesterov2006cubic}, trust-region algorithms~\cite{curtis2017trust}, and their efficient variants~\cite{carmon2018accelerated,agarwal2017finding,carmon2019gradient} are well known for efficiently escaping SSPs. However, their use in large-scale problems is limited by the high cost of computing or approximating Hessian information. Recent work shows that even first-order methods can escape SSPs under suitable conditions: GD with random initialization avoids them almost surely~\cite{lee2019first,panageas2019first}; under stronger geometric assumptions, such as global Hessian invertibility, GD can escape linearly~\cite{dixit2023exit,dixit2022boundary}, though such assumptions rarely hold in over-parameterized regimes.  

To overcome these limitations, several GD variants have been proposed. PGD escapes SSPs using episodic random perturbations, a property shared by its non-periodic version~\cite{jin2021nonconvex}, stochastic GD with \textit{dispersive noise}~\cite{fang2019sharp}, and earlier noisy or normalized variants~\cite{ge2015escaping,levy2016power}. Such additive noise appears essential, as plain GD may take exponential time to escape~\cite{du2017gradient}. More advanced techniques, including accelerated GD~\cite{jin2018accelerated} and adaptive methods~\cite{staib2019escaping}, achieve faster escape rates but can still deviate significantly from the implicit region in over-parameterized settings.

\begin{sloppypar}
    \paragraph{Implicit regularization of local-search algorithms} 
As discussed earlier, local-search algorithms in over-parameterized models often exhibit \textit{implicit regularization}, converging to low-dimensional solutions (e.g., sparse or low-rank) even without explicit penalties. For instance, training factorized matrix models via gradient or alternating updates typically yields low-rank solutions: \cite{gunasekar2017implicit} conjectured, and provided evidence, that GD on full-rank matrix factorizations converges to the minimum nuclear-norm solution, while \cite{arora2019implicit} showed that increasing the factorization depth amplifies this low-rank bias. Relatedly, \cite{li2022implicit} proved that continuous gradient flow under any commuting parametrization is equivalent to mirror descent with a Legendre function acting as a regularizer, formalizing implicit regularization in transformed spaces. Coordinate descent methods exhibit similar sparsity biases: in classification, coordinate-wise descent converges to the maximum $\ell_1$-margin (sparsest) separator, whereas standard GD yields the maximum $\ell_2$-margin solution~\cite{soudry2018implicit}.  

Despite these advances, existing analyses of implicit regularization remain instance-specific. To our knowledge, this work is the first to present a unified framework for studying the implicit regularization behavior of GD in general over-parameterized models.

\end{sloppypar}

\paragraph{Desirable landscape properties.}
Several landscape conditions have been identified that enable first-order methods to achieve strong convergence guarantees. A central one is the \textit{strict saddle property}, which ensures every saddle point has a direction of negative curvature, allowing algorithms such as PGD~\cite{jin2017escape} to escape SSPs and converge to local minima. Combined with \textit{benign nonconvexity}—the absence of suboptimal local minima—this guarantees global convergence for any method that avoids saddle points.
These properties hold in many problems, including matrix sensing and completion~\cite{bhojanapalli2016global,ge2017no}, robust PCA~\cite{ge2017no}, dictionary learning~\cite{sun2016complete1,sun2016complete2}, tensor decomposition~\cite{ge2015escaping}, phase retrieval~\cite{sun2018geometric}, and certain neural networks~\cite{brutzkus2017globally,du2017convolutional,soltanolkotabi2018theoretical}. More generally, the strict saddle property holds \textit{generically} in smooth optimization problems~\cite{davis2019active}: for a smooth $f$ on $\mathbb{R}^d$ and almost every perturbation $\bv$, the function $f_{\bv}(\x)=f(\x)-\langle \bv,\x\rangle$ satisfies this property. Many such problems also exhibit structural regularities, e.g., restricted strong convexity or $(\alpha,\beta)$-regularity, that further accelerate GD~\cite{chi2019nonconvex}.  

However, these assumptions can be restrictive in over-parameterized regimes, where redundant dimensions may hinder convergence. Instead, we focus on neighborhoods of the \textit{implicit solution region}, where over-parameterization effects vanish and favorable convergence behavior emerges.

\subsection{Notations}
For a twice-differentiable function $f: \mathbb{R}^d \to \mathbb{R}$, the gradient and Hessian at a point $\mathbf{x}$ are denoted by $\nabla f(\mathbf{x})$ and $\nabla^2 f(\mathbf{x})$, respectively. For a vector \(\mathbf{x}\), \(\|\mathbf{x}\|\) denotes its usual Euclidean norm. The Euclidean distance of a vector $\x \in \bR^d$ from a set $\cY \subseteq \bR^d$ is denoted by $\dist(\x, \cY) = \min_{\y \in \cY} \norm{\x - \y}$. The projection of a vector $\x$ onto a set $\cY$ and its orthogonal complement are shown as $\proj_{\cY}(\x) = \arg\min_{\y \in \cY} \norm{\x - \y}$ and $\proj^\perp_{\cY}(\x) = \x - \proj_{\cY}(\x)$, respectively\footnote{If the minimizer in $\arg\min_{\y \in \cY} \norm{\x - \y}$ is not unique, we select an arbitrary one.}. For a set $\cX \subset \bR^d$, its $\epsilon$-neighborhood is defined as $\cN_{\cX}(\epsilon) = \{\x \in \bR^d : \dist(\x, \cX) \leq \epsilon\}$. We also define $\bB_{\z}(\xi)$ as the Euclidean ball with radius $\xi$ centered at the point $\z$. For short, we use $\bB(\xi):=\bB_{\zero}(\xi)$. Finally, $\text{Unif}(\bB(\xi))$ denotes the uniform distribution over the Euclidean ball $\bB(\xi)$.

For a matrix $\X$, we use $\norm{\X}$, $\norm{\X}_*$ and $\norm{\X}_{\fro}$ to denote its operator norm, nuclear norm, and Frobenius norm, respectively. The identity matrix of size $d \times d$ is denoted as $\I_d$. The set of all $d_1 \times d_2$ orthonormal matrices with $d_1 \geq d_2$ is defined as $\cO_{d_1 \times d_2} := \{\mO \in \bR^{d_1 \times d_2} : \mO^\top \mO = \I_{d_2}\}$.
For two matrices $\X, \Y \in \bR^{d_1 \times d_2}$, their Procrustes distance is defined as $\dist(\X, \Y) = \min_{\mO \in \cO_{d_2 \times d_2}} \bignorm{\X - \Y\mO}_{\fro}$. 
For an arbitrary matrix $\X$, we denote $\col(\X)$ as its column space. Additionally, the projection matrix onto $\col(\X)$ is denoted as $\proj_{\X} = \X\X^{\dagger}$, where $\X^{\dagger}$ refers to the pseudo-inverse of $\X$.

The gradient of a scalar-valued function $f(\X)$ with a matrix variable $\X \in \mathbb{R}^{d \times d}$ is a $d \times d$ matrix, whose $(i, j)$-th entry is $[\nabla f(\X)]_{i, j}=\frac{\partial f(\X)}{\partial X_{i j}}$ for $i, j \in[d]$. Alternatively, we can view the gradient as a linear operator satisfying $\nabla f(\X)[\Z]=\langle\nabla f(\X), \Z\rangle=\sum_{i, j} \frac{\partial f(\X)}{\partial X_{i j}} Z_{i j}$ for any $\Z \in \mathbb{R}^{d \times d}$. Similarly, the Hessian and the third-order derivative of $f(\X)$ can be viewed as multi-linear operators defined by
\begin{equation}
    \begin{aligned}
        \nabla^2 f(\X)[\Z, \W]&=\sum_{i, j, k, l} \frac{\partial^2 f(\X)}{\partial X_{i j} \partial X_{k l}} Z_{i j} W_{k l}, \ \text{for any }\Z, \W \in \mathbb{R}^{d \times d};\\
        \nabla^3 f(\X)[\Z, \W, \U]&=\sum_{i, j, k, l, m, n} \frac{\partial^3 f(\X)}{\partial X_{i j} \partial X_{k l}\partial X_{m n}} Z_{i j} W_{k l}U_{m n}, \ \text{for any }\Z, \W, \U \in \mathbb{R}^{d \times d}.
    \end{aligned}
\end{equation}

 Given $f(n)$ and $g(n)$, we write $f(n)=O(g(n))$ when there exists a universal constant $C>0$ satisfying $f(n) \leq Cg(n)$ for all large enough $n$. Similarly, the notation $f(n)=\Omega(g(n))$ implies the existence of a universal constant $C>0$ satisfying $f(n) \geq Cg(n)$ for all large enough $n$. Lastly, we use $f(n) = \Theta(g(n))$ if $f(n) = O(g(n))$ and $f(n)= \Omega(g(n))$. Finally, we use the symbols $\tilde{O}, \tilde{\Omega}, \tilde{\Theta}$ to hide poly-logarithmic factors.

\section{Main Results}
\label{sec::main-result}
First, we present our assumptions on the gradient- and Hessian-Lipschitz continuity for the function \( f \), which we require to hold only within a neighborhood of the implicit region \( \mathcal{M} \).

\begin{definition}[$(L, \cM, \tau)$-gradient-Lipschitz]
	\label{def::l-smooth}
	The function $f$ is {\bf $\bm{(L, \cM, \tau)}$-gradient-Lipschitz} if for all $\x,\x'\in \cN_{\cM}(\tau)$, we have $\norm{\nabla f(\x)-\nabla f(\x')}\leq L\norm{\x-\x'}$.
\end{definition}
\begin{definition}
	[$(\rho, \cM, \tau)$-Hessian-Lipschitz]
	\label{def::rho-hessian}
	The function $f$ is {\bf $\bm{(\rho, \cM, \tau)}$-Hessian-Lipschitz} if for all $\x,\x'\in \cN_{\cM}(\tau)$, we have $\norm{\nabla^2 f(\x)-\nabla^2 f(\x')}\leq \rho\norm{\x-\x'}$.
\end{definition}

Unlike the classical definitions of gradient- and Hessian-Lipschitz functions, which require smoothness conditions to hold globally, Definitions~\ref{def::l-smooth} and~\ref{def::rho-hessian} impose these conditions only within a $\tau$-neighborhood of the implicit region $\cM$. These localized gradient- and Hessian-Lipschitz conditions are advantageous when the smoothness constants scale with the dimensionality of the problem. In such cases, the global smoothness constants scale with the ambient dimension $d$, while their localized counterparts depend only on $\dim(\cM) = \dim(\varphi^{-1}(\cD))$ which, depending on $\dim(D)$ and the choice of $\varphi$, can be significantly smaller than $d$. 

Recall that our goal is to identify conditions guaranteeing the efficient convergence to $\cM$-SOSPs. Unfortunately, practical algorithms can only compute an \textit{approximate} SOSP, where the stationarity conditions are satisfied within some small tolerance, as formally defined below.

\begin{definition}[approximate ($\cM$-)SOSP]\label{def:M-SOSP}
Suppose that $f$ is $(L, \cM,\tau)$-gradient-Lipschitz and $(\rho, \cM,\tau)$-Hessian-Lipschitz. 
	Given a tolerance parameter $\epsilon>0$, a point $\x\in \cN_{\cM}(\tau)$ is called $\bm{\epsilon}${\bf -SOSP} if $\norm{\nabla f(\x)}\leq \epsilon$ and $\nabla^2 f(\x)\succeq -\sqrt{\rho\epsilon}\ \I$. Additionally, $\x$ is called $\bm{\epsilon}${\bf -}$\bm{\cM}${\bf -SOSP} if $\x$ is $\epsilon$-SOSP and $\x\in \cN_{\cM}(\epsilon/L)$ where $\epsilon/L\leq \tau$.
\end{definition}

Closely related is the notion of {\it approximate} {strict saddle points} (SSPs), defined as follows:

\begin{definition}[Approximate SSP]
	Given a parameter $\epsilon>0$, a point $\x$ is called $\bm{\epsilon}${\bf-SSP} if $\norm{\nabla f(\x)}\leq \epsilon$ and $\lambda_{\min}\left(\nabla^2 f(\x)\right)< -\sqrt{\rho\epsilon}$, where $\lambda_{\min}(\cdot)$ denotes the minimum eigenvalue. 
\end{definition}
We say ``$\x$ is an {approximate} SOSP'' or ``$\x$ is an {approximate} SSP'' if $\x$ is an $\epsilon$-SOSP or $\epsilon$-SSP with a sufficiently small parameter $\epsilon>0$. An approximate $\cM$-SOSP is defined analogously.
Let $\cX(\epsilon)$ and $\XM(\epsilon)$ respectively denote the set of all $\epsilon$-SOSPs and $\epsilon$-$\cM$-SOSPs of $f$. For simplicity and whenever it is clear from the context, we use $\cX$ and $\XM$ to denote $\cX(0)$ and $\XM(0)$, respectively.
Our goal is to recover a point from $\XM(\epsilon)$ for sufficiently small tolerance $\epsilon>0$. By definition, these points are approximate SOSPs and lie within the proximity of the implicit region $\cM$. 

As previously highlighted, the concept of a {\it signal-residual decomposition} is central to our analysis. At each iteration $t$, the current point $\x_t$ is decomposed as $\x_t = \xsharp_t + \xperp_t$, where $\xsharp_t$ represents the signal, defined as $\xsharp_t = \cP_{\cM}(\x_t)$, and $\xperp_t$ denotes the residual, defined as $\xperp_t = \cP_{\cM}^\perp(\x_t) := \x_t - \xsharp_t$. This decomposition immediately implies that $\x_t$ is an $\epsilon$-$\cM$-SOSP if and only if $\x_t$ is an $\epsilon$-SOSP and that $\norm{\xperp_t}\leq \epsilon/L$. Therefore, to ensure efficient convergence to an approximate $\cM$-SOSP, it suffices to: (1) ensure that the iterates efficiently converge to an approximate SOSP; and (2) maintain a small residual norm for the iterates. The first property will be analyzed in Section~\ref{subsec::SOSP}, while the second will be addressed in Section~\ref{subsec::residual}.

\subsection{Efficient Convergence to Approximate SOSPs}\label{subsec::SOSP}
We first formalize the intuition behind the PGD algorithm discussed in Section~\ref{subsec:failure} by stating its convergence from~\cite{jin2017escape}. 

\begin{theorem}[Theorem 3 of~\cite{jin2017escape}; informal]\label{prop_PGD}
	Suppose that $f$ is ${(L, \cM, +\infty)}$-gradient-Lipschitz and ${(\rho, \cM, +\infty)}$-Hessian-Lipschitz with some parameters $L,\rho>0$. Then, with an overwhelming probability, the PGD algorithm with perturbation radius $\gamma = \tilde{\Theta}({\epsilon}/L)$ outputs an $\epsilon$-SOSP within $T = \tilde{O}\left(\frac{\Delta_f}{\eta\epsilon^{2}}\right)$ iterations. Here, $\Delta_f=f(\x_0)-\min_{\x\in \bR^d} f(\x)$. 
\end{theorem}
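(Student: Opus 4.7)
The plan is to follow the structure of the original PGD analysis, which proceeds by a dichotomy on the current iterate's local geometry together with an amortized descent argument. At every iteration $t$, either $\|\nabla f(\x_t)\| > \epsilon$ (the ``large gradient'' regime) or $\|\nabla f(\x_t)\| \le \epsilon$ (the ``small gradient'' regime). In the first case, the standard descent lemma for $L$-gradient-Lipschitz functions, together with the choice $\eta = \Theta(1/L)$, yields a guaranteed decrease $f(\x_{t+1}) - f(\x_t) \le -\tfrac{\eta}{2}\|\nabla f(\x_t)\|^2 \le -\tfrac{\eta\epsilon^2}{2}$. Since $f$ is bounded below, this regime can occur at most $\tilde O(\Delta_f/(\eta\epsilon^2))$ times, which already accounts for the iteration complexity advertised in the theorem.

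The work is therefore in the small-gradient regime, where we must show that either $\x_t$ is already an $\epsilon$-SOSP or PGD escapes and again makes substantial progress. If $\lambda_{\min}(\nabla^2 f(\x_t)) \ge -\sqrt{\rho\epsilon}$, we declare $\x_t$ an $\epsilon$-SOSP and we are done. Otherwise $\x_t$ is an $\epsilon$-strict saddle; PGD injects a perturbation $\bxi_t \sim \mathrm{Unif}(\bB(\gamma))$ with $\gamma = \tilde\Theta(\epsilon/L)$ and then runs $\mathcal{T} = \tilde\Theta(1/(\eta\sqrt{\rho\epsilon}))$ consecutive GD steps. The main claim to establish is the \emph{escape lemma}: with probability at least $1-\delta$ over the choice of $\bxi_t$, after these $\mathcal{T}$ steps the function value has decreased by at least $\mathcal{F} = \tilde\Omega(\sqrt{\epsilon^3/\rho})$. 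Combined with the descent-lemma argument, at most $\tilde O(\Delta_f/\mathcal{F}) + \tilde O(\Delta_f/(\eta\epsilon^2)) = \tilde O(\Delta_f/(\eta\epsilon^2))$ iterations suffice, and a union bound over the $\tilde O(\Delta_f/(\eta\epsilon^2))$ perturbation events (each failing with probability $\delta$) controls the overall failure probability.

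To prove the escape lemma, I would rely on two tools. First, an ``improve-or-localize'' inequality: telescoping the descent lemma gives, for any window of $\mathcal{T}$ GD steps, $\sum_{s<\mathcal{T}} \|\x_{t+s+1}-\x_{t+s}\|^2 \le 2\eta\bigl(f(\x_t)-f(\x_{t+\mathcal{T}})\bigr)$, so the trajectory cannot drift far unless it decreases $f$. Second, a \emph{coupling argument} along the minimum-eigenvector direction $\bv$ of $\nabla^2 f(\x_t)$: consider two perturbations that differ only by a small shift along $\bv$; then a quadratic Taylor expansion of $f$ around $\x_t$, controlled by the $(\rho,\cM,\tau)$-Hessian-Lipschitz property, shows that the gap between the two coupled trajectories grows geometrically at rate $(1+\eta\sqrt{\rho\epsilon})^{\mathcal{T}}$. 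Choosing $\mathcal{T}$ so this gap exceeds the localization radius forces at least one of the two trajectories to achieve the required function decrease. Consequently, the ``stuck region'' of initial perturbations that fail to escape has thickness at most $\tilde O(\delta\gamma/\sqrt{d})$ along $\bv$, which translates to probability mass at most $\delta$ under the uniform distribution on $\bB(\gamma)$.

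The main obstacle is the escape lemma, and within it, the delicate tracking of second- and third-order Taylor errors required to make the coupling argument rigorous while the iterates move over a region of radius $\tilde O(\sqrt{\epsilon/\rho})$. This is where the choices $\gamma = \tilde\Theta(\epsilon/L)$ and $\mathcal{T} = \tilde\Theta(1/(\eta\sqrt{\rho\epsilon}))$ are calibrated: $\gamma$ must be small enough that the quadratic approximation remains valid, yet large enough to place the iterate outside the stuck region with high probability, while $\mathcal{T}$ must be large enough for the instability of the strict-saddle to amplify the perturbation but small enough that the iteration count remains $\tilde O(\Delta_f/(\eta\epsilon^2))$. Once these constants are tuned, combining the two regimes and applying the union bound yields the stated $\tilde O(\Delta_f/(\eta\epsilon^2))$ complexity and high-probability guarantee.
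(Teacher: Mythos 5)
Your proposal accurately reconstructs the argument behind this result, which the paper cites from \cite{jin2017escape} without re-proving: the large/small-gradient dichotomy with a descent lemma, the improve-or-localize bound, the coupling argument along the minimum-eigenvector direction, and the thin-slab estimate on the stuck region. This is precisely the machinery the paper then adapts in its own proof of the strengthened \Cref{thm::IPGD} (via \Cref{lem::improve-or-localize}, \Cref{lem::two-escape-one}, \Cref{lem::stuck-prob}, and \Cref{prop::escaping-saddle}), so your approach matches both the original source and the paper's related analysis.
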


If PGD encounters $N$ approximate SSPs along its trajectory, it applies $N$ isotropic perturbations, each of radius $\gamma$. A single perturbation around an SSP $\x$ can increase the residual norm by $O(\gamma)$, so the cumulative increase in $\|\xperp_T\|$ can reach $O(\gamma N)$. As shown in~\cite{jin2017escape} and in our analysis, $N$ can be as large as $O\left(\epsilon^{-3/2}\right)$. Hence, with $\gamma = \tilde{\Theta}(\epsilon/L)$ as required by \Cref{prop_PGD}, the final residual norm scales as $\tilde{O}(\epsilon^{-3/2} \cdot \epsilon/L) = \tilde{O}(\epsilon^{-1/2}/L) \gg 1$. In other words, {\it ensuring the convergence of PGD to an approximate SOSP may inevitably lead to the divergence of the residual norm in the over-parameterized regime.}

To address this challenge, we prove that PGD can efficiently escape approximate SSPs using a significantly smaller perturbation radius. The detailed implementation of this variant of PGD, which we call \textit{infinitesimally-perturbed gradient descent} (IPGD), can be found in \Cref{alg::IPGD}. 

\begin{algorithm}[H]
    \caption{Infinitesimally-perturbed gradient descent (IPGD)}\label{alg::IPGD}
	\begin{algorithmic}[1]
		\Statex \textbf{Input:} Perturbation radius $\gamma$, tolerance parameter $\epsilon$,  stepsize $\eta$, initial point $\x_0$, Hessian-Lipschitz parameter $\rho$, optimality gap $\Delta_f$, and failure probability $\chi$.
            \State $F\leftarrow \frac{1}{C}\!\cdot\!\frac{\epsilon^{3/2}}{\sqrt{\rho}\left(\log^{3}\left(\frac{1}{\gamma}\right)+\log^3\left(\frac{\rho d\Delta_f}{\chi\epsilon}\right)\right)}$, $G\leftarrow \frac{1}{C}\!\cdot\! \frac{\epsilon}{\log^{2}(1/\gamma)}$, $T_{\mathrm{escape}}\leftarrow \frac{C}{\eta\sqrt{\rho\epsilon}}\left(\log\left(\frac{1}{\gamma}\right)+\log\left(\frac{\rho d\Delta_f}{\chi\epsilon}\right)\right)$, for a sufficiently large constant $C>0$
		\State $T_{\mathrm{noise}}\leftarrow -T_{\mathrm{escape}}-1$
		\For {$t=0, 1, \ldots$}
		\If {$\bignorm{\nabla f(\x_t)}\leq G$ and $t-T_{\mathrm{noise}}>T_{\mathrm{escape}}$}
            \State $\z_t\leftarrow \x_t$
		\State $\x_{t}\leftarrow \x_{t} + \bxi_t$ where $\bxi_t\sim \text{Unif}(\bB(\gamma)), \quad T_{\mathrm{noise}}\leftarrow t$
		\EndIf
        \If {$t-T_{\mathrm{noise}} = T_{\mathrm{escape}}$ and $f(\x_t)-f(\z_{T_{\mathrm{noise}}})\geq -F$}
            \State \textbf{Return} $\z_{T_{\mathrm{noise}}}$
        \EndIf
		\State $\x_{t+1}\leftarrow \x_t-\eta \nabla f\left(\x_t\right)$
		\EndFor
	\end{algorithmic}
\end{algorithm}

Our first theorem demonstrates that IPGD achieves nearly the same convergence guarantee as PGD (differing only by a logarithmic factor), while requiring a smaller perturbation radius.
\begin{sloppypar}
	\begin{theorem}[Convergence of IPGD to an approximate SOSP]\label{thm::IPGD}
		Suppose that $f$ is ${(L, \cM, +\infty)}$-gradient-Lipschitz and ${(\rho, \cM, +\infty)}$-Hessian-Lipschitz. Then, with probability of at least $1-\chi$, IPGD (\Cref{alg::IPGD}) with any perturbation radius satisfying $\gamma = O\left(\min\left\{\sqrt{\frac{\epsilon}{\rho}}\log^{-4}\left(\frac{\rho d\Delta_f}{\chi\epsilon}\right), \frac{\epsilon}{L}\right\}\right)$ and stepsize $\eta\leq 1/L$ outputs an $\epsilon$-SOSP within $T = O\left(\frac{\Delta_f}{\eta\epsilon^2}\left(\log^4\left(\frac{1}{\gamma}\right)+\log^4\left(\frac{\rho d\Delta_f}{\chi\epsilon}\right)\right)\right)$ iterations. Here, $\Delta_f=f(\x_0)-\min_{\x\in \bR^d} f(\x)$. 
	\end{theorem}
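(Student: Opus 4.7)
The plan is to adapt the perturbed-gradient-descent argument of \cite{jin2017escape} while tracking the dependence on the perturbation radius $\gamma$ explicitly. The key observation enabling an arbitrarily small $\gamma$ is that the time required for a single perturbed strict saddle to escape --- via an eigenvector-coupling argument --- scales only as $\log(1/\gamma)/(\eta\sqrt{\rho\epsilon})$, i.e.\ polylogarithmically in $1/\gamma$. Once this sharper escape lemma is in hand, a standard potential-function argument turns it into the stated iteration complexity, with the $\log^4$ factor arising as $(\Delta_f/F)\cdot T_{\mathrm{escape}}\sim \log^3\cdot\log$.

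The argument splits into three pieces. \textbf{(i) Large-gradient descent.} When $\|\nabla f(\x_t)\|>G$, the descent lemma for $L$-smooth $f$ and stepsize $\eta\le 1/L$ gives $f(\x_{t+1})-f(\x_t)\le -\eta G^2/2$, so there are at most $2\Delta_f/(\eta G^2)$ such steps. \textbf{(ii) Escape from strict saddles.} Suppose $\|\nabla f(\x_t)\|\le G$ and $\lambda_{\min}(\nabla^2 f(\x_t))\le -\sqrt{\rho\epsilon}$, and the algorithm adds $\bxi_t\sim\mathrm{Unif}(\bB(\gamma))$. I would prove that with probability at least $1-\chi'$, vanilla GD from $\x_t+\bxi_t$ decreases $f$ by at least $F$ within $T_{\mathrm{escape}}$ steps. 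The proof uses the two-point coupling of \cite{jin2017escape}: compare the trajectory from $\bxi$ to that from $\bxi-r_0\e_1$, where $\e_1$ is the minimum-eigenvalue direction of $\nabla^2 f(\x_t)$. If both trajectories remain inside $\bB_{\x_t}(S)$ for all $T_{\mathrm{escape}}$ iterations, a Gronwall-type bound on the linearized dynamics shows their $\e_1$-separation grows as $(1+\eta\sqrt{\rho\epsilon})^t$, which contradicts localization whenever $r_0\ge 2S(1+\eta\sqrt{\rho\epsilon})^{-T_{\mathrm{escape}}}$. The Lebesgue measure of the resulting ``stuck slab'' inside $\bB(\gamma)$ is at most $O(r_0\sqrt{d}/\gamma)$, so requiring this to be $\le\chi'$ yields
\begin{equation*}
T_{\mathrm{escape}} \;=\; O\!\left(\frac{\log(1/\gamma)+\log(d/\chi')+\log(1/S)}{\eta\sqrt{\rho\epsilon}}\right),
\end{equation*}
matching the setting in Algorithm~\ref{alg::IPGD}. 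The improve-or-localize lemma then converts the $S$-displacement of the escaped trajectory into a function-value decrease of $\Omega(\rho S^3/T_{\mathrm{escape}})$; choosing $S=\Theta(\sqrt{\epsilon/\rho}/\log^2(1/\gamma))$ gives $F=\Omega(\epsilon^{3/2}/(\sqrt{\rho}\log^3(1/\gamma)))$, as required. \textbf{(iii) Counting.} The guard ``return $\z_{T_{\mathrm{noise}}}$ if $f(\x_t)-f(\z_{T_{\mathrm{noise}}})\ge -F$'' in Algorithm~\ref{alg::IPGD} ensures that any returned point already has $\|\nabla f\|\le G\le\epsilon$ and, by the contrapositive of the escape lemma, $\nabla^2 f\succeq -\sqrt{\rho\epsilon}\,\I$, hence is an $\epsilon$-SOSP. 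Each successful escape costs $T_{\mathrm{escape}}$ iterations while reducing $f$ by $F$, so the total iteration budget is at most $(\Delta_f/F)\cdot T_{\mathrm{escape}} + 2\Delta_f/(\eta G^2)$, which after substituting the parameters evaluates to $O(\Delta_f\,\eta^{-1}\epsilon^{-2}(\log^4(1/\gamma)+\log^4(\rho d\Delta_f/(\chi\epsilon))))$. A union bound over the at most $\Delta_f/F$ escape attempts with $\chi'=\chi F/\Delta_f$ gives overall success probability $\ge 1-\chi$.

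The main obstacle is the careful bookkeeping that separates $\log(1/\gamma)$ from the other polylog factors. In \cite{jin2017escape}, $\gamma$ is set polynomially in $\epsilon/L$, so every logarithm is absorbed into $\log(\mathrm{poly})$ without explicit tracking; here the coupling and improve-or-localize estimates must be re-derived with $\gamma$ treated as a free parameter, and the localization radius $S=\Theta(\sqrt{\epsilon/\rho}/\log^2(1/\gamma))$ must be verified to keep the quadratic Taylor remainder $\rho S^2$ uniformly dominated by the leading linear growth $\sqrt{\rho\epsilon}$ across all $T_{\mathrm{escape}}$ steps. A secondary subtlety is that the Hessian used in the coupling is frozen at $\x_t$, so one must show it approximates $\nabla^2 f(\x_\tau)$ well throughout the escape window; this is precisely where the global $(L,\cM,+\infty)$- and $(\rho,\cM,+\infty)$-Lipschitz hypotheses of the theorem are used, and it is the main reason the theorem states them with $\tau=+\infty$.
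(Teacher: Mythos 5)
Your proposal follows the paper's proof essentially step for step: both split the iteration budget into large-gradient steps (controlled by the descent lemma) and perturbation episodes (controlled by the escape energy $F$), both establish the escape via the Jin et al.\ two-point coupling showing the stuck slab inside $\bB(\gamma)$ has width $O(r_0)$ in the minimum-curvature direction, and both convert the localization radius into a function decrease via improve-or-localize, arriving at the same parameter scalings $T_{\mathrm{escape}}=\Theta\!\left(\tfrac{\log(1/\gamma)}{\eta\sqrt{\rho\epsilon}}\right)$ and $F=\Theta\!\left(\tfrac{\epsilon^{3/2}}{\sqrt{\rho}\log^3(1/\gamma)}\right)$ and the same union bound over escape attempts. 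Two small arithmetic slips worth noting: the improve-or-localize lemma $\norm{\x_T-\x_0}\le\sqrt{2\eta T(f(\x_0)-f(\x_T))}$ yields a function decrease of order $S^2/(\eta T_{\mathrm{escape}})$, not $\rho S^3/T_{\mathrm{escape}}$; and to make this equal the stated $F$ with your $T_{\mathrm{escape}}$, the localization radius must be $S=\Theta\!\left(\sqrt{\epsilon/\rho}/\log(1/\gamma)\right)$ (a single power of the logarithm, matching the paper's $\phi$), not $\log^{-2}(1/\gamma)$---with the $\log^{-2}$ factor you wrote, the decrease would come out as $\epsilon^{3/2}/(\sqrt{\rho}\log^5(1/\gamma))$, which is too small. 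Once these are corrected, your counting argument and the claimed $O\!\left(\tfrac{\Delta_f}{\eta\epsilon^2}\log^4\right)$ iteration bound go through exactly as the paper establishes them.
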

\end{sloppypar}
Unlike \Cref{prop_PGD}, our result demonstrates that IPGD can efficiently converge to an approximate SOSP using a perturbation radius that is significantly smaller than $\tilde{\Theta}(\epsilon/L)$, which is required in \Cref{prop_PGD}. This improvement comes at the cost of increasing the number of iterations only by a factor of $\log^4(1/\gamma)$. This implies that the total increase in the residual norm caused by perturbations can be kept small, without significantly impacting the convergence rate.

\begin{figure}
    \centering
    \includegraphics[width=0.65\textwidth]{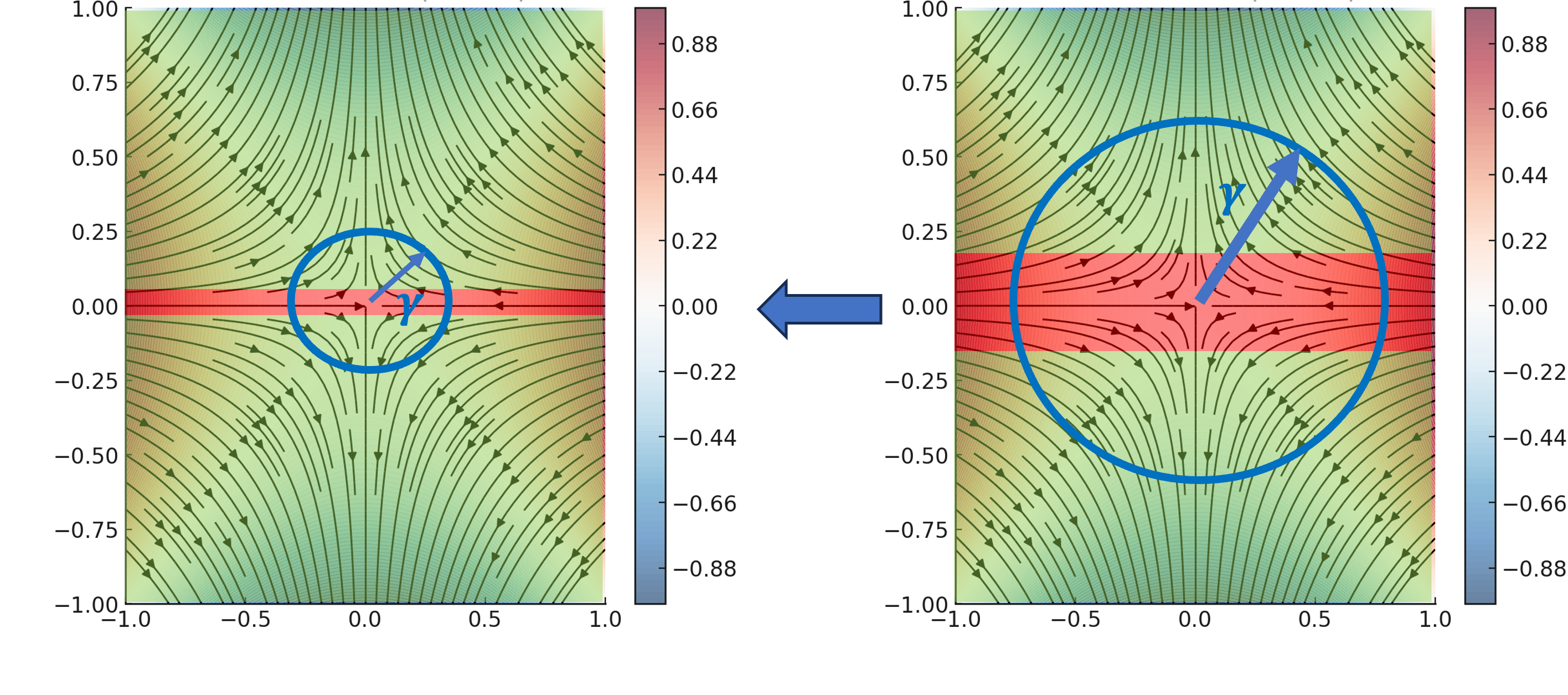}
    \caption{{\bf (Right)} The stuck and escape regions of PGD are depicted in green and red, respectively. A wide stuck region necessitates proportionally larger perturbations for escape. {\bf (Left)} Our results demonstrate that the width of the stuck region can be reduced to an arbitrarily small value, at the cost of a poly-logarithmic increase in the number of iterations.}
    \label{fig::PGD_vs_IPGD}
\end{figure}

Analogous to~\cite{jin2017escape}, our approach to proving \Cref{thm::IPGD} is based on characterizing the volume of the ``stuck region'' surrounding an approximate SSP. Specifically, let us assume that $\z$ is a SSP of $f$. \cite{jin2017escape} defines the {\it stuck region} $\cR_{\mathrm{stuck}}$ and the {\it escape region} $\cR_{\mathrm{escape}}$ as follows:
\begin{align}
    \cR_{\mathrm{escape}} &:= \{\x_0\in \bB_{\z}(\gamma): f(\x_t)-f(\x_0)< -F_{\mathrm{PGD}}, \text{within $t\leq T_{\mathrm{PGD}}$ GD updates}\},\\
    \cR_{\mathrm{stuck}} &:= \{\x_0\in \bB_{\z}(\gamma): f(\x_t)-f(\x_0)\geq -F_{\mathrm{PGD}}, \text{within $t\leq T_{\mathrm{PGD}}$ GD updates}\},
\end{align}
for appropriately chosen values of $F_{\mathrm{PGD}}$ and $T_{\mathrm{PGD}}$. Evidently, any point in $\cR_{\mathrm{escape}}$ successfully escapes the strict saddle point $\z$ within the specified number of GD updates, whereas any point in $\cR_{\mathrm{stuck}}$ remains trapped and fails to do so. The key insight from \cite{jin2017escape} is that $\cR_{\mathrm{stuck}}$ forms a narrow band with a ``width'' proportional to $\epsilon/\sqrt{d}$. Consequently, even if $\z$ lies within this region, a single perturbation with a radius scaling as $\epsilon$ is likely to move the perturbed point into $\cR_{\mathrm{escape}}$ (see the right panel of Figure~\ref{fig::PGD_vs_IPGD}). Our main contribution is to demonstrate that the width of $\cR_{\mathrm{stuck}}$ can be reduced to $\gamma/\sqrt{d}$, for an arbitrarily small value $\gamma > 0$ (potentially much smaller than $\epsilon$), while only requiring $F_{\mathrm{PGD}}$ and $T_{\mathrm{PGD}}$ to scale poly-logarithmically in $1/\gamma$. Specifically, we show that this can be achieved by adjusting $F_{\mathrm{IPGD}}$ and $T_{\mathrm{IPGD}}$ as $F_{\mathrm{IPGD}}\leftarrow F_{\mathrm{PGD}}/\mathrm{poly}\log(1/\gamma)$ and $T_{\mathrm{IPGD}}\leftarrow T_{\mathrm{PGD}}\cdot\mathrm{poly}\log(1/\gamma)$. An immediate implication of this is that a perturbation radius of $\gamma$ is sufficient to guarantee an efficient escape (see the left panel of Figure~\ref{fig::PGD_vs_IPGD}).

\subsection{Implicit Regularization}\label{subsec::residual}
Even if the infinitesimal perturbations of IPGD do not cause a significant deviation from the implicit region $\cM$, there remains the possibility that the iterates may drift away from this region during the standard GD updates.
To characterize this deviation, we provide a control over the dynamics of the residual norm $\norm{\xperp_t}$. At a high level, 
a single GD update can be reformulated as:
\begin{equation}
    \begin{aligned}
	\x_{t+1} &= \x_t-\eta\nabla f\big(\x_t\big)=\xsharp_t-\eta\nabla f\big(\xsharp_t\big)+\xperp_t-\eta\left(\nabla f(\x_t)-\nabla f\big(\xsharp_t\big)\right)\\
	& = {\xsharp_t-\eta\nabla f\big(\xsharp_t\big)}
	\!+\!{\left(\I-\eta\nabla^2 f\big(\xsharp_t\big)\right)\xperp_t}+\eta H\big(\xperp_t\big), \text{ where } \norm{H\big(\xperp_t\big)} \leq \rho\norm{\xperp_t}^2.
\end{aligned}
\end{equation}
The last equality follows from the Hessian-Lipschitzness of $f$.
Therefore, the one-step dynamic of the residual norm $\norm{\xperp_t}$ can be controlled by applying the projection $\cP^\perp_{\cM}(\cdot)$ to both sides:
\begin{align}\label{eq:residual}
	\norm{\xperp_{t+1}} = \norm{\cP_{\cM}^\perp(\x_{t+1})} \leq \underbrace{\norm{\cP^\perp_{\cM}\left(\xsharp_t\!-\!\eta\nabla f\big(\xsharp_t\big)\right)\!}}_{\text{\bf signal deviation}}+\underbrace{\norm{\left(\I\!-\!\eta\nabla^2 f\big(\xsharp_t\big)\right)\xperp_t}}_{\text{\bf residual deviation}}+\eta\rho \norm{\xperp_t}^2.
\end{align}
In the above inequality, we used the simple property that, for any $\x,\y\in \bR^d$, we have $\norm{\cP^\perp_{\cM}(\x+\y)}\leq \norm{\cP^\perp_{\cM}(\x)}+\norm{\y}$ (see \Cref{lem::project}).
According to the above decomposition, for sufficiently small residual norm $\norm{\xperp_t}$, the dynamic of the residual norm is mainly governed by the {\it signal deviation} and {\it residual deviation}. 
The favorable landscape properties of $f$ within the implicit region $\cM$ can be leveraged to effectively control the signal deviation. In particular, the signal deviation is completely eliminated if the implicit region $\cM$ is \textit{closed under GD updates}.

\begin{assumption}[Closure of $\cM$ under GD updates]
	\label{assumption::GD-on-the-manifold}
	For any $\x\in \cM$ and stepsize $\eta>0$, we have $\x-\eta \nabla f(\x)\in \cM$. 
\end{assumption}

This assumption ensures that the GD iterates remain within the implicit region, provided the initial point lies within that region. While this assumption may appear restrictive at first and is certainly not universally valid for all choices of $\cM=\varphi^{-1}(\cD)$, we demonstrate in the following three examples that it holds, possibly with minor modifications, in low-rank matrix recovery, sparse recovery, and, more generally, models involving sparse basis function decomposition.

\begin{example}[Low-rank matrix recovery]\label{example::low-rank}
	\label{example::matrix-opt}
	\begin{sloppypar}
		Consider the symmetric low-rank matrix recovery problem discussed in Section~\ref{sec::intro} with the feasible region $\cD=\{\mTheta: \mTheta\succeq 0, \rank(\mTheta)\leq r\}$. With reparameterization map $\varphi: \mathbb{R}^{n\times r'}\to \mathbb{R}^{n\times n}$ with $r'\geq r$ and $\varphi(\X) = \X\X^\top$, the associated implicit region is defined as $\cM = \varphi^{-1}(\cD) = \{\X\in \bR^{n\times r'}: \rank(\X)\leq r\}$.
		For a general function $f(\X) = L(\X\X^\top)$, a single GD update can be written as:
		\begin{align}
			\X_+ = \X-\eta\nabla f(\X) = \left(\I-2\eta\nabla L\left(\X\X^\top\right)\right)\X.
		\end{align}
		It is easy to verify that if $\X\in \cM$, then $\X_+\in \cM$. A similar property can be established for  the asymmetric low-rank matrix recovery and its extension to low-rank tensor decomposition.
		
	\end{sloppypar}
\end{example}

In many applications, however, it may be preferable to work with a smaller implicit region $\cM \subset \varphi^{-1}(\cD)$, as requiring \Cref{assumption::GD-on-the-manifold} to hold over the entire set $\varphi^{-1}(\cD)$ could be too restrictive. Importantly, the reparameterization map $\varphi$ is typically not bijective. By carefully choosing a smaller implicit region $\cM$, we can guarantee that \Cref{assumption::GD-on-the-manifold} holds, while still ensuring that $\varphi(\cM) = \cD$. The following example illustrates this idea in the context of sparse recovery.
\begin{example}[Sparse recovery]\label{example::sparse-recovery}
	Consider the sparse recovery discussed in Section~\ref{sec::intro} with the feasible region $\cD = \{\mtheta\in \mathbb{R}^n: \mtheta_i=0,\ \text{if $\mtheta^\star_i=0$}\}$. Under the reparameterization map $\varphi(\u,\v) = \u \odot \v$, the implicit region is $\cM = \varphi^{-1}(\cD)= \{(\u,\v): \u_i= 0\ \text{or}\ \v_i=0\ \text{if}\ \mtheta^\star_i=0\}$. For a general function $f(\u,\v) = L(\u \odot \v)$, a single GD update takes the form
	\begin{align}
		\begin{bmatrix}
			\u_{+}\\
			\v_{+}
		\end{bmatrix} = \begin{bmatrix}
			\u-\eta\nabla_{\u} f(\u,\v)\\
			\v-\eta\nabla_{\v} f(\u,\v)
		\end{bmatrix} = \begin{bmatrix}
			\u-\eta\nabla L(\u\odot\v)\odot \v\\
			\v-\eta\nabla L(\u\odot\v)\odot \u
		\end{bmatrix}.
	\end{align}
	For this problem, $(\u,\v) \in \cM$ alone does not guarantee that $(\u_+,\v_+) \in \cM$. This issue can be resolved by restricting $\cM$ to the set of \textit{balanced} vectors, defined as $\cM_{\text{balanced}} = \{(\u,\v) : \u_i = \v_i = 0 \ \text{if} \ \mtheta^\star_i = 0\}$. This restricted implicit region continues to satisfy $\varphi(\cM_{\text{balanced}}) = \cD$, while also ensuring that $(\u,\v) \in \cM_{\text{balanced}}$ implies $(\u_+,\v_+) \in \cM_{\text{balanced}}$. In \Cref{sec::simulation}, we empirically show that IPGD exhibits implicit regularization toward solutions in $\cM_{\text{balanced}}$.
\end{example}

\begin{example}[Models with sparse basis function decomposition]\label{example::sparse-basis}
	The recent paper \cite{ma2023behind} demonstrates that, although the dynamics of GD can vary significantly across different deep neural network models, they tend to exhibit a sparse behavior when projected onto a suitable orthonormal function basis (see~\cite[Definition 1]{ma2023behind}). While the explicit characterization of this orthonormal function basis depends on the specific task, its mere existence is sufficient to define the corresponding implicit region as the set of parameters with a sparse representation---characterized by sparse basis coefficients---after projection onto this basis. It can be verified that this implicit region remains closed under GD updates, provided the GD trajectory satisfies a condition known as {\it gradient independence} (see~\cite[Theorem 1]{ma2023behind}). ~\cite{ma2023behind} illustrates that the gradient independence condition is approximately satisfied across a wide range of deep neural networks.
\end{example}

An arguably more challenging task is to control the residual deviation, which we discuss next. 
Consider the eigen-decomposition of the Hessian $\nabla^2 f(\x) = \V \D \V^\top$, where $\V \in \mathbb{R}^{d \times d}$ is an orthonormal matrix whose columns correspond to the eigenvectors of $\nabla^2 f(\x)$, and $\D \in \mathbb{R}^{d \times d}$ is a diagonal matrix containing the eigenvalues of $\nabla^2 f(\x)$. The decomposition of the Hessian into \textit{positive semi-definite} (PSD) and \textit{negative semi-definite} (NSD) components is defined as $\nabla^2 f(\x) = \nabla^2_+ f(\x)+\nabla^2_- f(\x)$, where $\nabla^2_+f(\x) = \V\D_+\V^\top$ and $\nabla^2_-f(\x) = \V\D_-\V^\top$. Here $\D_+$ is a diagonal matrix whose $(i,i)$-th entry is given by $\max\{0, \D_{ii}\}$. Similarly, $\D_-$ is defined as a matrix whose $(i,i)$-th entry is given by $\min\{0, \D_{ii}\}$.

\begin{definition}[Deviation rate]\label{def::deviation_rate}
	The \textbf{deviation rate} of $f$ at a point $\x$ from $\cM$ is defined as $r(\x) = r_-(\x)+3r_+(\x)$, where $r_-(\x)$ and $r_+(\x)$ are the {\bf negative deviation rate} and {\bf positive deviation rate} of $f$ at point $\x$ respectively, and are given as:
	\begin{align}
		r_-(\x)\! =\! \begin{cases}
			\frac{-1}{\norm{\xperp}^2}\inner{\xperp}{\nabla^2_+f(\xsharp)\xperp}& \text{if $\xperp\not=0;$}\\
			0& \text{if $\xperp=0$}
		\end{cases},
        r_+(\x) \!=\! \begin{cases}
			\frac{-1}{\norm{\xperp}^2}\inner{\xperp}{\nabla^2_-f(\xsharp)\xperp}& \text{if $\xperp\not=0$},\\
			0& \text{if $\xperp=0$}.
		\end{cases}
	\end{align}
	Accordingly, the \textbf{$\pmb{\tau}$-deviation rate} $R(\tau)$ of $f$ is defined as:
	\begin{align}
		R(\tau) = \sup_{\x\in\cN_{\cM}(\tau)} r(\x).
	\end{align}
\end{definition}

Clearly, we have $r_-(\x)\leq 0$ and $r_+(\x)\geq 0$ for every $\x\in \bR^d$.  Intuitively, the magnitudes of the negative and positive deviation rates at any point represent the effective negative and positive curvatures of the Hessian, respectively, evaluated at the signal $\xsharp$, along the direction of the residual $\xperp$. As we will show in the next proposition, the deviation rate $r(\x)$, which is a weighted sum of these two quantities, governs the growth rate of the residual norm.

\begin{proposition}[Residual norm dynamic]\label{prop::residual_norm}
	Suppose that $f$ is ${(L, \cM, \tau)}$-gradient-Lipschitz and ${(\rho, \cM, \tau)}$-Hessian-Lipschitz. Moreover, assume that the implicit region $\cM$ is closed under GD updates (Assumption~\ref{assumption::GD-on-the-manifold}) and $\norm{\xperp_t}\leq \tau$. Then, a single GD update $\x_{t+1} = \x_t-\eta\nabla f(\x_t)$ with any fixed stepsize $\eta\leq 1/L$ satisfies:
	\begin{align}\label{eq::one-step-residual}
		\norm{\xperp_{t+1}}\leq \left(1+\frac{\eta}{2}r(\x_t)+\frac{\eta\rho}{2}\norm{\xperp_t}\right)\norm{\xperp_t}.
	\end{align}
\end{proposition}
Based on the above proposition, the dynamic of the residual norm under GD updates is controlled by the deviation rate, expressed as $r(\x_t) = r_-(\x_t)+3r_+(\x_t)$. \Cref{prop::residual_norm} indicates that a large deviation rate can lead to rapid growth of the residual norm, while a small or negative deviation rate results in slower growth or even shrinkage. In Section~\ref{sec::application}, we demonstrate that this relationship is also observed in practice across a wide range of problems. 

To provide a better intuition on the role of \Cref{prop::residual_norm} in our analysis, let us, for simplicity, assume that $\norm{\xperp_t} \leq \tau \leq 2/(T\eta\rho)$ for all $0\leq t \leq T-1$. {We note that this assumption is not required in our formal analysis but is introduced here to provide a crisp intuition.} 
Unfolding the dynamics of $\norm{\xperp_t}$ for $t = 0,1,\dots, T-1$ using \eqref{eq::one-step-residual} yields:
\begin{align}\label{eq: x perp bound}
		\norm{\xperp_{T}}\leq \exp\left\{\frac{\eta}{2} \left(\sum_{t=0}^{T-1}r(\x_t)\right)+\frac{\eta\rho}{2}\left(\sum_{t=0}^{T-1}\norm{\xperp_t}\right)\right\}\norm{\xperp_0}\leq 3\exp\{\eta TR(\tau)\}\norm{\xperp_0}.
	\end{align}
The first inequality follows from $1+x\leq \exp\{x\}$ and the second inequality follows from the imposed bound on $\left\{\norm{\xperp_t}\right\}_{t=0}^{T-1}$ and the definition of the $\tau$-deviation rate $R(\tau)$. 
Assuming that $\norm{\xperp_{0}}$ is small, in order to ensure that $\norm{\xperp_{T}} = O\left(\norm{\xperp_{0}}\right)$, we require $R(\tau) \leq 1/(\eta T)$. As we will show later, the required number of iterations $T$ is primarily dictated by the behavior of the projected iterates onto the implicit region $\cM$, represented by the signal $\xsharp_t$. Consequently, a faster convergence of $\xsharp_t$ to an $\cM$-SOSP leads to a smaller value of $T$, which in turn results in a more relaxed bound on $R(\tau)$. 
Our next theorem formalizes this trade-off.

\begin{sloppypar}
    \begin{theorem}[Convergence of IPGD to $\cM$-SOSP]\label{thm::IPGD-MSOSP}
	Suppose that $f$ is ${(L, \cM, \tau)}$-gradient-Lipschitz and ${(\rho, \cM, \tau)}$-Hessian-Lipschitz with $\tau\geq \epsilon/L$. Moreover, assume that the following conditions are satisfied: 
    \begin{enumerate}
		\item[C1] The implicit region $\cM$ is closed under GD updates (Assumption~\ref{assumption::GD-on-the-manifold}).
        \item[C2] The initial point satisfies $\norm{\xperp_0}\leq O\left(\min\left\{\frac{\epsilon}{L}, \frac{1}{\eta\rho T}\right\}\right)$.
		\item[C3] The $\tau$-deviation rate $R(\tau)$ satisfies $R(\tau)= O\left(\frac{1}{\eta T}\right)$.
	\end{enumerate}
    Then, with probability at least $1-\chi$, IPGD (\Cref{alg::IPGD}) with any perturbation radius satisfying $\gamma = O\Big(\min\Big\{\frac{\epsilon^{5/2}}{\sqrt{\rho}L\Delta_f}, \frac{\epsilon^{7/2}}{\rho^{3/2}\Delta_f^2}\Big\}\cdot \log^{-7}\left(\frac{\rho d\Delta_f}{\chi\epsilon}\right)\Big)$ and stepsize $\eta\leq 1/L$ outputs an $\epsilon$-$\cM$-SOSP within $T = O\left(\frac{\Delta_f}{\eta\epsilon^2}\left(\log^4\left(\frac{1}{\gamma}\right)+\log^4\left(\frac{\rho d\Delta_f}{\chi\epsilon}\right)\right)\right)$ iterations.
\end{theorem}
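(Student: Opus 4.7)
The plan is to combine the saddle-escaping guarantee of \Cref{thm::IPGD} with a uniform bound on the residual norm $\norm{\xperp_t}$ along the entire IPGD trajectory. By \Cref{def:M-SOSP}, it suffices to show that the returned iterate $\z$ is an $\epsilon$-SOSP and that $\norm{\cP_{\cM}^{\perp}(\z)}\leq \epsilon/L$. The first property is delivered by \Cref{thm::IPGD} within the iteration count $T = O\!\left(\Delta_f/(\eta\epsilon^2)\cdot(\log^4(1/\gamma)+\log^4(\rho d\Delta_f/(\chi\epsilon)))\right)$ with probability at least $1-\chi$, so the entire remaining burden is to control the residual side.

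The residual trajectory has two sources of growth. Between perturbations, pure GD updates apply, and \Cref{prop::residual_norm} provides the one-step bound
\begin{equation*}
\norm{\xperp_{t+1}}\leq \Bigl(1+\tfrac{\eta}{2}r(\x_t)+\tfrac{\eta\rho}{2}\norm{\xperp_t}\Bigr)\norm{\xperp_t},
\end{equation*}
valid whenever $\norm{\xperp_t}\leq \tau$. At a perturbation step, $\x_t \leftarrow \x_t+\bxi_t$ with $\bxi_t\sim \text{Unif}(\bB(\gamma))$, and the sublinearity of $\cP_{\cM}^{\perp}$ yields a one-step increase of at most $\gamma$ in $\norm{\xperp_t}$. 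Since \Cref{alg::IPGD} enforces a spacing of $T_{\mathrm{escape}}$ between consecutive perturbations, the number of perturbations in the first $T$ steps is at most $N_T := \lceil T/T_{\mathrm{escape}}\rceil = \tilde O(\sqrt{\rho}\Delta_f/\epsilon^{3/2})$.

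The core of the proof will be an induction on $t$ establishing the invariant
\begin{equation*}
\norm{\xperp_t}\leq \tau^\star := O\!\bigl(\min\{\epsilon/L,\ 1/(\eta\rho T)\}\bigr), \qquad 0\leq t \leq T,
\end{equation*}
with the base case furnished by Condition C2. For the inductive step, combining the two recursions above and using $1+x\leq e^x$ as in~\eqref{eq: x perp bound} gives
\begin{equation*}
\norm{\xperp_t}\leq \exp\!\Bigl(\tfrac{\eta}{2}\textstyle\sum_{s<t}r(\x_s)+\tfrac{\eta\rho}{2}\sum_{s<t}\norm{\xperp_s}\Bigr)\bigl(\norm{\xperp_0}+N_t\gamma\bigr).
\end{equation*}
By C3, the first sum in the exponent is at most $\eta T R(\tau) = O(1)$; by the inductive hypothesis the second is at most $\eta\rho T \tau^\star = O(1)$. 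Hence the exponential factor is $O(1)$, and the invariant is preserved as long as $\norm{\xperp_0}+N_T\gamma\leq \tau^\star/C$ for a suitable constant $C>0$. The two arms of the minimum defining $\tau^\star$ translate directly into the two arms of the bound on $\gamma$ in the theorem: the constraint $N_T\gamma\leq \epsilon/L$ forces $\gamma=\tilde O(\epsilon^{5/2}/(\sqrt{\rho}L\Delta_f))$, whereas $N_T\gamma\leq 1/(\eta\rho T)=\Theta(\epsilon^2/(\rho\Delta_f))$ forces $\gamma=\tilde O(\epsilon^{7/2}/(\rho^{3/2}\Delta_f^2))$. The $\log^{-7}(\rho d\Delta_f/(\chi\epsilon))$ slack in the hypothesis on $\gamma$ absorbs the poly-logarithmic factors hidden in $T$ and $N_T$.

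The main obstacle is the quadratic feedback term $\eta\rho\norm{\xperp_t}^2/2$ in \Cref{prop::residual_norm}: it is \emph{not} absorbed by the linear deviation-rate bound C3, so the argument cannot be a straight accumulation in $t$ but must instead be a self-referential induction that simultaneously drives $\norm{\xperp_t}$ down to the tighter scale $1/(\eta\rho T)$. This is precisely what forces the second (smaller) arm of the minimum in the hypothesis on $\gamma$. Once the invariant $\norm{\xperp_t}\leq \tau^\star$ is secured for all $t\leq T$, \Cref{thm::IPGD} applied at the returned iterate $\z$ supplies $\norm{\nabla f(\z)}\leq \epsilon$ and $\nabla^2 f(\z)\succeq -\sqrt{\rho\epsilon}\,\I$, while $\norm{\cP_{\cM}^{\perp}(\z)}\leq \tau^\star\leq \epsilon/L$ closes the membership in $\XM(\epsilon)$, all within the iteration complexity already guaranteed by \Cref{thm::IPGD}.
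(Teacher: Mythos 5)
Your proposal is correct and tracks the paper's proof essentially step for step: signal–residual decomposition, the one-step recursion from \Cref{prop::residual_norm}, counting the $\tilde O(\sqrt{\rho}\Delta_f/\epsilon^{3/2})$ perturbations, a self-referential induction maintaining $\norm{\xperp_t}\leq \tau^\star$ so that the quadratic feedback $\tfrac{\eta\rho}{2}\norm{\xperp_t}^2$ stays controlled, and then invoking \Cref{thm::IPGD} to conclude. The one step you assert without justification is the ``lump all perturbations into the prefactor $\norm{\xperp_0}+N_t\gamma$'' inequality; this is not automatic because each perturbation at time $t_k$ enters the multiplicative dynamics midway (and feeds into the quadratic term there), and the paper devotes \Cref{lem::a_sequence} to the monotonicity argument — shifting perturbations to earlier iterations only increases the dominating sequence — that makes exactly your lumping bound legitimate. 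You should also make explicit that it is precisely the invariant $\norm{\xperp_t}\leq\epsilon/L\leq\tau$ for all $t$ that licenses the application of \Cref{thm::IPGD}, since that theorem is stated under $(\cdot,\cM,+\infty)$-Lipschitzness while the present theorem assumes only $(\cdot,\cM,\tau)$-Lipschitzness; the paper flags this at the end of its proof.
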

\end{sloppypar}
Below, we discuss a few key insights.

\paragraph{Lipschitz conditions} The convergence of IPGD to an approximate $\cM$-SOSP only requires gradient- and Hessian-Lipschitz properties within an $(\epsilon/L)$-neighborhood of the implicit region $\cM$. This is a significant improvement over the global Lipschitz conditions required for PGD and for the convergence of IPGD to a general SOSP (\Cref{thm::IPGD}), especially when $\cM$ is low-dimensional. This relaxation is possible because, by controlling the residual norm via \Cref{prop::residual_norm}, the trajectory of IPGD can be confined to a small neighborhood around the implicit region.

\paragraph{Initialization} The convergence of IPGD to an approximate $\cM$-SOSP depends on the initial point being close to the implicit region, which corresponds to a small initial residual norm. As we mentioned earlier, while it may not always be feasible to determine such an initialization for an arbitrary implicit region $\cM$, it is often possible to ensure that $\bm{0} \in \cM$. In this case, a small or zero initialization naturally satisfies this condition. Notably, it is straightforward to verify that $\bm{0} \in \cM$ holds for the implicit regions discussed in \Cref{example::sparse-recovery,example::low-rank,example::sparse-basis}.

\paragraph{Deviation rate} The most restrictive condition in \Cref{thm::IPGD-MSOSP} is the upper bound imposed on the $\tau$-deviation rate. Noting the dependency of $T$ on $\epsilon$, this condition requires that, to achieve an $\epsilon$-$\cM$-SOSP, the $\tau$-deviation rate within an $\epsilon/L$-neighborhood of the implicit region must scale quadratically with $\epsilon$. While this condition is satisfied empirically in certain learning problems, rigorously proving it can be a daunting task, and in some cases, may even be impossible. 

Nevertheless, we demonstrate that this requirement can be significantly relaxed if the objective function $f$ exhibits additional structures within the implicit region $\cM$.

\subsection{Improved Convergence with Additional Structures}\label{subsec:IPGD-local}

A desirable geometric structure that can facilitate a faster convergence of IPGD is the so-called strict saddle property on the implicit region $\cM$, which we introduce next.

\begin{definition}[$\cM$-strict saddle property]
	\label{assumption::strict-saddle}
	We say the function $f$ satisfies the $(\bareg, \bareH,\bareM)$-$\cM$-strict saddle property for some parameters $\bareg, \bareH,\bareM>0$ if, for any $\x\in \cM$, at least one of the following conditions holds:
	\begin{itemize}
		\item $\bignorm{\nabla f(\x)}\geq \bareg$;
		\item $\lambda_{\min}(\nabla^2f(\x))\leq -\bareH$;
		\item $\dist(\x, \XM)\leq \bareM$, that is, there exists an exact $\cM$-SOSP $\z\in \cM$ satisfying $\nabla f(\z) = 0$ and $\nabla^2f(\z)\succeq 0$ such that $\norm{\x-\z}\leq \bareM$.
	\end{itemize}
\end{definition} 

Intuitively, the $\cM$-strict saddle property partitions the implicit region $\cM$ into three areas: (i) regions with large gradient norms; (ii) regions containing approximate SSPs with sufficiently negative curvature; and (iii) neighborhoods around exact $\cM$-SOSPs.

This property is akin to the global strict saddle property~\cite[Assumption A2]{ge2017no}, with the key distinction that it is only imposed on the implicit region $\cM$, whereas the global variant must hold for every $\x\in \bR^d$. While the global strict saddle property holds for {generic} functions~\cite[Theorem 2.9]{davis2019active}, the associated parameters $(\bareg, \bareH,\bareM)$ tend to degrade as the level of over-parameterization increases, leading to overly conservative convergence rate for the algorithm. Therefore, imposing this condition solely on the implicit region $\cM$---rather than the entire $\bR^{d}$---can lead to less conservative parameter choices.
For example, several problems, including matrix sensing, matrix completion, and tensor decomposition, have been shown to exhibit strict saddle properties with desirable parameters within their implicit regions \cite{ge2017no,ge2015escaping}.\footnote{The original results consider a slightly different setting, assuming the problem is exactly-parameterized, whereas we address the over-parameterized setting, where $\x \in \cM \subset \bR^d$ with $\dim(\cM) = k < d$. However, a simple lifting trick can establish the equivalence between these cases.} In contrast, in over-parameterized regimes, the global strict saddle property is conjectured to hold only with unfavorable parameters, resulting in poor convergence rates \cite[Page 5]{zhang2021sharp}.

We note that, in general, the iterates of IPGD do not necessarily belong to the implicit region $\cM$. Consequently, it is not immediately clear why the $\cM$-strict saddle property---a property defined solely on $\cM$---should facilitate the convergence of iterates that lie outside this region. Nonetheless, our next proposition shows that the strict saddle property, when restricted to $\cM$, is sufficient to improve the convergence of IPGD.

\begin{proposition}[Convergence of IPGD under strict saddle property]\label{cor::IPGD-SSP}
	Suppose that $f$ is ${(L, \cM, \tau)}$-gradient-Lipschitz and ${(\rho, \cM, \tau)}$-Hessian-Lipschitz with $\tau\geq \epsilon/L$, and satisfies the $(\bareg, \bareH,\bareM)$-$\cM$-strict saddle property. Furthermore, suppose that Conditions C1-C3 from \Cref{thm::IPGD-MSOSP} hold. Let $\bar\epsilon=\frac{1}{4}\min\left\{\bareg, \bareH^2/\rho\right\}$ and suppose $\epsilon\leq \min\left\{\frac{3}{4}\bareg, \frac{L}{2\rho}\bareH\right\}$. Then, with probability of at least $1-\chi$, IPGD (\Cref{alg::IPGD}) with any perturbation radius satisfying $\gamma = O\Big(\min\Big\{\frac{\bar\epsilon^{3/2}\cdot\epsilon}{\sqrt{\rho}L\Delta_f}, \frac{\bar\epsilon^{7/2}}{\rho^{3/2}\Delta_f^2}\Big\}\cdot \log^{-7}\left(\frac{\rho d\Delta_f}{\chi\bar\epsilon}\right)\Big)$ and stepsize $\eta\leq 1/L$ outputs a point $\x_T$ satisfying: 
    \begin{align}
        \norm{\xperp_T}\leq \frac{\epsilon}{L},\qquad \dist\left(\xsharp_T, \XM\right)\leq \bareM,
    \end{align}
    within $T = O\left(\frac{\Delta_f}{\eta\bar\epsilon^2}\left(\log^4\left(\frac{1}{\gamma}\right)+\log^4\left(\frac{\rho d\Delta_f}{\chi\bar\epsilon}\right)\right)\right)$ iterations.
\end{proposition}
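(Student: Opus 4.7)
The plan is to apply Theorem \ref{thm::IPGD-MSOSP} with tolerance $\bar\epsilon$ to obtain an approximate stationary point with a small residual, and then translate this into proximity to an exact $\cM$-SOSP by invoking the $\cM$-strict saddle property. The parameter $\bar\epsilon$ plays the role of governing both the gradient and Hessian tolerances at the output, while the separate parameter $\epsilon$ controls only the residual budget $\norm{\xperp_T}\leq \epsilon/L$; this decoupling explains the asymmetric $\gamma$-bound in the statement.

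First, I would invoke Theorem \ref{thm::IPGD-MSOSP} run at tolerance $\bar\epsilon$. The structural assumptions (C1 on closure of $\cM$ under GD, C2 on the initial residual, and C3 on the $\tau$-deviation rate) are inherited directly from the hypotheses of the proposition. The perturbation radius $\gamma$ is chosen slightly smaller than what Theorem \ref{thm::IPGD-MSOSP} demands at tolerance $\bar\epsilon$: the factor $\bar\epsilon^{7/2}/(\rho^{3/2}\Delta_f^2)$ is the intrinsic saddle-escape requirement, matching the second term of Theorem \ref{thm::IPGD-MSOSP} under $\epsilon\leftarrow\bar\epsilon$; the factor $\bar\epsilon^{3/2}\!\cdot\!\epsilon/(\sqrt{\rho}L\Delta_f)$ is tightened so that the cumulative residual drift over the $O(\bar\epsilon^{-3/2})$ saddle-escape events fits inside the tighter budget $\epsilon/L$ rather than $\bar\epsilon/L$. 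Within $T=\tilde O(\Delta_f/(\eta\bar\epsilon^2))$ iterations, this produces a point $\x_T$ with $\norm{\nabla f(\x_T)}\leq \bar\epsilon$, $\nabla^2 f(\x_T)\succeq -\sqrt{\rho\bar\epsilon}\,\I$, and $\norm{\xperp_T}\leq \epsilon/L$, which already gives the first conclusion of the proposition.

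Next, I would transfer these bounds from $\x_T$ to its projection $\xsharp_T\in\cM$ using the $(L,\cM,\tau)$- and $(\rho,\cM,\tau)$-Lipschitz conditions, which apply since $\epsilon/L\leq \tau$:
\[
\norm{\nabla f(\xsharp_T)}\leq \bar\epsilon+L\cdot\epsilon/L=\bar\epsilon+\epsilon, \qquad \lambda_{\min}\!\left(\nabla^2 f(\xsharp_T)\right)\geq -\sqrt{\rho\bar\epsilon}-\rho\epsilon/L.
\]
The definition $\bar\epsilon=\tfrac{1}{4}\min\{\bareg,\bareH^2/\rho\}$ gives $\bar\epsilon\leq \bareg/4$ and $\sqrt{\rho\bar\epsilon}\leq \bareH/2$, while the hypothesis $\epsilon\leq \min\{\tfrac{3}{4}\bareg,L\bareH/(2\rho)\}$ gives $\epsilon\leq 3\bareg/4$ and $\rho\epsilon/L\leq \bareH/2$. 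Combining, $\norm{\nabla f(\xsharp_T)}\leq \bareg$ and $\lambda_{\min}(\nabla^2 f(\xsharp_T))\geq -\bareH$, so the first two alternatives in Definition \ref{assumption::strict-saddle} applied at $\xsharp_T\in\cM$ are ruled out (with an arbitrarily small strict slack, obtainable by shrinking $\bar\epsilon$ by a constant factor if needed). The third alternative must therefore hold, yielding $\dist(\xsharp_T,\XM)\leq \bareM$ as claimed.

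The main obstacle I anticipate lies in the first step: the residual analysis underlying Theorem \ref{thm::IPGD-MSOSP} couples the perturbation radius, the iteration count, and the number of saddle-escape events, and careful accounting is required to factor its $\gamma$-bound into one piece governed by $\bar\epsilon$ (from saddle escape at tolerance $\bar\epsilon$) and one governed by the product $\bar\epsilon^{3/2}\cdot\epsilon$ (from constraining the cumulative perturbation drift to the tighter residual budget $\epsilon/L$). Steps 2 and 3 are essentially mechanical consequences of Lipschitz continuity and the strict-saddle trichotomy.
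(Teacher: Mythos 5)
Your proposal is correct and follows essentially the same route as the paper: run IPGD at gradient tolerance $\bar\epsilon$ via a modified \Cref{thm::IPGD-MSOSP} (tracking the residual against the budget $\epsilon/L$ rather than $\bar\epsilon/L$, which is exactly why the $\gamma$-bound carries the factor $\bar\epsilon^{3/2}\epsilon$), transfer the gradient/Hessian bounds from $\x_T$ to $\xsharp_T$ via the local Lipschitz conditions, and invoke the $\cM$-strict-saddle trichotomy. You are also right that the $\bar\epsilon+\epsilon\leq\bareg$ and $\sqrt{\rho\bar\epsilon}+\rho\epsilon/L\leq\bareH$ calculations are exactly what the hypotheses on $\bar\epsilon$ and $\epsilon$ are designed to yield, and your footnoted caveat about needing strict slack to rule out the first two alternatives of \Cref{assumption::strict-saddle} is a fair (and slightly more careful) observation than the paper makes explicit.
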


\Cref{cor::IPGD-SSP} establishes that, under the $\cM$-strict saddle property, setting the gradient threshold $\epsilon$ in IPGD to $\bar{\epsilon} = \frac{1}{4}\min\left\{\bareg, \bareH^2/\rho\right\}$ ensures that the output lies within an $\bareM$-neighborhood of an exact $\cM$-SOSP. 
Notably, such a guarantee is unattainable without the strict saddle property and cannot be derived solely from \Cref{thm::IPGD-MSOSP}. In particular, while \Cref{thm::IPGD-MSOSP} demonstrates that the output of IPGD is approximately an $\cM$-SOSP, it does not establish a bound on the distance between the obtained point and an exact $\cM$-SOSP.

Additionally, \Cref{cor::IPGD-SSP} offers another distinct advantage: under the assumption that $\bar{\epsilon} = \Theta(1)$, the required bound on the deviation rate simplifies to $R(\tau) = \tilde{O}\left(1/\Delta_f\right)$, which is a significant relaxation compared to the stricter requirement $R(\tau) = \tilde{O}\left(\epsilon^2/\Delta_f\right)$ in \Cref{thm::IPGD-MSOSP}. As we will show later, both $\bar{\epsilon} = \Theta(1)$ and $R(\tau) = \tilde{O}\left(1/\Delta_f\right)$ can be provably established for the over-parameterized matrix sensing.

Finally, we note that while the $\cM$-strict saddle property is a desirable property, it is not sufficient to guarantee convergence to a reasonably accurate $\cM$-SOSP, particularly in settings where $\bareM = \Omega(1)$. In such cases, IPGD may still require the same conditions as those in \Cref{thm::IPGD-MSOSP} to ensure the output is an $\epsilon$-$\cM$-SOSP with $\epsilon\ll \bareM$, even under the strict saddle property. To truly accelerate the convergence of IPGD, an additional condition, akin to strong convexity within a local region of an $\cM$-SOSP, is necessary. Unfortunately, however, strong convexity---even locally---is rarely satisfied in over-parameterized problems. A more relaxed notion is a regularity property, which we impose only on the implicit region $\cM$.

\begin{definition}[$\cM$-regularity property]
\label{assumption::regularity}
We say the function $f$ satisfies the $(\alpha, \beta, \zeta)$-$\cM$-regularity property for some parameters $\alpha, \beta, \zeta>0$ if, for any $\x\in \cN_{\XM}(\zeta)\cap\cM$, we have:
\begin{equation}
    \inner{\nabla f(\x)}{\x-\cP_{\XM}(\x)} \geq \frac{\alpha}{2}\bignorm{\x-\cP_{\XM}(\x)}^2+\frac{1}{2 \beta}\bignorm{\nabla f(\x)}^2.
\end{equation}
\end{definition}

The $(\alpha, \beta, \zeta)$-$\cM$-regularity property is a weaker condition than local strong convexity. In fact, it can be shown that gradient Lipschitzness and one-point strong convexity together imply this property (see \cite[Section II.B]{chi2019nonconvex} for a proof). 
However, unlike local strong convexity---which typically fails to hold even in the exactly-parameterized regime\footnote{Strong convexity provably fails within any neighborhood if the solution set forms a connected set. This occurs in various models, including sparse recovery and low-rank matrix recovery.}---the $(\alpha, \beta, \zeta)$-$\cM$-regularity property is generally easier to establish. For example, it has been verified for several nonconvex optimization problems, such as matrix factorization \cite{jin2017escape}, matrix sensing \cite{bhojanapalli2016global}, and matrix completion \cite{zheng2016convergence} in the exactly-parameterized regime.

Intuitively, $(\alpha, \beta, \zeta)$-$\cM$-regularity property entails that, within a small neighborhood of any $\cM$-SOSP, the gradient $\nabla f(\x)$ is positively correlated with the error vector $\x-\cP_{\XM}(\x)$. This implies that vanilla GD, initialized at a point exactly within $\cM$, converges efficiently to an $\cM$-SOSP. Our next result shows that even if the initial point is not exactly in $\cM$, vanilla GD can still exploit the $(\alpha, \beta, \zeta)$-$\cM$-regularity property and converge linearly to an approximate $\cM$-SOSP.
\begin{sloppypar}
  \begin{proposition}[Local linear convergence of GD under regularity property]\label{thm::regularity}
	Suppose that $f$ is ${(L, \cM, \tau)}$-gradient-Lipschitz and ${(\rho, \cM, \tau)}$-Hessian-Lipschitz with $\tau\geq \epsilon/L$, and satisfies the $(\alpha, \beta, \zeta)$-$\cM$-regularity property. Suppose that the initial point $\x_0$ satisfies $\norm{\xperp_0}= O\left(\frac{\eta\alpha\epsilon}{L}\right)$ and  $\dist(\xsharp_0,\XM)\leq \zeta$. Furthermore, suppose that Conditions C1 and C3 from \Cref{thm::IPGD-MSOSP} hold. Then, GD with stepsize $\eta\leq \frac{1}{10\max\{\alpha, \beta, L\}}$ outputs a solution $\x_T$ satisfying 
    \begin{equation}
        \dist\left(\x_T, \cX_{\cM}\right)\leq \frac{\epsilon}{L}
    \end{equation}
    within $T = O\left(\frac{1}{\eta\alpha}{\log\left(\frac{L\zeta}{\epsilon}\right)}\right)$ iterations.
\end{proposition}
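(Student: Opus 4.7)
The plan is to apply the signal-residual decomposition $\x_t = \xsharp_t + \xperp_t$ and track the two components separately using different mechanisms: the residual is controlled via \Cref{prop::residual_norm} and the deviation-rate assumption, while the signal exhibits a contraction toward $\XM$ via the $\cM$-regularity property. The bridge between the actual GD update and the regularity property is the ``virtual update'' $\tilde\x_{t+1} := \xsharp_t - \eta\nabla f(\xsharp_t)$, which lies in $\cM$ by Assumption~\ref{assumption::GD-on-the-manifold} and behaves like a GD step from within $\cM$. One then shows the true iterate $\x_{t+1}$ and its projection $\xsharp_{t+1}$ are close to $\tilde\x_{t+1}$ up to an error proportional to $\norm{\xperp_t}$.

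First I would handle the residual. Applying \Cref{prop::residual_norm} together with the inductive argument behind equation \eqref{eq: x perp bound} and condition C3 ($R(\tau) = O(1/(\eta T))$), one obtains $\norm{\xperp_t} \leq C\norm{\xperp_0} = O(\eta\alpha\epsilon/L)$ uniformly for all $t \leq T$, provided $\norm{\xperp_t} \leq \tau$ stays valid, which is easily maintained because $\norm{\xperp_0}$ is taken sufficiently small and $\eta\alpha\epsilon/L \ll \tau$ under the assumed scalings. Next I would establish the signal contraction. Using the $(\alpha,\beta,\zeta)$-$\cM$-regularity property applied at $\xsharp_t$ (valid so long as $\dist(\xsharp_t,\XM)\leq\zeta$), the computation $\norm{\tilde\x_{t+1} - \cP_{\XM}(\xsharp_t)}^2 = \norm{\xsharp_t - \cP_{\XM}(\xsharp_t)}^2 - 2\eta\langle\nabla f(\xsharp_t), \xsharp_t - \cP_{\XM}(\xsharp_t)\rangle + \eta^2\norm{\nabla f(\xsharp_t)}^2$ combined with $\eta \leq 1/\beta$ yields $\dist(\tilde\x_{t+1},\XM) \leq \sqrt{1-\eta\alpha}\,\dist(\xsharp_t,\XM)$.

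To transfer this contraction back to $\xsharp_{t+1}$, I would write $\x_{t+1} - \tilde\x_{t+1} = \xperp_t - \eta(\nabla f(\x_t) - \nabla f(\xsharp_t))$ and use gradient-Lipschitzness to bound $\norm{\x_{t+1} - \tilde\x_{t+1}} \leq (1+\eta L)\norm{\xperp_t} \leq 2\norm{\xperp_t}$. Since $\tilde\x_{t+1} \in \cM$, the definition of $\xsharp_{t+1}$ as the nearest point in $\cM$ gives $\norm{\xsharp_{t+1} - \tilde\x_{t+1}} \leq 2\norm{\x_{t+1} - \tilde\x_{t+1}} \leq 4\norm{\xperp_t}$, and combining with the triangle inequality produces the one-step recursion
\[
\dist(\xsharp_{t+1},\XM) \leq (1-\eta\alpha/2)\,\dist(\xsharp_t,\XM) + O(\eta\alpha\epsilon/L).
\]
Unrolling this geometric recursion gives $\dist(\xsharp_T,\XM) \leq (1-\eta\alpha/2)^T\zeta + O(\epsilon/L)$, and the choice $T = \Theta\bigl((1/(\eta\alpha))\log(L\zeta/\epsilon)\bigr)$ makes the first term $O(\epsilon/L)$; adding $\norm{\xperp_T} = O(\eta\alpha\epsilon/L) \ll \epsilon/L$ via the triangle inequality yields the claimed $\dist(\x_T,\XM) \leq \epsilon/L$.

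The main obstacle will be maintaining the invariant $\dist(\xsharp_t,\XM) \leq \zeta$ for all $t \leq T$, since the $\cM$-regularity property applies only within this $\zeta$-neighborhood. I plan to handle this by an induction coupling both bounds: at each step, the inductive hypothesis $\dist(\xsharp_t,\XM) \leq \zeta$ licenses the one-step contraction, and the geometric recursion above together with the uniform bound on $\norm{\xperp_t}$ ensures that $\dist(\xsharp_{t+1},\XM)$ does not exceed $\zeta$ either (in fact it is non-increasing up to a negligible additive term, since the additive $O(\eta\alpha\epsilon/L)$ is dominated by $\eta\alpha\zeta/2$ for $\epsilon \ll L\zeta$). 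A secondary care-point is the stepsize restriction $\eta \leq 1/(10\max\{\alpha,\beta,L\})$, which is needed simultaneously to absorb the $\eta^2\norm{\nabla f(\xsharp_t)}^2$ regularity residue, to use gradient-Lipschitzness cleanly, and to keep $\sqrt{1-\eta\alpha} \leq 1-\eta\alpha/2$.
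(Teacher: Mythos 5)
Your proposal is correct and follows essentially the same route as the paper: both control the residual via the deviation-rate bound and establish a one-step contraction of the distance to $\XM$ by comparing the true iterate to the ``virtual'' update $\xsharp_t - \eta\nabla f(\xsharp_t)$ (which stays in $\cM$), using the $\cM$-regularity property for the contraction and gradient-Lipschitzness plus the residual bound for the coupling error, then unrolling and checking by induction that $\xsharp_t$ stays in the $\zeta$-ball where regularity applies. The only cosmetic difference is that you track $\dist(\xsharp_t,\XM)$ (hence need the extra factor-of-two projection bound $\norm{\xsharp_{t+1}-\tilde\x_{t+1}}\le 2\norm{\x_{t+1}-\tilde\x_{t+1}}$), whereas the paper tracks $\dist(\x_t,\XM)$ directly in \Cref{lem::strong-local-structure}, obtaining an equivalent geometric recursion.
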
  
\end{sloppypar}

Note that the improvement in the iteration count---from $\poly(1/\epsilon)$ to $\log(1/\epsilon)$---enabled by the regularity condition relaxes the dependence of the deviation rate $R(\tau)$ on $\epsilon$ to $\log^{-1}(1/\epsilon)$. This is significantly less restrictive than the requirement in \Cref{thm::IPGD-MSOSP}, where $R(\tau)$ must scale as $\epsilon^2$. 

Finally, by leveraging both the strict saddle property and the $\cM$-regularity property, we can establish linear convergence to an $\cM$-SOSP. Specifically, using the strict saddle property, we first run IPGD to obtain a point within a $\zeta$-neighborhood of an $\cM$-SOSP. Then, by exploiting the regularity condition, we apply vanilla GD to achieve linear convergence to an $\cM$-SOSP. This combined approach, which we call {\it IPGD with local improvement} (IPGD+), is outlined in \Cref{alg::tsp-gd-local-refinement}.

\begin{algorithm}[H]
\caption{{IPGD with local improvement}: IPGD+$(\gamma, \epsilon, \eta, \x_0, \rho, \Delta_f, \chi, T')$}\label{alg::tsp-gd-local-refinement}
	\begin{algorithmic}[1]
        \State $\x_{0}\leftarrow \textsc{IPGD}(\gamma, \epsilon, \eta, \x_0, \rho, \Delta_f, \chi)$
        \For {$t=0, 1, \cdots, T'$}
		\State $\x_{t+1}\leftarrow \x_t-\eta \nabla f\left(\x_t\right)$
        \EndFor
    \end{algorithmic}
\end{algorithm}

\begin{sloppypar}
    \begin{theorem}[Global near-linear convergence of IPGD+]\label{thm::linear}
	Suppose that $f$ is ${(L, \cM, \tau)}$-gradient-Lipschitz and ${(\rho, \cM, \tau)}$-Hessian-Lipschitz with $\tau\geq \epsilon/L$. Suppose that $f$ satisfies the $(\alpha, \beta, \zeta)$-$\cM$-regularity property and $(\bareg, \bareH,\bareM)$-$\cM$-strict saddle property with $\bareM\leq \zeta$. Suppose that the initial point satisfies $\norm{\xperp_0}=O\left(\min\left\{\eta\alpha\cdot\frac{\epsilon}{L}, \frac{1}{\eta\rho T}\right\}\right)$. Suppose that Conditions C1 and C3 from \Cref{thm::IPGD-MSOSP} hold. Let $\bar\epsilon= O\left(\min\left\{\bareg, \frac{\bareH^2}{\rho}\right\}\right)$ and $T' = O\left(\frac{1}{\eta\alpha}{\log\left(\frac{L\zeta}{\epsilon}\right)}\right)$. Then, with probability of at least $1-\chi$, IPGD+ (\Cref{alg::tsp-gd-local-refinement}) with any perturbation radius $\gamma = O\Big(\min\Big\{\frac{\bar\epsilon^{3/2}\cdot\epsilon}{\sqrt{\rho}L\Delta_f}, \frac{\bar\epsilon^{7/2}}{\rho^{3/2}\Delta_f^2}\Big\}\cdot \log^{-7}\left(\frac{\rho d\Delta_f}{\chi\bar\epsilon}\right)\Big)$ and stepsize $\eta\leq \frac{1}{10\max\{\alpha, \beta, L\}}$ outputs a solution satisfying
    \begin{equation}
        \dist\left(\x_T, \cX_{\cM}\right)\leq \frac{\epsilon}{L}
    \end{equation}
    within a total of $T = O\left(\frac{\Delta_f}{\eta\bar\epsilon^2}\left(\log^4\left(\frac{1}{\gamma}\right)+\log^4\left(\frac{\rho d\Delta_f}{\chi\bar\epsilon}\right)\right)+\frac{1}{\eta\alpha}\log\left(\frac{L\zeta}{\epsilon}\right)\right)$ iterations.
\end{theorem}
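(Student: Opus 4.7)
The plan is to mirror the two phases of \Cref{alg::tsp-gd-local-refinement}---the IPGD subroutine followed by a deterministic vanilla-GD loop---and invoke \Cref{cor::IPGD-SSP} for the first phase and \Cref{thm::regularity} for the second. First, apply \Cref{cor::IPGD-SSP} to the IPGD call in line 1. Its hypotheses are met: the $\cM$-strict saddle and gradient/Hessian Lipschitz conditions are standing assumptions of \Cref{thm::linear}; Condition C1 carries over verbatim; the stronger initial-residual bound in \Cref{thm::linear} implies Condition C2; and the deviation-rate assumption $R(\tau)=O(1/(\eta T))$ in \Cref{thm::linear} implies the analogous bound with $T$ replaced by the (smaller) iteration count of the IPGD phase, giving Condition C3. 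This yields an intermediate iterate $\tilde{\x}$ with residual norm controlled by the tolerance passed to IPGD and $\dist(\tilde{\x}^\sharp,\XM)\le \bareM$.

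To chain in \Cref{thm::regularity}, I would pass a tolerance of order $\eta\alpha\epsilon$ into IPGD, rather than the final tolerance $\epsilon$ itself, so that the intermediate residual satisfies $\norm{\tilde{\x}^\perp}=O(\eta\alpha\epsilon/L)$. This substitution leaves the dominant $1/\bar\epsilon^2$ factor in the IPGD iteration count untouched and only affects polylogarithmic terms and the perturbation-radius bound---both of which are already absorbed into the expressions stated in \Cref{thm::linear}. With $\tilde{\x}$ as the starting point of the subsequent deterministic GD loop in lines 2--4, the preconditions of \Cref{thm::regularity} are verified: $\norm{\tilde{\x}^\perp}=O(\eta\alpha\epsilon/L)$ by construction, $\dist(\tilde{\x}^\sharp,\XM)\le\bareM\le\zeta$ by the standing assumption $\bareM\le\zeta$, Condition C1 is inherited directly, and Condition C3 holds because the total $T$ in \Cref{thm::linear} dominates the GD-phase iteration count $T'=O((1/(\eta\alpha))\log(L\zeta/\epsilon))$. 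Applying \Cref{thm::regularity} then gives $\dist(\x_T,\XM)\le\epsilon/L$ after these $T'$ additional steps.

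Summing the two iteration budgets produces the combined bound in the statement, and the failure probability $\chi$ is carried through the randomized IPGD phase only, since the GD phase is deterministic. The main technical obstacle is that the IPGD phase must output an iterate whose residual is smaller, by a factor of $\eta\alpha$, than what a naive invocation of \Cref{cor::IPGD-SSP} at tolerance $\epsilon$ would guarantee; this is resolved by the observation that the iteration count of \Cref{cor::IPGD-SSP} depends on the strict-saddle parameters through $\bar\epsilon$ only up to polylogarithmic factors, so rescaling the passed tolerance by $\eta\alpha$ is essentially free in the total iteration count. A secondary bookkeeping obstacle is verifying that the stated perturbation-radius $\gamma$ and initial-residual bounds in \Cref{thm::linear} are simultaneously strong enough to satisfy the hypotheses of \Cref{cor::IPGD-SSP} at the scaled tolerance and the hypotheses of \Cref{thm::regularity} at the final tolerance; since both sets of bounds share the same functional form as those of \Cref{cor::IPGD-SSP} and \Cref{thm::regularity}, this reduces to elementary comparisons of explicit expressions.
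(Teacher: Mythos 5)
Your proposal matches the paper's approach exactly — the paper's own proof of \Cref{thm::linear} consists of the single sentence ``the proof readily follows by combining \Cref{cor::IPGD-SSP} and \Cref{thm::regularity}.'' You have filled in the implicit chaining, and the key technical point you isolate is the right one: the IPGD phase must be invoked at the rescaled tolerance $\eta\alpha\epsilon$ (rather than the final target $\epsilon$) so that its output residual satisfies $\norm{\xperp}=O(\eta\alpha\epsilon/L)$, which is what the initialization hypothesis of \Cref{thm::regularity} demands; the stronger initial-residual bound $\norm{\xperp_0}=O(\min\{\eta\alpha\epsilon/L,\,1/(\eta\rho T)\})$ in the theorem statement exists precisely to accommodate this. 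Your remaining verifications — that $\bareM\le\zeta$ bridges the strict-saddle output to the regularity-region requirement, that C3 stated at the total $T$ implies C3 at either phase's smaller iteration count, that the rescaling affects the IPGD iteration bound only through $\log^4(1/\gamma)$ and hence is absorbed, and that the randomness (hence the failure probability $\chi$) resides solely in the first phase — are all correct and complete the argument the paper only sketches.
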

\end{sloppypar}
To better interpret the above result, let us focus solely on its dependency on the final desired error $\epsilon$. \Cref{thm::linear} implies that, under the $\cM$-strict saddle and $\cM$-regularity properties, IPGD+ reaches an $\epsilon$-neighborhood of an $\cM$-SOSP in at most $T = O(\log^4(1/\epsilon))$ iterations, provided that the initial point satisfies $\norm{\xperp_0} = O(\epsilon)$ and the deviation rate satisfies $R(\tau) = O(\log^{-4}(1/\epsilon))$. 
As we will show in \Cref{sec::application}, the conditions of \Cref{thm::linear} are provably satisfied for the over-parameterized matrix sensing problem.

We note that IPGD+ is similar to the PGD with local improvement (referred to as PGDli) introduced in \cite{jin2017escape}. However, IPGD+ has a slower convergence rate by a factor of $\log^3(1/\epsilon)$. This slowdown stems from the use of a smaller perturbation radius, which is necessary to enable implicit regularization and ensure that the iterates remain close to the implicit region $\cM$. 

Next, we provide the proofs for our main results. To this end, in~\Cref{subsec::proof-SOSP}, we first prove the convergence of IPGD to an approximate SOSP, thereby establishing~\Cref{thm::IPGD}. Next, in~\Cref{subsec::proof-MSOSP}, we establish the implicit regularization of IPGD and use this property to show the convergence of IPGD to an approximate $\cM$-SOSP, ultimately proving~\Cref{thm::IPGD-MSOSP}. Finally, in~\Cref{subsec::proof-local}, we refine our results by proving the improved convergence guarantees of~\Cref{thm::linear} under additional local structural assumptions.

\section{Establishing Convergence of IPGD to an Approximate SOSP}
\label{subsec::proof-SOSP}
To prove the convergence of IPGD, we carefully refine the analysis of PGD from~\cite{jin2017escape,jin2021nonconvex}, extending it to handle infinitesimal perturbations. Our analysis proceeds in two parts:
\begin{itemize}
    \item {\bf Bounding the iteration count:} We derive an upper bound on the number of iterations that IPGD can take before meeting its termination criterion, characterizing its convergence rate.
    \item {\bf Establishing the error guarantee:} We demonstrate that, upon termination, the returned solution satisfies the conditions for an approximate SOSP.
\end{itemize}

To provide a bound on the iteration count, let us define $N_{\mathrm{large}}$ as the number of iterations where the gradient norm satisfies $\norm{\nabla f(\x_t)}> G$. Moreover, let $N_{\mathrm{perturb}}$ be the number of iterations in which a perturbation is applied to $\x_t$ (Line 6 of \Cref{alg::IPGD}). Formally, these are given by:
\begin{align*}
        N_{\mathrm{large}} = \left|\left\{t: \norm{\nabla f(\x_t)}> G \right\}\right|, \qquad N_{\mathrm{perturb}} = \left|\left\{t: \text{perturbation is added to $\x_t$} \right\}\right|.
\end{align*}
The following lemma provides an upper bound on the total number of iterations of the algorithm in terms of $N_{\mathrm{large}}$ and $N_{\mathrm{perturb}}$.
\begin{lemma}\label{lem::number}
    Let $t=T$ denote the last iteration of IPGD (\Cref{alg::IPGD}). We have
    \begin{align}
        T\leq N_{\mathrm{large}}+N_{\mathrm{perturb}}\cdot T_{\mathrm{escape}}.
    \end{align}
\end{lemma}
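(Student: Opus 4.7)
The plan is to carry out a direct counting argument on the iterations of IPGD. I would classify each iteration $t \in \{0,1,\ldots,T\}$ into exactly one of three disjoint categories based on the state at the start of the iteration: (i) a \emph{large-gradient} iteration, where $\bignorm{\nabla f(\x_t)} > G$; (ii) a \emph{perturbation} iteration, where $\bignorm{\nabla f(\x_t)} \leq G$ and $t - T_{\mathrm{noise}} > T_{\mathrm{escape}}$; or (iii) an \emph{escape-window} iteration, where $\bignorm{\nabla f(\x_t)} \leq G$ and $1 \leq t - T_{\mathrm{noise}} \leq T_{\mathrm{escape}}$. By construction, the counts of the first two categories are exactly $N_{\mathrm{large}}$ and $N_{\mathrm{perturb}}$, so the crux of the argument is to control the number of escape-window iterations.

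The key structural observation is that $T_{\mathrm{noise}}$ is updated only at perturbation iterations. Therefore, every escape-window iteration $t$ satisfies $t - T_{\mathrm{noise}} \in \{1, \ldots, T_{\mathrm{escape}}\}$ for the value of $T_{\mathrm{noise}}$ set by the most recent perturbation, say the $i$-th one occurring at time $t_i$. Equivalently, such a $t$ lies in the interval $\{t_i + 1, \ldots, t_i + T_{\mathrm{escape}}\}$, which has length $T_{\mathrm{escape}}$. Furthermore, the intervals associated with distinct perturbations are pairwise disjoint, because consecutive perturbations must be separated by strictly more than $T_{\mathrm{escape}}$ iterations, as enforced by the guard $t - T_{\mathrm{noise}} > T_{\mathrm{escape}}$ in \Cref{alg::IPGD}. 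Consequently, the total number of escape-window iterations is bounded by $N_{\mathrm{perturb}} \cdot T_{\mathrm{escape}}$.

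Summing the three counts then yields the desired bound, once one accounts for the fact that the terminal iteration $t = T$, at which the return condition triggers, does not perform an additional GD update and is absorbed into the escape-window accounting of the final perturbation. The main obstacle is purely bookkeeping at the endpoints of the trajectory. Specifically, I would carefully verify that (a) every iteration preceding the first perturbation must be large-gradient, since the initialization $T_{\mathrm{noise}} = -T_{\mathrm{escape}} - 1$ forces $t - T_{\mathrm{noise}} > T_{\mathrm{escape}}$ at those times and would otherwise trigger an immediate perturbation---so that these iterations contribute to $N_{\mathrm{large}}$ rather than to a spurious extra term; and (b) the interaction between the return check at $t - T_{\mathrm{noise}} = T_{\mathrm{escape}}$ and the terminal iteration is handled consistently, so that the tight bound $T \leq N_{\mathrm{large}} + N_{\mathrm{perturb}} \cdot T_{\mathrm{escape}}$ follows without any iteration being double-counted or omitted.
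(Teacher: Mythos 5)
Your proof takes essentially the same direct counting argument as the paper: classify iterations by gradient magnitude and proximity to a perturbation, bound the first two categories by $N_{\mathrm{large}}$ and $N_{\mathrm{perturb}}$, and bound the escape-window iterations by $N_{\mathrm{perturb}}\cdot T_{\mathrm{escape}}$ via disjointness of the windows. Your execution is somewhat more careful than the paper's brief two-case verbal argument in making the partition and the disjointness explicit; the residual $N_{\mathrm{perturb}}$ term that your three-way partition produces---and which your endpoint observation only partially absorbs---is a harmless constant slack that the paper's own terse accounting glosses over at exactly the same point.
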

\begin{proof}
    At every iteration $t$, IPGD takes one of the following steps:
    \begin{itemize}
        \item If $\norm{\nabla f(\x_t)}> G$ and no perturbation has been applied in the past $T_{\mathrm{escape}}$ iterations, IPGD performs a single GD update.
        \item If $\norm{\nabla f(\x_t)}\leq G$ and no perturbation has been applied in the past $T_{\mathrm{escape}}$ iterations,  IPGD applies a perturbation to $\x_t$, followed by $T_{\mathrm{escape}}$ GD updates.
    \end{itemize}
    By combining these observations with the definitions of $N_{\mathrm{large}}$ and $N_{\mathrm{perturb}}$, we obtain $T\leq N_{\mathrm{large}}+N_{\mathrm{perturb}}\cdot T_{\mathrm{escape}}$, thereby completing the f.
 \end{proof}
Based on the above lemma, to bound the iteration count, it suffices to provide separate bounds on $N_{\mathrm{large}}$ and $N_{\mathrm{perturb}}$. Towards this goal, we utilize a classic descent lemma for GD updates.
    \begin{lemma}[Descent lemma for GD updates; \protect{\cite[Section 1.2.3]{nesterov2018lectures}}]
    \label{lem::gd-lemma}
    Suppose that the conditions of \Cref{thm::IPGD} are satisfied.
        For the GD update $\x_{t+1}=\x_t-\eta\nabla f(\x_t)$ with $\eta\leq \frac{1}{L}$, we have:
        \begin{equation}
            f(\x_{t+1})-f(\x_t)\leq-\frac{1}{2}\eta\bignorm{\nabla f(\x_t)}^2.
        \end{equation}
    \end{lemma}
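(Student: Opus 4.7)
The plan is simply to invoke the classical quadratic upper bound that follows from $L$-gradient-Lipschitzness of $f$. Since the conditions of \Cref{thm::IPGD} impose $(L,\cM,+\infty)$-gradient-Lipschitz continuity, the Lipschitz estimate on $\nabla f$ holds globally on $\bR^d$; in particular, it applies to the pair $(\x_t,\x_{t+1})$ regardless of their location relative to $\cM$.

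First I would establish the standard quadratic upper bound: for any $\x,\y\in \bR^d$, $f(\y) \leq f(\x) + \inner{\nabla f(\x)}{\y-\x} + (L/2)\norm{\y-\x}^2$. This is obtained by writing $f(\y) - f(\x) = \int_0^1 \inner{\nabla f(\x + s(\y-\x))}{\y - \x}\,ds$, applying Cauchy--Schwarz to the integrand of $f(\y)-f(\x)-\inner{\nabla f(\x)}{\y-\x}$, and bounding $\norm{\nabla f(\x + s(\y-\x)) - \nabla f(\x)} \leq Ls\norm{\y-\x}$ using the Lipschitz property before integrating in $s$.

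Next, substituting $\x = \x_t$ and $\y = \x_{t+1} = \x_t - \eta\nabla f(\x_t)$ into the quadratic upper bound yields $f(\x_{t+1}) - f(\x_t) \leq -\eta\norm{\nabla f(\x_t)}^2 + (L\eta^2/2)\norm{\nabla f(\x_t)}^2 = -\eta(1 - L\eta/2)\norm{\nabla f(\x_t)}^2$. The stepsize constraint $\eta \leq 1/L$ then gives $1 - L\eta/2 \geq 1/2$, from which the claimed inequality $f(\x_{t+1}) - f(\x_t) \leq -(\eta/2)\norm{\nabla f(\x_t)}^2$ follows immediately.

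There is no real obstacle here: this is the textbook descent lemma (see, e.g., \cite[Section 1.2.3]{nesterov2018lectures}, as cited in the statement). The only subtlety worth flagging is that the Lipschitz inequality is being used \emph{globally} between $\x_t$ and $\x_{t+1}$, which is permissible precisely because the variant of Lipschitz continuity assumed in \Cref{thm::IPGD} takes $\tau = +\infty$. If one instead had only the local version $(L,\cM,\tau)$-gradient-Lipschitz with finite $\tau$, an auxiliary argument would be required to certify that both $\x_t$ and $\x_{t+1}$ lie in $\cN_{\cM}(\tau)$ before applying the bound; this will become relevant when the same descent step is invoked inside the analysis of $\cM$-SOSP convergence in \Cref{subsec::proof-MSOSP}.
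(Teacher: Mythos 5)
Your proof is correct and is the standard textbook argument; the paper itself does not reprove this lemma but simply cites Nesterov, and your derivation (quadratic upper bound from global gradient-Lipschitzness, substitute the GD step, use $\eta \le 1/L$) matches that source exactly. The remark about the global $\tau = +\infty$ hypothesis being what licenses applying the bound along the whole segment $[\x_t, \x_{t+1}]$ is apt and correctly anticipates the extra bookkeeping needed in the local ($\cM$-SOSP) analysis.
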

    Equipped with~\Cref{lem::number} and~\Cref{lem::gd-lemma}, we are now ready to establish an upper bound on the maximum number of iterations of IPGD:
    \begin{proposition}[Bounding the iteration count of IPGD]\label{prop::number}
        Under the conditions of \Cref{thm::IPGD}, IPGD terminates within $T \leq \frac{4C^2\Delta_f}{\eta\epsilon^2}\left(\log^{4}\left(\frac{1}{\gamma}\right)+\log^4\left(\frac{\rho d\Delta_f}{\chi\epsilon}\right)\right)$ iterations.
    \end{proposition}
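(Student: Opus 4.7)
The plan is to combine \Cref{lem::number}, which bounds $T$ by $N_{\mathrm{large}}+N_{\mathrm{perturb}}\cdot T_{\mathrm{escape}}$, with separate bounds on the two counts $N_{\mathrm{large}}$ and $N_{\mathrm{perturb}}$, each obtained via a descent-type argument using the fact that $f$ can decrease by at most $\Delta_f$ in total. The final bound on $T$ will then follow by plugging in the parameter choices $G$, $F$, and $T_{\mathrm{escape}}$ specified in \Cref{alg::IPGD}.

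First, I would bound $N_{\mathrm{large}}$. At any iteration where $\lVert \nabla f(\x_t)\rVert > G$, no perturbation is applied, so a pure GD step is taken. \Cref{lem::gd-lemma} then guarantees a function-value decrease of at least $\tfrac{1}{2}\eta G^2$. Since the total decrease cannot exceed $\Delta_f$, we obtain
\begin{equation}
    N_{\mathrm{large}}\;\leq\;\frac{2\Delta_f}{\eta G^2}\;=\;\frac{2C^2\Delta_f\log^4(1/\gamma)}{\eta\epsilon^2}.
\end{equation}

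Next, I would bound $N_{\mathrm{perturb}}$. Consider a perturbation applied at iteration $T_{\mathrm{noise}}$, so that $\x_{T_{\mathrm{noise}}}\leftarrow \z_{T_{\mathrm{noise}}}+\bxi_{T_{\mathrm{noise}}}$ with $\lVert\bxi_{T_{\mathrm{noise}}}\rVert\leq\gamma$ and $\lVert\nabla f(\z_{T_{\mathrm{noise}}})\rVert\leq G$. By a standard quadratic Taylor bound using $(L,\cM,+\infty)$-gradient-Lipschitzness, the perturbation increases the function by at most $G\gamma+\tfrac{L}{2}\gamma^2$, which is negligible compared to $F$ under the stated choice of $\gamma$. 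If the algorithm did not terminate at iteration $T_{\mathrm{noise}}+T_{\mathrm{escape}}$, then by the return condition we have $f(\x_{T_{\mathrm{noise}}+T_{\mathrm{escape}}})-f(\z_{T_{\mathrm{noise}}})<-F$, so the net decrease over this cycle is at least $F - G\gamma - \tfrac{L}{2}\gamma^2\geq F/2$. Combined with the descent guarantee for any intermediate large-gradient steps (which only contribute further decrease), each non-terminating perturbation cycle reduces $f$ by at least $F/2$, and hence
\begin{equation}
    N_{\mathrm{perturb}}\;\leq\;\frac{2\Delta_f}{F}+1\;\leq\;\frac{3\Delta_f}{F},
\end{equation}
where the $+1$ accounts for the possibly-terminating last perturbation.

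Finally, I would combine these bounds via \Cref{lem::number}, substituting the expressions for $F$ and $T_{\mathrm{escape}}$:
\begin{equation}
    T\;\leq\;\frac{2C^2\Delta_f\log^4(1/\gamma)}{\eta\epsilon^2} + \frac{3\Delta_f}{F}\cdot T_{\mathrm{escape}}.
\end{equation}
A direct calculation, using the inequality $(a^3+b^3)(a+b)\leq 2(a^4+b^4)$ for $a,b\geq 1$ applied with $a=\log(1/\gamma)$ and $b=\log(\rho d\Delta_f/(\chi\epsilon))$, collapses the product $\tfrac{T_{\mathrm{escape}}}{F}$ to the claimed poly-logarithmic form and produces the stated bound $T\leq \frac{4C^2\Delta_f}{\eta\epsilon^2}\bigl(\log^{4}(1/\gamma)+\log^4(\rho d\Delta_f/(\chi\epsilon))\bigr)$ after absorbing constants. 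The main technical care required is ensuring that the perturbation-induced increase $G\gamma+\tfrac{L}{2}\gamma^2$ is indeed dominated by $F/2$ under the prescribed choice of $\gamma$; this is the one place where the smallness of $\gamma$ (together with the definitions of $G$ and $F$ in \Cref{alg::IPGD}) is genuinely used, and it is what allows each perturbation cycle to still yield a meaningful net decrease despite the added noise.
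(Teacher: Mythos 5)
Your proposal takes essentially the same route as the paper: bound $T$ by $N_{\mathrm{large}}+N_{\mathrm{perturb}}\cdot T_{\mathrm{escape}}$ via \Cref{lem::number}, bound $N_{\mathrm{large}}$ by descent (\Cref{lem::gd-lemma}) at the large-gradient iterations, bound $N_{\mathrm{perturb}}$ by the $F$-decrease implied by the non-termination condition, and combine with the product inequality $(a^3+b^3)(a+b)\leq 2(a^4+b^4)$. Two small remarks. First, your estimate $N_{\mathrm{large}}\leq 2\Delta_f/(\eta G^2)$ is actually the right one; the paper's displayed computation drops the square (writing $\leq -\tfrac12\eta G$ instead of $-\tfrac12\eta G^2$), though its final bound remains correct because the $N_{\mathrm{perturb}}\cdot T_{\mathrm{escape}}$ term dominates. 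Second, your subtraction of $G\gamma+\tfrac{L}{2}\gamma^2$ from $F$ is unnecessary: the termination test in \Cref{alg::IPGD} compares $f(\x_{T_{\mathrm{noise}}+T_{\mathrm{escape}}})$ against $f(\z_{T_{\mathrm{noise}}})$, the \emph{pre}-perturbation value, so non-termination directly yields a decrease of at least $F$ from $f(\z_{t_k})$ with the perturbation-induced increase already absorbed. Your version is still a valid lower bound on the decrease, but it costs you an extra constant factor in $N_{\mathrm{perturb}}$ (and hence in the final $T$ bound, where you would land at roughly $8C^2$ rather than the claimed $4C^2$); removing that subtraction recovers the paper's constant.
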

    \begin{proof}
        Indeed, if $\norm{\nabla f(\x_t)}> G$, \Cref{lem::gd-lemma} implies that a single GD update $\x_{t+1}=\x_t-\eta\nabla f(\x_t)$ reduces the objective by at least
        \begin{equation}
            f(\x_{t+1})-f(\x_t)\leq-\frac{1}{2}\eta\bignorm{\nabla f(\x_t)}^2\leq -\frac{1}{2}\eta G^2 = -\frac{1}{2}\eta\left(\frac{1}{C^2}\cdot \frac{\epsilon^2}{\log^4(1/\gamma)}\right) = -\frac{\eta\epsilon^2}{2C^2\log^4(1/\gamma)} 
        \end{equation}
        where the first equality is due to the definition of $G$ in \Cref{alg::IPGD}. This implies that:
        \begin{align}
            N_{\mathrm{large}}\leq \frac{\Delta_f}{\frac{\eta\epsilon^2}{2C^2\log^4(1/\gamma)}} = \frac{2C^2\Delta_f\log^4(1/\gamma)}{\eta\epsilon^2}.
        \end{align}
        Next, we provide an upper bound on $N_{\mathrm{perturb}}$. Let $t_k$ denote the iteration in which the $k$-th perturbation is applied to $\x_{t_k}$. For every $1\leq k\leq N_{\mathrm{perturb}}-1$, we must have
        \begin{align}
            f(\x_{t_k+T_{\mathrm{escape}}})-f(\z_{t_k})<-F = -\frac{1}{C}\cdot\frac{\epsilon^{3/2}}{\sqrt{\rho}\left(\log^{3}\left(\frac{1}{\gamma}\right)+\log^3\left(\frac{\rho d\Delta_f}{\chi\epsilon}\right)\right)},
        \end{align}
    as otherwise, the algorithm would have terminated within the first $N_{\mathrm{perturb}}-1$ perturbations, contradicting the definition of $N_{\mathrm{perturb}}$. The equality follows from the definition of $F$ in \Cref{alg::IPGD}.
    This implies that
    \begin{align}\label{eq::N_perturb}
            N_{\mathrm{perturb}}\leq \frac{\Delta_f}{\frac{1}{C}\cdot\frac{\epsilon^{3/2}}{\sqrt{\rho}\log^3(1/\gamma)}} = \frac{C\sqrt{\rho}\Delta_f\left(\log^{3}\left(\frac{1}{\gamma}\right)+\log^3\left(\frac{\rho d\Delta_f}{\chi\epsilon}\right)\right)}{\epsilon^{3/2}}.
        \end{align}
        Invoking \Cref{lem::number}, we obtain
        \begin{equation}
            \begin{aligned}
            T&\leq N_{\mathrm{large}}+N_{\mathrm{perturb}}\cdot T_{\mathrm{escape}}\\
            &\leq \frac{2C^2\Delta_f\log^4(1/\gamma)}{\eta\epsilon^2}+\frac{C\sqrt{\rho}\Delta_f\left(\log^{3}\left(\frac{1}{\gamma}\right)+\log^3\left(\frac{\rho d\Delta_f}{\chi\epsilon}\right)\right)}{\epsilon^{3/2}}\cdot \frac{C\left(\log\left(\frac{1}{\gamma}\right)+\log\left(\frac{\rho d\Delta_f}{\chi\epsilon}\right)\right)}{\eta\sqrt{\rho\epsilon}}\\
            &\leq \frac{4C^2\Delta_f\left(\log^{4}\left(\frac{1}{\gamma}\right)+\log^4\left(\frac{\rho d\Delta_f}{\chi\epsilon}\right)\right)}{\eta\epsilon^2}.
        \end{aligned}
        \end{equation}
        This completes the proof.
     \end{proof}

Next, we establish that the returned solution is indeed an approximate SOSP. To this goal, we rely on the following key proposition, which encapsulates the main novelty of our proof.

\begin{proposition}[Escaping strict saddle points with infinitesimal perturbation]
\label{prop::escaping-saddle}
    \begin{sloppypar}
        Suppose that the conditions of \Cref{thm::IPGD} are satisfied. Moreover, suppose the input $\z_t$ satisfies $\bignorm{\nabla f(\z_t)} \leq G$ and $\lambda_{\min}\left(\nabla^2 f(\z_t)\right)<-\sqrt{\rho\epsilon}$. Then, with probability at least $1-\frac{\chi}{N_{\mathrm{perturb}}}$, a single perturbation $\x_t = \z_t+\bxi_t$, where $\bxi_t\sim \mathrm{Unif}(\bB(\gamma))$, with perturbation radius satisfying $\gamma = O\left(\min\left\{\sqrt{\frac{\epsilon}{\rho}}\log^{-3}\left(\frac{\rho d\Delta_f}{\chi\epsilon}\right), \frac{\epsilon}{L}\right\}\right)$ followed by $T_{\mathrm{escape}}$ iterations of GD updates $\x_{t+1} = \x_t-\eta\nabla f(\x_t)$ with $\eta\leq \frac{1}{L}$ satisfies
    \end{sloppypar}
    \begin{equation}
        f(\x_{t+T_{\mathrm{escape}}})-f(\z_t)< -F,
    \end{equation}
    where $F:=\frac{\epsilon^{3/2}}{4608\sqrt{\rho}\left(\log^{3}\left(\frac{1}{\gamma}\right)+\log^3\left(\frac{\rho d\Delta_f}{\chi\epsilon}\right)\right)}$.
\end{proposition}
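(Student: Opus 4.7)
The strategy is to adapt the coupling-plus-volume argument of~\cite{jin2017escape, jin2021nonconvex} to the regime of infinitesimal perturbations, absorbing $\log(1/\gamma)$ factors into both the escape horizon $T_{\mathrm{escape}}$ and into a tighter \emph{localization radius}. Let $\e_1$ denote a unit eigenvector of $\nabla^2 f(\z_t)$ corresponding to its minimum eigenvalue $-\lambda$, where by hypothesis $\lambda>\sqrt{\rho\epsilon}$. Define the stuck region
\begin{equation}
\cR_{\mathrm{stuck}}:=\left\{\w_0\in \bB_{\z_t}(\gamma)\;:\; f(\w_{T_{\mathrm{escape}}})-f(\z_t)\geq -F,\ \w_{s+1}=\w_s-\eta\nabla f(\w_s)\right\}.
\end{equation}
It will suffice to show that $\Pr_{\bxi\sim \mathrm{Unif}(\bB(\gamma))}\bigl(\z_t+\bxi\in \cR_{\mathrm{stuck}}\bigr)\leq \chi/N_{\mathrm{perturb}}$; the conclusion of the proposition then follows from the definition of $\cR_{\mathrm{stuck}}$ together with a Taylor-expansion estimate controlling $f(\x_t)-f(\z_t)$.

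First I would prove a localization lemma: for any $\w_0\in \cR_{\mathrm{stuck}}$, the entire trajectory remains inside $\bB_{\z_t}(\cS)$ for some $\cS$. This follows by applying the descent lemma (\Cref{lem::gd-lemma}) to obtain $\sum_{s=0}^{T_{\mathrm{escape}}-1}\eta\|\nabla f(\w_s)\|^2\leq 2(f(\w_0)-f(\w_{T_{\mathrm{escape}}}))\leq 2F+O(L\gamma^2)$, and then combining this bound with Cauchy--Schwarz to control $\|\w_s-\w_0\|\leq \sqrt{s}\bigl(\sum_{r<s}\eta^2\|\nabla f(\w_r)\|^2\bigr)^{1/2}$. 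The key point is that the $\log^{3}(1/\gamma)$-suppression built into the definition of $F$ propagates into a correspondingly small $\cS$, so that $2\rho\cS<\lambda/2$; this inequality is essential for the coupling step below.

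Next I would execute the coupling. Fix $\w_0\in\cR_{\mathrm{stuck}}$, set $\w_0':=\w_0+\mu\e_1$ for a parameter $\mu>0$ to be chosen, and let $\v_s:=\w_s-\w_s'$. A second-order Taylor expansion of $\nabla f$ around $\z_t$ combined with Hessian-Lipschitzness yields
\begin{equation}
\v_{s+1}=\bigl(\I-\eta\nabla^2 f(\z_t)\bigr)\v_s+\eta\bm{\delta}_s,\qquad \|\bm{\delta}_s\|\leq \rho\bigl(\|\w_s-\z_t\|+\|\w_s'-\z_t\|\bigr)\|\v_s\|\leq 2\rho\cS\|\v_s\|.
\end{equation}
Projecting onto $\e_1$ and inducting on $s$, the inequality $2\rho\cS<\lambda/2$ ensures that the $\e_1$-component of $\v_s$ grows by at least a factor of $(1+\eta\lambda/2)$ per step and remains dominant over the off-$\e_1$ contributions, giving $\|\v_{T_{\mathrm{escape}}}\|\geq \tfrac{1}{2}\mu(1+\eta\sqrt{\rho\epsilon})^{T_{\mathrm{escape}}}$. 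I expect this inductive bookkeeping to be the main obstacle: over the now $\log(1/\gamma)$-longer horizon, the residual term $\bm{\delta}_s$ must be kept subdominant at \emph{every} step, which is precisely where the tightened localization radius is needed. Combining with the trivial bound $\|\v_{T_{\mathrm{escape}}}\|\leq 2\cS$ (both endpoints lie in $\bB_{\z_t}(\cS)$ by localization), the assumption $\w_0'\in\cR_{\mathrm{stuck}}$ is contradicted whenever
\begin{equation}
\mu\geq \mu_\star:=\frac{4\cS}{\bigl(1+\eta\sqrt{\rho\epsilon}\bigr)^{T_{\mathrm{escape}}}}.
\end{equation}
Hence $\cR_{\mathrm{stuck}}$ has thickness at most $\mu_\star$ in the $\e_1$ direction.

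Finally I would execute the volume estimate. With $T_{\mathrm{escape}}=\frac{C}{\eta\sqrt{\rho\epsilon}}\bigl(\log(1/\gamma)+\log(\rho d\Delta_f/(\chi\epsilon))\bigr)$ as defined in \Cref{alg::IPGD}, a standard estimate $\log(1+x)\geq x/2$ for small $x>0$ yields $(1+\eta\sqrt{\rho\epsilon})^{T_{\mathrm{escape}}}\geq (1/\gamma)^{C/2}\cdot\bigl(\rho d\Delta_f/(\chi\epsilon)\bigr)^{C/2}$, so $\mu_\star$ is polynomially small in $\gamma$. A slab-in-ball volume comparison in $\bB(\gamma)\subset \bR^d$ then gives
\begin{equation}
\Pr_{\bxi\sim\mathrm{Unif}(\bB(\gamma))}\bigl(\z_t+\bxi\in\cR_{\mathrm{stuck}}\bigr)\leq \frac{\mu_\star\sqrt{d}}{\gamma}\leq \frac{\chi}{N_{\mathrm{perturb}}},
\end{equation}
using the upper bound on $N_{\mathrm{perturb}}$ from the proof of \Cref{prop::number} and a sufficiently large choice of the absolute constant $C$. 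On the complementary event $\bxi\notin\cR_{\mathrm{stuck}}$ we have $f(\x_{t+T_{\mathrm{escape}}})-f(\x_t)<-F$ by definition; a routine Taylor bound using $\|\nabla f(\z_t)\|\leq G$ and the $(L,\cM,+\infty)$-gradient-Lipschitzness, combined with the assumed bound $\gamma=O\bigl(\min\{\sqrt{\epsilon/\rho}\log^{-3}(\rho d\Delta_f/(\chi\epsilon)),\,\epsilon/L\}\bigr)$, ensures $f(\x_t)-f(\z_t)\leq F/2$, giving the stated bound $f(\x_{t+T_{\mathrm{escape}}})-f(\z_t)<-F$ after adjusting the absolute constant hidden in $F$.
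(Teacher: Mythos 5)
Your proposal follows essentially the same coupling-plus-volume strategy as the paper's proof (\Cref{lem::improve-or-localize}, \Cref{lem::two-escape-one}, \Cref{lem::stuck-prob}): localize stuck trajectories, couple two GD runs offset along $\e_1$, show exponential separation forces one to leave the localization ball, then slab-in-ball volume comparison. The one place you organize things differently is the coupling step. You propose projecting $\v_s$ onto $\e_1$ and inducting to show both that the $\e_1$-component grows at rate $(1+\eta\lambda/2)$ and that it ``remains dominant over the off-$\e_1$ contributions''---which, as you note, requires a coupled induction on two quantities and is the delicate part. The paper instead unrolls the linear recursion exactly, writes $\vdelta_t=(\I-\eta\mH)^t\vdelta_0-\eta\sum_s(\I-\eta\mH)^{t-s-1}\mDelta_s\vdelta_s$, and sets $p_t=\|(\I-\eta\mH)^t\vdelta_0\|=(1-\eta\lambda_{\min})^t\omega$ (this is exact because $\vdelta_0$ lies along $\e_1$) and $q_t$ equal to the norm of the accumulated-error sum. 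A single scalar induction $q_t\le\tfrac12 p_t$, using the operator-norm bound $\|(\I-\eta\mH)^k\|\le(1-\eta\lambda_{\min})^k$ to control all non-dominant directions at once, then gives $\|\vdelta_t\|\ge\tfrac12 p_t$ without needing to track the orthogonal component separately. This sidesteps exactly the two-component bookkeeping you flag as the main obstacle, so you may find it worthwhile to adopt. Your localization step (descent lemma plus Cauchy--Schwarz) is literally \Cref{lem::improve-or-localize}, just applied in the contrapositive direction (``didn't improve $\Rightarrow$ localized'') rather than directly at the escape time; both readings of the same inequality are fine. The constants you propose ($2\rho\cS<\lambda/2$, $\mu_\star$, $C/2$ in the exponent) are in the right ballpark modulo the factor-of-two slack that the ``sufficiently large $C$'' in \Cref{alg::IPGD} absorbs.
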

Before presenting the proof of the above proposition, we first show how it can be used to complete the proof of~\Cref{thm::IPGD}.
\begin{proof}[Proof of \Cref{thm::IPGD}]
    \Cref{prop::number} readily establishes the presented upper bound on the iteration count. Moreover, the termination criterion ensures that $\bignorm{\nabla f(\z_{T_{\mathrm{noise}}})} \leq G$ and $f(\x_{T_{\mathrm{noise}}+T_{\mathrm{escape}}})-f(\z_{T_{\mathrm{noise}}})\geq -F$. By \Cref{prop::escaping-saddle}, it follows that $\lambda_{\min}\left(\nabla^2 f(\z_{T_{\mathrm{noise}}})\right)\geq-\sqrt{\rho\epsilon}$ with probability at least $1-\frac{\chi}{N_{\mathrm{perturb}}}$. Since the algorithm can visit at most $N_{\mathrm{perturb}}$, a simple union bound implies that $\z_{T_{\mathrm{noise}}}$ is an $\epsilon$-SOSP with probability at least $1-N_{\mathrm{perturb}}\cdot\frac{\chi}{N_{\mathrm{perturb}}}=1-\chi$, thereby completing the proof.
 \end{proof}

Now, we proceed with the proof of \Cref{prop::escaping-saddle}. 
We first introduce the following lemma.
\begin{lemma}[Improve or localize, adapted from Lemma~5.4 in \cite{jin2021nonconvex}]
\label{lem::improve-or-localize}
    Let $\x_0$ be any initial point. Suppose the GD updates $\x_{t+1} = \x_t - \eta \nabla f(\x_t)$ are generated with step size $\eta \leq \frac{1}{L}$. Then, for all $t \geq 0$, we have
\begin{equation}
    \bignorm{\x_t - \x_0} \leq \sqrt{2 \eta t \left( f(\x_0) - f(\x_t) \right)}.
\end{equation}
\end{lemma}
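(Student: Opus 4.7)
The plan is to refine the stuck-region argument of \cite{jin2017escape,jin2021nonconvex} so that the perturbation radius $\gamma$ can be taken arbitrarily small while the escape time $T_{\mathrm{escape}}$ grows only poly-logarithmically in $1/\gamma$. Fix the input $\z_t$, let $\v_1$ be a unit eigenvector of $\nabla^2 f(\z_t)$ attached to its most negative eigenvalue $\lambda_1 \leq -\sqrt{\rho\epsilon}$, and define the stuck set
\begin{equation*}
    \cR_{\mathrm{stuck}} := \bigl\{\bu \in \bB_{\z_t}(\gamma) : f(\x_{t+T_{\mathrm{escape}}}) - f(\z_t) \geq -F \text{ when } \x_t = \bu\bigr\}.
\end{equation*}
Since $\bxi_t \sim \mathrm{Unif}(\bB(\gamma))$, it suffices to bound $\mathrm{vol}(\cR_{\mathrm{stuck}})/\mathrm{vol}(\bB_{\z_t}(\gamma))$ by $\chi/N_{\mathrm{perturb}}$; the proposition then follows by the definition of the uniform perturbation.

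Two ingredients drive the argument. First, \Cref{lem::improve-or-localize} implies that every trajectory seeded in $\cR_{\mathrm{stuck}}$ remains within a ball of radius $\sqrt{2 \eta T_{\mathrm{escape}} F}$ of its own initial point, and hence within $\gamma + \sqrt{2 \eta T_{\mathrm{escape}} F}$ of $\z_t$ throughout the escape window; under the prescribed choices of $F$ and $T_{\mathrm{escape}}$ this radius remains $\tilde O(\sqrt{\epsilon/\rho})$, which verifies both the gradient/Hessian-Lipschitz hypotheses and justifies the linearization $\nabla f(\x_{t+s}) = \nabla f(\z_t) + \nabla^2 f(\z_t)(\x_{t+s}-\z_t) + O(\rho \|\x_{t+s}-\z_t\|^2)$. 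Second, I couple two seeds $\x_t, \x_t' \in \cR_{\mathrm{stuck}}$ with $\x_t - \x_t' = \delta \v_1$ for a scalar $\delta > 0$. Writing $\w_s = \x_{t+s} - \x_{t+s}'$, the GD recursion rewrites as
\begin{equation*}
    \w_{s+1} = (\id - \eta \nabla^2 f(\z_t))\w_s + \eta \mDelta_s, \qquad \|\mDelta_s\| \leq \rho\bigl(\|\x_{t+s}-\z_t\|+\|\x_{t+s}'-\z_t\|\bigr)\|\w_s\|.
\end{equation*}
An induction that uses the localization bound to keep $\|\eta \mDelta_s\| \leq \tfrac{1}{2}\eta\sqrt{\rho\epsilon}\|\w_s\|$ then yields $\|\w_{T_{\mathrm{escape}}}\| \geq \tfrac{1}{2}(1+\eta\sqrt{\rho\epsilon})^{T_{\mathrm{escape}}} \delta$, while combining with the localization bound gives the converse $\|\w_{T_{\mathrm{escape}}}\| \leq \delta + 2\sqrt{2 \eta T_{\mathrm{escape}} F}$, which forces
\begin{equation*}
    \delta \leq \frac{8\sqrt{2 \eta T_{\mathrm{escape}} F}}{(1+\eta\sqrt{\rho\epsilon})^{T_{\mathrm{escape}}}}.
\end{equation*}

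Consequently $\cR_{\mathrm{stuck}}$ is contained in a slab of thickness $\delta$ transverse to $\v_1$ inside $\bB_{\z_t}(\gamma)$, and the uniform measure of such a slab is at most $O(\delta\sqrt{d}/\gamma)$. Plugging in $T_{\mathrm{escape}} = \frac{C}{\eta\sqrt{\rho\epsilon}}\bigl(\log(1/\gamma) + \log(\rho d \Delta_f/(\chi\epsilon))\bigr)$ with $C$ large makes the denominator $(1+\eta\sqrt{\rho\epsilon})^{T_{\mathrm{escape}}}$ grow as an arbitrary power of $(1/\gamma)\cdot(\rho d \Delta_f/(\chi\epsilon))$, while the stated $F$ keeps the numerator only polylogarithmic, so the slab measure is at most $\chi/N_{\mathrm{perturb}}$ once the bound on $N_{\mathrm{perturb}}$ from \Cref{prop::number} is substituted. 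The hypothesis $\gamma = O(\sqrt{\epsilon/\rho}\,\log^{-3}(\rho d\Delta_f/(\chi\epsilon)))$ is exactly what ensures the localization radius fits below the critical scale $\sqrt{\epsilon/\rho}$; the secondary bound $\gamma = O(\epsilon/L)$ ensures the trajectory stays inside $\cN_{\cM}(\tau)$.

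The main obstacle is sustaining the coupling induction for the full $T_{\mathrm{escape}} = \Theta(\log(1/\gamma)/(\eta\sqrt{\rho\epsilon}))$ iterations when $\gamma$ is taken vanishingly small. The Hessian-Lipschitz remainder $\|\mDelta_s\|$ scales with the product of $\|\w_s\|$ and $\|\x_{t+s}-\z_t\|$, and the latter itself can grow exponentially along $\v_1$; absorbing this back into the dominant linear dynamics requires choosing $F$ small enough that the localization radius $\sqrt{\eta T_{\mathrm{escape}} F}$ stays strictly below $\Theta(\sqrt{\epsilon/\rho})$ throughout the window. This is precisely why the present $F$ decays as $\log^{-3}(1/\gamma)$ rather than being $\Theta(\epsilon)$ as in the original PGD analysis. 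Once the remainder is controlled inductively, the rest of the volume-slab argument goes through with strengthened parameters, and a union bound over the (at most) $N_{\mathrm{perturb}}$ perturbations delivers the overall failure probability $\chi$ in \Cref{thm::IPGD}.
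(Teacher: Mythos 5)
Your proposal does not prove the statement you were asked to prove. The statement is Lemma~\ref{lem::improve-or-localize} (``Improve or localize''), a short, self-contained estimate on plain GD iterates: $\norm{\x_t - \x_0} \leq \sqrt{2\eta t\left(f(\x_0) - f(\x_t)\right)}$ whenever $\eta \leq 1/L$. What you have written is a proof sketch of \Cref{prop::escaping-saddle} (the stuck-region/coupling argument for escaping approximate strict saddle points). Indeed, you explicitly invoke \Cref{lem::improve-or-localize} as one of the ``two ingredients'' driving your argument, so you are using the lemma rather than proving it. That is a fundamental mismatch: nothing in your proposal establishes the displayed inequality.

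For completeness, here is how the lemma actually follows, so you can see how far off-target the proposal is. By the descent lemma (\Cref{lem::gd-lemma}), each GD step satisfies $f(\x_{s+1}) - f(\x_s) \leq -\tfrac{\eta}{2}\norm{\nabla f(\x_s)}^2$, and summing gives $\sum_{s=0}^{t-1}\norm{\nabla f(\x_s)}^2 \leq \tfrac{2}{\eta}\left(f(\x_0) - f(\x_t)\right)$. Meanwhile, the triangle inequality gives $\norm{\x_t - \x_0} \leq \eta\sum_{s=0}^{t-1}\norm{\nabla f(\x_s)}$, and Cauchy--Schwarz bounds the right-hand side by $\eta\sqrt{t}\bigl(\sum_{s=0}^{t-1}\norm{\nabla f(\x_s)}^2\bigr)^{1/2}$. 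Combining the two displays yields $\norm{\x_t - \x_0}^2 \leq 2\eta t\left(f(\x_0) - f(\x_t)\right)$, which is the claim. Your stuck-region argument, whatever its merits for \Cref{prop::escaping-saddle}, cannot be spliced in here; you need this elementary two-line telescoping-plus-Cauchy--Schwarz computation instead.
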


Our next result is at the core of the guaranteed escape of IPGD from an approximate SSP. To explain the intuition behind this result, let $\z$ be an approximate SSP, satisfying $\norm{\nabla f(\z)}\leq \epsilon$ and $\lambda_{\min}(\nabla^2f(\z))<-\sqrt{\rho\epsilon}$. Let $\v$ be the eigenvector corresponding to the most negative eigenvalue of $\nabla^2f(\z)$. Consider any two initial points $\x_0, \x_0'$ such that $\x_0, \x_0'\in \bB_{\z}(\gamma)$ and $\x_0-\x_0' = \omega\cdot \v$ for some $\omega>0$. Our next result demonstrates that GD, when initialized at one of these points, must decrease the objective function rapidly, enabling an escape from $\z$. 

\begin{lemma}
\label{lem::two-escape-one}
Let $\z$ be an approximate SSP satisfying $\norm{\nabla f(\z)}\leq \epsilon$ and $\lambda_{\min}(\nabla^2f(\z))<-\sqrt{\rho\epsilon}$. Let $\v$ be the eigenvector corresponding to the most negative eigenvalue of $\nabla^2f(\z)$.
    Consider two points $\x_0, \x_0'$ satisfying $\x_0, \x_0'\in \bB_{\z}(\gamma)$ and $\x_0-\x_0' = \omega\cdot \v$ for some $0<\omega\leq 2\gamma$. Then, the two sequences $\{\x_t\}_{t=0}^T$ and $\{\x_t'\}_{t=0}^T$ generated by GD updates with constant stepsize $\eta\leq \frac{1}{L}$ satisfy the following property within $T\leq \frac{2}{\eta\sqrt{\rho\epsilon}}\log\left(\frac{1}{\omega}\right)$ 
    iterations:
    \begin{equation}
        \min\left\{f(\x_T), f(\x_T')\right\}-f(\z)\leq -\frac{\epsilon^{3/2}}{576\sqrt{\rho}}\log^{-3}\left(\frac{1}{\omega}\right)+\epsilon\gamma + \frac{L}{2}\gamma^2.
    \end{equation}
\end{lemma}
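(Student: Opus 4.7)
The plan is to argue by contradiction via a coupling between the two GD trajectories. Suppose both sequences fail to descend sufficiently, i.e., $f(\x_T)-f(\z) > -F+\epsilon\gamma + L\gamma^2/2$ and $f(\x_T')-f(\z) > -F+\epsilon\gamma + L\gamma^2/2$, where $F := \epsilon^{3/2}/(576\sqrt{\rho}\log^{3}(1/\omega))$. Since $\bignorm{\nabla f(\z)}\leq\epsilon$ and $f$ is $L$-smooth, the standard quadratic upper bound gives $f(\x_0)-f(\z)\leq\epsilon\gamma+L\gamma^2/2$, and similarly for $\x_0'$. Combining with the contradictory hypothesis yields $f(\x_0)-f(\x_T)<F$ for both trajectories. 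Because \Cref{lem::gd-lemma} makes $f$ monotone non-increasing along GD, \Cref{lem::improve-or-localize} delivers the localization $\bignorm{\x_t-\x_0}\leq \sqrt{2\eta T F}=:S$ for every $t\leq T$, and hence $\bignorm{\x_t-\z}\leq S+\gamma$. The triangle inequality then yields the ceiling $\bignorm{\x_T-\x_T'}\leq 2S+\omega$, which is the bound I will ultimately contradict.

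I track the coupled dynamics of $\vdelta_t := \x_t-\x_t'$. Writing $\mathcal{H} := \nabla^2 f(\z)$ and applying Hessian-Lipschitzness of $f$, a first-order Taylor expansion gives the recurrence
\begin{equation*}
    \vdelta_{t+1} = (\I-\eta\mathcal{H})\vdelta_t + \eta\bm{\psi}_t, \quad \bignorm{\bm{\psi}_t}\leq 2\rho(S+\gamma)\bignorm{\vdelta_t},
\end{equation*}
valid whenever both $\x_t,\x_t'\in\bB_{\z}(S+\gamma)$. Let $\lambda_\star := -\lambda_{\min}(\mathcal{H})\geq\sqrt{\rho\epsilon}$; since $\eta\leq 1/L$, $\bignorm{\I-\eta\mathcal{H}}\leq 1+\eta\lambda_\star$. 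Iterating gives $\bignorm{\vdelta_t}\leq\omega(1+\eta\lambda_\star+2\eta\rho(S+\gamma))^t$. The constants in $F$ and $T$ are calibrated so that $S\leq\sqrt{\epsilon/\rho}/(12\log(1/\omega))$ and $2\eta\rho(S+\gamma)T\leq 1/3$---the $\log^{-3}(1/\omega)$ factor in $F$ is precisely what keeps $S$ small in $\log(1/\omega)$---yielding the cleaner envelope $\bignorm{\vdelta_t}\leq 2\omega(1+\eta\lambda_\star)^t$ for all $t\leq T$.

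Projecting the recurrence onto the most-negative-eigenvector direction $\v$ and using the envelope to bound $\bignorm{\bm{\psi}_t}\leq 4\rho(S+\gamma)\omega(1+\eta\lambda_\star)^t$, I obtain
\begin{equation*}
    \inner{\v}{\vdelta_{t+1}}\geq (1+\eta\lambda_\star)\inner{\v}{\vdelta_t} - 4\eta\rho(S+\gamma)\omega(1+\eta\lambda_\star)^t.
\end{equation*}
Dividing by $(1+\eta\lambda_\star)^{t+1}$ and telescoping from $t=0$, with $\inner{\v}{\vdelta_0}=\omega$, yields $\inner{\v}{\vdelta_T}\geq (1+\eta\lambda_\star)^T\omega\bigl(1-4\eta\rho(S+\gamma)T\bigr)\geq (1+\eta\lambda_\star)^T\omega/3$. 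Since $\log(1+\eta\lambda_\star)\geq\eta\lambda_\star/2\geq\eta\sqrt{\rho\epsilon}/2$ (using $\eta\lambda_\star\leq 1$), the choice $T=\lceil 2\log(1/\omega)/(\eta\sqrt{\rho\epsilon})\rceil$ ensures $(1+\eta\lambda_\star)^T\geq 1/\omega$, so $\inner{\v}{\vdelta_T}\geq 1/3$. Combined with $\bignorm{\vdelta_T}\leq 2S+\omega<1/3$ (which holds for sufficiently small $\gamma$ as assumed in \Cref{prop::escaping-saddle}), this contradicts our hypothesis and completes the proof.

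The main obstacle is avoiding a loose bound on $\bignorm{\bm{\psi}_t}$. A direct estimate $\bignorm{\bm{\psi}_t}\leq 2\rho(S+\gamma)\cdot 2(S+\gamma)=4\rho(S+\gamma)^2$ that uses only the localization $\bignorm{\vdelta_t}\leq 2(S+\gamma)$ produces the weaker lower bound $\inner{\v}{\vdelta_T}\geq 1-O(\rho(S+\gamma)^2/(\omega\lambda_\star))$, which would require $\omega\geq\Omega(\sqrt{\epsilon/\rho}/\log^2(1/\omega))$ and hence fail for arbitrarily small $\omega\leq 2\gamma$. Escaping this trap requires the two-step scheme: first establish the exponential envelope $\bignorm{\vdelta_t}\leq 2\omega(1+\eta\lambda_\star)^t$, and only then bound the $\v$-component error so that the cumulative Hessian-Lipschitz contribution scales with $\omega$ rather than with $(S+\gamma)^2$. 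This is what permits the lemma to be valid uniformly over all $\omega\in(0,2\gamma]$.
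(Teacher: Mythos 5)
Your argument is a genuinely different reformulation of the same core idea. Both proofs hinge on the coupled dynamics of $\vdelta_t := \x_t - \x_t'$, the exponential growth along the most-negative eigendirection, and the improve-or-localize inequality (\Cref{lem::improve-or-localize}). But the paper argues \emph{directly}: it sets $\phi := \sqrt{\epsilon}/(6\sqrt{\rho}\log(1/\omega))$, defines $T$ as the first time either trajectory exits $\bB_\z(\phi)$, bounds $T$ by proving $q_t \le \tfrac12 p_t$ by induction (where $p_t=\norm{(\I-\eta\mH)^t\vdelta_0}$ is the principal mode and $q_t$ is the accumulated Hessian-error), then applies improve-or-localize \emph{forward} to the escaping trajectory to convert the displacement $\ge\phi/2$ into a function decrease. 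You argue by \emph{contradiction}: assume neither trajectory descends by $F$, apply improve-or-localize \emph{backward} to obtain a localization radius $S=\sqrt{2\eta T F}$, and show the projection $\inner{\v}{\vdelta_T}$ outgrows the resulting ceiling $\norm{\vdelta_T}\le 2S+\omega$. Your two-step envelope-then-projection scheme plays exactly the role of the paper's $q_t\le\tfrac12 p_t$ induction, and you correctly identify why this refinement is needed: a one-shot bound on $\norm{\bm{\psi}_t}$ using only localization would force $\omega$ to be bounded below, breaking uniformity in $\omega$.

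Two cautions. First, your calibration is a hair too loose: plugging $T=2\log(1/\omega)/(\eta\sqrt{\rho\epsilon})$ and $F=\epsilon^{3/2}/(576\sqrt{\rho}\log^{3}(1/\omega))$ into $S=\sqrt{2\eta T F}$ gives $S=\sqrt{\epsilon}/(12\sqrt{\rho}\log(1/\omega))$ and hence $2\eta\rho S T = 1/3$ \emph{exactly}, leaving no slack for the $\gamma$ contribution in your claimed bound $2\eta\rho(S+\gamma)T\le 1/3$; with $\gamma$ as large as $\phi/2=S$ one actually gets $4\eta\rho(S+\gamma)T\le 4/3>1$, which kills the factor $\bigl(1-4\eta\rho(S+\gamma)T\bigr)$. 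You would need to tighten the constant in $F$ (and hence in the lemma's statement, which then propagates into \Cref{prop::escaping-saddle}) or exploit that $\gamma$ is much smaller than $S$ under the proposition's hypotheses. Second, your contradiction requires $2S+\omega<1/3$, an absolute-scale bound, whereas the paper only compares $\norm{\vdelta_T}$ to $2\phi$ and lets the final decrease scale with $\phi^2$. Both proofs ultimately lean on $\phi\le 1/4$ (the paper uses it explicitly in step $(b)$ of its induction), so this is a shared implicit assumption, but the paper's formulation is somewhat less fragile to the size of $\sqrt{\epsilon/\rho}$.
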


\begin{proof}
To streamline the presentation, let us define the following quantity:
\begin{align}
    \phi:=\frac{\sqrt{\epsilon}}{6\sqrt{\rho}\log\left(\frac{1}{\omega}\right)}.
\end{align}
     Let $T$ be the first iteration satisfying $\max\{\bignorm{\x_T-\z}, \bignorm{\x_T'-\z}\}\geq \phi$. First, we show that $T\leq \frac{2}{\eta\sqrt{\rho\epsilon}}\log\left(\frac{1}{\omega}\right)$. Evidently, we have $\max\{\bignorm{\x_t-\z}, \bignorm{\x_t'-\z}\}< \phi$ for all $0\leq t\leq T-1$. We denote $\vdelta_t=\x_t-\x_t'$, $ \mH=\nabla^2 f(\z)$, and $\mDelta_t=\int_0^1\left(\nabla^2 f\left(\x_t^{\prime}+s\vdelta_t\right)-\mH\right) \mathrm{d} s$. Then, for every $0\leq t\leq T-1$: 
    \begin{equation}
        \begin{aligned}
            \bignorm{\mDelta_t}\leq \int_0^1 \rho\bignorm{\x_t^{\prime}+s\vdelta_t - \z} \mathrm{d} s\leq \rho \max\left\{\bignorm{\x_t-\z}, \bignorm{\x_t'-\z}\right\}< \rho \phi,
        \end{aligned}
    \end{equation}
    where we use the Hessian Lipschitzness in the first inequality. By the Fundamental Theorem of Calculus, we have $\nabla f\left(\x_{t-1}\right)-\nabla f\left(\x_{t-1}'\right) = \mH\vdelta_{t-1}+\mDelta_{t-1}\vdelta_{t-1}$,
    which in turn implies
\begin{equation}
    \begin{aligned}
\vdelta_{t} & =\vdelta_{t-1}-\eta\left(\nabla f\left(\x_{t-1}\right)-\nabla f\left(\x_{t-1}'\right)\right)=(\I-\eta \mH) \vdelta_{t-1}-\eta \mDelta_{t-1} \vdelta_{t-1} \\
& =(\I-\eta \mH)^{t} \vdelta_0-\eta \sum_{s=0}^{t-1}(\I-\eta \mH)^{t-s-1} \mDelta_s \vdelta_s.
\end{aligned}
\end{equation}
Upon defining $p_t:=\bignorm{(\I-\eta \mH)^{t} \vdelta_0}$ and $q_t:=\bignorm{\eta \sum_{s=0}^{t-1}(\I-\eta \mH)^{t-s-1} \mDelta_s \vdelta_s}$, we have
\begin{equation}
    p_t-q_t\leq\bignorm{\vdelta_{t}}\leq p_t+q_t.
\end{equation}
Therefore, to control $\bignorm{\vdelta_{t}}$, it suffices to control 
$p_t$ and $q_t$ separately. Since $\x_0-\x_0' = \omega\cdot \v$ where $\v$ is the eigenvector corresponding to the most negative eigenvalue of $\nabla^2f(\z)$, we have 
\begin{equation}
    p_t= (1-\eta\lambda_{\min})^t \omega\geq (1+\eta\sqrt{\rho\epsilon})^t \omega,
\end{equation}
where we denote $\lambda_{\min}:=\lambda_{\min}(\nabla^2f(\z))\leq -\sqrt{\rho\epsilon}$.
Next, we use induction to show that $q_t\leq \frac{1}{2}p_t$ for all $0\leq t\leq T$. Indeed, the base case for $t=0$ holds since $q_0=0$. Now, suppose that $q_s\leq \frac{1}{2}p_s$ for all $0\leq s\leq t-1$.
Then, an upper bound on $q_t$ can be derived as follows
\begin{equation}
    \begin{aligned}
        q_t&\leq \eta \sum_{s=0}^{t-1}\bignorm{ (\I-\eta \mH)^{t-s-1}}\bignorm{\mDelta_s}\bignorm{\vdelta_s}\leq\eta \sum_{s=0}^{t-1}(1-\eta \lambda_{\min})^{t-s-1}\bignorm{\mDelta_s}\bignorm{\vdelta_s}\\
        &\stackrel{(a)}{\leq} \rho\phi\cdot\eta \sum_{s=0}^{t-1}(1-\eta \lambda_{\min})^{t-s-1}\cdot \frac{3}{2}p_s\leq  \frac{3}{2}\eta \rho\phi\cdot\sum_{s=0}^{t-1}(1-\eta \lambda_{\min})^{t-s-1}\cdot(1-\eta\lambda_{\min})^s p_0\\
        &=\frac{3}{2}\eta t \rho\phi\cdot p_{t-1}\stackrel{(b)}{\leq} \frac{1}{2}p_t.
    \end{aligned}
\end{equation}
Here in $(a)$, we use $\bignorm{\mDelta_t}\leq \rho \phi$ and the assumption $\bignorm{\vdelta_{s}}\leq p_s+q_s\leq \frac{3}{2}p_s$ for all $0\leq s\leq t-1$.
In $(b)$, we use the fact that $t\leq\frac{2}{\eta\sqrt{\rho\epsilon}}\log\left(\frac{1}{\omega}\right)$ and $\phi=\frac{\sqrt{\epsilon}}{6\sqrt{\rho}}\log^{-1}\left(\frac{1}{\omega}\right)\leq \frac{1}{4}$, which implies 
\begin{equation}
    \eta t \rho\phi\leq \eta \cdot \frac{2}{\eta\sqrt{\rho\epsilon}}\log\left(\frac{4\phi}{\omega}\right)\cdot \rho\cdot \frac{\sqrt{\epsilon}}{6\sqrt{\rho}}\log^{-1}\left(\frac{1}{\omega}\right)\leq \frac{1}{3}.
\end{equation}
Provided with this result, we further have 
\begin{equation}
    \bignorm{\vdelta_t}\geq p_t-q_t\geq \frac{1}{2}p_t\geq\frac{1}{2}(1+\eta\sqrt{\rho\epsilon})^t \omega.
\end{equation}
Therefore, within $T=\log\left(\frac{4\phi}{\omega}\right)/\log\left(1+\eta\sqrt{\rho\epsilon}\right)$ iterations, we must have $\bignorm{\vdelta_T}\geq 2\phi$. Note that $\phi\leq \frac{1}{4}$ and $\log(1+\eta\sqrt{\rho\epsilon})\geq \frac{\eta\sqrt{\rho\epsilon}}{1+\eta\sqrt{\rho\epsilon}}\geq \frac{1}{2}\eta\sqrt{\rho\epsilon}$. Hence, we have $T\leq \frac{2}{\eta\sqrt{\rho\epsilon}}\log\left(\frac{1}{\omega}\right)$. This implies
\begin{equation}
    \max\left\{\bignorm{\x_T-\z}, \bignorm{\x_T'-\z}\right\}\geq \frac{\bignorm{\x_T-\x_T'}}{2}=\frac{1}{2}\norm{\vdelta_T}\geq \phi.
\end{equation}
Now, without loss of generality, assume $\bignorm{\x_T-\z}\geq \phi$. Then, we have $\bignorm{\x_T-\x_0}\geq \phi -\gamma\geq \frac{1}{2}\phi$, and \Cref{lem::improve-or-localize} implies that 
    \begin{equation}
        f(\x_T)-f(\x_0)\leq -\frac{1}{8\eta T} \cdot \phi^2\leq -\frac{\epsilon^{3/2}}{576\sqrt{\rho}}\log^{-3}\left(\frac{1}{\omega}\right),
    \end{equation}
    where the last inequality follows from the provided upper bound on $T$ and the definition of $\phi$.
    On the other hand, we have
    \begin{equation}
        f(\x_0)-f(\z)\leq \inner{\nabla f(\z)}{\x_0-\z}+\frac{L}{2}\norm{\x_0-\z}^2\leq \epsilon\gamma + \frac{L}{2}\gamma^2.
    \end{equation}
    Combining the above two inequalities leads to
    \begin{equation}
        f(\x_T)-f(\z)\leq -\frac{\epsilon^{3/2}}{576\sqrt{\rho}}\log^{-3}\left(\frac{1}{\omega}\right)+\epsilon\gamma + \frac{L}{2}\gamma^2,
    \end{equation}
which is the desired result.
 \end{proof}

To proceed, we define the ``stuck region'' as follows
\begin{equation}
    \begin{aligned}
        \cX_{\mathrm{stuck}}:=\left\{\x_0\in \bB_{\z}(\gamma):f(\x_T)-f(\z)\geq -F\text{ where }\x_{t+1}=\x_t-\eta\nabla f(\x_t), \forall 0\leq t\leq T-1\right\}.
    \end{aligned}
\end{equation}
Here $F=\frac{\epsilon^{3/2}}{4608\sqrt{\rho}\left(\log^{3}\left(\frac{1}{\gamma}\right)+\log^3\left(\frac{\rho d\Delta_f}{\chi\epsilon}\right)\right)}$ and $T=\frac{2}{\eta\sqrt{\rho\epsilon}}\log(\frac{1}{\omega})$. Our next lemma shows that the probability of $\x_0\sim \mathrm{Unif}(\bB_{\z}(\gamma))$ falling within the stuck region is small.
\begin{sloppypar}
    \begin{lemma}
\label{lem::stuck-prob}
Consider the setting of \Cref{lem::two-escape-one}.
    Suppose that $\omega=\sqrt{\frac{2\pi}{d+1}}\frac{\chi}{N_{\mathrm{perturb}}}\gamma$ and $\gamma=O\left(\min\left\{\sqrt{\frac{\epsilon}{\rho}}\log^{-4}\left(\frac{\rho d\Delta_f}{\chi\epsilon}\right), \frac{\epsilon}{L}\right\}\right)$. Then, for $\x_0\sim \mathrm{Unif}(\bB_{\z}(\gamma))$, we have 
    \begin{equation}
        \bP\left(\x_0 \in \cX_{\mathrm{stuck}}\right)\leq \frac{\chi}{N_{\mathrm{perturb}}}.
    \end{equation}
\end{lemma}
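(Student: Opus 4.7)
The plan is to show that the stuck region is thin in the direction $\v$ of the most negative eigenvalue, and then compare its $d$-dimensional volume to that of the full ball $\bB_{\z}(\gamma)$. The key leverage comes from \Cref{lem::two-escape-one}: if $\x_0$ and $\x_0'=\x_0-\omega\v$ are both in $\bB_{\z}(\gamma)$ and differ only along $\v$ by $\omega$, then after $T=\frac{2}{\eta\sqrt{\rho\epsilon}}\log(1/\omega)$ GD iterations at least one of the two trajectories decreases the objective by at least $\frac{\epsilon^{3/2}}{576\sqrt{\rho}}\log^{-3}(1/\omega)-\epsilon\gamma-\frac{L}{2}\gamma^2$. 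Consequently, at most one of the two starting points can lie in $\cX_{\mathrm{stuck}}$, provided the quantity above exceeds $F$.

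First I would verify this separation bound under the given parameter choices. Using $\omega=\sqrt{\frac{2\pi}{d+1}}\frac{\chi}{N_{\mathrm{perturb}}}\gamma$ and the bound $N_{\mathrm{perturb}}\leq \tilde O\left(\sqrt{\rho}\Delta_f/\epsilon^{3/2}\right)$ from \eqref{eq::N_perturb}, one gets $\log(1/\omega)=O\!\left(\log(1/\gamma)+\log(\rho d\Delta_f/(\chi\epsilon))\right)$, hence
\[
\frac{\epsilon^{3/2}}{576\sqrt{\rho}}\log^{-3}(1/\omega)\geq \frac{\epsilon^{3/2}}{C_1\sqrt{\rho}\left(\log^3(1/\gamma)+\log^3\left(\frac{\rho d\Delta_f}{\chi\epsilon}\right)\right)}\geq 2F
\]
for a suitable absolute constant $C_1$. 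The upper bounds on $\gamma$, namely $\gamma=O(\epsilon/L)$ and $\gamma=O(\sqrt{\epsilon/\rho}\log^{-4}(\rho d\Delta_f/(\chi\epsilon)))$, guarantee that the perturbation remainder $\epsilon\gamma+\frac{L}{2}\gamma^2$ is at most $F$, so the net decrease exceeds $F$. Therefore for any $\x_0\in\bB_{\z}(\gamma)$, the shifted point $\x_0-\omega\v$ (if it lies in $\bB_{\z}(\gamma)$) cannot also be in $\cX_{\mathrm{stuck}}$.

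Next I would translate this one-dimensional thinness into a volume bound. Decompose $\bR^d=\Span(\v)\oplus \v^{\perp}$ and write each $\x\in\bB_{\z}(\gamma)$ as $\z+s\v+\w$ with $s\in\bR$ and $\w\perp\v$. For each fixed $\w$ with $\|\w\|\leq\gamma$, the slice $\cX_{\mathrm{stuck}}\cap(\z+\bR\v+\w)$ is an interval of length at most $\omega$ (any two points in this slice farther apart than $\omega$ would contradict the previous paragraph). Integrating over $\w$ gives
\[
\mathrm{Vol}(\cX_{\mathrm{stuck}})\leq \omega\cdot \mathrm{Vol}(\bB_{d-1}(\gamma)),
\]
and the standard volume ratio $\mathrm{Vol}(\bB_{d-1}(\gamma))/\mathrm{Vol}(\bB_d(\gamma))\leq \sqrt{(d+1)/(2\pi)}/\gamma$ then yields
\[
\bP(\x_0\in\cX_{\mathrm{stuck}})=\frac{\mathrm{Vol}(\cX_{\mathrm{stuck}})}{\mathrm{Vol}(\bB_d(\gamma))}\leq \frac{\omega}{\gamma}\cdot\sqrt{\frac{d+1}{2\pi}}=\frac{\chi}{N_{\mathrm{perturb}}},
\]
exactly the desired inequality by the choice of $\omega$.

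The main obstacle I anticipate is the careful bookkeeping of logarithmic factors: one must confirm that $\log(1/\omega)$ is dominated (up to constants) by $\log(1/\gamma)+\log(\rho d\Delta_f/(\chi\epsilon))$ so that the escape amount in \Cref{lem::two-escape-one} beats $F$ with a factor of two to spare, and simultaneously that the $\gamma$-dependent error terms $\epsilon\gamma+\frac{L}{2}\gamma^2$ fit inside the slack. The volume-ratio argument itself is standard once the one-dimensional separation is in place; the subtlety is purely in matching the exponents of the various logarithms in the definitions of $F$, $T_{\mathrm{escape}}$, and the perturbation radius.
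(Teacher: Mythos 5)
Your proposal is correct and follows essentially the same route as the paper: you invoke \Cref{lem::two-escape-one} on pairs $\x_0,\x_0'=\x_0-\omega\v$, check that the escape amount beats $F$ with a factor of two to spare (matching the paper's chain through the intermediate constant $2304$, i.e.\ $2F$), absorb $\epsilon\gamma+\tfrac{L}{2}\gamma^2$ into the remaining slack using the stated bounds on $\gamma$, and close with the same $\omega\cdot\operatorname{vol}(\bB^{d-1}(\gamma))/\operatorname{vol}(\bB^{d}(\gamma))$ volume-ratio argument. The only difference is presentational (you make the slicing/Fubini step and the "at most one of each pair can be stuck" observation explicit), so there is nothing substantive to add.
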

\end{sloppypar}
\begin{proof}
Using \Cref{lem::two-escape-one}, we show that for any two GD sequences with initial points $\x_0, \x_0'\in \bB_{\z}(\gamma)$ satisfying $\x_0-\x_0'=\omega \bv$, at least one of them lies outside the stuck region. We have
\begin{equation}
    \begin{aligned}
        \min\left\{f(\x_T), f(\x_T')\right\}-f(\z)&\leq -\frac{\epsilon^{3/2}}{576\sqrt{\rho}}\log^{-3}\left(\frac{1}{\omega}\right)+\epsilon\gamma + \frac{L}{2}\gamma^2\\
    &\stackrel{(a)}{\leq} -\frac{\epsilon^{3/2}}{2304\sqrt{\rho} \left(\log^{3}\left(\frac{1}{\gamma}\right)+\log^3\left(\frac{\rho d\Delta_f}{\chi\epsilon}\right)\right)}+\epsilon\gamma + \frac{L}{2}\gamma^2\\
        &\stackrel{(b)}{\leq} -\frac{\epsilon^{3/2}}{4608\sqrt{\rho}\left(\log^{3}\left(\frac{1}{\gamma}\right)+\log^3\left(\frac{\rho d\Delta_f}{\chi\epsilon}\right)\right)}=-F.
    \end{aligned}
\end{equation}
\begin{sloppypar}
    \noindent Here in $(a)$, we use the fact that $\omega=\sqrt{\frac{2\pi}{d+1}}\frac{\chi}{N_{\mathrm{perturb}}}\gamma$ and \Cref{eq::N_perturb}.
Moreover, $(b)$ follows from the facts that $L\gamma^2=O(\epsilon\gamma)$ since $\gamma=O\left(\frac{\epsilon}{L}\right)$ and $\epsilon\gamma=O\bigg(\frac{\epsilon^{3/2}}{\sqrt{\rho} \left(\log^{3}\left(\frac{1}{\gamma}\right)+\log^3\left(\frac{\rho d\Delta_f}{\chi\epsilon}\right)\right)}\bigg)$ since $\gamma = O\left(\sqrt{\frac{\epsilon}{\rho}}\log^{-3}\left(\frac{\rho d\Delta_f}{\chi\epsilon}\right)\right)$. This implies that the width of the stuck region in the direction $\bv$ is at most $\omega$. Hence, we can control $\bP\left(\x_0 \in \cX_{\mathrm{stuck}}\right)$ as follows
\end{sloppypar}
    \begin{equation}
        \begin{aligned}
            \bP\left(\x_0 \in \cX_{\mathrm{stuck}}\right)=\frac{\operatorname{vol}\left(\cX_{\text{stuck}}\right)}{\operatorname{vol}\left(\bB^d(\gamma)\right)} \leq \frac{\omega \cdot \operatorname{vol}\left(\bB^{d-1}(\gamma)\right)}{\operatorname{vol}\left(\bB^d(\gamma)\right)}=\frac{\omega \Gamma(d / 2+1)}{ \sqrt{\pi} \gamma\Gamma(d / 2+1 / 2)} &\leq \frac{\omega}{\gamma} \sqrt{\frac{d+1}{2\pi}}\leq \frac{\chi}{N_{\mathrm{perturb}}}.
        \end{aligned}
    \end{equation}
    Here, $\operatorname{vol}(\cX)$ denotes the volume of the set $\cX$. For the $d$-dimensional unit ball, we have $\operatorname{vol}(\bB^d(1))=\frac{\pi^{d/2}}{\Gamma\left(\frac{d}{2}+1\right)}$, where $\Gamma(\cdot)$ is the Gamma function defined as $\Gamma\left(\frac{d}{2}+1\right)=\sqrt{\pi}\left(d-\frac{1}{2}\right)\cdot \left(d-\frac{3}{2}\right)\cdot\ldots\cdot \frac{1}{2}$. In the second inequality, we use $\frac{\Gamma(x+1)}{\Gamma(x+1 / 2)}<\sqrt{x+\frac{1}{2}}$ for all $x\geq 0$.
 \end{proof}

We are now ready to provide the proof of \Cref{prop::escaping-saddle}.
\begin{proof}[Proof of \Cref{prop::escaping-saddle}]
    The proof is immediately implied by \Cref{lem::stuck-prob}, after noting that $T=\frac{2}{\eta\sqrt{\rho\epsilon}}\log(\frac{1}{\omega})\leq \frac{4}{\eta\sqrt{\rho\epsilon}}\left(\log\left(\frac{1}{\gamma}\right)+\log\left(\frac{\rho d\Delta_f}{\chi\epsilon}\right)\right)\leq T_{\mathrm{escape}}$.
 \end{proof}

\section{Establishing Implicit Regularization of IPGD}
\label{subsec::proof-MSOSP}
To establish the implicit regularization of IPGD, we first show how the deviation rate (\Cref{def::deviation_rate}) governs the residual norm dynamic, as outlined in \Cref{prop::residual_norm}. First, we present a basic property of projection operator $\cP^\perp_{\cM}(\cdot)$:
\begin{lemma}\label{lem::project}
    For any $\x,\y\in \mathbb{R}^d$, we have $\norm{\cP^\perp_{\cM}(\x+\y)}\leq \norm{\cP^\perp_{\cM}(\x)}+\norm{\y}$.
\end{lemma}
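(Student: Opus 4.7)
The plan is to reduce the claim to the statement that the Euclidean distance function $\dist(\cdot, \cM)$ is $1$-Lipschitz. By definition, $\cP^\perp_{\cM}(\z) = \z - \cP_{\cM}(\z)$ and $\cP_{\cM}(\z) \in \arg\min_{\m \in \cM}\norm{\z-\m}$, so $\norm{\cP^\perp_{\cM}(\z)} = \dist(\z, \cM)$. Thus the statement is equivalent to $\dist(\x+\y, \cM) \leq \dist(\x, \cM) + \norm{\y}$.

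To prove this, I would pick the specific point $\m^{\star} := \cP_{\cM}(\x) \in \cM$ (choosing an arbitrary minimizer if the $\arg\min$ is not unique, as allowed by the paper's footnote in the notation section). Then by the definition of $\dist$ as an infimum over points of $\cM$, followed by the ordinary triangle inequality in $\mathbb{R}^d$, I get
\begin{equation}
\norm{\cP^\perp_{\cM}(\x+\y)} = \dist(\x+\y, \cM) \;\leq\; \norm{\x+\y - \m^{\star}} \;\leq\; \norm{\x - \m^{\star}} + \norm{\y} \;=\; \norm{\cP^\perp_{\cM}(\x)} + \norm{\y},
\end{equation}
which is exactly the desired inequality.

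There is no real obstacle here: the lemma is essentially a one-line consequence of the triangle inequality applied to the single test point $\cP_{\cM}(\x)$, and it requires no structural assumption on $\cM$ (in particular, no convexity or smoothness). The only mild subtlety is that when $\cM$ is not closed or the nearest-point projection is not unique, one should interpret $\cP_{\cM}(\x)$ as any element of $\arg\min_{\m\in\cM}\norm{\x-\m}$; the argument above is insensitive to this choice because it only uses that $\norm{\x - \cP_{\cM}(\x)} = \dist(\x, \cM)$ and that $\cP_{\cM}(\x) \in \cM$.
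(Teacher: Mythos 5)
Your proof is correct and is essentially identical to the paper's: both pick the test point $\cP_{\cM}(\x)\in\cM$ (the paper's $\v^\star$, your $\m^\star$), compare against the minimizer for $\x+\y$, and apply the triangle inequality. Your reformulation in terms of the $1$-Lipschitzness of $\dist(\cdot,\cM)$ is a clean way to phrase the same argument.
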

\begin{proof}
    Defining $\u^\star = \argmin_{\u\in \cM}\norm{\u-(\x+\y)}$ and $\v^\star = \argmin_{\v\in \cM}\norm{\v-\x}$, we have:
    \begin{align}
        \norm{\cP^\perp_{\cM}(\x+\y)} = \norm{\u^\star-(\x+\y)}\leq \norm{\v^\star-(\x+\y)}\leq \norm{\v^\star-\x}+\norm{\y}=\norm{\cP^\perp_{\cM}(\x)}+\norm{\y},
    \end{align}
    which completes the proof.
 \end{proof}

Our next lemma provides a useful control over the dynamic of $\norm{\x_{t+1}^{\perp}}$:

\begin{lemma}[Upper bound on the residual norm]\label{lem::upperbound}
    Suppose that conditions of \Cref{prop::residual_norm} are satisfied. Then, we have:
    \begin{align}
    \norm{\x_{t+1}^{\perp}}\leq \norm{\left(\I-\eta\nabla^2f\big(\xsharp_t\big)\right)\x_t^{\perp}}+\frac{\eta\rho}{2}\norm{\x_t^{\perp}}^2.
    \end{align}
\end{lemma}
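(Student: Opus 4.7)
The plan is to expand the GD update around the signal $\xsharp_t$, isolate a piece that stays in $\cM$, and treat the remainder with Hessian-Lipschitz control. First I would rewrite the step $\x_{t+1}=\x_t-\eta\nabla f(\x_t)$ by adding and subtracting $\eta \nabla f(\xsharp_t)$ and $\eta \nabla^2 f(\xsharp_t)\xperp_t$, yielding
\begin{equation*}
\x_{t+1}=\underbrace{\xsharp_t-\eta\nabla f(\xsharp_t)}_{=:\,\A_t}+\bigl(\I-\eta\nabla^2 f(\xsharp_t)\bigr)\xperp_t-\eta R_t,
\end{equation*}
where $R_t:=\nabla f(\x_t)-\nabla f(\xsharp_t)-\nabla^2 f(\xsharp_t)\xperp_t$ collects the higher-order remainder. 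By Assumption~\ref{assumption::GD-on-the-manifold}, since $\xsharp_t\in\cM$ the point $\A_t$ lies in $\cM$, so that $\cP^\perp_{\cM}(\A_t)=\zero$.

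Next I would bound $\norm{R_t}$ by a standard Taylor argument. Writing
\begin{equation*}
R_t=\int_0^1\bigl(\nabla^2 f(\xsharp_t+s\xperp_t)-\nabla^2 f(\xsharp_t)\bigr)\xperp_t\,ds,
\end{equation*}
and noting that for every $s\in[0,1]$ the point $\xsharp_t+s\xperp_t$ lies in $\cN_{\cM}(\tau)$ (because $\xsharp_t\in\cM$ and $\norm{s\xperp_t}\leq\norm{\xperp_t}\leq\tau$), the $(\rho,\cM,\tau)$-Hessian-Lipschitz hypothesis applies to the integrand and gives
\begin{equation*}
\norm{R_t}\leq\int_0^1 \rho\,s\,\norm{\xperp_t}^2\,ds=\frac{\rho}{2}\norm{\xperp_t}^2.
\end{equation*}

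Finally I would apply $\cP^\perp_{\cM}$ to the decomposition and use Lemma~\ref{lem::project} twice, together with $\cP^\perp_{\cM}(\A_t)=\zero$:
\begin{align*}
\norm{\xperp_{t+1}}
&=\bignorm{\cP^\perp_{\cM}\bigl(\A_t+(\I-\eta\nabla^2 f(\xsharp_t))\xperp_t-\eta R_t\bigr)}\\
&\leq\bignorm{\cP^\perp_{\cM}\bigl(\A_t+(\I-\eta\nabla^2 f(\xsharp_t))\xperp_t\bigr)}+\eta\norm{R_t}\\
&\leq\bignorm{\cP^\perp_{\cM}(\A_t)}+\bignorm{(\I-\eta\nabla^2 f(\xsharp_t))\xperp_t}+\eta\norm{R_t}\\
&=\bignorm{(\I-\eta\nabla^2 f(\xsharp_t))\xperp_t}+\eta\norm{R_t}.
\end{align*}
Substituting $\norm{R_t}\leq\tfrac{\rho}{2}\norm{\xperp_t}^2$ delivers the claimed bound. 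The only step that requires any care is ensuring that the segment between $\xsharp_t$ and $\x_t$ stays inside the smoothness neighborhood so Hessian-Lipschitz can be applied; the hypothesis $\norm{\xperp_t}\leq\tau$ from Proposition~\ref{prop::residual_norm} makes this immediate, so there is no substantive obstacle—the argument is essentially a Taylor expansion combined with the closure assumption.
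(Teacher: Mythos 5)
Your proposal is correct and follows essentially the same route as the paper: decompose the GD step around the signal $\xsharp_t$, invoke the closure assumption (Assumption~\ref{assumption::GD-on-the-manifold}) to kill the projected signal term via Lemma~\ref{lem::project}, and use the Fundamental Theorem of Calculus together with local Hessian-Lipschitzness to bound the remainder by $\tfrac{\rho}{2}\norm{\xperp_t}^2$. The only cosmetic difference is that you perform the second-order Taylor split before projecting and peel off the linear and remainder terms with two applications of Lemma~\ref{lem::project}, whereas the paper projects first and then applies FTC to the surviving term; the estimates produced are identical.
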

\begin{proof}
    We first decompose $\x_{t+1}$ as follows:
    \begin{equation}
        \x_{t+1}=\xsharp_t-\eta\nabla f\big(\xsharp_t\big)+\xperp_t-\eta\left(\nabla f(\x_t)-\nabla f\big(\xsharp_t\big)\right).
    \end{equation}
    Applying the projection $\cP^{\perp}_{\cM}(\cdot)$ to both sides, we obtain
    \begin{equation}
        \begin{aligned}
    \norm{\xperp_{t+1}} &= \norm{\cP^{\perp}_{\cM}(\x_{t+1})}\\
    &= \norm{\cP^{\perp}_{\cM}\left(\xsharp_t-\eta\nabla f\big(\xsharp_t\big)+\xperp_t-\eta\left(\nabla f(\x_t)-\nabla f\big(\xsharp_t\big)\right)\right)}\\
    &\leq \norm{\cP^{\perp}_{\cM}\left(\xsharp_t-\eta\nabla f\big(\xsharp_t\big)\right)}+\norm{\xperp_t-\eta\left(\nabla f(\x_t)-\nabla f\big(\xsharp_t\big)\right)},
\end{aligned}
    \end{equation}
where the inequality follows from \Cref{lem::project}. Since $\xsharp_t\in \cM$ and $\cM$ is assumed to be closed under GD updates, we have $\norm{\cP^{\perp}_{\cM}\left(\xsharp_t-\eta\nabla f\big(\xsharp_t\big)\right)} = 0$, which implies:
    \begin{equation}
        \norm{\x_{t+1}^{\perp}}\leq \norm{\x_t^{\perp}+\eta\left(\nabla f\big(\x_t^{\sharp}\big)-\nabla f(\x_t)\right)}.
        \label{eq::46}
    \end{equation}
    Applying the Fundamental Theorem of Calculus to the above inequality leads to:
    \begin{equation}
    \label{eq::16-main}
        \begin{aligned}
            \norm{\x_{t+1}^{\perp}}&\leq \norm{\left(\I-\eta\int_0^1\nabla^2f\left(\xsharp_t+\tau \x_s^{\perp}\right)\vd \tau\right)\x_t^{\perp}}\\
            &\leq \norm{\left(\I-\eta\nabla^2f\big(\xsharp_t\big)\right)\x_t^{\perp}}+\eta \left(\int_0^1\norm{\nabla^2f\left(\xsharp_t+\tau \x_s^{\perp}\right)-\nabla^2f\big(\xsharp_t\big)}\vd\tau\right) \norm{\x_t^{\perp}}\\
            &\leq  \norm{\left(\I-\eta\nabla^2f\big(\xsharp_t\big)\right)\x_t^{\perp}}+\frac{\eta\rho}{2}\norm{\x_t^{\perp}}^2.
        \end{aligned}
    \end{equation}
    where in the last inequality, we use the local Hessian-Lipschitzness of $f$.
 \end{proof}
Equipped with \Cref{lem::upperbound}, we are now ready to present the proof of \Cref{prop::residual_norm}.

\begin{proof}[Proof of \Cref{prop::residual_norm}]
Due to \Cref{lem::upperbound}, the result trivially holds if $\norm{\xperp_t}=0$. Therefore, we assume $\norm{\xperp_t}>0$. 
    Let $\lambda_1\geq \lambda_2\geq \dots\geq \lambda_d$ denote the eigenvalues of $\nabla^2 f(\xsharp)$ with the corresponding eigenvectors $\v_1,\v_2,\dots, \v_d$. One can write:
    \begin{equation}
        \begin{aligned}
        \norm{\left(\I-\eta\nabla^2f\big(\xsharp_t\big)\right)\x_t^{\perp}}^2 &= \norm{\sum_{i=1}^d (1-\eta\lambda_i)\inner{\v_i}{\xperp_t}\v_i}^2=  \sum_{i=1}^d (1-\eta\lambda_i)^2\inner{\v_i}{\xperp_t}^2\\
        &=  \sum_{i=1}^d \left(1-2\eta\lambda_i+\eta^2\lambda_i^2\right)\inner{\v_i}{\xperp_t}^2.
    \end{aligned}
    \end{equation}
    On the other hand, since $\eta\leq \frac{1}{L}$, we have $\eta^2\lambda_i^2\leq \eta|\lambda_i|$, which implies:
    \begin{align}
        \begin{cases}
            1-2\eta\lambda_i+\eta^2\lambda_i^2\leq 1-\eta\lambda_i, & \text{if $\lambda_i>0$},\\
            1-2\eta\lambda_i+\eta^2\lambda_i^2\leq 1-3\eta\lambda_i, & \text{if $\lambda_i<0$}.
        \end{cases}
    \end{align}
    Therefore, we have
    \begin{equation}
    \begin{aligned}
        &\norm{\left(\I-\eta\nabla^2f\big(\xsharp_t\big)\right)\x_t^{\perp}}^2\leq  \sum_{i: \lambda_i>0} \left(1-\eta\lambda_i\right)\inner{\v_i}{\xperp_t}^2+\sum_{i: \lambda_i<0} \left(1-3\eta\lambda_i\right)\inner{\v_i}{\xperp_t}^2\\
        &= \norm{\xperp_t}^2+\eta\left(-\sum_{i: \lambda_i>0} \lambda_i\inner{\v_i}{\xperp_t}^2-\sum_{i: \lambda_i<0} 3\lambda_i\inner{\v_i}{\xperp_t}^2\right)\\
        &\stackrel{(a)}{=} \norm{\xperp_t}^2+\eta\left(-\inner{\xperp_t}{\nabla^2_+f\big(\xsharp_t\big)\xperp_t}-3\inner{\xperp_t}{\nabla^2_-f\big(\xsharp_t\big)\xperp_t}\right)\\
        &= \norm{\xperp_t}^2+\eta\left(\frac{-1}{\norm{\xperp_t}^2}\inner{\xperp_t}{\nabla^2_+f\big(\xsharp_t\big)\xperp_t}+\frac{-3}{\norm{\xperp_t}^2}\inner{\xperp_t}{\nabla^2_-f\big(\xsharp_t\big)\xperp_t}\right)\norm{\xperp_t}^2\\
        &\stackrel{(b)}{=} \norm{\xperp_t}^2+\eta\left(r_-(\x_t)+3r_+(\x_t)\right)\norm{\xperp_t}^2\stackrel{(c)}{=} (1+\eta r(\x_t))\norm{\xperp_t}^2.
    \end{aligned}
    \end{equation}
    Here, $(a)$ follows from the definitions of the PSD and NSD components of the Hessian. Additionally, $(b)$ and $(c)$ follow from the definition of the deviation rate. This implies that:
\begin{align}
    \norm{\left(\I-\eta\nabla^2f\big(\xsharp_t\big)\right)\x_t^{\perp}}\leq \sqrt{1+\eta r(\x_t)}\norm{\xperp_t}\leq \left(1+\frac{\eta}{2}r(\x_t)\right)\norm{\xperp_t}.
\end{align}
This inequality combined with \Cref{lem::upperbound} completes the proof.
 \end{proof}
We now proceed with the proof of \Cref{thm::IPGD-MSOSP}. To this end, we first revisit the steps of IPGD. At each iteration, IPGD performs one of the following two operations: 1) GD update, or 2) addition of a small perturbation. For the GD update, we can directly apply \Cref{prop::residual_norm} to control the residual norm. For the small perturbation step, we can only invoke a conservative bound $\norm{\x_{t+1}^{\perp}}\leq \norm{\x_t^{\perp}}+\gamma$. The next lemma combines these two cases and establishes a uniform upper bound on the residual norm across all iterations via an auxiliary sequence.

\begin{lemma}\label{lem::a_sequence}
    Suppose that the conditions of \Cref{prop::residual_norm} are satisfied for all $0\leq t\leq T$. Define the sequence:
    \begin{align}
        a_{t+1}=\left(1+\frac{\eta}{2} R(\tau)  + \frac{\eta\rho}{2} a_t\right)a_t, \quad \text{ with }a_0=\norm{\x_0^{\perp}}+N_{\mathrm{perturb}}\cdot\gamma.
    \end{align}
    Then, the residual norm satisfies $\norm{\xperp_{t}}\leq a_t$ for every $t=0,1,\dots, T$. 
\end{lemma}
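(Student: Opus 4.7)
The plan is to prove the bound by strong induction on $t$ using the strengthened invariant
\[
\norm{\xperp_t} + \big(N_{\mathrm{perturb}} - n_t\big)\gamma \;\leq\; a_t,
\]
where $n_t$ denotes the number of perturbations that IPGD has applied during iterations $1,\dots,t$. The intuition is that the padding $N_{\mathrm{perturb}}\gamma$ appearing in $a_0$ acts as a pre-paid budget for every future perturbation, so that at any time the sum of the current residual norm and the remaining budget is dominated by $a_t$. The base case $t=0$ holds with equality by the choice of $a_0$, since $n_0=0$.

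For the inductive step, I would split on whether iteration $t+1$ of IPGD applies a perturbation. In a GD-only iteration ($n_{t+1}=n_t$), \Cref{prop::residual_norm} combined with $r(\x_t)\leq R(\tau)$ gives $\norm{\xperp_{t+1}} \leq f(\norm{\xperp_t})$, where $f(x):=\bigl(1+\tfrac{\eta}{2}R(\tau)+\tfrac{\eta\rho}{2}x\bigr)x$. In a perturbation iteration ($n_{t+1}=n_t+1$), \Cref{lem::project} yields $\norm{\tilde{\x}_t^{\perp}}\leq \norm{\xperp_t}+\gamma$ for the perturbed iterate $\tilde{\x}_t=\x_t+\bxi_t$, and \Cref{prop::residual_norm} applied to the subsequent GD step delivers $\norm{\xperp_{t+1}}\leq f(\norm{\xperp_t}+\gamma)$. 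In both cases, the invariant to verify reduces to the comparison $f(\alpha)+\beta \leq f(a_t) = a_{t+1}$, where $\alpha,\beta\geq 0$ are chosen so that $\alpha+\beta$ equals $\norm{\xperp_t}$ plus the still-unspent budget, and hence $\alpha+\beta\leq a_t$ by the inductive hypothesis.

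The comparison $f(\alpha)+\beta \leq f(a_t)$ then follows from monotonicity of $f$, giving $f(\alpha+\beta)\leq f(a_t)$, combined with superlinearity $f(\alpha+\beta)\geq f(\alpha)+\beta$. Both reduce to the inequality $f'(x) = 1+\tfrac{\eta}{2}R(\tau)+\eta\rho x\geq 1$ for $x\geq 0$, which holds because $R(\tau)\geq 0$: for any $\x\in\cM$ one has $\x-\cP_{\cM}(\x)=0$ and hence $r(\x)=0$ by \Cref{def::deviation_rate}, so $R(\tau)=\sup_{\x\in\cN_{\cM}(\tau)}r(\x)\geq 0$. The main obstacle I anticipate is identifying the right strengthened invariant in the first place: a direct induction on $\norm{\xperp_t}\leq a_t$ alone breaks down at perturbation iterations, since $\norm{\xperp_t}$ can jump by $\gamma$ in one step while $a_t$ grows only by the much smaller increment dictated by its recursion when residuals are small. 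Explicitly carrying the unused $N_{\mathrm{perturb}}\gamma$ buffer through the induction is what makes both cases close uniformly.
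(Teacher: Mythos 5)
Your proof is correct, and it takes a genuinely different route from the paper's. The paper first introduces an intermediate sequence $\{b_t\}$ that exactly mirrors the trajectory (applying the $+\gamma$ bump precisely at the iterations where a perturbation occurs), deduces $\norm{\xperp_t}\le b_t$ from \Cref{prop::residual_norm}, and then shows $b_t\le a_t$ by a combinatorial "shift" argument: moving the last perturbation one step earlier can only increase every subsequent $b_t$, so repeatedly shifting all perturbations to $t=0$ yields $\{a_t\}$ as a pointwise upper bound. You instead run a single strong induction with the strengthened invariant $\norm{\xperp_t}+(N_{\mathrm{perturb}}-n_t)\gamma\le a_t$, carrying the unspent perturbation budget explicitly; both the GD-only and the perturb-then-GD cases close via the same pair of facts the paper's shift step also needs, namely monotonicity and superlinearity of the one-step map $f(x)=\bigl(1+\tfrac{\eta}{2}R(\tau)+\tfrac{\eta\rho}{2}x\bigr)x$, both of which reduce to $f'(x)\ge 1$ on $x\ge 0$, i.e.\ $R(\tau)\ge 0$. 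A genuine merit of your argument is that it surfaces the $R(\tau)\ge 0$ requirement explicitly and justifies it from the definition (points of $\cM$ have $\xperp=0$ so $r(\cdot)=0$ there); the paper's shift inequality $b'_{t_{N_{\mathrm{perturb}}}}\ge b_{t_{N_{\mathrm{perturb}}}}+\gamma$ silently uses the same fact, and Condition~C3 ($R(\tau)=O(1/(\eta T))$) does not by itself rule out negative values. Your approach is a bit more economical; the paper's buys the concrete intuition that the worst-case trajectory front-loads all perturbations at $t=0$.
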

\begin{proof}
Let $t_1\leq\cdots\leq t_{N_{\mathrm{perturb}}}$ be the iteration indices at which the perturbations are added. Define the series $\{b_t\}_{t=0}^{T}$ with $b_0=\norm{\x_0^{\perp}}$ and
    \begin{equation}
        b_{t+1}=\begin{cases}
\left(1+\frac{\eta}{2} R(\tau)  + \frac{\eta\rho}{2} b_t\right)b_t & t\not\in\{t_1,\dots, t_{N_{\mathrm{perturb}}}\}, \\
\left(1+\frac{\eta}{2} R(\tau)  + \frac{\eta\rho}{2} (b_t+\gamma)\right)(b_t+\gamma) & t\in\{t_1,\dots, t_{N_{\mathrm{perturb}}}\}.
\end{cases}
    \end{equation}
    According to \Cref{prop::residual_norm}, we have $\norm{\xperp_t}\leq b_t$ for every $t$. Therefore, it suffices to show that $b_t\leq a_t$ for every $t$. To this end, we argue that shifting the perturbations to earlier iterations can only increase the elements of the sequence.
    Without loss of generality, assume that $t_{N_{\mathrm{perturb}}} > 0$; otherwise, the claim follows immediately. Now, consider a modified sequence $\{b'_t\}_{t=0}^{T}$, where the index of the last perturbation is shifted one step earlier. Specifically, we define $b'_t=b_t$ for all $t\leq t_{N_{\mathrm{perturb}}}-1$, and apply the last perturbation at iteration $t_{N_{\mathrm{perturb}}}-1$. 
    By doing so, we have 
    \begin{equation}
        \begin{aligned}
            b'_{t_{N_{\mathrm{perturb}}}}&=\left(1+\frac{\eta}{2} R(\tau)  + \frac{\eta\rho}{2} \left(b_{t_{N_{\mathrm{perturb}}}-1}+\gamma\right)\right)\left(b_{t_{N_{\mathrm{perturb}}}-1}+\gamma\right)\\
            &\geq \left(1+\frac{\eta}{2} R(\tau)  + \frac{\eta\rho}{2} b_{t_{N_{\mathrm{perturb}}}-1}\right)b_{t_{N_{\mathrm{perturb}}}-1}+\gamma=b_{t_{N_{\mathrm{perturb}}}}+\gamma.
        \end{aligned}
    \end{equation}
    This implies that
    \begin{equation}
        \begin{aligned}
            b'_{t_{N_{\mathrm{perturb}}}+1}&=\left(1+\frac{\eta}{2} R(\tau)  + \frac{\eta\rho}{2} b'_{t_{N_{\mathrm{perturb}}}}\right)b'_{t_{N_{\mathrm{perturb}}}}\\
            &\geq\left(1+\frac{\eta}{2} R(\tau)  + \frac{\eta\rho}{2} \left(b_{t_{N_{\mathrm{perturb}}}}+\gamma\right)\right)\left(b_{t_{N_{\mathrm{perturb}}}}+\gamma\right)\geq b_{t_{N_{\mathrm{perturb}}}+1}.
        \end{aligned}
    \end{equation}
    Note that both sequences $\{b_t\}_{t=0}^T$ and $\{b'_t\}_{t=0}^T$ follow the GD updates for $t\geq t_{N_{\mathrm{perturb}}}+1$ (i.e., no perturbations are added after iteration $t_{N_{\mathrm{perturb}}}$). Therefore, we must have $b'_t\geq b_t$ for $t\geq t_{N_{\mathrm{perturb}}}+1$. Combined with the fact that $b'_t=b_t$ for all $t\leq t_{N_{\mathrm{perturb}}}-1$, it follows that $b'_t\geq b_t$ for $0\leq t\leq T$. 
    By repeating this procedure and shifting all the perturbations to iteration $t=0$, we establish that $a_t\geq b_t$ for $0\leq t\leq T$, thereby completing the proof.
 \end{proof}
Our next lemma provides an upper bound for the sequence $\{a_{t}\}_{t=0}^T$. 
\begin{lemma}
\label{lem::induction-a-s}
    Suppose that $T = O\left(\frac{\Delta_f}{\eta\epsilon^2}\left(\log^{4}\left(\frac{1}{\gamma}\right)+\log^4\left(\frac{\rho d\Delta_f}{\chi\epsilon}\right)\right)\right)$ and $R(\tau)= O\left(\frac{1}{\eta T}\right)$. Then, for the sequence defined in \Cref{lem::a_sequence}, we have
    \begin{align}
        a_{t}\leq a_0\left(1+\eta R(\tau)+\frac{1}{T}\right)^t, \quad t=0,1,\dots, T.
    \end{align}
\end{lemma}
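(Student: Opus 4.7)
My plan is to prove this bound by induction on $t$, with the base case $t=0$ being immediate since $(1+\eta R(\tau)+1/T)^0 = 1$. The crux of the inductive step is showing that the nonlinear term $\tfrac{\eta\rho}{2}a_t$ in the recurrence can be absorbed into a benign additive $O(1/T)$ contribution, and the key observation driving this is that $a_0$ is small, specifically $a_0 = O(1/(\eta\rho T))$, under the ambient hypotheses of \Cref{thm::IPGD-MSOSP}.

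First, I would establish an a priori uniform bound. Assuming inductively that $a_s \leq a_0\left(1+\eta R(\tau)+1/T\right)^s$ for all $s \leq t$, the hypothesis $R(\tau)=O(1/(\eta T))$ yields
\begin{equation}
a_s \;\leq\; a_0\Bigl(1 + \tfrac{C_1}{T}\Bigr)^T \;\leq\; a_0\, e^{C_1} \;=:\; C_2\, a_0,
\end{equation}
for universal constants $C_1, C_2 > 0$. Next, I would verify that $a_0 \leq O(1/(\eta\rho T))$. For the initial residual, condition C2 of \Cref{thm::IPGD-MSOSP} directly gives $\norm{\xperp_0} = O(1/(\eta\rho T))$. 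For the perturbation contribution $N_{\mathrm{perturb}}\cdot\gamma$, I would combine the bound $N_{\mathrm{perturb}} = O\bigl(\sqrt{\rho}\Delta_f\log^3(\cdot)/\epsilon^{3/2}\bigr)$ from \eqref{eq::N_perturb} with the choice $\gamma = O\bigl(\epsilon^{7/2}/(\rho^{3/2}\Delta_f^2)\bigr)\log^{-7}(\cdot)$ from the theorem hypothesis, which yields $N_{\mathrm{perturb}}\cdot\gamma = O\bigl(\epsilon^2/(\rho\Delta_f\,\log^4(\cdot))\bigr)$. Since $1/(\eta\rho T) = \Theta\bigl(\epsilon^2/(\rho\Delta_f\,\log^4(\cdot))\bigr)$ under the prescribed $T$, the two match up to constants, giving $a_0 = O(1/(\eta\rho T))$ as desired.

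With this uniform bound in hand, the inductive step is direct: using $a_t \leq C_2\, a_0 \leq C_3/(\eta\rho T)$, I would estimate
\begin{equation}
a_{t+1} \;=\; \Bigl(1 + \tfrac{\eta}{2}R(\tau) + \tfrac{\eta\rho}{2}a_t\Bigr)a_t \;\leq\; \Bigl(1 + \tfrac{\eta}{2}R(\tau) + \tfrac{C_3}{2T}\Bigr) a_t \;\leq\; \bigl(1 + \eta R(\tau) + \tfrac{1}{T}\bigr) a_t,
\end{equation}
after absorbing constants by appropriately tightening the implicit constants in the $O(\cdot)$ statements of the hypotheses. Combining this with the inductive hypothesis closes the induction.

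The main obstacle I anticipate is purely bookkeeping: carefully tracking the implicit constants in the $O(\cdot)$ notation so that the single-step growth factor remains bounded by $1+\eta R(\tau)+1/T$ rather than, say, $1+2\eta R(\tau)+2/T$. If constants are not tight enough, the iteration could still blow up exponentially in $T$. This is handled by choosing the constants in conditions C2 and in the bound on $\gamma$ (and equivalently $a_0$) small enough so that $\tfrac{\eta\rho}{2}C_2 a_0 \leq 1/T$ cleanly, and then the induction propagates without loss. No deeper ingredient beyond the two hypotheses $T = O\bigl(\Delta_f\log^4(\cdot)/(\eta\epsilon^2)\bigr)$ and $R(\tau) = O(1/(\eta T))$, together with the (ambient) smallness of $a_0$, is required.
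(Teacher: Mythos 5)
Your proof is correct and takes essentially the same route as the paper's. The paper unrolls the recurrence as $a_{t+1} \leq a_0 \prod_{s=0}^{t}\bigl(1+\eta R(\tau)+\eta\rho a_s\bigr)$ and reduces the claim to proving the uniform bound $a_t \leq \frac{1}{\eta\rho T}$ for $0\leq t \leq T-1$, which it then establishes by induction using exactly the two ingredients you identify: the exponential bound $(1+\eta R(\tau)+1/T)^t \leq 10$ (from $R(\tau)=O(1/(\eta T))$) and the smallness of $a_0$ (from the hypotheses on $\|\xperp_0\|$, $N_{\mathrm{perturb}}$, and $\gamma$, combined as you did). You instead prove the target product bound directly by induction, folding the uniform bound $a_t \leq C_3/(\eta\rho T)$ into the inductive hypothesis rather than stating it as a separate claim; the constant-bookkeeping concern you flag at the end (needing the hidden constants in C2 and $\gamma$ small enough that the growth factor stays bounded by $1+\eta R(\tau)+1/T$) is precisely what the paper's explicit factor of $10$ and the choice $a_0 \leq \frac{1}{10\eta\rho T}$ handle. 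No substantive difference.
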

\begin{proof}[Proof of \Cref{lem::induction-a-s}]
Note that $a_{t+1}= a_0\prod_{s=0}^{t}\left(1+\eta  R(\tau)  + \eta\rho a_s\right)$. Hence, it suffices to prove $a_t\leq \frac{1}{\eta\rho T}$ for all $0\leq t\leq T-1$.
We prove this by induction on $t$. 
    It is easy to verify the base case $t=0$. Now, assume that $a_{s}\leq \frac{1}{\eta\rho T}$ for all $0\leq s\leq t$. 
    Then, we have
\begin{equation}
    \begin{aligned}
        a_{t+1} &= a_0\prod_{s=0}^{t}\left(1+\eta  R(\tau)  + \eta\rho a_s\right)\leq a_0\left(1+\eta  R(\tau)+\frac{1}{T}\right)^t\stackrel{(a)}{\leq} a_0\exp\left\{\eta R(\tau)t+\frac{t}{T}\right\}\stackrel{(b)}{\leq} 10a_0\stackrel{(c)}{\leq} \frac{1}{\eta\rho T},
    \end{aligned}
\end{equation}
where $(a)$ is due to $1+x\leq \exp\{x\}$ for all $x\in \bR$. Moreover, $(b)$ is due to the condition that $R(\tau)=O\left(\frac{1}{\eta T}\right)$. Lastly, $(c)$ follows from
\begin{align}
    a_0=\norm{\x_0^{\perp}}+N_{\mathrm{perturb}}\cdot\gamma\leq \frac{1}{20\eta\rho T}+\frac{C\sqrt{\rho}\Delta_f\left(\log^{3}\left(\frac{1}{\gamma}\right)+\log^3\left(\frac{\rho d\Delta_f}{\chi\epsilon}\right)\right)}{\epsilon^{3/2}}\cdot\gamma\leq \frac{1}{10\eta\rho T}
\end{align}
where we use the facts that $T = O\left(\frac{\Delta_f}{\eta\epsilon^2}\left(\log^{4}\left(\frac{1}{\gamma}\right)+\log^4\left(\frac{\rho d\Delta_f}{\chi\epsilon}\right)\right)\right)$, and $\gamma=O\left(\frac{\epsilon^{7/2}}{\rho^{3/2}\Delta_f^2}\log^{-7}\left(\frac{\rho d\Delta_f}{\chi\epsilon}\right)\right)$. This completes the proof.
 \end{proof}
Now we are ready to provide the proof of \Cref{thm::IPGD-MSOSP}.
\begin{proof}[Proof of \Cref{thm::IPGD-MSOSP}] First, according to \Cref{lem::a_sequence,lem::induction-a-s}, we have that for all $0\leq t\leq T$,
\begin{equation}
    \norm{\x_t^\perp}\leq a_t\leq \left(\norm{\x_0^{\perp}}+N_{\mathrm{perturb}}\cdot\gamma\right)\left(1+\eta R(\tau)+\frac{1}{T}\right)^t\leq \frac{\epsilon}{L},
\end{equation}
where the final inequality uses $\norm{\x_0^{\perp}}\leq \frac{\epsilon}{10L}$, $N_{\mathrm{perturb}}=\frac{C\sqrt{\rho}\Delta_f\left(\log^{3}\left(\frac{1}{\gamma}\right)+\log^3\left(\frac{\rho d\Delta_f}{\chi\epsilon}\right)\right)}{\epsilon^{3/2}}$, $\gamma=O\left(\frac{\epsilon^{5/2}}{\sqrt{\rho}L\Delta_f}\log^{-3}\left(\frac{\rho d\Delta_f}{\chi\epsilon}\right)\right)$, and $R(\tau)=O(\frac{1}{\eta T})$. Given $\norm{\x_t^\perp} \leq \frac{\epsilon}{L}$ for all $0 \leq t \leq T$, the gradient and Hessian of the objective function remain Lipschitz continuous along the entire trajectory $\{\x_t\}_{t=0}^T$. Thus, we can apply \Cref{thm::IPGD}, which guarantees that, with probability at least $1 - \chi$, IPGD outputs a point $\x_T$ that is an $\epsilon$-SOSP after $T = O\left(\frac{\Delta_f}{\eta\epsilon^2}\left(\log^{4}\left(\frac{1}{\gamma}\right)+\log^4\left(\frac{\rho d\Delta_f}{\chi\epsilon}\right)\right)\right)$ iterations. Combining the above two results ensures that $\x_T$ is an $\epsilon$-$\cM$-SOSP.
 \end{proof}

\section{Establishing Improved Convergence of IPGD with Additional  Structures}
\label{subsec::proof-local}
In this section, we present the proofs of the improved convergence results for IPGD stated in \Cref{subsec:IPGD-local}. We start with the proof of \Cref{cor::IPGD-SSP}.
\begin{proof}[Proof of \Cref{cor::IPGD-SSP}]
    The proof of \Cref{thm::IPGD-MSOSP} can be easily modified to show that, with probability at least $1-\chi$, after $T=O\left(\frac{\Delta_f}{\eta\bar\epsilon^2}\left(\log^4(1/\gamma)+\log^4\left(\frac{\rho d\Delta_f}{\chi\bar\epsilon}\right)\right)\right)$ iterations, IPGD returns a solution $\x_T$ that satisfies
    \begin{equation}
        \norm{\x_T^\perp}\leq \frac{\epsilon}{L}, \quad \x_T\text{ is an }\bar\epsilon\text{-$\cM$-SOSP}.
    \end{equation}
    The details of this modification are omitted for brevity. Hence, we have 
    \begin{equation}
        \norm{\nabla f\big(\x_T^\sharp\big)}\leq \norm{\nabla f\left(\x_T\right)} + \norm{\nabla f\left(\x_T\right)-\nabla f\big(\x_T^\sharp\big)}\leq \bar\epsilon + L \norm{\x_T^\perp}\leq \bar\epsilon+\epsilon \leq \bareg,
    \end{equation}
    and 
    \begin{equation}
        \lambda_{\min}\left(\nabla^2f\big(\x_T^\sharp\big)\right)\geq \lambda_{\min}\left(\nabla^2f(\x_T)\right)-\norm{\nabla^2 f\left(\x_T\right)-\nabla^2 f\big(\x_T^\sharp\big)}\geq -\sqrt{\rho\bar\epsilon}-\rho\cdot  \frac{\epsilon}{L}\geq -\bareH.
    \end{equation}
    Here we use the local gradient and Hessian Lipschitzness, which can be used since $\norm{\x_T^\perp}\leq \frac{\epsilon}{L}$, $\bar\epsilon=\frac{1}{4}\min\{\bareg, \bareH^2/\rho\}$, and $\epsilon\leq \min\left\{\frac{3}{4}\bareg, \frac{L}{2\rho}\bareH\right\}$.
    Finally, since the objective function satisfies the $\cM$-strict saddle property, we have 
    \begin{align}
        \dist\left(\xsharp_T, \XM\right)\leq \bareM.
    \end{align}
    This completes the proof.
 \end{proof}

   Next, we turn to prove \Cref{thm::regularity}. To this goal, we first present the following lemma, which shows that the $\cM$-regularity can be utilized to control the deviation of the iterates from the implicit region.
   \begin{lemma}
       \label{lem::strong-local-structure}
       Suppose that the conditions of \Cref{thm::regularity} are satisfied. Consider the GD updates $\x_{t+1}=\x_t-\eta\nabla f(\x_t)$ for $0\leq t\leq T$. If $\x^{\sharp}_t\in \cN_{\XM}(\zeta)$, then, we have 
       \begin{equation}
           \dist\left(\x_{t+1}, \XM\right)\leq \left(1-\frac{\eta\alpha}{4}\right)\dist\left(\x_t, \XM\right)+\left(2+\eta L\right)\norm{\x_t^\perp}.
       \end{equation}
   \end{lemma}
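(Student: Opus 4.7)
The plan is to exploit the signal-residual decomposition $\x_t = \xsharp_t + \xperp_t$ and apply the $\cM$-regularity property to the in-region point $\xsharp_t$. Let $\y_t := \cP_{\XM}(\xsharp_t) \in \XM$, so that $\dist(\x_{t+1}, \XM) \leq \|\x_{t+1} - \y_t\|$. The key identity I would use is
\begin{equation}
\x_{t+1} - \y_t = \bigl(\xsharp_t - \eta\nabla f(\xsharp_t) - \y_t\bigr) + \xperp_t - \eta\bigl(\nabla f(\x_t) - \nabla f(\xsharp_t)\bigr),
\end{equation}
which cleanly splits the analysis into a ``signal update'' term, to be handled by the regularity property, and two ``residual error'' terms, to be handled by local gradient-Lipschitzness.

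For the signal update term, since $\xsharp_t \in \cM$ by construction and $\xsharp_t \in \cN_{\XM}(\zeta)$ by hypothesis, we are in the domain of Definition \ref{assumption::regularity}. I would expand $\|\xsharp_t - \eta\nabla f(\xsharp_t) - \y_t\|^2$, apply the regularity inequality to the cross term $\langle \nabla f(\xsharp_t), \xsharp_t - \y_t \rangle$, and use the fact that the stepsize condition $\eta \leq 1/(10\max\{\alpha,\beta,L\})$ inherited from Theorem \ref{thm::regularity} implies $\eta \leq 1/\beta$, which kills the $\eta^2 \|\nabla f(\xsharp_t)\|^2$ contribution. This yields the contraction
\begin{equation}
\|\xsharp_t - \eta\nabla f(\xsharp_t) - \y_t\| \leq \sqrt{1-\eta\alpha}\,\dist(\xsharp_t, \XM) \leq (1-\eta\alpha/2)\,\dist(\xsharp_t, \XM).
\end{equation}
For the remaining two terms, local gradient-Lipschitzness and the triangle inequality give $\|\xperp_t\| + \eta\|\nabla f(\x_t) - \nabla f(\xsharp_t)\| \leq (1+\eta L)\|\xperp_t\|$.

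To conclude, I would combine these two bounds and invoke the elementary triangle inequality $\dist(\xsharp_t, \XM) \leq \dist(\x_t, \XM) + \|\xperp_t\|$ to re-express everything in terms of $\dist(\x_t,\XM)$. Collecting coefficients yields
\begin{equation}
\dist(\x_{t+1}, \XM) \leq (1-\eta\alpha/2)\,\dist(\x_t, \XM) + \bigl((1-\eta\alpha/2) + 1 + \eta L\bigr)\|\xperp_t\|,
\end{equation}
and the claimed inequality follows by the slack bounds $1-\eta\alpha/2 \leq 1-\eta\alpha/4$ on the contraction coefficient and $(1-\eta\alpha/2) + 1 + \eta L \leq 2+\eta L$ on the residual coefficient. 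The main subtlety I expect is simply the bookkeeping step of ensuring that the regularity condition is applied at the correct base point $\xsharp_t$ (rather than $\x_t$, which need not lie in $\cM$) and noting that $\y_t - \eta \nabla f(\y_t) = \y_t$ is not being used—we only need $\y_t \in \XM$ to serve as a comparison point; the remaining algebra is a direct one-step expansion.
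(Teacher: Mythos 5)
Your proof is correct and follows essentially the same route as the paper: bound $\dist(\x_{t+1},\XM)$ against the fixed comparison point $\cP_{\XM}(\xsharp_t)$, split off the signal-update term $\xsharp_t-\eta\nabla f(\xsharp_t)-\cP_{\XM}(\xsharp_t)$ to be contracted via the $\cM$-regularity property applied at the in-region point $\xsharp_t$, control the leftover via local gradient-Lipschitzness, and finish with the triangle inequality $\dist(\xsharp_t,\XM)\leq\dist(\x_t,\XM)+\norm{\xperp_t}$. The only difference is cosmetic: your expansion keeps the full factor of $2\eta$ on the cross term and hence obtains the slightly sharper intermediate contraction $(1-\eta\alpha/2)$, whereas the paper records $(1-\eta\alpha/4)$ (its displayed expansion appears to drop the factor of $2$, a benign slackening); both then relax to the identical stated bound.
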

   \begin{proof}[Proof of \Cref{lem::strong-local-structure}]
   We have  
    \begin{equation}
        \dist\left(\x_{t+1}, \XM\right) \leq\underbrace{\dist\left(\x_{t}^\sharp-\eta\nabla f\big(\x_t^\sharp\big), \XM\right)}_{(\rom{1})}+\underbrace{\norm{\x_{t}^\sharp-\eta\nabla f\big(\x_t^\sharp\big)-\x_{t+1}}}_{(\rom{2})}.
        \label{eq::prop-2}
    \end{equation}
To control $(\rom{1})$, we write
\begin{equation}
    \begin{aligned}
        (\rom{1})^2&=\min_{\x^\star\in \XM}\norm{\x_{t}^\sharp-\eta\nabla f\big(\x_t^\sharp\big)-\x^\star}^2\leq \norm{\x_{t}^\sharp-\eta\nabla f\big(\x_t^\sharp\big)-\proj_{\XM}\big(\x_t^\sharp\big)}^2\\
        &=\norm{\x_{t}^\sharp-\proj_{\XM}\big(\x_t^\sharp\big)}^2+\eta\inner{\proj_{\XM}\big(\x_t^\sharp\big)-\x_t^\sharp}{\nabla f\big(\x_t^\sharp\big)}+\eta^2\norm{\nabla f\big(\x_t^\sharp\big)}^2\\
        &\stackrel{(a)}{\leq}\left(1-\frac{\eta\alpha}{2}\right)\norm{\x_{t}^\sharp-\proj_{\XM}\big(\x_t^\sharp\big)}^2+\left(\eta^2-\frac{\eta}{2\beta}\right)\norm{\nabla f\big(\x_t^\sharp\big)}^2\stackrel{(b)}{\leq}\left(1-\frac{\eta\alpha}{2}\right)\norm{\x_{t}^\sharp-\proj_{\XM}\big(\x_t^\sharp\big)}^2,
    \end{aligned}
\end{equation}
where in $(a)$, we use the $\cM$-regularity since $\x^{\sharp}_t\in \cN_{\XM}(\zeta)\cap\cM$, and in $(b)$, we use $\eta \leq \frac{1}{2\beta}$. Therefore, we derive
\begin{equation}
    (\rom{1})\leq \sqrt{1-\frac{\eta\alpha}{2}}\dist\left(\x_t^\sharp, \XM\right)\leq \left(1-\frac{\eta\alpha}{4}\right)\left(\dist\left(\x_t, \XM\right)+\norm{\x_t^\perp}\right).
\end{equation}
In the last inequality, we use the triangle inequality and the basic inequality $\sqrt{1-x}\leq 1-\frac{1}{2}x$ for all $x\leq 1$.
Next, we control $(\rom{2})$ as follows
\begin{equation}
    \begin{aligned}
        (\rom{2})&\leq \norm{\x_t^\perp}+\eta \norm{\nabla f\big(\x_t^\sharp\big)-\nabla f(\x_t)}\leq \left(1+\eta L\right)\norm{\x_t^\perp}.
    \end{aligned}
\end{equation}
Combining these two inequalities, we obtain
\begin{equation}
    \begin{aligned}
        \dist\left(\x_{t+1}, \XM\right)&\leq \left(1-\frac{\eta\alpha}{4}\right)\left(\dist\left(\x_t, \XM\right)+\norm{\x_t^\perp}\right)+\left(1+\eta L\right)\norm{\x_t^\perp}\\
        &\leq \left(1-\frac{\eta\alpha}{4}\right)\dist\left(\x_t, \XM\right)+\left(2+\eta L\right)\norm{\x_t^\perp}.
    \end{aligned}
\end{equation}
This completes the proof.
 \end{proof}

The lemma introduces a useful inequality that will be used in the proof of \Cref{thm::regularity}.
\begin{lemma}[Adapted from Lemma~47 in \cite{ma2024convergence}]
    \label{lem::appendix-series-ineq}
    Suppose that the series $\{a_t\}_{t=0}^{\infty}$ satisfies $a_{t+1}\leq Ax_t+B, \forall t\geq 0$ where $A>0, A\neq 1$ and $a_0+\frac{B}{A-1}\geq 0$. Then, we have 
        \begin{equation}
            a_t\leq A^t\left(a_0+\frac{B}{A-1}\right)-\frac{B}{A-1}.
        \end{equation}
\end{lemma}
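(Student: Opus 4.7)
The plan is to reduce the affine recursion $a_{t+1} \leq A a_t + B$ (correcting the evident typo ``$Ax_t$'' in the statement) to a purely geometric one by translating the sequence so that the fixed point of the affine map $a \mapsto A a + B$ sits at the origin. Since $A \neq 1$, this fixed point is $a^{\star} = B/(1-A) = -B/(A-1)$, which motivates the substitution $b_t := a_t + B/(A-1) = a_t - a^{\star}$. Under this change of variables, the hypothesis $a_0 + B/(A-1) \geq 0$ becomes $b_0 \geq 0$, and the target conclusion $a_t \leq A^t(a_0 + B/(A-1)) - B/(A-1)$ becomes the much cleaner $b_t \leq A^t b_0$.

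Plugging into the original recurrence, a short calculation produces the desired geometric one-step bound on $b_t$:
\begin{equation*}
b_{t+1} \;=\; a_{t+1} + \tfrac{B}{A-1} \;\leq\; A a_t + B + \tfrac{B}{A-1} \;=\; A\!\left(b_t - \tfrac{B}{A-1}\right) + B + \tfrac{B}{A-1} \;=\; A\, b_t,
\end{equation*}
where the constant terms cancel because $-\tfrac{AB}{A-1} + B + \tfrac{B}{A-1} = \tfrac{-AB + B(A-1) + B}{A-1} = 0$.

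With this reduction in hand, I would finish by a short induction on $t$. The base case $b_0 \leq A^0 b_0$ is immediate. For the inductive step, I combine the hypothesis $b_t \leq A^t b_0$ with $b_{t+1} \leq A b_t$ and use $A > 0$ to multiply the induction hypothesis through by $A$, yielding $b_{t+1} \leq A \cdot A^t b_0 = A^{t+1} b_0$. The positivity $A > 0$ is precisely what allows the multiplication to preserve the direction of the inequality. Unwinding the substitution then gives $a_t = b_t - B/(A-1) \leq A^t b_0 - B/(A-1) = A^t(a_0 + B/(A-1)) - B/(A-1)$, which is the claimed bound.

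There is no substantive technical obstacle in this lemma; it is a standard recursion estimate. The only conceptual points to get right are $(i)$ identifying the correct translation via the fixed point of the affine update, which requires $A \neq 1$ so that the shift $B/(A-1)$ is well defined, and $(ii)$ using $A > 0$ to propagate the geometric bound through the induction. The nonnegativity hypothesis $a_0 + B/(A-1) \geq 0$ is not strictly required for the inductive argument itself to go through, but it matches the intended downstream usage in the proof of \Cref{thm::regularity}, where $a_t$ represents a distance-type quantity and the shifted sequence $b_t$ is naturally nonnegative.
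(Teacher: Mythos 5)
Your proof is correct. The paper itself gives no proof of this lemma---it simply cites Lemma~47 of~\cite{ma2024convergence}---so there is nothing to compare against; your self-contained argument via the fixed-point translation $b_t = a_t + B/(A-1)$, the one-step collapse $b_{t+1}\leq Ab_t$, and induction using $A>0$ is the standard and correct derivation. Your observation that the hypothesis $a_0 + B/(A-1)\geq 0$ is not actually used in the induction (since multiplying an inequality by $A>0$ preserves its direction regardless of sign) is also accurate, and matches the way the lemma is invoked in the proof of \Cref{thm::regularity}, where the shifted sequence is nonnegative anyway.
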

We are now ready to present the proof of \Cref{thm::regularity}.
\begin{proof}[Proof of \Cref{thm::regularity}]
First, notice that, for every $0\leq t\leq T$,
\begin{equation}
    \begin{aligned}
        \norm{\x_t^{\perp}}&\stackrel{(a)}{\leq} \left(1+\eta R(\tau)+\frac{1}{T}\right)^T\norm{\x_0^{\perp}} \stackrel{(b)}{=}O\left( \norm{\xperp_0}\right)\stackrel{(c)}{=} O\left(\frac{\eta\alpha\epsilon}{L}\right).
    \end{aligned}
    \label{eq::control-psi}
\end{equation}
Here, $(a)$ follows exactly the same steps as the proof of \Cref{thm::IPGD-MSOSP}; in $(b)$, we use the assumption $R(\tau)=O\left(\frac{1}{\eta T}\right)$ and the inequality $\left(1+x/T\right)^T\leq \exp(x)$; and $(c)$ is due to our assumption $\norm{\xperp_0}=O\left(\frac{\eta\alpha\epsilon}{L}\right)$. 
Now, without loss of generality, assume $\zeta \geq \frac{\epsilon}{2L}$. Otherwise, we immediately obtain $\dist\left(\x_0, \XM\right)\leq \dist\big(\x_0^\sharp, \XM\big)+\norm{\x_0^\perp}\leq \frac{\epsilon}{2L}+\frac{\eta\alpha\epsilon}{L}\leq \frac{\epsilon}{L}$ as desired.
    Next, we prove by induction that $\x^{\sharp}_t \in \cN_{\XM}(\zeta)$ for all $0 \leq t \leq T$. The induction holds at initialization by construction. Suppose that $\x^{\sharp}_t\in \cN_{\XM}(\zeta)$. For iteration $t+1$, according to \Cref{lem::strong-local-structure}, we have 
    \begin{equation}
        \begin{aligned}
            \dist\left(\x_{t+1}, \XM\right)&\leq \left(1-\frac{\eta\alpha}{4}\right)\dist\left(\x_t, \XM\right)+\left(2+\eta L\right)\norm{\x_t^\perp}\\
            &\leq \left(1-\frac{\eta\alpha}{4}\right)\dist\left(\xsharp_t, \XM\right)+\norm{\xperp_t}+\left(2+\eta L\right)\norm{\x_t^\perp}\\
            &\leq \left(1-\frac{\eta\alpha}{4}\right)\zeta+4\max_{0\leq t\leq T}\norm{\x_t^\perp}\stackrel{(a)}{\leq} \left(1-\frac{\eta\alpha}{4}\right)\zeta+O\left(\frac{\eta\alpha}{L}\epsilon\right)\stackrel{(b)}{\leq} \zeta.
        \end{aligned}
    \end{equation}
    In $(a)$, we use \Cref{eq::control-psi}. Step $(b)$ follows from the assumption that $\zeta\geq \frac{\epsilon}{2L}$.
    This completes the inductive step.
    Given that we established $\x^{\sharp}_t \in \cN_{\XM}(\zeta), 0 \leq t \leq T$, we can invoke \Cref{lem::strong-local-structure} for any $0\leq t\leq T-1$, which yields 
\begin{equation}
           \dist\left(\x_{t+1}, \XM\right)\leq \left(1-\frac{\eta\alpha}{4}\right)\dist\left(\x_t, \XM\right)+\left(2+\eta L\right)\norm{\x_t^\perp}.
       \end{equation}
    Upon setting $A=1-\frac{\eta\alpha}{4}$ and $B=(2+\eta L)\max_{0\leq t\leq T}\norm{\x_t^\perp}$ in \Cref{lem::appendix-series-ineq}, we have
\begin{equation}
    \begin{aligned}
        \dist\left(\x_{T}, \XM\right)&\leq \left(1-\frac{\eta\alpha}{4}\right)^{T}\dist\left(\x_{0}, \XM\right)+\frac{8+4\eta L}{\eta\alpha}\max_{0\leq t\leq T}\left\{\norm{\x_t^{\perp}}\right\}\\
        &\stackrel{(a)}{\leq} \left(1-\frac{\eta\alpha}{4}\right)^{T}\cdot 2\zeta+\frac{8+4\eta L}{\eta\alpha}\max_{0\leq t\leq T}\left\{\norm{\x_t^{\perp}}\right\}\stackrel{(b)}{\leq} \left(1-\frac{\eta\alpha}{4}\right)^{T}\cdot 2\zeta+\frac{\epsilon}{2L}.
    \end{aligned}
    \label{eq::control-xt}
\end{equation}
In $(a)$, we use the fact that $\dist\left(\x_{0}, \XM\right)\leq \dist\left(\x_{0}^{\sharp}, \XM\right)+\norm{\x_0^\perp}\leq 2\zeta$; and in $(b)$, we use \Cref{eq::control-psi} and $\eta\leq \frac{1}{L}$.
Thus, within $T= O\left(\frac{1}{\eta\alpha}\log\left(L\zeta/\epsilon\right)\right)$ iterations, we have 
\begin{equation}
    \begin{aligned}
        \dist\left(\x_{T}, \XM\right)&\leq  \frac{\epsilon}{L}.
    \end{aligned}
\end{equation}
This completes the proof. \end{proof}

\begin{proof}[Proof of \Cref{thm::linear}]
    The proof follows by combining \Cref{cor::IPGD-SSP} and \Cref{thm::regularity}.
 \end{proof}

\section{Application: Over-parameterized Matrix Sensing}
\label{sec::application}
In this section, we apply our general framework to the over-parameterized matrix sensing problem. As highlighted in \Cref{subsec:failure}, general-purpose saddle-avoiding methods such as PGD fail to converge to the true solution in this setting. In contrast, we prove the convergence of IPGD for this problem.

Recall that here, the aim is to recover an $n \times n$ and PSD matrix $\mTheta^\star$ of rank at most $r$ from a set of linear measurements $\{(y_i, \A_i)\}_{i=1}^{N}$, where $y_i=\inner{\A_i}{\mTheta^\star}$ for $i = 1,\dots, N$. Therefore, the feasible region can be defined as $\cD=\{\mTheta\in \mathbb{R}^{n\times n}: \mTheta\succeq 0, \rank(\mTheta)\leq r \}$. To this end, a common approach is to employ a reparameterization map $\varphi(\X) = \X\X^\top$, where $\X\in \bR^{n\times r'}$ and $r'\geq r$, and minimize $f=L\circ \varphi$, where $L$ is the mean-squared error:
\begin{equation}
    \label{eq::matrix-sensing}
    \min_{\X\in \bR^{n\times r'}}f(\X)=\frac{1}{4N}\sum_{i=1}^{N}\left(\inner{\A_i}{\X\X^{\top}}-y_i\right)^2. \tag{MS}
\end{equation}
\begin{sloppypar}
\noindent 
	For simplicity, we define the linear operator $\cA:\bR^{n\times n}\to \bR^N$ and its adjoint $\cA^{*}:\bR^N\to \bR^{n\times n}$ as $\cA(\mTheta)=\frac{1}{\sqrt{N}}\left[
		\inner{\A_1}{\mTheta} \ \cdots \ \inner{\A_N}{\mTheta}\right]^{\top}$ and $\cA^{*}(\y)=\frac{1}{\sqrt{N}}\sum_{i=1}^{N}y_i\A_i$, respectively. A typical assumption on $\cA$ is that it satisfies the so-called {\it restricted isometry property} (RIP), defined below.
\end{sloppypar}

\begin{definition}[RIP]
    We say that a linear operator $\cA$ satisfies the $(\delta, r)$-RIP if 
    \begin{align*}
        (1-\delta)\norm{\mTheta}_{\fro}^2 \leq  \norm{\cA(\mTheta)}^2 \leq (1+\delta)\norm{\mTheta}_{\fro}^2,\quad \text{for all $\mTheta \in \bR^{n\times n}$ with $\rank(\mTheta) \leq r$}.
    \end{align*}
\end{definition}
Intuitively, the RIP condition requires the operator $\mathcal{A}$ to be nearly isotropic over all low-rank matrices, a property that holds when $\mathcal{A}$ exhibits a certain type of randomness.
For instance, it is well-known that for a Gaussian operator $\cA$, where all the elements of the matrices $\A_i$ are independently drawn from a standard normal distribution, $(\delta, r)$-RIP holds with
a high probability, provided that the number of measurements satisfies $N=\Omega\left({dr}/{\delta^2}\right)$~\cite{candes2011tight}.

When $r$ is unknown, it is common to over-parameterize the model by choosing a search rank $r'\leq n$ that exceeds the true rank $r$. Indeed, one common choice is $r'=n$ (referred to as \textit{fully over-parameterized} model). Under this assumption, we have $\cM=\varphi^{-1}(\cD) = \{\X\in \mathbb{R}^{n\times r'}: \rank(\X)\leq r\}$. However, verifying the conditions of \Cref{thm::IPGD-MSOSP} or \Cref{thm::linear} using this implicit region is challenging for two key reasons. First, because of the quartic structure of \( f \) and the unboundedness of \( \cM \), neither gradient- nor Hessian-Lipschitz condition holds on \( \cM \). Second, within this implicit region, it is difficult to establish a tight bound on the deviation rate.

We show that both of these challenges can be circumvented with a careful refinement of $\cM$. Let the eigen-decomposition of $\mTheta^\star$ be given by $\mTheta^\star = \V^{\star}\mSigma^{\star}\V^{\star\top}$, where $\mSigma^\star\in \mathbb{R}^{r\times r}$ is a diagonal matrix containing the nonzero eigenvalues of $\mTheta^\star$, and $\V^\star\in \cO_{n\times r}$ is the matrix of corresponding eigenvectors. We rely on the key result that all SOSPs within \( \cM \) coincide with \( \cX^\star = \{\X : \X = \V^{\star} \mSigma^{\star 1/2} \mO^\top,\ \mO \in \cO_{r' \times r}\} \)---that is, \( \XM = \cX^\star \)---provided that the linear operator \( \cA \) satisfies the \((\delta, 5r)\)-RIP with \( \delta \leq \tfrac{1}{10} \) \cite[Theorem~37]{chi2019nonconvex}. Since $\X^{\star}\X^{\star\top}=\M$ if and only if $\X^{\star}\in \cX^\star$~\cite[Lemma 20]{ma2023can}, to approximately recover the true rank-$r$ solution, it suffices to recover an approximate $\cM$-SOSP. Leveraging this characterization of \( \cM \)-SOSPs, we consider a restricted version of \( \cM \) that addresses both of the aforementioned challenges while still containing all the \( \cM \)-SOSPs:
\begin{equation}\label{def: implicit region}
    \begin{aligned}
        \cM_{\text{RIP}} &= \left\{\X\in \bR^{n\times r'}:\rank(\X)\leq r, \norm{\proj_{\V^{\star\perp}}\proj_{\X}}\leq 4\sqrt{r}\kappa\delta, \norm{\X}\leq 2\sqrt{\sigma_1^{\star}}\right\}.
    \end{aligned}
\end{equation}
In the above equation, \( \kappa = \sigma_1^\star / \sigma_r^\star \) represents the condition number of \( \mTheta^\star \), where \( \sigma_1^\star \) and \( \sigma_r^\star \) are the largest and smallest nonzero eigenvalues of \( \mTheta^\star \), respectively. Moreover, \( \V^{\star\perp} \in \cO_{n \times (n - r)} \) denotes the orthogonal complement of \( \V^\star \). It is easy to verify that \( \cX^\star \subset \cM_{\text{RIP}} \), implying that the SOSPs within $\cM$ and $\cM_{\text{RIP}}$ coincide. Intuitively, $\cM_{\text{RIP}}$ corresponds to the set of rank-\( r \) matrices whose column spaces do not deviate significantly from that of the true solution, and whose norms are bounded from above. As will be shown later, the former property enables us to establish sharp bounds on the deviation rate of GD, while the latter ensures that both the gradient- and Hessian-Lipschitz conditions hold on \( \cM_{\text{RIP}} \). Moreover, note that the zero matrix $\zero_{n\times r'}$ belongs to $\cM_{\text{RIP}}$, and hence, can be used as a trivial initial point for the algorithm.

Next, we show that the conditions of \Cref{thm::linear} are guaranteed to hold, leading to a nearly linear convergence of IPGD to an $\cM_{\text{RIP}}$-SOSP. To this goal,
first, we introduce an important implication of the RIP condition, which will be used
extensively throughout this section.

\begin{lemma}[Lemma 2.3. and Lemma C.2. in \cite{li2018algorithmic}]
\label{cor::rip}
    Suppose that $\cA$ satisfies $(\delta, r)$-RIP. Then for any matrix $\vect{\Theta}\in \bR^{n\times n}$, we have $\norm{(\cA^{*}\cA-I)(\vect{\Theta})}\leq \delta \norm{\vect{\Theta}}_{\star}$, where $\norm{\vect{\Theta}}_{\star}$ is the nuclear norm of $\vect{\Theta}$. Moreover, if $\rank(\vect{\Theta})\leq r$, we further have $\norm{(\cA^{*}\cA-I)(\vect{\Theta})}\leq \sqrt{r}\delta \norm{\vect{\Theta}}$.
\end{lemma}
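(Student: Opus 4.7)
The plan is to attack both bounds via the variational characterization
\[
\norm{(\cA^*\cA - I)(\vect{M})} = \sup_{\norm{a}=\norm{b}=1} \left|\inner{ab^\top}{(\cA^*\cA - I)(\vect{M})}\right|,
\]
combined with the following polarization consequence of RIP: for any matrices $X, Y$ such that $X\pm Y$ both have rank at most the RIP order, $|\inner{X}{(\cA^*\cA - I)(Y)}| \le \delta \norm{X}_{\fro} \norm{Y}_{\fro}$. This is obtained by writing $4\inner{\cA(X)}{\cA(Y)} = \norm{\cA(X+Y)}^2 - \norm{\cA(X-Y)}^2$ and the analogous identity for $\inner{X}{Y}$, then invoking RIP on each squared term and applying the parallelogram law. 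The self-adjointness of $\cA^*\cA - I$ will also let me freely move the operator between slots of the inner product.

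For the first bound, I would use the SVD $\vect{M}=\sum_i \sigma_i u_i v_i^\top$ and the triangle inequality for the operator norm to reduce to controlling $\norm{(\cA^*\cA - I)(uv^\top)}$ for unit vectors $u, v$. Substituting into the variational formula and invoking the polarization inequality with $X = ab^\top$ and $Y = uv^\top$ (both $X \pm Y$ have rank at most $2$) yields $\norm{(\cA^*\cA - I)(uv^\top)} \le \delta$. Summing over $i$ gives $\norm{(\cA^*\cA - I)(\vect{M})} \le \delta \sum_i \sigma_i = \delta\norm{\vect{M}}_{\star}$, as required.

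For the second bound, the naive estimate $\norm{\vect{M}}_{\star} \le r \norm{\vect{M}}$ would only deliver $r\delta\norm{\vect{M}}$, losing a factor of $\sqrt r$. To recover the sharper rate, I would refine the variational argument by exploiting the column and row subspaces of $\vect{M}$. Writing the compact SVD $\vect{M} = \U\mSigma\V^\top$ with $\U,\V\in\cO_{n\times r}$,
\[
\inner{(\cA^*\cA - I)(ab^\top)}{\vect{M}} = \inner{\U^\top (\cA^*\cA - I)(ab^\top)\V}{\mSigma} \le \norm{\U^\top (\cA^*\cA - I)(ab^\top)\V}_{\star}\cdot \norm{\vect{M}}.
\]
I would then bound the nuclear norm of this $r\times r$ matrix by $\sqrt r$ times its Frobenius norm, and rewrite the latter in dual form as $\sup_{\norm{Z}_{\fro}\le 1,\,Z\in\bR^{r\times r}} \inner{(\cA^*\cA-I)(ab^\top)}{\U Z\V^\top}$. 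This reduces to an inner product between a rank-one matrix $ab^\top$ and a matrix $\U Z\V^\top$ of rank at most $r$, to which the polarization consequence of RIP applies, producing a bound of $\delta$. Combining these steps yields $\norm{(\cA^*\cA - I)(\vect{M})} \le \sqrt r\,\delta\norm{\vect{M}}$.

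The main obstacle is the second part: the $\sqrt r$ factor only materializes after correctly identifying that the operator $\cA^*\cA - I$ must be ``squeezed'' into the $r$-dimensional tangent space spanned by the column and row spaces of $\vect{M}$, so that the nuclear-versus-Frobenius slack on an $r\times r$ block contributes exactly one factor of $\sqrt r$ rather than two. A minor bookkeeping point is that the polarization step requires RIP at rank $r+1$ (to accommodate the sum of a rank-$r$ matrix with a rank-one test matrix); this amounts to a harmless constant-factor adjustment and does not affect the structure of the argument.
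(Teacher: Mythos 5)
The paper does not supply a proof of this lemma; it is stated as a direct citation of Lemmas~2.3 and C.2 of \cite{li2018algorithmic}, so there is nothing from the paper to compare against. Your blind proof is essentially correct and recovers the result via the standard polarization mechanism. Specifically: the polarization identity $4\inner{X}{(\cA^*\cA-\id)(Y)}=\inner{X+Y}{(\cA^*\cA-\id)(X+Y)}-\inner{X-Y}{(\cA^*\cA-\id)(X-Y)}$ (using self-adjointness of $\cA^*\cA-\id$), together with RIP on $X\pm Y$ and the parallelogram law, gives $|\inner{X}{(\cA^*\cA-\id)(Y)}|\leq\tfrac{\delta}{2}(\norm{X}_\fro^2+\norm{Y}_\fro^2)$; rescaling $X,Y$ independently (which preserves $\rank(X)+\rank(Y)$) then yields the product form $\delta\norm{X}_\fro\norm{Y}_\fro$. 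Your first bound follows by writing $\vect{M}=\sum_i\sigma_i u_iv_i^\top$, applying the triangle inequality, and using the polarization bound with two rank-one factors to get $\norm{(\cA^*\cA-\id)(u_iv_i^\top)}\leq\delta$. For the second bound, the key and correct idea is to compress $(\cA^*\cA-\id)(ab^\top)$ into the $r\times r$ block $\U^\top(\cA^*\cA-\id)(ab^\top)\V$, pay only one factor of $\sqrt{r}$ via $\norm{\cdot}_* \leq\sqrt{r}\norm{\cdot}_\fro$ on that block, and dualize the Frobenius norm against test matrices $\U\Z\V^\top$ of rank $\leq r$; the polarization bound then gives $\delta$. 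This is indeed the standard route to the $\sqrt{r}$ rate and matches the argument in the cited reference.

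The one genuine technicality, which you flag yourself, is the RIP order: the polarization step needs $\rank(X)+\rank(Y)$ at most the RIP order, so your argument for the second bound uses RIP at order $r+1$ (and the first at order $2$), whereas the lemma is stated with $(\delta,r)$-RIP. This is an off-by-one bookkeeping issue rather than a structural gap; in every application in this paper the lemma is invoked with RIP order $2r$ on matrices of rank up to $2r$, so no loss is incurred. Your acknowledgment of the discrepancy is the right call, and the proof is otherwise complete.
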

For simplicity, we denote $\mDelta=\mTheta^\star-\X\X^{\top}$.

\paragraph{Local gradient- and Hessian-Lipschitzness}
We first show that $f(\X)$ defined in \eqref{eq::matrix-sensing} satisfies the local gradient- and Hessian-Lipschitzness properties.
\begin{proposition}
    \label{lemma::smoothness-matrix}
    \begin{sloppypar} Suppose that $\cA$ satisfies $(\delta, 2r)$-RIP with $\delta\leq\frac{1}{10\sqrt{r}}$. With the implicit region defined in \eqref{def: implicit region}, the function $f(\X)$ is {${(L, \cM_{\text{RIP}}, \tau)}$-gradient-Lipschitz} and {${(\rho, \cM_{\text{RIP}}, \tau)}$-Hessian-Lipschitz}, with parameters $L = 15\sigma_1^\star$, $\rho = 15\sqrt{\sigma_1^{\star}}$, and $\tau=\frac{1}{10}\sqrt{\frac{\sigma_1^{\star}}{d}}$.
    \end{sloppypar}
\end{proposition}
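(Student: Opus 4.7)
The plan is to establish both Lipschitz properties by uniformly bounding the operator norms of the Hessian and the third derivative of $f$ on $\cN_{\cM}(\tau)$. A direct computation with $E(\X) := \X\X^\top - \M$ gives
\begin{align*}
    \nabla^2 f(\X)[\Z,\Z] &= \tfrac{1}{2}\bignorm{\cA(\X\Z^\top + \Z\X^\top)}^2 + \inner{\cA^*\cA E(\X)}{\Z\Z^\top},\\
    \nabla^3 f(\X)[\Z,\Z,\Z] &= 3\inner{\cA^*\cA(\X\Z^\top + \Z\X^\top)}{\Z\Z^\top},
\end{align*}
so by polarization it suffices to produce uniform bounds of the form $\bigl|\nabla^2 f(\X)[\Z,\Z]\bigr| \leq L\norm{\Z}_{\fro}^2$ and $\bigl|\nabla^3 f(\X)[\Z,\Z,\Z]\bigr| \leq \rho\norm{\Z}_{\fro}^3$ for every $\X \in \cN_\cM(\tau)$ and every direction $\Z \in \bR^{n\times r'}$.

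The central device is a signal-residual decomposition $\X = \X^\sharp + \X^\perp$ with $\X^\sharp \in \cM$ and $\norm{\X^\perp}_{\fro} \leq \tau$. The implicit region enforces $\rank(\X^\sharp)\leq r$ and $\norm{\X^\sharp}\leq 2\sqrt{\sigma_1^\star}$, and since $\tau = \frac{1}{10}\sqrt{\sigma_1^\star/d}$ is minuscule, we immediately obtain $\norm{\X} \leq 2.1\sqrt{\sigma_1^\star}$ and $\norm{E(\X)} \leq 6\sigma_1^\star$. For the first Hessian term we split $\X\Z^\top + \Z\X^\top$ into its rank-$\leq 2r$ ``signal'' component $\X^\sharp\Z^\top + \Z\X^{\sharp\top}$, to which the $(\delta,2r)$-RIP applies directly via $\norm{\cA(M)}^2 \leq (1+\delta)\norm{M}_{\fro}^2$, and a ``residual'' component $\X^\perp\Z^\top + \Z\X^{\perp\top}$, which we control through the general inequality $\norm{\cA(M)}^2 \leq \norm{M}_{\fro}^2 + \delta\norm{M}_\star^2$ (an immediate consequence of \Cref{cor::rip}) together with $\norm{\X^\perp}_{\fro}\leq\tau$. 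For the second Hessian term we write $\inner{\cA^*\cA E}{\Z\Z^\top} = \inner{E}{\Z\Z^\top} + \inner{(\cA^*\cA - I)E}{\Z\Z^\top}$, and the matrix H\"older inequality $\bigl|\inner{A}{\Z\Z^\top}\bigr| \leq \norm{A}\,\norm{\Z}_{\fro}^2$ reduces everything to bounding $\norm{E(\X)}$ and $\norm{(\cA^*\cA - I)E(\X)}$. The identical splitting strategy applies to the third derivative: the ``identity'' contribution $\inner{\X\Z^\top + \Z\X^\top}{\Z\Z^\top}$ is bounded by $\norm{\X}\,\norm{\Z}_{\fro}^3$ via $|\inner{A}{B}|\leq \norm{A}\norm{B}_\star$, while the deviation $(\cA^*\cA-I)(\X\Z^\top+\Z\X^\top)$ is split into a rank-$r$ signal piece $\X^\sharp\Z^\top$ (handled by rank-restricted RIP) and a nuclear-norm-bounded residual piece $\X^\perp\Z^\top$.

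The principal obstacle is preventing the constants $L$ and $\rho$ from degrading with the rank $r$, since in over-parameterized regimes $r$ can scale with the ambient dimension. The crucial technical step is to invoke the tighter form $\norm{(\cA^*\cA-I)M} \leq \sqrt{\rank(M)}\,\delta\,\norm{M}$ of \Cref{cor::rip} on the rank-$\leq 2r$ portion $\X^\sharp\X^{\sharp\top} - \M$ of $E(\X)$, rather than the naive bound $\delta\norm{M}_\star$, which would introduce a spurious factor scaling with $r$ since $\norm{M}_\star$ can be as large as $r\norm{M}$. The hypothesis $\delta \leq 1/(10\sqrt{r})$ is precisely calibrated so that the resulting $\sqrt{2r}$ factor cancels, producing $r$-independent constants. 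The smallness of $\tau$ guarantees that every cross-term involving $\X^\perp$ contributes at most $O(\sqrt{\sigma_1^\star/d})$ in the relevant norms and is dominated by the leading terms, so that careful bookkeeping of the two contributions yields exactly $L = 15\sigma_1^\star$ and $\rho = 15\sqrt{\sigma_1^\star}$.
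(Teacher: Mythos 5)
Your proposal is correct and, at the structural level, follows the same path as the paper: compute the Hessian and third-derivative multilinear forms, reduce the local Lipschitz constants to $\sup_{\X\in\cN_\cM(\tau),\,\norm{\Z}_\fro\le 1}|\nabla^2 f(\X)[\Z,\Z]|$ and $\sup|\nabla^3 f(\X)[\Z,\Z,\Z]|$, decompose $\X=\X^\sharp+\X^\perp$ with $\X^\sharp\in\cM$ (so $\rank(\X^\sharp)\le r$ and $\norm{\X^\sharp}\le 2\sqrt{\sigma_1^\star}$) and $\norm{\X^\perp}_\fro\le\tau$, apply the rank-restricted form $\norm{(\cA^*\cA-I)(M)}\le\sqrt{2r}\,\delta\norm{M}$ of \Cref{cor::rip} on the rank-$\le 2r$ signal pieces, control residual pieces via nuclear norms and the smallness of $\tau$, and let $\delta\le\frac{1}{10\sqrt{r}}$ wash out the resulting $\sqrt{r}$. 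Your third-derivative formula $3\inner{\cA^*\cA(\X\Z^\top+\Z\X^\top)}{\Z\Z^\top}$ agrees with the paper's $6\inner{\Z\Z^\top}{\cA^*\cA(\X\Z^\top)}$ under symmetric sensing matrices, since then $\cA(\X\Z^\top)=\cA(\Z\X^\top)$.

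The one genuine difference is in how the leading Hessian term is handled, and it is worth noting because your route is cleaner at this spot. You rewrite $\nabla^2 f(\X)[\Z,\Z]=\tfrac12\norm{\cA(\X\Z^\top+\Z\X^\top)}^2+\inner{\cA^*\cA E(\X)}{\Z\Z^\top}$ and bound the squared norm directly via $\norm{\cA(M)}^2\le(1+\delta)\norm{M}_\fro^2$ on the rank-$\le 2r$ signal component, which is $r$-free before $\delta$ is even invoked. The paper instead keeps the term as $\inner{\cA^*\cA(\X\Z^\top+\Z\X^\top)\X}{\Z}$ and, after splitting off the identity part, bounds the deviation by $\norm{(\cA^*\cA-I)(\X\Z^\top+\Z\X^\top)}\cdot\norm{\X}_\fro$. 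Read literally, this accumulates two $\sqrt{r}$ factors — one from the rank-restricted RIP on $\X^\sharp\Z^\top+\Z\X^{\sharp\top}$, one from $\norm{\X^\sharp}_\fro\le\sqrt{r}\norm{\X^\sharp}$ — so that $\delta\le 1/(10\sqrt{r})$ leaves a residual $\sqrt{r}$; closing the paper's version cleanly requires either the sharper inner-product form of RIP (pairing two rank-$r$ matrices so that only one $\sqrt{r}$ appears) or your reformulation. Your $\frac12\norm{\cA(\cdot)}^2$ decomposition sidesteps this bookkeeping entirely, so the dimension-freeness of $L$ is visible on the first line rather than recovered by cancellation.

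Two minor points worth tightening if you write this out: (i) the reduction of $\sup\norm{\nabla^3 f(\X)}$ to the diagonal $\sup|\nabla^3 f(\X)[\Z,\Z,\Z]|$ uses that a symmetric multilinear form on a real Hilbert space attains its operator norm on the diagonal (Banach's theorem); the paper invokes the same fact implicitly via the cited "equivalent definitions," but it deserves a one-line acknowledgment. (ii) Both your argument and the paper's implicitly pass from a uniform bound on $\norm{\nabla^2 f}$ over $\cN_\cM(\tau)$ to a Lipschitz estimate for $\nabla f$ between two arbitrary points of $\cN_\cM(\tau)$, which requires either convexity of that neighborhood or a mean-value argument along a path staying inside it; since the paper makes the same silent jump, this is not a flaw specific to your proposal.
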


\begin{proof}
First, we provide the second and third-order derivatives of the objective function $f(\X)$
        \begin{align}
            \nabla^2 f(\X)[\Z, \Z]&=\inner{\cA^{*}\cA\left(\X\Z^{\top}+\Z\X^{\top}\right)\X-\cA^{*}\cA(\mDelta)\Z}{\Z},\label{eq::Hessian}\\
            \nabla^3 f(\X)[\Z, \Z, \Z]&=6\inner{\Z\Z^{\top}}{\cA^{*}\cA\left(\X\Z^{\top}\right)}.\label{eq::3rd_order}
        \end{align}
    The first formula follows from \cite[Lemma~7]{ge2017no}. We omit the derivation of the second formula, as it can be obtained in a similar manner.
    To derive the gradient and Hessian Lipschitz constants, we use the following equivalent definitions for smooth functions \cite[Theorem 9.19]{rudin1964principles}:
    \begin{align}
        L:&=\sup_{\X, \Y}\frac{\norm{\nabla f(\X)-\nabla f(\Y)}_{\fro}}{\norm{\X-\Y}_{\fro}}=\sup_{\X}\norm{\nabla^2 f(\X)}=\sup_{\X, \norm{\Z}_{\fro}\leq 1}\left|\nabla^2 f(\X)[\Z, \Z]\right|,\\
        \rho:&=\sup_{\X, \Y}\frac{\norm{\nabla^2 f(\X)-\nabla^2 f(\Y)}_{\fro}}{\norm{\X-\Y}_{\fro}}=\sup_{\X}\norm{\nabla^3 f(\X)}=\sup_{\X, \norm{\Z}_{\fro}\leq 1}\left|\nabla^3 f(\X)[\Z, \Z, \Z]\right|.
    \end{align}
    We first provide an upper bound for $L$. For any $\X\in \cN_{\cM_{\text{RIP}}}(\tau)$ and $\norm{\Z}_{\fro}\leq 1$, \Cref{eq::Hessian} implies
    \begin{equation}
        \begin{aligned}
            \left|\nabla^2 f(\X)[\Z, \Z]\right|
            &\leq \left|\inner{\left(\X\Z^{\top}+\Z\X^{\top}\right)\X-\mDelta\Z}{\Z}\right|\\
            &\qquad+\norm{(\cA^{*}\cA-I)\left(\X\Z^{\top}+\Z\X^{\top}\right)}\norm{\X}_{\fro}\norm{\Z}_{\fro}+\norm{(\cA^{*}\cA-I)(\mDelta)}\norm{\Z}_{\fro}^2\\
            &\leq  2\norm{\X}^2+\norm{\mDelta}+\norm{(\cA^{*}\cA-I)\left(\X\Z^{\top}+\Z\X^{\top}\right)}\norm{\X}_{\fro}+\norm{(\cA^{*}\cA-I)(\mDelta)},
        \end{aligned}
    \end{equation}
    where in the last inequality, we use triangle inequality and the fact that $\norm{\Z}_{\fro}\leq 1$ repeatedly. Note that $\X\in \cN_{\cM_{\text{RIP}}}(\tau)$ implies that $\rank\big(\X^{\sharp}\big)\leq r$, $\norm{\X^{\sharp}}\leq 2\sqrt{\sigma_1^{\star}}$, and $\norm{\X^{\perp}}_{\fro}\leq \tau$. Consider the following decomposition:
    \begin{equation}
        \mDelta = \mDelta^{\sharp}+\mDelta^{\perp}, \quad \text{where}\quad \mDelta^{\sharp}=\M-\X^{\sharp}\X^{\sharp\top}, \quad \mDelta^{\perp}=-\X^{\perp}\X^{\sharp\top}-\X^{\sharp}\X^{\perp\top}-\X^{\perp}\X^{\perp\top}.
    \end{equation}
    We have $\rank(\mDelta^{\sharp})\leq 2r$, $\norm{\mDelta^{\sharp}}\leq 4\sigma_1^{\star}$, and $\norm{\mDelta^{\perp}}_{\fro}\leq 2\norm{\X^{\sharp}}\norm{\X^{\perp}}_{\fro}+\norm{\X^{\perp}}_{\fro}^2\leq 6\sqrt{\sigma_1^{\star}}\tau$. Consequently, we obtain $\norm{\mDelta}\leq 4\sigma_1^{\star}+6\sqrt{\sigma_1^{\star}}\tau\leq 5\sigma_1^{\star}$ since $\tau\leq \frac{1}{10}\sqrt{\frac{\sigma_1^{\star}}{d}}$. Moreover, one can write
    \begin{equation}
        \begin{aligned}
            \norm{(\cA^{*}\cA-I)(\mDelta)}&\leq \norm{(\cA^{*}\cA-I)(\mDelta^{\sharp})}+ \norm{(\cA^{*}\cA-I)(\mDelta^{\perp})}\\
            &\stackrel{(a)}{\leq} \sqrt{2r}\delta\big\|\mDelta^{\sharp}\big\|+\delta \big\|\mDelta^{\perp}\big\|_{\star}\stackrel{(b)}{\leq} 4\sqrt{2r}\sigma_1^{\star}\delta+\delta\cdot 6\sqrt{d\sigma_1^\star}\tau\stackrel{(c)}{\leq} 10\sqrt{r}\sigma_1^{\star}\delta,
        \end{aligned}
    \end{equation}
    where, in $(a)$, we apply \Cref{cor::rip}. Moreover, in $(b)$, we use the facts that $\norm{\mDelta^{\sharp}}\leq 4\sigma_1^{\star}$ and $\norm{\mDelta^{\perp}}_\star\leq \sqrt{d}\norm{\mDelta^{\perp}}_{\fro}\leq 6\sqrt{d\sigma_1^\star}\tau$. Finally, in $(c)$, we use the fact that $\tau= \frac{1}{10}\sqrt{\frac{\sigma_1^{\star}}{d}}$. Similarly, we obtain $\big\|(\cA^{*}\cA-I)\left(\X\Z^{\top}+\Z\X^{\top}\right)\big\|\leq 5\sqrt{r}\sigma_1^{\star}\delta$. Combining the derived bounds, we obtain 
    \begin{equation}
        \left|\nabla^2 f(\X)[\Z, \Z]\right|\leq 13\sigma_1^{\star}+15\sqrt{r}\sigma_1^{\star}\delta\leq 15 \sigma_1^{\star},
    \end{equation}
    where we use the fact that $\delta\leq \frac{1}{10\sqrt{r}}$. This implies that the objective function is gradient Lipschitz with Lipschitz constant $L=15\sigma_1^{\star}$ within $\cN_{\cM_{\text{RIP}}}(\tau)$.
In a similar fashion, we can show that the objective function is Hessian Lipschitz with Lipschitz constant $\rho=15\sqrt{\sigma_1^\star}$ within $\cN_{\cM_{\text{RIP}}}(\tau)$. We omit the details for brevity.
 
\end{proof}

\paragraph{Closure of $\cM_{\text{RIP}}$ under GD updates}
Next, we prove the closure of $\cM_{\text{RIP}}$ under GD updates. 
\begin{proposition}
    \label{lem::GD-on-the-manifold-matrix-sensing}
    Suppose that $\cA$ satisfies $(\delta, 2r)$-RIP with $\delta\leq\frac{1}{10\sqrt{r}}$. For any $\X \in \cM_{\text{RIP}}$, we have $\X^+:= \X-\eta\nabla f(\X)\in \cM_{\text{RIP}}$, provided that the step-size satisfies $\eta \leq \frac{1}{10\sigma_1^{\star}}$.  
\end{proposition}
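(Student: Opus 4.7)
}
My plan is first to rewrite the GD update in a form that exposes the interaction between $\col(\X)$ and $\col(\V^{\star})$. A direct computation gives $\nabla f(\X) = -\cA^{*}\cA(\mDelta)\X$, so
\begin{equation*}
\X^{+} \;=\; (I + \eta\,\cA^{*}\cA(\mDelta))\,\X \;=\; \X + \eta\,\mDelta\,\X + \eta\,\E\,\X,
\end{equation*}
where $\E := (\cA^{*}\cA - I)(\mDelta)$. Expanding $\mDelta\X = \M\X - \X(\X^{\top}\X)$ yields the key identity
\begin{equation*}
\X^{+} \;=\; \X\,(I - \eta\,\X^{\top}\X) \;+\; \eta\,\M\,\X \;+\; \eta\,\E\,\X .
\end{equation*}
The RIP of order $2r$, applied to $\mDelta = \M - \X\X^{\top}$ (which has rank $\leq 2r$ since $\X \in \cM$ has rank $\leq r$), together with $\|\mDelta\|\leq \|\M\| + \|\X\|^{2}\leq 5\sigma_{1}^{\star}$, bounds $\|\E\|\leq \sqrt{2r}\,\delta\,\|\mDelta\|$, which in turn yields $\|\cA^{*}\cA(\mDelta)\| = O(\sigma_{1}^{\star})$. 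These bounds are the only consequences of RIP I will use, and they will appear in every step.

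For the \textbf{rank} condition I would observe that $\X^{+} = \B\X$ with $\B \in \bR^{n\times n}$, so $\rank(\X^{+})\leq \rank(\X)\leq r$ automatically. For the \textbf{operator-norm} condition $\|\X^{+}\|\leq 2\sqrt{\sigma_{1}^{\star}}$, I would pass to the thin SVD $\X = \U\Sigma\W^{\top}$ and bound each of the three terms in the identity: the first term equals $\|\U\Sigma(I-\eta\Sigma^{2})\W^{\top}\| = \max_{i}\sigma_{i}(\X)(1-\eta\,\sigma_{i}^{2}(\X))$, which, by optimizing the scalar function $\sigma\mapsto \sigma(1-\eta\sigma^{2})$ over $[0,2\sqrt{\sigma_{1}^{\star}}]$ with $\eta\leq 1/(10\sigma_{1}^{\star})$, is at most $\tfrac{2}{3\sqrt{3\eta}} \leq \tfrac{2\sqrt{10}}{3\sqrt{3}}\sqrt{\sigma_{1}^{\star}}$; the second satisfies $\eta\|\M\|\|\X\| \leq \eta\sigma_{1}^{\star}\cdot 2\sqrt{\sigma_{1}^{\star}} = \tfrac{1}{5}\sqrt{\sigma_{1}^{\star}}$; the third satisfies $\eta\|\E\|\|\X\|\leq O(\sqrt{r}\,\delta)\sqrt{\sigma_{1}^{\star}}\leq O(1)\sqrt{\sigma_{1}^{\star}}$ after substituting $\delta\leq 1/(10\sqrt{r})$. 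The constants then add up to at most $2\sqrt{\sigma_{1}^{\star}}$ with room to spare.

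The \textbf{column-space} condition $\|\proj_{\V^{\star\perp}}\proj_{\X^{+}}\|\leq 4\sqrt{r}\kappa\delta$ is the core of the argument. The key observation is that $\proj_{\V^{\star\perp}}\M = 0$ because $\col(\M) = \col(\V^{\star})$, so applying $\proj_{\V^{\star\perp}}$ to the identity annihilates the signal term, leaving
\begin{equation*}
\proj_{\V^{\star\perp}}\X^{+} \;=\; \proj_{\V^{\star\perp}}\X\,(I - \eta\X^{\top}\X) \;+\; \eta\,\proj_{\V^{\star\perp}}\E\,\X .
\end{equation*}
To convert a bound on $\|\proj_{\V^{\star\perp}}\X^{+}\|$ into one on $\|\proj_{\V^{\star\perp}}\proj_{\X^{+}}\|$, I would pass to the block decomposition $\X = \V^{\star}\Y + \V^{\star\perp}\Z$ with $\Y=\V^{\star\top}\X$ and $\Z=\V^{\star\perp\top}\X$, compute the resulting update
\begin{equation*}
\Y^{+} = \Y(I-\eta\X^{\top}\X) + \eta\mSigma^{\star}\Y + \eta\V^{\star\top}\E\X,
\qquad
\Z^{+} = \Z(I-\eta\X^{\top}\X) + \eta\V^{\star\perp\top}\E\X,
\end{equation*}
and use the identity $\|\proj_{\V^{\star\perp}}\proj_{\X^{+}}\| = \sup_{y\,:\,\X^{+}y\neq 0}\|\Z^{+}y\|/\sqrt{\|\Y^{+}y\|^{2}+\|\Z^{+}y\|^{2}}$. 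Bounding the numerator uses $\|I-\eta\X^{\top}\X\|\leq 1$ and the RIP bound on $\|\E\|$. The denominator requires a lower bound on the signal component $\|\Y^{+}y\|$ for $y$ in the row-space of $\X^{+}$; here the term $\eta\mSigma^{\star}\Y$ acts as an amplification, and since $\mSigma^{\star}\succeq \sigma_{r}^{\star}\,I$ restricted to its range, the condition number $\kappa=\sigma_{1}^{\star}/\sigma_{r}^{\star}$ enters exactly through the ratio that appears in the implicit region.

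\paragraph{Expected main obstacle.}
The hard part will be the final column-space step, namely lower-bounding the ``signal'' singular value of $\X^{+}$ (equivalently controlling $\sigma_{r^{+}}(\Y^{+})$ where $r^{+}=\rank(\X^{+})$) by something comparable to $\sqrt{\sigma_{r}^{\star}}$, while only using the hypotheses $\X\in\cM$ (no assumption on $\sigma_{r}(\X)$). This is where the $\kappa$ factor in the radius $4\sqrt{r}\kappa\delta$ is essential: the signal direction may be weak if $\M$ is ill-conditioned, so a naive Weyl-type perturbation argument is insufficient. I expect to have to exploit the precise algebraic form $\Y^{+} = \Y + \eta(\mSigma^{\star}\Y - \Y\X^{\top}\X) + \eta\,\V^{\star\top}\E\X$, combining the $\eta\mSigma^{\star}\Y$ amplification with the contraction $(I-\eta\X^{\top}\X)$ and the $\X\in\cM$ constraints $\|\proj_{\V^{\star\perp}}\U\|\leq 4\sqrt{r}\kappa\delta$ and $\|\X\|\leq 2\sqrt{\sigma_{1}^{\star}}$ to keep the ratio $\|\Z^{+}\|/\sigma_{\min}^{+}(\X^{+})$ below $4\sqrt{r}\kappa\delta$ with the correct constant.
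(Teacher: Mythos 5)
Parts (A) (rank preservation via $\X^{+}=\B\X$) and (B) (the scalar-function monotonicity argument for $\|\X^{+}\|$) match the paper's argument in both idea and constants; both are correct. The substantive divergence, and the gap, is in part (C).

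Your block decomposition $\X=\V^{\star}\Y+\V^{\star\perp}\Z$ and the plan to bound $\|\Z^{+}\|/\sigma_{\min}^{+}(\X^{+})$ runs directly into the obstacle you yourself flag: you have no lower bound on $\sigma_{r}(\X)$ from membership in $\cM$, so you cannot certify that the amplification $\eta\mSigma^{\star}\Y$ makes $\sigma_{\min}^{+}(\X^{+})$ large relative to the RIP error injected into $\Z^{+}$ through $\eta\V^{\star\perp\top}\E\X$. Two further issues compound this. First, $\|\Z^{+}(\X^{+})^{\dagger}\|\leq\|\Z^{+}\|\,\|(\X^{+})^{\dagger}\|$ is genuinely lossy -- the left side is the quantity you need, and the product on the right can be much larger, so even a successful lower bound on $\sigma_{\min}^{+}$ may not close the loop at the required constant $4\sqrt{r}\kappa\delta$. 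Second, the amplification $(I+\eta\mSigma^{\star})$ and the contraction $(I-\eta\X^{\top}\X)$ act in different bases and do not commute, so ``combining'' them, as your plan says, needs a concrete mechanism that the proposal does not supply.

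The paper sidesteps all of this with a different manipulation. It introduces the population update $\tilde{\X}^{+}=(\I+\eta\mDelta)\X$ and exploits the algebraic identity
\begin{equation}
\proj_{\V^{\star\perp}}\proj_{\tilde{\X}^{+}} \;=\; \proj_{\V^{\star\perp}}\proj_{\X}\,(\I-\eta\X\X^{\top})\,(\I+\eta\mDelta)^{-1}\,\proj_{\tilde{\X}^{+}},
\end{equation}
which follows from $\X(\tilde{\X}^{+})^{\dagger}=(\I+\eta\mDelta)^{-1}\proj_{\tilde{\X}^{+}}$ and $\proj_{\V^{\star\perp}}\M=\zero$. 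Crucially, $\sigma_{\min}(\X)$ never appears: the problematic pseudo-inverse of $\X^{+}$ is replaced by the well-conditioned inverse $(\I+\eta\mDelta)^{-1}$, and the column-space error on the left is \emph{automatically} proportional to the current $\|\proj_{\V^{\star\perp}}\proj_{\X}\|$. To extract a contraction factor strictly less than $1$, the paper then replaces $\mDelta$ inside the inverse by a version $\vect{M}^{\star}_{\X}-\X\X^{\top}$ that commutes with $\X\X^{\top}$ on $\col(\X)$ (obtained by rotating $\V^{\star}$ onto $\col(\X)$), giving the factor $1/(1+\eta\sigma_{r}^{\star})\leq 1-\tfrac{1}{2}\eta\sigma_{r}^{\star}$; the replacement error is again bounded by $\|\proj_{\V^{\star\perp}}\proj_{\X}\|$. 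Finally, $\|\proj_{\tilde{\X}^{+}}-\proj_{\X^{+}}\|$ is bounded via the projection perturbation bound (Lemma~\ref{lem::appendix-tu}), where again only $\|(\tilde{\X}^{+})^{\dagger}\tilde{\X}^{+}\|=\|\proj_{\tilde{\X}^{+}}\|\leq 1$ and $(\I+\eta\mDelta)^{-1}$ appear -- no small-singular-value lower bound is needed. This is the missing idea in your proposal: the pseudo-inverse of the iterate should be traded for the inverse of a perturbation of the identity, rather than lower-bounded directly.

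A final remark: your last paragraph suggests lower-bounding $\sigma_{r^{+}}(\Y^{+})$ by something comparable to $\sqrt{\sigma_{r}^{\star}}$; this is in fact impossible, since $\X\in\cM$ can be an arbitrarily small multiple of any admissible point, and then $\Y^{+}$ shrinks proportionally. Any successful argument along your lines would have to work with scale-invariant ratios from the outset, as the paper's identity does automatically.
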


Before proving the above proposition, we first present a few helper lemmas.
\begin{lemma}[\protect{Adapted from \cite[Lemma 2.1.2 and 2.1.3]{chen2021spectral}}]\label{lem::helper1}
     For any $\U, \U^\star\in \cO_{n\times r}$, we have
\begin{equation}
    \norm{\proj_{\U^\perp}\proj_{\U^\star}}=\norm{\proj_{\U}-\proj_{\U^\star}}\leq \min _{\R \in \cO_{r \times r}}\norm{\U \R-\U^{\star}} \leq \sqrt{2}\norm{\proj_{\U}-\proj_{\U^\star}}=\sqrt{2}\norm{\proj_{\U^\perp}\proj_{\U^\star}}.
\end{equation}
\end{lemma}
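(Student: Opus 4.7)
The plan is to reduce everything to the principal angles between $\col(\U)$ and $\col(\U^{\star})$ via the singular value decomposition of $\U^{\top}\U^{\star}$. Write $\U^{\top}\U^{\star} = P\Sigma Q^{\top}$ with $P,Q\in \cO_{r\times r}$ and $\Sigma = \diag(\sigma_1,\dots,\sigma_r)$, where $\sigma_i = \cos\theta_i \in [0,1]$ are the cosines of the principal angles between the two subspaces. All four quantities in the claimed chain can be expressed in closed form in terms of the $\theta_i$'s, and the inequalities then reduce to elementary trigonometric identities.

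For the first equality, I would use the decomposition
\begin{equation}
    \proj_{\U} - \proj_{\U^{\star}} \;=\; \proj_{\U}\proj_{\U^{\star\perp}} \;-\; \proj_{\U^{\perp}}\proj_{\U^{\star}},
\end{equation}
noting that the ranges of the two terms sit in the orthogonal subspaces $\col(\U)$ and $\col(\U)^{\perp}$, so that
\begin{equation}
    \bignorm{\proj_{\U}-\proj_{\U^{\star}}} = \max\left\{\bignorm{\proj_{\U}\proj_{\U^{\star\perp}}},\,\bignorm{\proj_{\U^{\perp}}\proj_{\U^{\star}}}\right\}.
\end{equation}
A direct computation gives $\bignorm{\proj_{\U^{\perp}}\proj_{\U^{\star}}}^{2}=\bignorm{\U^{\star\top}(I-\U\U^{\top})\U^{\star}} = \bignorm{I-\Sigma^{2}} = 1 - \sigma_{r}^{2}$, and by symmetry of the roles of $\U$ and $\U^{\star}$ the other term equals the same value. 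Hence both equal $\sin\theta_{\max}$, establishing the first (and the last) equality in the statement.

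For the two inequalities involving $\min_{\R}\bignorm{\U\R-\U^{\star}}$, I would choose the minimizer explicitly as the orthogonal polar factor $\R^{\star}=PQ^{\top}$. A short calculation gives
\begin{equation}
    (\U\R^{\star}-\U^{\star})^{\top}(\U\R^{\star}-\U^{\star}) = 2I - 2\, Q\Sigma Q^{\top},
\end{equation}
so the singular values of $\U\R^{\star}-\U^{\star}$ are $\sqrt{2(1-\cos\theta_i)} = 2\sin(\theta_i/2)$, and therefore $\min_{\R}\bignorm{\U\R-\U^{\star}} = 2\sin(\theta_{\max}/2)$. The double-angle identity $\sin\theta_{\max} = 2\sin(\theta_{\max}/2)\cos(\theta_{\max}/2)$ together with $\cos(\theta_{\max}/2)\leq 1$ immediately yields $\sin\theta_{\max} \leq 2\sin(\theta_{\max}/2)$, which is the middle inequality. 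Conversely, since $\theta_{\max}\in[0,\pi/2]$ we have $\cos(\theta_{\max}/2)\geq 1/\sqrt{2}$, and so $2\sin(\theta_{\max}/2)\leq \sqrt{2}\cdot 2\sin(\theta_{\max}/2)\cos(\theta_{\max}/2) = \sqrt{2}\,\sin\theta_{\max}$, giving the right inequality.

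I expect the first equality to be the most delicate step, as it rests on the block-orthogonality observation about the decomposition of $\proj_{\U}-\proj_{\U^{\star}}$ and the CS-decomposition-type identity between $\bignorm{\proj_{\U}\proj_{\U^{\star\perp}}}$ and $\bignorm{\proj_{\U^{\perp}}\proj_{\U^{\star}}}$. The remaining two inequalities are straightforward once the optimal $\R^{\star}$ is identified and the result is rewritten in terms of half-angles; the only subtlety is verifying that the principal angles between two $r$-dimensional subspaces of $\bR^{n}$ lie in $[0,\pi/2]$, which justifies using $\cos(\theta_{\max}/2)\geq 1/\sqrt{2}$.
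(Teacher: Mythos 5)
The paper does not prove this lemma itself; it is imported verbatim from Chen, Chi, Fan, and Ma's spectral-methods monograph, so there is no in-paper argument to compare against. Your principal-angle/CS-decomposition approach is the standard route to this result, and all the closed-form expressions you state are correct ($\|\proj_{\U^\perp}\proj_{\U^\star}\| = \|\proj_{\U}-\proj_{\U^\star}\|=\sin\theta_{\max}$ and $\|\U PQ^\top - \U^\star\| = 2\sin(\theta_{\max}/2)$). However, two steps are not actually justified by what you wrote.

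First, for the equality $\|\proj_{\U}-\proj_{\U^\star}\|=\max\{\|\proj_{\U}\proj_{\U^{\star\perp}}\|,\|\proj_{\U^\perp}\proj_{\U^\star}\|\}$, you invoke only that the \emph{ranges} of the two summands are orthogonal. That alone gives $\|(A-B)\x\|^2 = \|A\x\|^2 + \|B\x\|^2$ and hence the lower bound $\|A-B\|\geq\max\{\|A\|,\|B\|\}$, but the matching upper bound also needs the \emph{row spaces} to be orthogonal (which they are, since $\proj_{\U}\proj_{\U^{\star\perp}}$ has row space in $\col(\U^\star)^\perp$ while $\proj_{\U^\perp}\proj_{\U^\star}$ has row space in $\col(\U^\star)$). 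Without this second observation, $A^\top A$ and $B^\top B$ need not have disjoint support and the ``max'' identity fails in general.

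Second, and more seriously, you need the inequality $\|\proj_{\U}-\proj_{\U^\star}\|\leq\min_{\R}\|\U\R-\U^\star\|$, and you deduce it from the claim that $\min_{\R}\|\U\R-\U^\star\|=2\sin(\theta_{\max}/2)$. But you only compute the value at the candidate $\R^\star=PQ^\top$; this establishes $\min_{\R}\|\U\R-\U^\star\|\leq 2\sin(\theta_{\max}/2)$, which gives the \emph{right} inequality but not the \emph{left} one, since it provides no lower bound on the minimum. You would need to verify that $PQ^\top$ is genuinely a minimizer for the operator norm (it is, but that requires an argument — e.g.\ bounding $\langle e_r,(2\I_r-\W^\top\Sigma-\Sigma\W)e_r\rangle\geq 2(1-\sigma_r)$ for every orthogonal $\W$). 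Alternatively, the left inequality has a one-line direct proof that sidesteps identifying the minimizer entirely: for any $\R\in\cO_{r\times r}$, $\proj_{\U^\perp}\U\R=0$, hence $\proj_{\U^\perp}\proj_{\U^\star}=(\I-\U\U^\top)(\U^\star-\U\R)\U^{\star\top}$ and therefore $\|\proj_{\U^\perp}\proj_{\U^\star}\|\leq\|\U^\star-\U\R\|$; taking the infimum over $\R$ finishes it. Patching in either of these fixes makes your argument complete.
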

\begin{lemma}[Adapted from \protect{\cite[Theorem 2.4]{chen2016perturbation}}]
    \label{lem::appendix-tu}
    For any $\X, \Y\in \bR^{n_1\times n_2}$, we have
    \begin{equation}
        \norm{\proj_\X - \proj_\Y}\leq \min\left\{\norm{(\X-\Y)\X^{\dagger}}, \norm{(\X-\Y)\Y^{\dagger}}\right\}.
    \end{equation}
\end{lemma}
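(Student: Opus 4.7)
The plan is to exploit the clean decomposition
$$
\proj_\X - \proj_\Y \;=\; \proj_\X(I - \proj_\Y) - (I - \proj_\X)\proj_\Y \;=\; \proj_\X \proj_\Y^\perp - \proj_\X^\perp \proj_\Y,
$$
which follows by adding and subtracting $\proj_\X \proj_\Y$. The two summands on the right have mutually orthogonal ranges ($\col(\X)$ versus $\col(\X)^\perp$) and mutually orthogonal co-ranges ($\col(\Y)^\perp$ versus $\col(\Y)$), so the operator norm of the difference decouples into
$$
\norm{\proj_\X - \proj_\Y} \;=\; \max\bigl\{\norm{\proj_\X \proj_\Y^\perp},\; \norm{\proj_\X^\perp \proj_\Y}\bigr\}.
$$
This reduces the problem to bounding each of the two ``cross'' terms.

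For each cross term I would use the identity $\proj_\X = \X \X^{\dagger}$ together with the annihilation relations $\proj_\Y^\perp \Y = 0$ and $\proj_\X^\perp \X = 0$. Writing
$$
\proj_\Y^\perp \proj_\X \;=\; \proj_\Y^\perp \X \X^{\dagger} \;=\; \proj_\Y^\perp(\X-\Y)\X^{\dagger},
\qquad
\proj_\X^\perp \proj_\Y \;=\; \proj_\X^\perp(\Y-\X)\Y^{\dagger},
$$
and using $\norm{\proj_\Y^\perp}\le 1$, $\norm{\proj_\X^\perp}\le 1$, together with the fact that $\proj_\X \proj_\Y^\perp$ and $\proj_\Y^\perp \proj_\X$ are adjoints of each other (so have the same operator norm), yields
$$
\norm{\proj_\X \proj_\Y^\perp} \le \norm{(\X-\Y)\X^{\dagger}},
\qquad
\norm{\proj_\X^\perp \proj_\Y} \le \norm{(\X-\Y)\Y^{\dagger}}.
$$

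Finally, when $\col(\X)$ and $\col(\Y)$ have the same dimension, the two cross quantities $\norm{\proj_\X \proj_\Y^\perp}$ and $\norm{\proj_\X^\perp \proj_\Y}$ are in fact equal; both coincide with the sine of the largest principal angle between $\col(\X)$ and $\col(\Y)$, a standard consequence of the CS decomposition of a pair of orthogonal projectors. Hence the maximum in the display above reduces to either individual term, so both pseudoinverse upper bounds apply simultaneously to $\norm{\proj_\X - \proj_\Y}$, and taking whichever is tighter gives the claimed minimum.

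The main obstacle is the very last step: justifying $\norm{\proj_\X \proj_\Y^\perp} = \norm{\proj_\X^\perp \proj_\Y}$, which is what allows one to promote a $\max$ bound to a $\min$ bound. This is the place where the implicit equal-dimension hypothesis on the column spaces is invoked (without it the identity fails, as one sees by embedding a rank-$1$ subspace in a rank-$2$ one). In the usage within the paper, the two arguments of the lemma always have column spaces of the same rank, so this step is automatic via the CS-decomposition fact; everything else is routine pseudoinverse algebra and the orthogonal-range/co-range splitting.
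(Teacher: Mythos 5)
The paper states this lemma as an imported result (``adapted from Theorem~2.4 of \cite{chen2016perturbation}'') and does not supply a proof, so there is no in-paper argument to compare against. Your blind proof is a correct and self-contained derivation, and the steps you take --- the decomposition $\proj_\X - \proj_\Y = \proj_\X\proj_\Y^\perp - \proj_\X^\perp\proj_\Y$, the observation that the two summands have orthogonal ranges and orthogonal ``effective domains'' so that the operator norm of the sum is the maximum of the two cross-term norms, the annihilation $\proj_\Y^\perp \Y = 0$ to replace $\X$ by $\X - \Y$ in each cross term, and the adjoint trick $\norm{\proj_\X\proj_\Y^\perp} = \norm{\proj_\Y^\perp\proj_\X}$ --- are all sound and essentially mirror the standard route through principal angles used in the cited reference.

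More importantly, you have caught a real gap in the lemma \emph{as stated}: the passage from ``$\max$'' to ``$\min$'' relies on $\norm{\proj_\X\proj_\Y^\perp} = \norm{\proj_\X^\perp\proj_\Y}$, which only holds when $\rank(\X) = \rank(\Y)$ (equivalently, $\dim\col(\X) = \dim\col(\Y)$). Without this hypothesis the inequality is simply false: take $\X = e_1 e_1^\top$ and $\Y = e_1 e_1^\top + e_2 e_2^\top = \I_2$ in $\bR^{2\times 2}$. Then $\proj_\X - \proj_\Y = -e_2 e_2^\top$ has norm $1$, while $(\X - \Y)\X^\dagger = -e_2 e_2^\top \cdot e_1 e_1^\top = 0$, so the claimed minimum is $0$ and the inequality fails. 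The lemma therefore implicitly requires equal rank; in the paper's only invocation (inside the proof that $\cM$ is closed under GD updates, where the two arguments are $\tilde{\X}^+ = (\I + \eta\mDelta)\X$ and $\X^+ = (\I + \eta\cA^*\cA(\mDelta))\X$, both of rank exactly $r$ for the stepsizes considered since the premultiplying matrices are invertible) this hypothesis is indeed satisfied. Your explicit identification of this missing hypothesis, together with a working counterexample, is the most valuable part of the write-up and should arguably be folded into the statement of the lemma as a $\rank(\X) = \rank(\Y)$ assumption.
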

\begin{lemma}
        \label{lem::matrix-innverse}
        For any two invertible matrices $\A$ and $\B$, we have 
        \begin{equation}
            \norm{\A^{-1}-\B^{-1}}\leq \norm{\A^{-1}}\norm{\B^{-1}}\norm{\A-\B}.
        \end{equation}
    \end{lemma}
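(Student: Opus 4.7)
The plan is to prove this standard matrix identity via a direct algebraic manipulation followed by submultiplicativity of the operator norm. Specifically, I would start from the identity $\A\A^{-1} = \I = \B^{-1}\B$ and observe that
\begin{equation}
    \A^{-1} - \B^{-1} = \A^{-1}(\B\B^{-1}) - (\A^{-1}\A)\B^{-1} = \A^{-1}(\B - \A)\B^{-1},
\end{equation}
by inserting $\I = \B\B^{-1}$ on the right of $\A^{-1}$ in the first term and $\I = \A^{-1}\A$ on the left of $\B^{-1}$ in the second term, then factoring $\A^{-1}$ on the left and $\B^{-1}$ on the right.

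Taking operator norms on both sides and invoking the submultiplicativity $\norm{\X\Y\Z} \leq \norm{\X}\norm{\Y}\norm{\Z}$ then yields
\begin{equation}
    \norm{\A^{-1} - \B^{-1}} \leq \norm{\A^{-1}} \cdot \norm{\B - \A} \cdot \norm{\B^{-1}} = \norm{\A^{-1}}\norm{\B^{-1}}\norm{\A - \B},
\end{equation}
which is the desired bound. There is essentially no obstacle here: the result is a well-known perturbation identity and the proof reduces to one line of algebra plus one application of submultiplicativity. The only mild subtlety is the choice of which factorization to use ($\A^{-1}(\B-\A)\B^{-1}$ versus $\B^{-1}(\B-\A)\A^{-1}$), but both give the same final inequality by symmetry, so it does not matter which one is chosen.
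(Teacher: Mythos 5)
Your proof is correct and uses exactly the same identity $\A^{-1}-\B^{-1}=\A^{-1}(\B-\A)\B^{-1}$ followed by submultiplicativity that the paper uses; you have simply spelled out the insertion of identity factors that the paper leaves implicit.
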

    \begin{proof}
        The result follows by observing that $\A^{-1}-\B^{-1}=\A^{-1}(\B-\A)\B^{-1}$ and applying the submultiplicativity of the operator norm $\norm{\A^{-1}-\B^{-1}}\leq \norm{\A^{-1}}\norm{\B^{-1}}\norm{\A-\B}$.
     \end{proof}

\begin{proof}[Proof of \Cref{lem::GD-on-the-manifold-matrix-sensing}]
Note that 
\begin{equation}
    \X^+:= \X-\eta\nabla f(\X)=\Big(\I-\eta \cA^{*}\cA\big(\X\X^{\top}-\mTheta^\star\big)\Big)\X=(\I+\eta \cA^{*}\cA(\mDelta))\X.
\end{equation}
Based on the definition $\cM_{\text{RIP}}$ in \Cref{def: implicit region}, to show that $\X^+=(\I+\eta \cA^{*}\cA(\mDelta))\X\in \cM_{\text{RIP}}$, we need to verify the following three conditions 
{\it \begin{enumerate}
    \item[(A)] $\rank(\X^+)\leq r$;
    \item[(B)] 
    $\norm{\X^+}\leq 2\sqrt{\sigma_1^{\star}}$;
    \item[(C)] $\norm{\proj_{\V^{\star\perp}}\proj_{\X^+}}\leq 4\sqrt{r}\kappa\delta$. 
\end{enumerate}}
\paragraph{\it Verification of (A)} This is already shown in \Cref{example::matrix-opt}. 
\paragraph{\it Verification of (B)} This can be established as follows:
    \begin{equation}
        \begin{aligned}
            \norm{\X^+}&\leq \norm{(\I+\eta \mDelta)\X}+\norm{(\cA^{*}\cA-I)(\mDelta)}\norm{\X}\\
            &\stackrel{(a)}{\leq} \norm{\left(\I-\eta \X\X^{\top}\right)\X} + \eta\sigma_1^{\star}\norm{\X}+4\sqrt{2r}\sigma_1^{\star}\delta\norm{\X}\\
            &\stackrel{(b)}{=}\left(1+\eta\left(\sigma_1^{\star}+4\sqrt{2r}\sigma_1^{\star}\delta-\norm{\X}^2\right)\right)\norm{\X}\\
            &\stackrel{(c)}{\leq} \left(1+\eta\left(\sigma_1^{\star}+4\sqrt{2r}\sigma_1^{\star}\delta-4\sigma_1^{\star}\right)\right)\cdot 2\sqrt{\sigma_1^{\star}}\leq 2\sqrt{\sigma_1^{\star}}.
        \end{aligned}
    \end{equation}
    In $(a)$, we use triangle inequality and \Cref{cor::rip} along with the facts that $\rank\left(\mDelta\right)\leq 2r$ and $\norm{\mDelta}\leq 4\sigma_1^\star$. In $(b)$, we use the fact that the function $f(x)=x-\eta x^3$ is increasing in the interval $[0, \frac{1}{\sqrt{3\eta}}]$, which implies $\norm{\left(\I-\eta \X\X^{\top}\right)\X}=\norm{\X}-\eta \norm{\X}^3$ since $\norm{\X}\leq 2\sqrt{\sigma_1^\star}\leq \frac{1}{\sqrt{3\eta}}$. Lastly, in $(c)$, we use the fact that the function $g(x)=(1+\eta(\sigma_1^{\star}+4\sqrt{2r}\sigma_1^{\star}\delta-x^2))x$ is also increasing in the range $[0, \frac{1}{3\sqrt{\eta}}]$.

    \paragraph{\it Verification of (C)}
We first define $\Tilde{\X}^+=(\I+\eta \mDelta)\X$. Then, we have the following decomposition 
    \begin{equation}
        \begin{aligned}
            \norm{\proj_{\V^{\star\perp}}\proj_{\X^+}}&\leq \underbrace{\norm{\proj_{\V^{\star\perp}}\proj_{\Tilde{\X}^+}}}_{(\rom{1})}+\underbrace{\norm{\proj_{\Tilde{\X}^+}-\proj_{\X^+}}}_{(\rom{2})}.
        \end{aligned}
    \end{equation}
    We control these two terms separately. Note that $\proj_\X=\X\X^\dagger$ where $\X^\dagger$ is the pseudo-inverse of $\X$. Hence, we have
    \begin{equation}
        \begin{aligned}
            \proj_{\V^{\star\perp}}\proj_{\Tilde{\X}^+}&=\proj_{\V^{\star\perp}}\Tilde{\X}^+(\Tilde{\X}^+)^{\dagger}=\proj_{\V^{\star\perp}}\left(\I+\eta\mDelta\right)\X(\Tilde{\X}^+)^{\dagger}\\
            &\stackrel{(a)}{=}\proj_{\V^{\star\perp}}\left(\I-\eta \X\X^{\top}\right)\X(\Tilde{\X}^+)^{\dagger}\stackrel{(b)}{=}\proj_{\V^{\star\perp}}\proj_{\X}\left(\I-\eta \X\X^{\top}\right)(\I+\eta \mDelta)^{-1}\proj_{\Tilde{\X}^+},
        \end{aligned}
    \end{equation}
    where, in $(a)$, we use the fact that $\proj_{\V^{\star\perp}}\mTheta^\star=\bm{0}$. Moreover, in $(b)$, we use the fact that $\X(\Tilde{\X}^+)^{\dagger}=(\I+\eta \mDelta)^{-1}\Tilde{\X}^+(\Tilde{\X}^+)^{\dagger}=(\I+\eta \mDelta)^{-1}\proj_{\Tilde{\X}^+}$.
    Therefore, we obtain 
    \begin{equation}
    \label{eq::54}
        \begin{aligned}
            (\rom{1})&\leq  \norm{\proj_{\V^{\star\perp}}\proj_{\X}}\underbrace{\norm{\proj_{\X}\left(\I-\eta \X\X^{\top}\right)\left(\I+\eta\mDelta\right)^{-1}}}_{(\rom{3})}.
        \end{aligned}
    \end{equation}
    To proceed, we decompose $\mTheta^\star=\V^{\star}\mSigma^\star\V^{\star\top}$ as
    \begin{equation}
        \mTheta^\star=\underbrace{\V_\X\mSigma^\star\V^{\top}_{\X}}_{:=\mTheta^\star_\X}+\underbrace{\left(\V^\star-\V_\X\right)\mSigma^\star\V^{\star\top}+ \V_\X\mSigma^\star\left(\V^\star-\V_\X\right)^{\top}}_{:=\vect{\Delta_M}},
    \end{equation}
    where $\V_\X$ is defined as $\V_\X=\argmin_{\V\in \cO_{n\times r}, \proj_{\X}\V=\V}\norm{\V^{\star}-\V}$. 
    Therefore, we can decompose $(\rom{3})$ as
    \begin{equation}
        \begin{aligned}
            (\rom{3})&\leq \underbrace{\norm{\proj_{\X}\left(\I-\eta \X\X^{\top}\right)\left(\I+\eta \left(\mTheta^\star_\X-\X\X^{\top}\right)\right)^{-1}}}_{(\rom{3}_1)}\\
            &\quad + \underbrace{\norm{\proj_{\X}\left(\I-\eta \X\X^{\top}\right)\left(\left(\I+\eta \left(\mTheta^\star_\X-\X\X^{\top}\right)\right)^{-1}-\left(\I+\eta \left(\mTheta^\star-\X\X^{\top}\right)\right)^{-1}\right)}}_{(\rom{3}_2)}.
        \end{aligned}
    \end{equation}
    To control $(\rom{3}_1)$, we note that all the matrices involved share the same singular space. Therefore, we have 
    \begin{equation}
        \begin{aligned}
            (\rom{3}_1)&=\norm{\left(\I-\eta \mSigma_\X^2\right)\left(\I+\eta\left(\mSigma^\star-\mSigma_\X^2\right)\right)^{-1}}\stackrel{(a)}{\leq} \norm{\left(\I+\eta\mSigma^\star\right)^{-1}}=\frac{1}{1+\eta \sigma_r^\star}\stackrel{(b)}{\leq} 1-\frac{1}{2}\eta \sigma_r^\star.
        \end{aligned}
    \end{equation}
    Here in $(a)$, we use the inequality $\frac{x}{y}\leq \frac{x+a}{y+a}$ for every $x, y, a>0$ and $y\geq x$. In $(b)$, we use the inequality $\frac{1}{1+x}\leq 1-\frac{1}{2}x$ for any $0\leq x\leq 1$. 
    
    To control $(\rom{3}_2)$, we rely on \Cref{lem::matrix-innverse}. 
    Upon setting $\A=\I+\eta \left(\mTheta^\star_\X-\X\X^{\top}\right)$ and $\B=\I+\eta \left(\mTheta^\star-\X\X^{\top}\right)$ in \Cref{lem::matrix-innverse}, we have 
    \begin{equation}
        \begin{aligned}
            (\rom{3}_2)&\leq \norm{\A^{-1}-\B^{-1}}\leq \norm{\A^{-1}}\norm{\B^{-1}}\norm{\A-\B}.
        \end{aligned}
    \end{equation}
    To proceed, notice that 
    \begin{equation}
        \begin{aligned}
            \max\left\{\norm{\A^{-1}}, \norm{\B^{-1}}\right\}&\leq \norm{\left(\I-\eta \left(\X\X^{\top}\right)\right)^{-1}}= \frac{1}{1-\eta \norm{\X}^2}\leq 2.
        \end{aligned}
    \end{equation}
    In the last inequality, we use the fact that $\eta \leq \frac{1}{8\sigma_1^{\star}}$ and $\norm{\X}\leq 2\sqrt{\sigma_1^\star}$. On the other hand, we have
    \begin{equation}
        \begin{aligned}
            \norm{\A-\B}&=\eta \norm{\vect{\Delta_M}}\leq 2\sigma_1^\star\norm{\V^\star-\V_\X}\leq 2\sqrt{2}\sigma_1^\star \norm{\proj_{\V^{\star\perp}}\proj_{\X}}, 
        \end{aligned}
    \end{equation}
    where, in the last inequality, we use \Cref{lem::helper1}.
    Therefore, we obtain $(\rom{3}_2)\leq 8\sqrt{2}\sigma_1^{\star}\norm{\proj_{\V^{\star\perp}}\proj_{\X}}$.
    Combining the derived bounds for $(\rom{3}_1)$ and $(\rom{3}_2)$, we have
    \begin{equation}
        \begin{aligned}
            (\rom{1})&\leq  \norm{\proj_{\V^{\star\perp}}\proj_{\X}}\left((\rom{3}_1)+(\rom{3}_2)\right)\leq \left(1-\frac{1}{2}\eta\sigma_r^\star+8\sqrt{2}\eta\sigma_1^\star\norm{\proj_{\V^{\star\perp}}\proj_{\X}} \right)\norm{\proj_{\V^{\star\perp}}\proj_{\X}}.
        \end{aligned}
    \end{equation}
    Finally, we control $(\rom{2})$. To this goal, we write
    \begin{equation}
        \begin{aligned}
            (\rom{2})&\stackrel{(a)}{\leq} \norm{\big(\Tilde{\X}^+-\X^+\big)\big(\Tilde{\X}^+\big)^{\dagger}}=\eta\norm{(\cA^{*}\cA-I)(\mDelta)\left(\I+\eta\mDelta\right)^{-1}\Tilde{\X}^+\big(\Tilde{\X}^+\big)^{\dagger}}\\
            &\leq \eta\norm{(\cA^{*}\cA-I)\left(\mDelta\right)}\norm{\left(\I+\eta\mDelta\right)^{-1}}\stackrel{(b)}{\leq} \eta\sqrt{2r}\delta \norm{\mDelta} \norm{\left(\I+\eta\mDelta\right)^{-1}}\stackrel{(c)}{\leq} 4\eta\sqrt{r}\sigma_1^{\star}\delta,
        \end{aligned}
    \end{equation}
where in $(a)$, we use \Cref{lem::appendix-tu}. Moreover, in $(b)$, we use \Cref{cor::rip}. Finally, in $(c)$, we use $\norm{\mDelta}\leq 4\sigma_1^{\star}$ and $\eta\leq \frac{1}{4\sqrt{2}\sigma_1^{\star}}$. Combining the derived bounds for $(\rom{1})$ and $(\rom{2})$, we have 
\begin{equation}
    \norm{\proj_{\V^{\star\perp}}\proj_{\X^+}}\leq \left(1-\frac{1}{4}\eta\sigma_r^{\star}+\eta \sigma_1^{\star}\norm{\proj_{\V^{\star\perp}}\proj_{\X}}\right)\norm{\proj_{\V^{\star\perp}}\proj_{\X}}+ 2\eta\sqrt{r}\sigma_1^{\star}\delta \leq 4\sqrt{r}\kappa\delta,
\end{equation}
where the last inequality follows from $\norm{\proj_{\V^{\star\perp}}\proj_{\X}}\leq 4\sqrt{r}\kappa\delta$ by our assumption.
 \end{proof}

\paragraph{$\cM_{\text{RIP}}$-strict saddle property and $\cM_{\text{RIP}}$-regularity property}
Next, we establish the $\cM_{\text{RIP}}$-strict saddle property and the $\cM_{\text{RIP}}$-regularity property, both of which are essential for ensuring the efficient convergence of our algorithm.
\begin{proposition}
    \label{prop::matrix-factorization-strict-saddle-local-linear-convergence}
    Suppose that $\cA$ satisfies $(\delta, 6r)$-RIP with $\delta \leq \frac{1}{10}$. The following statements hold for $f(\X)$ defined in \eqref{eq::matrix-sensing}:

        \begin{itemize}
            \item It satisfies $\left(\frac{1}{160}\sigma_r^{\star 3 / 2}, \frac{1}{5}\sigma_r^{\star}, \frac{1}{8}\sigma_r^{\star 1 / 2}\right)$-$\cM_{\text{RIP}}$-strict saddle property (\Cref{assumption::strict-saddle}).
            \item It satisfies $\left(\frac{1}{2} \sigma_r^{\star}, \frac{425}{64} \sigma_1^{\star}, \frac{1}{4}\sigma_r^{\star 1 / 2}\right)$-$\cM_{\text{RIP}}$-regularity property (\Cref{assumption::regularity}).
        \end{itemize}
\end{proposition}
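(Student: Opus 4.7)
The key observation is that every $\X\in \cM$ satisfies $\rank(\X)\leq r$ and $\norm{\X}\leq 2\sqrt{\sigma_1^\star}$. Writing the thin SVD $\X=\U_{\X}\bm{\Sigma}_{\X}^{1/2}\mO_{\X}^\top$ with $\U_{\X}\in \cO_{n\times r}$, $\bm{\Sigma}_{\X}$ diagonal, and $\mO_{\X}\in \cO_{r'\times r}$, the value $f(\X)=L(\X\X^\top)$ is independent of $\mO_{\X}$; since $\XM=\cX^\star$ admits the analogous factorization, the $\cM$-strict saddle and $\cM$-regularity properties on $\cM$ reduce to the corresponding properties for exactly-parameterized rank-$r$ matrix sensing, whose constants under RIP are classical \cite{bhojanapalli2016global, ge2017no, chi2019nonconvex, tu2016low}. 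Moreover, for any $\Y\in \cX^\star$ and $\X\in \cM$, the difference $\Z=\X-\Y$ has $\rank(\Z)\leq 2r$, so both $\X\Z^\top+\Z\X^\top$ and $\mDelta=\M-\X\X^\top$ are rank at most $4r$; this is precisely what permits one to replace $\cA^*\cA$ by the identity, up to relative error $\delta$, via $(\delta,6r)$-RIP throughout the analysis.

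\textbf{Strict saddle property.} Fix $\X\in \cM$ and, in the nontrivial case, assume $\dist(\X,\cX^\star)>\bareM$. Let $\Y\in \cX^\star$ attain this distance and set $\Z=\X-\Y$. Substituting the identity $\X\Z^\top+\Z\X^\top=\Z\Z^\top-\mDelta$ into the Hessian formula \eqref{eq::Hessian} yields
\begin{equation*}
    \nabla^2 f(\X)[\Z,\Z]=\tfrac{1}{2}\norm{\cA(\Z\Z^\top-\mDelta)}^2-\inner{\cA^*\cA(\mDelta)}{\Z\Z^\top}.
\end{equation*}
Applying $(\delta,6r)$-RIP converts this to $\tfrac{1}{2}\norm{\Z\Z^\top-\mDelta}_\fro^2-\inner{\mDelta}{\Z\Z^\top}+O(\delta)\cdot\text{error}$, and using the Procrustes optimality of $\Y$ to lower bound $\inner{\mDelta}{\Z\Z^\top}$ produces $\nabla^2 f(\X)[\Z,\Z]\leq -c\sigma_r^\star\norm{\Z}_\fro^2+\text{small}$ whenever $\norm{\Z}_\fro$ is not already small. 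Pairing this with the lower bound on $\norm{\nabla f(\X)}_\fro$ extracted from $\inner{\nabla f(\X)}{\X}=-\inner{\cA^*\cA(\mDelta)}{\X\X^\top}$ when $\norm{\mDelta}_\fro$ is large, and splitting on whether $\norm{\Z}_\fro$ exceeds $\sigma_r^{\star 1/2}/2$, produces the claimed $(\bareg,\bareH,\bareM)$. The column-space condition $\norm{\proj_{\V^{\star\perp}}\proj_{\X}}\leq 4\sqrt{r}\kappa\delta$ in $\cM$ is what guarantees that $\sigma_r(\Y)=\sigma_r^{\star 1/2}$ in the Procrustes-optimal decomposition and prevents degeneracies.

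\textbf{Regularity property.} Fix $\X\in \cM\cap \cN_{\XM}(\zeta)$, let $\Y=\cP_{\XM}(\X)$, and set $\vect{\Delta}=\X-\Y$. From $\M=\Y\Y^\top$ one has $\X\X^\top-\M=\Y\vect{\Delta}^\top+\vect{\Delta}\Y^\top+\vect{\Delta}\vect{\Delta}^\top$ and the symmetrization $\X\vect{\Delta}^\top+\vect{\Delta}\X^\top=(\X\X^\top-\M)+\vect{\Delta}\vect{\Delta}^\top$. Writing
\begin{equation*}
    2\inner{\nabla f(\X)}{\vect{\Delta}}=\inner{\cA^*\cA(\X\X^\top-\M)}{(\X\X^\top-\M)+\vect{\Delta}\vect{\Delta}^\top}
\end{equation*}
and replacing $\cA^*\cA$ by the identity via $(\delta,6r)$-RIP gives a leading $\norm{\X\X^\top-\M}_\fro^2$ term, which, combined with $\sigma_r(\Y)=\sigma_r^{\star 1/2}$ and $\norm{\vect{\Delta}}_\fro\leq \zeta=\tfrac{1}{4}\sigma_r^{\star 1/2}$, is lower bounded by $\sigma_r^\star\norm{\vect{\Delta}}_\fro^2$ via the standard restricted-strong-convexity estimate of \cite{bhojanapalli2016global, tu2016low}. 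To surface the $\tfrac{1}{2\beta}\norm{\nabla f(\X)}_\fro^2$ term, I bound $\norm{\nabla f(\X)}_\fro\leq (1+O(\delta))\cdot 2\sqrt{\sigma_1^\star}\norm{\X\X^\top-\M}_\fro$ using $\norm{\X}\leq 2\sqrt{\sigma_1^\star}$, then redistribute a small fraction of $\norm{\X\X^\top-\M}_\fro^2$ via Young's inequality, obtaining $\alpha=\tfrac{1}{2}\sigma_r^\star$ and $\beta=\tfrac{425}{64}\sigma_1^\star$.

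\textbf{Main obstacle.} The principal technical hurdle lies in the strict-saddle step, where for $\X$ neither very close to nor very far from $\cX^\star$ the negative-curvature margin is small and must be extracted by simultaneously exploiting the bound on $\norm{\mDelta}_\fro$, the Procrustes alignment of $\Y$, and the column-space condition built into the definition of $\cM$; a naive global argument without leveraging the rank-$r$ structure of $\cM$ would incur constants depending on the over-parameterization gap $r'-r$. The regularity bound, by contrast, is a direct transcription of the exactly-parameterized analysis, since rank-$r$-ness on $\cM$ eliminates the pathological directions that otherwise obstruct local strong convexity under over-parameterization.
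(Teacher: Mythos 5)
Your high-level strategy is the same as the paper's: observe that on $\cM$ the problem is genuinely a rank-$r$ factorization, and transfer the strict-saddle and regularity properties from the exactly-parameterized problem. Where you diverge is in how you execute that transfer. The paper constructs an explicit map $h:\cM\to\bR^{n\times r}$ sending $\X=\mL_\X\mSigma_\X\R_\X^\top$ to $\bar\X=\mL_\X\mSigma_\X$, then verifies three equalities — $\norm{\nabla f(\X)}_{\fro}=\norm{\nabla\bar f(\bar\X)}_{\fro}$, $\dist(\X,\XM)=\dist(\bar\X,\cX^\star_{\leq r})$, and $\lambda_{\min}(\nabla^2 f(\X))\leq\lambda_{\min}(\nabla^2\bar f(\bar\X))$ obtained by restricting test directions to $\Z=\bar\Z\R_\X^\top$ — and then simply quotes the constants from the classical results (\cite{ge2017no} for strict saddle, \cite{tu2016low} for regularity). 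You instead unroll those classical arguments from scratch: you expand the Hessian quadratic form along $\Z=\X-\cP_{\XM}(\X)$, apply RIP to replace $\cA^*\cA$ by the identity, and invoke Procrustes optimality. This is a legitimate alternative and, if carried out carefully, recovers the same bounds, but the paper's route buys you the exact numerical constants $(\tfrac{1}{160}\sigma_r^{\star 3/2},\tfrac15\sigma_r^\star,\tfrac18\sigma_r^{\star 1/2})$ and $(\tfrac12\sigma_r^\star,\tfrac{425}{64}\sigma_1^\star,\tfrac14\sigma_r^{\star 1/2})$ for free, whereas your sketch leaves them as ``the claimed'' constants with the bookkeeping unverified.

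One substantive error: you claim that the column-space condition $\norm{\proj_{\V^{\star\perp}}\proj_{\X}}\leq4\sqrt r\kappa\delta$ in the definition of $\cM$ ``is what guarantees that $\sigma_r(\Y)=\sigma_r^{\star 1/2}$ in the Procrustes-optimal decomposition and prevents degeneracies.'' This is not so. Every $\Y\in\XM=\cX^\star=\{\V^\star\mSigma^{\star 1/2}\mO^\top:\mO\in\cO_{r'\times r}\}$ has singular values exactly $\sigma_1^{\star 1/2},\dots,\sigma_r^{\star 1/2}$ by construction, so $\sigma_r(\Y)=\sigma_r^{\star 1/2}$ holds unconditionally and the Procrustes projection is always well defined. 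The column-space constraint in $\cM$ is irrelevant to the present proposition; it is used in the paper for the local gradient/Hessian Lipschitz bounds, the closure of $\cM$ under GD, and most importantly for controlling the deviation rate $R(\tau)$. Its only role here is implicit: it keeps $\cM$ a subset of the region where the other three propositions also hold. Your argument would go through on the larger set $\{\X:\rank(\X)\leq r,\ \norm{\X}\leq 2\sqrt{\sigma_1^\star}\}$ just as well.

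There is also a small gap in the way you phrase the reduction at the overview level. Noting that $f(\X)$ is independent of the right singular factor $\mO_\X$ does not by itself give the eigenvalue inequality $\lambda_{\min}(\nabla^2 f(\X))\leq\lambda_{\min}(\nabla^2\bar f(\bar\X))$: the Hessian $\nabla^2 f(\X)$ acts on the larger space $\bR^{n\times r'}$, while $\nabla^2\bar f(\bar\X)$ acts on $\bR^{n\times r}$, so one must exhibit the embedding $\bar\Z\mapsto\bar\Z\R_\X^\top$ that pulls a curvature direction of $\bar f$ back into the over-parameterized tangent space. Your direct Hessian computation with $\Z=\X-\cP_{\XM}(\X)$ implicitly works because that $\Z$ already lives in $\bR^{n\times r'}$, so this gap is cosmetic rather than fatal — but the heuristic ``reduce because $f$ ignores $\mO_\X$'' needs the explicit restriction-of-directions step to be rigorous.
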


{To prove the above proposition, we establish a connection between the local geometry of $f(\X)$ within the implicit region $\cM_{\text{RIP}}$ to that of the exactly-parameterized matrix sensing problem:
    \begin{equation}
        \min_{\barX\in \bR^{n\times r}}\bar{f}\left(\barX\right)=\frac{1}{4N}\sum_{i=1}^{N}\left(\inner{\A_i}{\barX\barX^{\top}}-y_i\right)^2. \tag{MS-exact}
        \label{eq::exact-sym}
    \end{equation}
    We note that, unlike the over-parameterized setting in \eqref{eq::matrix-sensing}, where the rank of the matrix variable can be as large as $r' \geq r$, the rank in \eqref{eq::exact-sym} is fixed to the true value $r$.
    For the exactly-parameterized matrix sensing, all SOSPs coincide with the set of true solutions, defined as
\begin{align}\label{eq::barX}
    {\cX}^\star_{\leq r}=\left\{\V^{\star}\mSigma^{\star 1/2}\mO\in \bR^{n\times r}:\mO\in \cO_{r\times r}\right\},
\end{align}
    provided that the linear operator $\cA$ satisfies $(\delta, 6r)$-RIP with $\delta \leq \frac{1}{10}$ \cite[Theorem~8]{ge2017no}.\footnote{Note that the definition of ${\cX}^\star_{\leq r}$ should not be confused with the previously introduced set $\cX^\star = \{\X: \X = \V^{\star}\mSigma^{\star 1/2}\mO^\top, \mO\in \cO_{r'\times r}\}$, which characterizes the true solutions in the over-parameterized regime.}
    Moreover, this objective function satisfies the strict saddle property and exhibits strong local structure globally (i.e., over the exactly-parameterized space $\mathbb{R}^{n \times r}$), as detailed below.

    \begin{lemma}
        \label{lem::strict-saddle-property} Assume the linear operator $\cA$ satisfies $(\delta, 6r)$-RIP with $\delta \leq \frac{1}{10}$. The following statements hold for $\bar{f}\left(\barX\right)$ defined in \eqref{eq::exact-sym}:
        \begin{itemize}
            \item It satisfies $\left(\epsilon, \frac{1}{5}\sigma_r^{\star}, \frac{20\epsilon}{\sigma_r^{\star}}\right)$-strict saddle property, for any $\epsilon>0$ \cite[Theorem~8]{ge2017no}.
            \item It satisfies $\left(\frac{1}{2} \sigma_r^{\star}, \frac{425}{64} \sigma_1^{\star}, \frac{1}{4}\sigma_r^{\star 1 / 2}\right)$-regularity property \cite[Eq.~5.7]{tu2016low}.
        \end{itemize}
    \end{lemma}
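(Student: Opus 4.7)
My plan is to reproduce both claims from the classical Procrustes-alignment analyses of exactly-parameterized low-rank matrix sensing due to Ge--Jin--Zheng \cite{ge2017no} and Tu et al.\ \cite{tu2016low}, using the $(\delta,6r)$-RIP to pass between $\bar f$ and its isometric surrogate $\bar f_0(\barX) := \tfrac{1}{4}\|\barX\barX^\top - \M\|_\fro^2$. The common setup is as follows: for any $\barX \in \bR^{n\times r}$, let $\R\in \cO_{r\times r}$ achieve the Procrustes distance from $\barX$ to $\cX^\star_{\leq r}$, set $\barX^\star := \V^\star(\mSigma^\star)^{1/2}\R \in \cX^\star_{\leq r}$ and $\barY := \barX - \barX^\star$, so $\dist(\barX, \cX^\star_{\leq r}) = \|\barY\|_\fro$. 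The Procrustes optimality of $\R$ forces $\barX^\top \barX^\star$ to be symmetric PSD, which yields the crucial identity $2\barX\barY^\top - \barY\barY^\top = \barX\barX^\top - \M$ that I will invoke throughout.

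For the strict-saddle claim, I would substitute $\Z = \barY$ into the formulas
\begin{equation*}
\nabla\bar f(\barX) = \cA^{*}\cA(\barX\barX^\top - \M)\barX,\qquad \nabla^2\bar f(\barX)[\Z,\Z] = \tfrac{1}{2}\bignorm{\cA(\barX\Z^\top + \Z\barX^\top)}^2 + \inner{\cA^{*}\cA(\barX\barX^\top - \M)}{\Z\Z^\top},
\end{equation*}
apply the Procrustes identity, and invoke $(\delta,6r)$-RIP on each of the rank-$(\le 6r)$ matrices $\barX\barX^\top-\M$, $\barX\barY^\top+\barY\barX^\top$, and $\barY\barY^\top$ (via \Cref{cor::rip}) to control the deviations of $\cA^{*}\cA$ from the identity. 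Following the algebra of \cite[Thm.~8]{ge2017no}, this produces the master inequality
\begin{equation*}
4\|\barY\|_\fro\,\bignorm{\nabla \bar f(\barX)} - \nabla^2 \bar f(\barX)[\barY,\barY] \;\geq\; \tfrac{2}{5}\sigma_r^{\star}\,\|\barY\|_\fro^2.
\end{equation*}
If $\bignorm{\nabla\bar f(\barX)} < \epsilon$ and $\lambda_{\min}(\nabla^2 \bar f(\barX)) > -\tfrac{1}{5}\sigma_r^{\star}$, the left side is at most $4\epsilon\|\barY\|_\fro + \tfrac{1}{5}\sigma_r^{\star}\|\barY\|_\fro^2$, so rearranging yields $\|\barY\|_\fro \leq 20\epsilon/\sigma_r^{\star}$, which is precisely the stated strict-saddle property.

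For the regularity claim, I would restrict to $\barX$ with $\|\barY\|_\fro \leq \tfrac{1}{4}\sqrt{\sigma_r^{\star}}$, so that Weyl's inequality provides $\sigma_{\min}(\barX) \geq \tfrac{3}{4}\sqrt{\sigma_r^{\star}}$ and $\|\barX\| \leq \tfrac{5}{4}\sqrt{\sigma_1^{\star}}$. Computing $\inner{\nabla\bar f(\barX)}{\barY} = \inner{\cA^{*}\cA(\barX\barX^\top - \M)}{\barX\barY^\top}$ and substituting $2\barX\barY^\top = (\barX\barX^\top - \M) + \barY\barY^\top$, the first summand contributes essentially $\tfrac{1}{2}(1-\delta)\|\barX\barX^\top-\M\|_\fro^2$ and the second, combined with $\sigma_{\min}(\barX)^2 \geq \tfrac{9}{16}\sigma_r^{\star}$, contributes a term of the form $c\,\sigma_r^{\star}\|\barY\|_\fro^2$ for an absolute constant $c$. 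Independently, $\bignorm{\nabla\bar f(\barX)} \leq (1+\delta)\|\barX\|\,\|\barX\barX^\top-\M\|_\fro$, hence $\bignorm{\nabla\bar f(\barX)}^2 \leq \tfrac{425}{32}\sigma_1^{\star}\,\|\barX\barX^\top-\M\|_\fro^2$ when $\delta \leq 1/10$. Combining the two bounds, exactly as in \cite[Eq.~(5.7)]{tu2016low}, yields the advertised $(\alpha,\beta,\zeta) = (\tfrac{1}{2}\sigma_r^{\star},\,\tfrac{425}{64}\sigma_1^{\star},\,\tfrac{1}{4}\sqrt{\sigma_r^{\star}})$-regularity.

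The main delicate step is calibrating the coefficients so that the stated numerical constants emerge cleanly after the $(1\pm\delta)$ RIP slack is absorbed. This requires choosing Young's-type splittings $\pm 2\inner{\A}{\B} \leq \mu\|\A\|_\fro^2 + \mu^{-1}\|\B\|_\fro^2$ with $\mu$ tuned so that the cross-term $\inner{\barX\barX^\top-\M}{\barY\barY^\top}$ is absorbed either into the positive Hessian curvature $\tfrac{1}{2}\|\barX\barY^\top+\barY\barX^\top\|_\fro^2$ (which, via the Procrustes identity, equals $\tfrac{1}{2}\|\barX\barX^\top-\M+\barY\barY^\top\|_\fro^2$) or into $\sigma_r^{\star}\|\barY\|_\fro^2$, all while keeping the accumulated RIP distortions below the $\delta\leq 1/10$ threshold; the arguments in \cite{ge2017no} and \cite{tu2016low} are precisely tailored to make these constants work out.
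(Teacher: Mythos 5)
The paper does not actually prove this lemma; it treats both items as black-box citations to \cite[Theorem~8]{ge2017no} and \cite[Eq.~5.7]{tu2016low}, so your reconstruction is more work than the paper undertakes. Your sketch is broadly consistent with the Procrustes-alignment analyses in those references: the strict-saddle bound is obtained by testing the Hessian along the error direction $\barY$ and playing it against $\inner{\nabla\bar f(\barX)}{\barY}$, and the regularity bound is obtained by a lower bound on $\inner{\nabla\bar f(\barX)}{\barY}$ together with an upper bound on $\|\nabla\bar f(\barX)\|_\fro^2$ in terms of $\|\barX\barX^\top-\M\|_\fro^2$.

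Two technical slips are worth flagging. First, the claimed ``Procrustes identity'' $2\barX\barY^\top - \barY\barY^\top = \barX\barX^\top - \M$ is not a matrix identity: the correct algebraic identity, which holds for any decomposition $\barX = \barX^\star + \barY$ with $\barX^\star\barX^{\star\top}=\M$ and requires no Procrustes condition at all, is $\barX\barY^\top + \barY\barX^\top - \barY\barY^\top = \barX\barX^\top - \M$. The Procrustes optimality gives that the $r\times r$ matrix $\barX^\top\barX^\star$ is symmetric PSD, but this does \emph{not} make the $n\times n$ matrix $\barX\barY^\top$ symmetric (take $\barX = e_2$, $\barX^\star = e_1$ with $r=1$). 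Since in your argument $\barX\barY^\top$ only ever appears paired against symmetric matrices inside inner products, the skew-symmetric discrepancy is annihilated and the slip is harmless---but the role you attribute to the Procrustes condition is misplaced; it is actually used to bound $\|\barX\barY^\top+\barY\barX^\top\|_\fro$ from below by $\Omega(\sqrt{\sigma_r^\star})\|\barY\|_\fro$. Second, the stated bound $\|\nabla\bar f(\barX)\|_\fro^2 \leq \tfrac{425}{32}\sigma_1^{\star}\|\barX\barX^\top-\M\|_\fro^2$ does not match what RIP and $\|\barX\|\leq\tfrac{5}{4}\sqrt{\sigma_1^{\star}}$ directly give (namely a coefficient $(1+\delta)^2\|\barX\|^2\leq\tfrac{121}{64}\sigma_1^{\star}$), so the specific constant $\tfrac{425}{64}$ appearing in the lemma is not reproduced by the one-line bound you gesture at; it comes from the more careful bookkeeping in \cite{tu2016low}, and you would need to follow their derivation rather than assert that the constants ``work out.''
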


    To prove \Cref{prop::matrix-factorization-strict-saddle-local-linear-convergence}, we leverage the above lemma to show that, when restricted to the implicit region $\cM_{\text{RIP}}$ (as defined in \Cref{def: implicit region}), the function $f(\X)$ satisfies the same properties with exactly the same parameters. 

    \begin{proof}[Proof of \Cref{prop::matrix-factorization-strict-saddle-local-linear-convergence}]
    We first establish a one-to-one mapping $h:\cM_{\text{RIP}}\to \bR^{n\times r}$, which maps an arbitrary $\X\in \cM_{\text{RIP}}$ to a point $\bar \X=h(\X)\in \bR^{n\times r}$. Specifically, for an arbitrary $\X \in \cM_{\text{RIP}}$ with the SVD of the form $\X=\mL_\X\mSigma_{\X}\R_{\X}^{\top}$, where $\mL_\X\in \cO_{n\times r}$, $\R_{\X}\in \cO_{r'\times r}$, and $\mSigma_{\X}\in \bR^{r\times r}$ is a diagonal matrix\footnote{If $\rank(\X) < r$, we construct $\mSigma_{\X}$ by padding its diagonal with additional zero entries.}, we define the mapping as $\barX=h(\X)=\mL_{\X}\mSigma_{\X}\in \bR^{n\times r}$. 

    \paragraph{\it Proof of the first statement} Suppose that $\X \in \cM_{\text{RIP}}$ satisfies $\norm{\nabla f(\X)}_{\fro}<\epsilon$ and $\dist(\X, \cX_{\cM_{\text{RIP}}})>\frac{20\epsilon}{\sigma_r^\star}$. To establish the $\cM_{\text{RIP}}$-strict saddle property with the specified parameters, it suffices to show that $\lambda_{\min}\left(\nabla^2f(\X)\right)\leq -\frac{1}{5}\sigma_r^\star$. To this end, we first show that for $\barX=h(\X)$, we also have $\norm{\nabla \bar{f}\left(\barX\right)}_{\fro}<\epsilon$ and $\dist\left(\barX, {\cX}^\star_{\leq r}\right)>\frac{20\epsilon}{\sigma_r^{\star}}$.
One can write
    \begin{equation}\label{eq::equal-grad}
        \begin{aligned}
            \norm{\nabla f(\X)}_{\fro}&=\norm{\cA^{*}\cA\big(\X\X^{\top}-\mTheta^\star\big)\X}_{\fro}=\norm{\cA^{*}\cA\left(\barX\barX^{\top}-\mTheta^\star\right)\barX \R_{\X}^{\top}}_{\fro}\\
            &=\norm{\cA^{*}\cA\left(\barX\barX^{\top}-\mTheta^\star\right)\barX}_{\fro}=\norm{\nabla \bar{f}\left(\barX\right)}_{\fro},
        \end{aligned}
    \end{equation} 
    where, in the second equality, we use the fact that $\X\X^\top=\barX\barX^\top$. This implies that $\norm{\nabla f(\X)}_{\fro}=\norm{\nabla \bar f\left(\barX\right)}_{\fro}\leq \epsilon$. To establish the second inequality, we write
    \begin{equation}\label{eq::equal-dist}
        \begin{aligned}
            \dist(\X, \cX_{\cM_{\text{RIP}}})&=\min_{\mO\in \cO_{r'\times r'}}\norm{\X\mO - \begin{bmatrix}
                \V^{\star}\mSigma^{\star 1/2} & \zero_{n\times (r'-r)}
            \end{bmatrix}}_{\fro}\\
            &\stackrel{(a)}{\leq} \min_{\bar\mO\in \cO_{r\times r}}\norm{\barX\R_{\X}^{\top}\begin{bmatrix}
                \R_{\X}\bar\mO & \R_{\X}^{\perp}
            \end{bmatrix} - \begin{bmatrix}
                \V^{\star}\mSigma^{\star 1/2} & \zero_{n\times (r'-r)}
            \end{bmatrix}}_{\fro}\\
            &=\min_{\bar\mO\in \cO_{r\times r}}\norm{\barX\bar\mO - \V^{\star}\mSigma^{\star 1/2}}_{\fro}=\dist\left(\barX, {\cX}^\star_{\leq r}\right),
        \end{aligned}
    \end{equation}
    where $(a)$ follows after setting $\mO = \begin{bmatrix}
                \R_{\X}\bar\mO & \R_{\X}^{\perp}
            \end{bmatrix}\in \cO_{r'\times r'}$
    and choosing $\R_{\X}^{\perp}\in \cO_{n\times (r'-r)}$ that satisfies $\R_{\X}^{\top}\R_{\X}^{\perp}=\zero$. This implies that $\dist\left(\barX, {\cX}^\star_{\leq r}\right) = \dist(\X, \cX_{\cM_{\text{RIP}}})>\frac{20\epsilon}{\sigma_r^{\star}}$. Therefore, we have established that $\norm{\nabla f(\X)}_{\fro}\leq \epsilon$ and $\dist\left(\barX, {\cX}^\star_{\leq r}\right)>\frac{20\epsilon}{\sigma_r^{\star}}$. Based on these inequalities, the first statement of \Cref{lem::strict-saddle-property} implies that $\lambda_{\min}\left(\nabla^2\bar{f}\left(\barX\right)\right)\leq -\frac{1}{5}\sigma_r^{\star}$. Next, we use this inequality to establish a similar bound for $\lambda_{\min}\left(\nabla^2f(\X)\right)$. This is achieved as follows:

    \begin{equation}
        \begin{aligned}
            \lambda_{\min}\left(\nabla^2f(\X)\right)&= \min_{\norm{\Z}_{\fro}\leq 1}\nabla^2f(\X)[\Z, \Z]\\
            &= \min_{\norm{\Z}_{\fro}\leq 1} \inner{\cA^{*}\cA\left(\X\Z^{\top}+\Z\X^{\top}\right)\X+\cA^{*}\cA\big(\X\X^{\top}-\M\big)\Z}{\Z}\\
            &\stackrel{(a)}{\leq} \min_{\norm{\Bar{\Z}}_{\fro}\leq 1} \inner{\cA^{*}\cA\left(\barX\Bar{\Z}^{\top}+\Bar{\Z}\barX^{\top}\right)\barX\R_{\X}^{\top}+\cA^{*}\cA\left(\barX\barX^{\top}\!-\!\M\right)\Bar{\Z}\R_{\X}^{\top}}{\Bar{\Z}\R_{\X}^{\top}}\\
            &=\lambda_{\min}\left(\nabla^2\bar{f}\left(\barX\right)\right)\leq -\frac{1}{5}\sigma_r^{\star},
        \end{aligned}
    \end{equation}
    where $(a)$ follows after setting $\Z=\Bar{\Z}\R_{\X}^{\top}$. Therefore, $f(\X)$ satisfies $\left(\epsilon, \frac{1}{5}\sigma_r^{\star}, \frac{20\epsilon}{\sigma_r^{\star}}\right)$-$\cM_{\text{RIP}}$-strict saddle property. Setting $\epsilon=\frac{1}{160}\sigma_r^{\star 3/2}$ completes the proof of the first statement of \Cref{prop::matrix-factorization-strict-saddle-local-linear-convergence}.

    \paragraph{\it Proof of the second statement} According to the second statement of \Cref{lem::strict-saddle-property}, for any $\barX \in \mathbb{R}^{n\times r}$ satisfying $\dist\left(\barX, {\cX}^\star_{\leq r}\right)\leq \frac{1}{4}\sigma_r^{\star 1/2}$, we have
    \begin{equation}
        \inner{\nabla \bar{f}\left(\barX\right)}{\barX-\cP_{{\cX}^\star_{\leq r}}\left(\barX\right)} \geq \frac{\alpha}{2}\dist^2\left(\barX, {\cX}^\star_{\leq r}\right)+\frac{1}{2 \beta}\norm{\nabla \bar{f}\left(\barX\right)}_{\fro}^2
    \end{equation}
    with $\alpha=\frac{1}{2} \sigma_r^{\star}$ and $\beta=\frac{425}{64} \sigma_1^{\star}$. To proceed, we first show that $\cP_{\cX_{\cM_{\text{RIP}}}}(\X)\R_{\X}=\cP_{{\cX}^\star_{\leq r}}\left(\barX\right)$. Note that $\cP_{\cX_{\cM_{\text{RIP}}}}(\X)=\V^{\star}\mSigma^{\star 1/2}\mO_\X$ where $\mO_\X$ is given by
    \begin{equation}
        \begin{aligned}
            \mO_\X&=\argmin_{\mO\in \cO_{r\times r'}}\norm{\X - \V^{\star}\mSigma^{\star 1/2}\mO}_{\fro}=\argmin_{\mO\in \cO_{r\times r'}}\norm{\barX\R_{\X}^{\top} - \V^{\star}\mSigma^{\star 1/2}\mO}_{\fro}=\argmin_{\mO\in \cO_{r\times r'}}\norm{\barX - \V^{\star}\mSigma^{\star 1/2}\mO\R_{\X}}_{\fro}.
        \end{aligned}
    \end{equation}
    Therefore, we conclude that $\mO_\X\R_{\X}=\mO_{\barX}$ where  $\mO_{\barX}=\argmin_{\mO\in \cO_{r\times r}}\norm{\barX - \V^{\star}\mSigma^{\star 1/2}\mO}_{\fro}$. This implies that
    $$\cP_{\cX_{\cM_{\text{RIP}}}}(\X)\R_{\X}=\V^{\star}\mSigma^{\star 1/2}\mO_\X\R_{\X}=V^{\star}\mSigma^{\star 1/2}\mO_{\barX} =\cP_{{\cX}^\star_{\leq r}}\left(\barX\right).$$ 
    Based on the above equality, we have
    \begin{equation}
        \begin{aligned}
            \inner{\nabla f(\X)}{\X-\cP_{\cX_{\cM_{\text{RIP}}}}(\X)} &=\inner{\nabla \bar{f}\left(\barX\right)\R_\X^{\top}}{\X-\cP_{\cX_{\cM_{\text{RIP}}}}(\X)}=\inner{\nabla \bar{f}\left(\barX\right)}{\barX-\cP_{{\cX}^\star_{\leq r}}\left(\barX\right)}\\
            &\geq \frac{\alpha}{2}\dist^2\left(\barX, {\cX}^\star_{\leq r}\right)+\frac{1}{2 \beta}\norm{\nabla \bar{f}\left(\barX\right)}_{\fro}^2\\
            &= \frac{\alpha}{2}\dist^2(\X, \cX_{\cM_{\text{RIP}}})+\frac{1}{2 \beta}\norm{\nabla f(\X)}_{\fro}^2,
        \end{aligned}
    \end{equation}
    where, in the last equality, we use $\dist\left(\barX, {\cX}^\star_{\leq r}\right) = \dist(\X, \cX_{\cM_{\text{RIP}}})$ and $\norm{\nabla \bar{f}\left(\barX\right)}_{\fro} = \norm{\nabla f(\X)}_{\fro}$, established in \Cref{eq::equal-grad} and \Cref{eq::equal-dist}, respectively. This completes the proof.
 \end{proof}
}

\paragraph{Bounding the deviation rate}
Lastly, we establish an upper bound on the deviation rate.
\begin{proposition}
    \label{prop::deviation-rate-ms}
    Suppose that $\cA$ satisfies $(\delta, 2r)$-RIP with $\delta\leq\frac{1}{10r}$. Then, for any $\tau\leq\frac{1}{10}\sqrt{\frac{\sigma_1^{\star}}{d}}$, the $\tau$-deviation rate satisfies $R({\tau})\leq 45\sigma_1^{\star}r\kappa\delta$.
\end{proposition}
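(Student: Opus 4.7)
The plan is to bound the deviation rate by carefully analyzing the Hessian of $f$ at $\xsharp \in \cM$ along the direction $\xperp$, exploiting both the RIP of $\cA$ and the column-space constraint defining $\cM$.

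First, applying the Hessian formula~\eqref{eq::Hessian} and using the self-adjointness of $\cA^*\cA$, the Hessian at $\xsharp$ on any $Z\in\bR^{n\times r'}$ can be written as
\[
\nabla^2 f(\xsharp)[Z,Z] = \tfrac{1}{2}\bigl\|\cA\bigl(\xsharp Z^\top+Z(\xsharp)^\top\bigr)\bigr\|^2 \;-\; \bigl\langle\cA^*\cA(\mDelta^\sharp),\,ZZ^\top\bigr\rangle,
\]
where $\mDelta^\sharp:=\M-\xsharp(\xsharp)^\top$. The first term is a PSD quadratic form in $Z$ and therefore contributes only to $\nabla^2_+ f(\xsharp)$; by the monotonicity of the spectral negative part under the Loewner order, I then get $\nabla^2_- f(\xsharp)\succeq (H_2)_-$ for the Kronecker-type operator $H_2(Z):=-\cA^*\cA(\mDelta^\sharp)\,Z$. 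Combined with $r_-(\x)\le 0$, this lets me reduce the proof to the single bound
\[
r(\x)\;\le\;3\,r_+(\x)\;\le\;\frac{3}{\|\xperp\|_F^2}\bigl\langle(\cA^*\cA(\mDelta^\sharp))_+,\,\xperp(\xperp)^\top\bigr\rangle\;\le\;\frac{3\,\|\cA^*\cA(\mDelta^\sharp)\|\cdot\|\proj_{+}\xperp\|_F^2}{\|\xperp\|_F^2},
\]
where $\proj_{+}$ is the orthogonal projection (in $\bR^{n}$) onto the positive eigenspace of $\cA^*\cA(\mDelta^\sharp)$.

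Second, I would estimate the two factors. For the operator norm, $\rank(\mDelta^\sharp)\le 2r$ and $\|\mDelta^\sharp\|\le \|\M\|+\|\xsharp\|^2\le 5\sigma_1^\star$ (using $\|\xsharp\|\le 2\sqrt{\sigma_1^\star}$ for $\xsharp\in\cM$), so \Cref{cor::rip} gives $\|\cA^*\cA(\mDelta^\sharp)\|\le 5\sigma_1^\star(1+\sqrt{2r}\delta)\le 6\sigma_1^\star$ under $\delta\le 1/(10\sqrt{r})$. For the alignment $\|\proj_{+}\xperp\|_F/\|\xperp\|_F$, a Davis--Kahan-type argument shows $\proj_{+}$ lies within $O(\sqrt{r}\kappa\delta)$ (in operator norm) of $\proj_{\V^\star}$: the positive eigenspace of $\mDelta^\sharp$ is contained in $\col(\V^\star)+\col(\xsharp)$, the defining constraint $\|\proj_{\V^{\star\perp}}\proj_\xsharp\|\le 4\sqrt{r}\kappa\delta$ of $\cM$ together with \Cref{lem::helper1} places $\col(\xsharp)$ within $O(\sqrt{r}\kappa\delta)$ of $\col(\V^\star)$, and the RIP perturbation $\cA^*\cA(\mDelta^\sharp)-\mDelta^\sharp$ shifts the eigenspace by at most $O(\sqrt{r}\delta)$.

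The main obstacle is then to establish the near-orthogonality $\|\proj_{\V^\star}\xperp\|_F \le O(\sqrt{r}\kappa\delta)\,\|\xperp\|_F$. Since $\cM$ is not a smooth manifold but rather a set cut out by a rank bound and operator-norm/column-space inequalities, $\xperp$ is not automatically in the normal space of $\cM$ at $\xsharp$; my plan is to invoke the first-order optimality of $\xsharp=\cP_\cM(\x)$. For any matrix $V$ with $\col(V)\subseteq\col(\xsharp)$, the perturbation $\xsharp\pm tV$ preserves the rank and column-space constraints of $\cM$ exactly and the operator-norm constraint to first order whenever it is slack, so optimality of $\xsharp$ forces $\langle\xperp,V\rangle=0$ for all such $V$, which is equivalent to $\proj_\xsharp\xperp=0$. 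Combined with $\|\proj_{\V^\star}-\proj_\xsharp\|\le 4\sqrt{r}\kappa\delta$, this yields $\|\proj_{\V^\star}\xperp\|_F\le 4\sqrt{r}\kappa\delta\,\|\xperp\|_F$, hence $\|\proj_{+}\xperp\|_F^2 \le O(r\kappa^2\delta^2)\,\|\xperp\|_F^2$. Assembling all bounds and using $\kappa\delta\le 1$ (implied by $\delta\le 1/(10r)$) gives $r(\x)\le 3\cdot 6\sigma_1^\star\cdot O(r\kappa^2\delta^2)\le 45\,\sigma_1^\star r\kappa\delta$, matching the target; the boundary case where the operator-norm constraint is tight at $\xsharp$ is handled separately by restricting the admissible perturbations, which only tightens the resulting orthogonality error.
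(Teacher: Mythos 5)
Your route is genuinely different from the paper's, and as laid out it contains two concrete gaps that the tools you cite do not close. The key departure is that you try to control $\langle\X^\perp, A_+\X^\perp\rangle$ directly, where $A=\cA^*\cA(\mDelta^\sharp)$, whereas the paper first passes to the population Hessian $\nabla^2F$ and its PSD part $\nabla^2H$: the difference $\nabla^2(F-H)[\Z,\Z]=-\langle\M\Z,\Z\rangle$ has an explicit spectrum, with nonzero eigenvalues lying exactly along $\col(\V^\star)$ and a clean gap of size $\sigma_r^\star$ at zero. The first gap in your argument is the Davis--Kahan step. The RIP error $E=(\cA^*\cA-\id)(\mDelta^\sharp)$ satisfies $\|E\|=O(\sqrt{r}\,\sigma_1^\star\delta)$ but has no sign structure, so on the $(n-2r)$-dimensional complement of $\col(\V^\star)+\col(\X^\sharp)$---where $\mDelta^\sharp$ vanishes and $A$ reduces to $E$---roughly half the eigenvalues of $A$ are strictly positive, with eigenvectors far from $\col(\V^\star)$. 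There is no spectral gap at zero, so $\cP_+$ is \emph{not} within $O(\sqrt{r}\kappa\delta)$ of $\cP_{\V^\star}$; in the worst case $\|\cP_+\X^\perp\|_F/\|\X^\perp\|_F=\Theta(1)$, and then your bound $\|A\|\|\cP_+\X^\perp\|_F^2\approx 6\sigma_1^\star\|\X^\perp\|_F^2$ overshoots the target by orders of magnitude. Repairing this would require splitting $A_+$ into a large-eigenvalue block (which itself needs a gap assumption on $\mDelta^\sharp$ that you do not have) and a small-eigenvalue block of size $O(\|E\|)$; this decomposition is absent from the proposal.

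Second, even granting the alignment estimate, the final assembly is wrong. You end with $r(\x)\le O(\sigma_1^\star r\kappa^2\delta^2)$ and conclude it is $\le 45\sigma_1^\star r\kappa\delta$ via ``$\kappa\delta\le 1$, implied by $\delta\le 1/(10r)$.'' This implication is false: the hypothesis bounds $\delta$ only in terms of $r$, and $\kappa=\sigma_1^\star/\sigma_r^\star$ can be arbitrarily large, so $\kappa\delta$ is unbounded. The paper sidesteps both issues: it bounds $-\nabla^2_-F[\X^\perp,\X^\perp]\le\langle\M\X^\perp,\X^\perp\rangle$ via the auxiliary PSD operator $H$, then applies Cauchy--Schwarz once, $\langle\M\X^\perp,\X^\perp\rangle\le\sigma_1^\star\|\proj_{\V^\star}\proj_{\X^\perp}\|\,\|\X^\perp\|_F^2$, which is \emph{linear} in $\kappa\delta$ and matches the stated $45\sigma_1^\star r\kappa\delta$ without any auxiliary assumption. (As a side remark, both your argument and the paper's invoke a Loewner-order monotonicity of the spectral negative part; that step is shared and is not what I flag as distinguishing you from the paper, but note that $x\mapsto\min\{x,0\}$ is not operator monotone, so that passage deserves independent scrutiny.)
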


\begin{proof}
According to the definition of $\tau$-deviation rate (\Cref{def::deviation_rate}), we have
\begin{align}\label{eq::Rtau}
    R(\tau) = \sup_{\X\in\cN_{\cM_{\text{RIP}}}(\tau)} r(\X)\leq \sup_{\X\in\cN_{\cM_{\text{RIP}}}(\tau)} 3r_+(\X) = \sup_{\X\in\cN_{\cM_{\text{RIP}}}(\tau)-\cM_{\text{RIP}}}\left\{-\frac{3\nabla^2_-f\big(\X^{\sharp}\big)\left[\X^{\perp},\X^{\perp}\right]}{\norm{\X^{\perp}}_{\fro}^2}\right\},
\end{align}
where the first inequality is due to the fact that $r_-(\X)\leq 0$ by definition. Therefore, to control $R(\tau)$, it suffices to provide a uniform upper bound on $-\nabla^2_-f\big(\X^{\sharp}\big)\left[\X^{\perp},\X^{\perp}\right]$ for every $\X\in\cN_{\cM_{\text{RIP}}}(\tau)-\cM_{\text{RIP}}$.
    Recall that
    \begin{equation}
        \nabla^2 f(\X)[\Z, \Z]=\inner{\cA^{*}\cA\left(\X\Z^{\top}+\Z\X^{\top}\right)\X-\cA^{*}\cA(\mDelta)\Z}{\Z}.
    \end{equation}
    Upon setting $\X=\X^{\sharp}$ and $\Z=\X^{\perp}$, we obtain
    \begin{equation}
        \nabla^2f\big(\X^{\sharp}\big)\left[\X^{\perp},\X^{\perp}\right]=\inner{-\cA^{*}\cA\big(\mDelta^\sharp\big)\X^{\perp}+\cA^{*}\cA\left(\X^{\sharp}\X^{\perp\top}+\X^{\perp}\X^{\sharp\top}\right)\X^{\sharp}}{\X^{\perp}},
    \end{equation}
    where $\mDelta^{\sharp}=\M-\X^{\sharp}\X^{\sharp\top}$.
    To calculate the NSD component of the Hessian, $\nabla^2_-f\big(\X^{\sharp}\big)\left[\X^{\perp},\X^{\perp}\right]$, we construct two auxiliary ``Hessians'' $\nabla^2 F(\X^\sharp), \nabla^2 H(\X^\sharp)$ defined as
    \begin{equation}
        \begin{aligned}
            \nabla^2F(\X^\sharp)[\Z,\Z]&=\inner{(\X^\sharp\Z^{\top}+\Z\X^{\sharp\top})\X^\sharp-\mDelta^\sharp\Z}{\Z}\\
            &=\frac{1}{2}\norm{\X^\sharp\Z^{\top}+\Z\X^{\sharp\top}}_{\fro}^2 + \inner{(\X^\sharp\X^{\sharp\top}-\M)\Z}{\Z},\\
            \nabla^2H(\X^\sharp)[\Z,\Z]&=\frac{1}{2}\norm{\X^\sharp\Z^{\top}+\Z\X^{\sharp\top}}_{\fro}^2 + \inner{\X^\sharp\X^{\sharp\top}\Z}{\Z}.
        \end{aligned}
    \end{equation}
    Here $\nabla^2 F(\X^\sharp)$ is the expected version of $\nabla^2 f(\X^\sharp)$ (after setting $\delta=0$ in $(\delta, r)$-RIP) and $\nabla^2 H(\X^\sharp)$ is the positive semi-definite part of $\nabla^2 F(\X^\sharp)$.
     Now, we are ready to control $-\nabla^2_-f\big(\X^{\sharp}\big)\left[\X^{\perp},\X^{\perp}\right]$. First, we have
    \begin{equation}\label{eq::NSD}
        \begin{aligned}
            -\nabla^2_-f\big(\X^{\sharp}\big)\left[\X^{\perp},\X^{\perp}\right]&\leq -\nabla^2_-F\big(\X^{\sharp}\big)\left[\X^{\perp},\X^{\perp}\right]+\left|\left(\nabla^2f\big(\X^{\sharp}\big)-\nabla^2F\big(\X^{\sharp}\big)\right)\left[\X^{\perp},\X^{\perp}\right]\right|\\
            &\leq -\nabla^2_-F\big(\X^{\sharp}\big)\left[\X^{\perp},\X^{\perp}\right]+\norm{\nabla^2f\big(\X^{\sharp}\big)-\nabla^2F\big(\X^{\sharp}\big)}\norm{\X^{\perp}}_{\fro}^2.
        \end{aligned}
    \end{equation}
    To proceed, we control $\norm{\nabla^2f\big(\X^{\sharp}\big)-\nabla^2F\big(\X^{\sharp}\big)}$. For any arbitrary $\Z$ satisfying $\norm{\Z}_{\fro}\leq 1$, we have
    \begin{equation}\label{eq::diff}
        \begin{aligned}
            &\left|\left(\nabla^2f\big(\X^{\sharp}\big)-\nabla^2F\big(\X^{\sharp}\big)\right)[\Z, \Z]\right|\\
            &=\left|\inner{\left(I-\cA^{*}\cA\right)\big(\mDelta^\sharp\big)\Z+\left(\cA^{*}\cA-I\right)\left(\X^{\sharp}\Z^{\top}+\Z\X^{\sharp\top}\right)\X^{\sharp}}{\Z}\right|\\
            &\leq \norm{(\cA^{*}\cA-I)\big(\mDelta^\sharp\big)}\norm{\Z}_{\fro}^2+\norm{(\cA^{*}\cA-I)(\X^{\sharp}\Z^{\top}+\Z\X^{\sharp\top})}\norm{\X^{\sharp}}_{\fro}\norm{\Z}_{\fro}\\
            & \stackrel{(a)}{\leq}\sqrt{r}\delta \norm{\mDelta^\sharp}+\sqrt{r}\delta\norm{\X^{\sharp}\Z^{\top}+\Z\X^{\sharp\top}} \cdot\sqrt{r}\norm{\X^\sharp}\\
            &\leq \sqrt{r}\delta \max\left\{\sigma_1^\star, \norm{\X^\sharp}^2\right\}+2r\delta \norm{\X^\sharp}^2\stackrel{(b)}{\leq} 12 r\sigma_1^\star\delta,
        \end{aligned}
    \end{equation}
    where, in $(a)$, we use $(\delta, 2r)$-RIP, and in $(b)$ we use the fact that $\norm{\X^{\sharp}}\leq 2\sqrt{\sigma_1^{\star}}$.
    Next, we control $-\nabla^2_-F\big(\X^{\sharp}\big)\left[\X^{\perp},\X^{\perp}\right]$. Evidently, we have $\nabla^2 H(\X^\sharp)\succeq \zero$. This implies that
    \begin{equation}
        \begin{aligned}
            -\nabla^2_-F\big(\X^{\sharp}\big)\left[\X^{\perp},\X^{\perp}\right]\leq -\nabla^2_-(F-H)\big(\X^{\sharp}\big)\left[\X^{\perp},\X^{\perp}\right]=\inner{\mTheta^\star\X^\perp}{\X^{\perp}},
        \end{aligned}
    \end{equation}
    where the equality holds since $\nabla^2(F-H)\big(\X^{\sharp}\big)\left[\X^{\perp},\X^{\perp}\right]=-\inner{\mTheta^\star\X^\perp}{\X^{\perp}}\leq 0$, which in turn implies that $\nabla^2(F-H)\big(\X^{\sharp}\big)=\nabla^2_-(F-H)\big(\X^{\sharp}\big)$. Furthermore, we have
    \begin{equation}\label{eq::MXperp}
        \begin{aligned}
            \inner{\mTheta^\star\X^\perp}{\X^{\perp}}&\leq \norm{\mTheta^\star\X^\perp}_{\fro}\norm{\X^\perp}_\fro\leq \sigma_1^{\star}\norm{\proj_{\V^{\star}}\proj_{\X^{\perp}}}\norm{\X^\perp}_{\fro}^2\leq 3\sigma_1^{\star}\sqrt{r}\kappa\delta \norm{\X^\perp}_{\fro}^2.
        \end{aligned}
    \end{equation}
    In the last inequality, we use $\norm{\proj_{\V^{\star}}\proj_{\X^{\perp}}}=\norm{\proj_{\V^{\star\perp}}\proj_{\X^\sharp}}\leq 4\sqrt{r}\kappa\delta$ since $\X^\sharp\in \cM_{\text{RIP}}$. Combining the obtained bounds in \Cref{eq::MXperp} and \Cref{eq::diff} with \Cref{eq::NSD}, we obtain
    \begin{equation}
        -\nabla^2_-f\big(\X^{\sharp}\big)\left[\X^{\perp},\X^{\perp}\right]\leq \left(3\sigma_1^{\star}\sqrt{r}\kappa\delta+12\sigma_1^{\star} r\delta \right)\norm{\X^\perp}_{\fro}^2\leq 15\sigma_1^{\star}r\kappa\delta \norm{\X^\perp}_{\fro}^2.
    \end{equation}
    Combining this inequality with \Cref{eq::Rtau} yields $R(\tau)\leq 45\sigma_1^{\star}\kappa r\delta$ with $\tau=\frac{1}{10}\sqrt{\frac{\sigma_1^{\star}}{d}}$.

\end{proof}

\paragraph{Establishing near-linear convergence of IPGD+}
With all the conditions in place, we are ready to prove the convergence of IPGD+ for the over-parameterized matrix sensing.
\begin{sloppypar}
    \begin{theorem}
    \label{thm::global-convergence-matrix-sensing}
    Suppose that $\cA$ satisfies $(\delta, 6r)$-RIP with $\delta = O\left(\frac{1}{r^{2}\kappa^{5}\left(\log^4(1/\gamma)+\log^4(\sigma_1^\star d/\chi)\right)}\right)$. Assume that the initial point is chosen as $\X_0=\zero_{n\times r'}$. Let $\bar\epsilon=O\Big(\frac{\sigma_r^{\star 2}}{\sqrt{\sigma_1^\star}}\Big)$ and $T' = O\left(\kappa \log\left(\frac{\sigma^\star_1}{\epsilon}\right)\right)$. Then, with probability at least $1-\chi$, IPGD+ with any perturbation radius satisfying $\gamma= O\Big(\min\Big\{\frac{\epsilon}{r\kappa^3\sigma_1^\star}, \frac{\sqrt{\sigma_r^\star}}{r^2\kappa^{6.5}}\Big\}\cdot\log^{-7}\left(\frac{\sigma_1^\star d}{\chi\epsilon}\right)\Big)$ and step-size $\eta = O\left(\frac{1}{\sigma_1^{\star}}\right)$ outputs a solution $\X_T$ satisfying
        \begin{equation}
            \dist(\X_T, \cX^\star)\leq \frac{\epsilon}{15\sigma_1^\star},
        \end{equation}
        within $T = O\left(\frac{r\kappa^3}{\eta\sigma_r^{\star}}\left(\log^4\left(\frac{1}{\gamma}\right)+\log^4\left(\frac{\sigma_1^\star d}{\chi}\right)\right)+\kappa \log\left(\frac{\sigma^\star_1}{\epsilon}\right)\right)$ iterations.
\end{theorem}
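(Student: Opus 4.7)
The plan is to invoke Theorem~\ref{thm::linear} directly, treating the matrix sensing instance as a concrete specialization of the abstract IPGD+ framework. The four structural ingredients required by Theorem~\ref{thm::linear} have each been isolated earlier in this section: local gradient- and Hessian-Lipschitzness (Proposition~\ref{lemma::smoothness-matrix}) supplies the constants $L=15\sigma_1^\star$, $\rho=15\sqrt{\sigma_1^\star}$, and $\tau=\tfrac{1}{10}\sqrt{\sigma_1^\star/d}$; closure of $\cM$ under GD updates (Proposition~\ref{lem::GD-on-the-manifold-matrix-sensing}) gives Condition C1 as long as $\eta\leq 1/(10\sigma_1^\star)$; the $\cM$-strict saddle and $\cM$-regularity properties (Proposition~\ref{prop::matrix-factorization-strict-saddle-local-linear-convergence}) supply the parameters $(\bareg,\bareH,\bareM)=(\tfrac{1}{160}\sigma_r^{\star 3/2},\tfrac{1}{5}\sigma_r^\star,\tfrac{1}{8}\sqrt{\sigma_r^\star})$ and $(\alpha,\beta,\zeta)=(\tfrac{1}{2}\sigma_r^\star,\tfrac{425}{64}\sigma_1^\star,\tfrac{1}{4}\sqrt{\sigma_r^\star})$, so in particular $\bareM\leq\zeta$ as required; and the deviation-rate bound (Proposition~\ref{prop::deviation-rate-ms}) yields $R(\tau)\leq 45\sigma_1^\star r\kappa\delta$. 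Choosing the initialization $\X_0=\zero_{n\times r'}\in\cM$ makes $\norm{\X_0^\perp}_{\fro}=0$, so the initialization condition of Theorem~\ref{thm::linear} is trivially satisfied.

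With these ingredients in hand, I would substitute the constants into the generic statement of Theorem~\ref{thm::linear}. Using $\bareg=\Theta(\sigma_r^{\star 3/2})$ and $\bareH^2/\rho=\Theta(\sigma_r^{\star 2}/\sqrt{\sigma_1^\star})$, the regularized tolerance becomes $\bar\epsilon=\Theta\!\left(\min\{\bareg,\bareH^2/\rho\}\right)=\Theta(\sigma_r^{\star 2}/\sqrt{\sigma_1^\star})$, matching the statement. Plugging $\alpha=\tfrac{1}{2}\sigma_r^\star$, $L\zeta=\Theta(\sigma_1^\star\sqrt{\sigma_r^\star})$, and $\eta=\Theta(1/\sigma_1^\star)$ into the local-refinement length yields $T'=\Theta\!\left(\tfrac{1}{\eta\alpha}\log(L\zeta/\epsilon)\right)=\Theta(\kappa\log(\sigma_1^\star/\epsilon))$. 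The final error guarantee $\dist(\X_T,\cX^\star)\leq\epsilon/(15\sigma_1^\star)$ comes from $\dist(\X_T,\cX_{\cM})\leq\epsilon/L$ together with the identification $\cX_\cM=\cX^\star$, which was noted after \eqref{def: implicit region} and holds whenever $\cA$ satisfies $(\delta,5r)$-RIP with $\delta\leq\tfrac{1}{10}$.

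The main obstacle, and where the RIP requirement $\delta=O(1/(r^2\kappa^5\log^4))$ is forced, is verifying the deviation-rate Condition~C3 of Theorem~\ref{thm::linear}, namely $R(\tau)=O(1/(\eta T))$. Combining $R(\tau)\leq 45\sigma_1^\star r\kappa\delta$ with the iteration count $T=O\!\left(\tfrac{\Delta_f}{\eta\bar\epsilon^2}\cdot\text{polylog}\right)$ requires $\delta$ to be small against $\Delta_f$, $\kappa$, and $r$. I would bound $\Delta_f=f(\zero)-\min f\leq f(\zero)=\tfrac{1}{4}\norm{\cA(\M)}^2\leq\tfrac{1+\delta}{4}\norm{\M}_{\fro}^2\leq r\sigma_1^{\star 2}$ using RIP; substituting $\bar\epsilon^2=\Theta(\sigma_r^{\star 4}/\sigma_1^\star)$ into $T$ yields $T=O(\tfrac{r\kappa^3}{\eta\sigma_r^\star}\cdot\text{polylog})$, so the requirement $\sigma_1^\star r\kappa\delta=O(1/(\eta T))$ reduces to $\delta=O(1/(r^2\kappa^5\log^4))$, exactly the stated hypothesis.

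The remaining bookkeeping is straightforward: the perturbation-radius specification in Theorem~\ref{thm::linear} reads $\gamma=O\!\left(\min\{\bar\epsilon^{3/2}\epsilon/(\sqrt{\rho}L\Delta_f),\bar\epsilon^{7/2}/(\rho^{3/2}\Delta_f^2)\}\log^{-7}\right)$; substituting $\bar\epsilon=\Theta(\sigma_r^{\star 2}/\sqrt{\sigma_1^\star})$, $L=\Theta(\sigma_1^\star)$, $\rho=\Theta(\sqrt{\sigma_1^\star})$, and $\Delta_f=O(r\sigma_1^{\star 2})$ and simplifying reduces the first term to $O(\epsilon/(r\kappa^3\sigma_1^\star))$ and the second to $O(\sqrt{\sigma_r^\star}/(r^2\kappa^{6.5}))$, matching the stated bound on $\gamma$. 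Finally, I would assemble the total iteration count by adding the IPGD phase $O(\Delta_f/(\eta\bar\epsilon^2)\cdot\text{polylog})=O(\tfrac{r\kappa^3}{\eta\sigma_r^\star}\cdot\text{polylog})$ and the local-refinement phase $T'=O(\kappa\log(\sigma_1^\star/\epsilon))$, and conclude with the success probability $1-\chi$ inherited from Theorem~\ref{thm::linear}.
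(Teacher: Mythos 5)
Your proposal follows exactly the same route as the paper: invoke \Cref{thm::linear} with the structural ingredients from \Cref{lemma::smoothness-matrix,lem::GD-on-the-manifold-matrix-sensing,prop::matrix-factorization-strict-saddle-local-linear-convergence,prop::deviation-rate-ms}, bound $\Delta_f$ via RIP, trace the $\delta$ requirement to Condition~C3, and identify $\XM=\cX^\star$. The constant substitutions for $\bar\epsilon$, $\gamma$, $T'$, and $T$ all check out, so the argument is correct and matches the paper's proof in both structure and detail.
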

\end{sloppypar}
As an immediate consequence of the above theorem, for sufficiently small $\epsilon$, upon setting the perturbation radius $\gamma=\tilde O\left(\frac{\epsilon}{r\kappa^{3}\sigma_1^{\star}}\right)$ and stepsize $\eta=\Theta\left(\frac{1}{\sigma_1^{\star}}\right)$, with probability at least $1-\chi$, we have $\dist(\X_T, \cX^\star)=O\left(\epsilon/\sigma_1^\star\right)$ after $T=\tilde O(r\kappa^4\log^4(1/\epsilon))$ iterations.

\begin{proof}[Proof of \Cref{thm::global-convergence-matrix-sensing}]
    The proof is an application of \Cref{thm::linear}. Indeed, \Cref{lemma::smoothness-matrix,lem::GD-on-the-manifold-matrix-sensing,prop::matrix-factorization-strict-saddle-local-linear-convergence,prop::deviation-rate-ms} hold under the conditions of \Cref{thm::global-convergence-matrix-sensing}. 
   Specifically, we have
    \begin{itemize}
        \item $f(\X)$ in \eqref{eq::matrix-sensing} is ${(L, \cM_{\text{RIP}}, \tau)}$-gradient-Lipschitz and {${(\rho, \cM_{\text{RIP}}, \tau)}$-Hessian-Lipschitz} with parameters $L = 15\sigma_1^\star$, $\rho = 15\sqrt{\sigma_1^{\star}}$, and $\tau\leq \frac{1}{10}\sqrt{\frac{\sigma_1^{\star}}{d}}$.
        \item $f(\X)$ satisfies the gradient closure property (\Cref{assumption::GD-on-the-manifold}).
        \item $\cM_{\text{RIP}}$ satisfies $\left(\bareg, \bareH,\bareM\right)$-$\cM_{\text{RIP}}$-strict saddle property with parameters $\bareg=\frac{1}{160}\sigma_r^{\star 3 / 2}, \bareH=\frac{1}{5}\sigma_r^{\star},\bareM=\frac{1}{8}\sigma_r^{\star 1 / 2}$. Moreover, it satisfies $\left(\alpha,\beta,\zeta\right)$-$\cM_{\text{RIP}}$-regularity property with parameters $\alpha=\frac{1}{2} \sigma_r^{\star}, \beta=\frac{425}{64} \sigma_1^{\star}, \zeta=\frac{1}{4}\sigma_r^{\star 1 / 2}$.
        \item The deviation rate satisfies $R(\tau)\leq 45\sigma_1^\star r\kappa\delta$ for any $\tau\leq \frac{1}{10}\sqrt{\frac{\sigma_1^\star}{d}}$.
    \end{itemize}
    We now specify the input parameters required for the IPGD+ algorithm.
    \begin{itemize}
        \item Due to our choice of the initial point and the definition of the implicit region in \Cref{def: implicit region}, we have $\norm{\X_0^\perp}=0$. Moreover, due to RIP, we have $\Delta_f= f(\X_0)-f^\star=\frac{1}{4}\norm{\cA(\mTheta^\star)}^2\leq \frac{1+\delta}{4}\norm{\mTheta^\star}_{\fro}^2\leq \frac{r}{2}\sigma_1^{\star 2}$.
        \item The stepsize satisfies $\eta \leq\frac{1}{10\max\{\alpha, \beta, L\}}=\frac{1}{150\sigma_1^\star}$.
        \item the gradient threshold satisfies $\bar\epsilon= O\left(\min\left\{\bareg, \frac{\bareH^2}{\rho}\right\}\right)=O\Big(\frac{\sigma_r^{\star 2}}{\sqrt{\sigma_1^\star}}\Big)$.
        \item The perturbation radius $\gamma$ satisfies
        \begin{equation}
            \begin{aligned}
                \gamma &=O\bigg(\min\bigg\{\frac{\bar\epsilon^{3/2}\cdot\epsilon}{\sqrt{\rho}L\Delta_f}\cdot \log^{-7}\left(\frac{\rho d\Delta_f}{\chi\epsilon}\right), \frac{\bar\epsilon^{7/2}}{\rho^{3/2}\Delta_f^2}\cdot \log^{-7}\left(\frac{\rho d\Delta_f}{\chi\epsilon}\right) \bigg\}\bigg)\\
                &=O\left(\min\left\{\frac{\epsilon}{r\kappa^3\sigma_1^\star}\cdot\log^{-7}\left(\frac{\sigma_1^\star d}{\chi\epsilon}\right), \frac{\sqrt{\sigma_r^\star}}{r^2\kappa^{6.5}}\cdot\log^{-7}\left(\frac{\sigma_1^\star d}{\chi\epsilon}\right)\right\}\right).
            \end{aligned}
        \end{equation}
        \item The iteration count $T'$ satisfies 
        \begin{equation}
            T'=O\left(\frac{1}{\eta\alpha}\log\left(\frac{L\zeta}{\epsilon}\right)\right) = O\left(\kappa\log\left(\frac{\sigma^\star_1\sqrt{\sigma^\star_r}}{\epsilon}\right)\right)=O\left(\kappa\log\left(\frac{\sigma^\star_1}{\epsilon}\right)\right)
        \end{equation}
    \end{itemize}
    Under these settings, \Cref{thm::linear} guarantees that after 
    \begin{equation}
        \begin{aligned}
            T&= O\left(\frac{\Delta_f}{\eta\bar\epsilon^2}\left(\log^4\left(\frac{1}{\gamma}\right)+\log^4\left(\frac{\rho d\Delta_f}{\chi\bar\epsilon}\right)\right)+\frac{1}{\eta\alpha}\log\left(\frac{L\zeta}{\epsilon}\right)\right)\\
            &=O\left(\frac{r\kappa^3}{\eta\sigma_r^\star}\left(\log^4\left(\frac{1}{\gamma}\right)+\log^4\left(\frac{\sigma_1^\star d}{\chi}\right)\right)+\kappa\log\left(\frac{\sigma^\star_1}{\epsilon}\right)\right)\nonumber
        \end{aligned}
    \end{equation}
        \noindent iterations, IPGD+ outputs a solution $\X_T$ satisfying $\dist(\X_T, \cX_{\cM_{\text{RIP}}})\leq \frac{\epsilon}{15\sigma_1^\star}$. Here, we set $\delta = O\Big(\frac{1}{r^{2}\kappa^{5}\left(\log^4(1/\gamma)+\log^4(\sigma_1^\star d/\chi)\right)}\Big)$ to ensure that $R(\tau)=O(1/(\eta T))$, as required by \Cref{thm::linear}. Moreover, since $\cX_{\cM_{\text{RIP}}}=\cX^\star$ when $\delta\leq \frac{1}{10}$, it follows that $\dist(\X_T, \cX^\star)\leq \frac{\epsilon}{15\sigma_1^\star}$.
    
\end{proof}

\paragraph{Initialization} To apply IPGD+ to the over-parameterized matrix sensing problem, we initialize the algorithm at the origin. It is straightforward to verify that the origin belongs to $\cM_{\text{RIP}}$ and is a strict saddle point of \eqref{eq::matrix-sensing}. As a result, IPGD+ automatically applies a small random perturbation to the initial point, allowing the algorithm to escape this saddle point. This strategy is akin to the small random initialization used for GD in \cite{stoger2021small}.
\begin{sloppypar}
    \paragraph{Iteration complexity} The convergence rate of IPGD+ toward the true rank-$r$ solution scales as $O(\log^4(1/\epsilon))$.
While this convergence rate is significantly better than that of general saddle---escaping methods such as PGD—which fail to converge to the true solution in the over-parameterized setting where $r'>r$---it remains worse than the linear convergence rate of $O(\log(1/\epsilon))$ achieved by GD with small random initialization \cite{stoger2021small}. This suboptimality primarily arises because our algorithm is designed to handle the worst-case landscape, where it may encounter up to $O(\log^3(1/\epsilon))$ saddle points. In contrast, \cite{jin2023understanding} shows that for over-parameterized matrix sensing, GD encounters at most $r$ saddle points before reaching the true rank-$r$ solution. We believe this insight can also be incorporated into the analysis of IPGD+ to further improve its convergence rate.
\end{sloppypar}
\paragraph{Sample complexity}  \Cref{thm::global-convergence-matrix-sensing} requires the linear operator $\cA$ to satisfy the $(\delta, 6r)$-RIP with $\delta = \Tilde{O}\left(\frac{1}{r^2\kappa^5}\right)$. When the entries of the measurement matrices are drawn independently from a standard Normal distribution, the sample complexity needed to ensure $\delta = \Tilde{O}\left(\frac{1}{r^2\kappa^5}\right)$ is $\Tilde{\Omega}(\kappa^{10} d r^5)$. This bound achieves optimal dependence on the ambient dimension $d$ and, notably, is independent of the over-parameterized rank $r'$, which can be as large as $d$.

\section{Numerical Verification Beyond Over-parameterized Matrix Sensing}
\label{sec::simulation}
In this section, we empirically examine the performance of our algorithm in over-parameterized models beyond the symmetric matrix sensing setting. Specifically, we focus on two problem classes: \textit{low-rank matrix recovery} (in both symmetric and asymmetric forms) and \textit{sparse recovery}. Throughout all experiments, we employ IPGD+, which consistently demonstrates a linear convergence rate in practice.

\paragraph{Low-rank matrix recovery} 
Recall that the goal in low-rank matrix recovery is to recover a ground-truth rank-$r$ matrix $\mTheta^\star$ under various loss functions. We consider both \textit{symmetric} and \textit{asymmetric} variants. In the symmetric low-rank matrix recovery, the ground truth matrix $\mTheta^\star\in \bR^{n\times n}$ is assumed to be PSD. We consider the reparameterization map $\mTheta=\X\X^\top$ with $\X\in \bR^{n\times r'}$ and $r'\geq r$, and optimize the loss function $f(\X)=L(\X\X^\top)$. For this family of problems, we take the implicit region as $\cM = \{ \X \in \mathbb{R}^{n \times r'}: \mathrm{rank}(\X) \le r \}$. We emphasize that this definition of $\cM$ is similar to \Cref{def: implicit region}, with the key difference that the constraints $\norm{\proj_{\V^{\star\perp}}\proj_{\X}}\leq 4\sqrt{r}\kappa\delta$ and $\norm{\X}\leq 2\sqrt{\sigma_1^{\star}}$ are omitted. This simpler formulation of the implicit region is more broadly applicable across various low-rank matrix recovery problems and, notably, does not depend on the specific observation model used (such as the RIP). Even within the aforementioned matrix sensing setting, we have observed in practice that the omitted constraints do not become active. In other words, $\X^\sharp_t = \cP_{\cM}(\X_t)$ automatically satisfies $\big\|\proj_{\V^{\star\perp}}\proj_{\X^\sharp_t}\big\|\leq 4\sqrt{r}\kappa\delta$ and $\big\|\X^\sharp_t\big\|\leq 2\sqrt{\sigma_1^{\star}}$ for almost all iterations $t$.

\begin{sloppypar}
   In the asymmetric case, the target rank-$r$ matrix $\mTheta^\star \in \mathbb{R}^{n_1 \times n_2}$ is assumed to be potentially indefinite and rectangular. We consider the reparameterization map $\mTheta = \X \Y^\top$, where $\X \in \mathbb{R}^{n_1 \times r'}$ and $\Y \in \mathbb{R}^{n_2 \times r'}$ with $r' \geq r$, and optimize the loss function $f(\X, \Y) = L(\X \Y^\top)$. We take the rank-balanced implicit region as $\cM = \{ (\X, \Y) \in \mathbb{R}^{n_1 \times r'} \times \mathbb{R}^{n_2 \times r'}: \mathrm{rank}(\X) \leq r,\ \mathrm{rank}(\Y) \leq r \}$.
\end{sloppypar}

Beyond matrix sensing, we also consider the following two settings:

\begin{itemize}
    \item \textit{Matrix completion \cite{johnson1990matrix, candes2012exact}}: Our goal is to recover a rank-$r$ matrix $\mTheta^\star$ from a subset of its observed entries $\Omega \subseteq [n] \times [n]$. The loss function is defined as
\begin{equation}
    L(\mTheta) = \sum_{(i, j) \in \Omega} \left(\mTheta_{i, j} - \mTheta^\star_{i, j} \right)^2.
\end{equation}
We generate $\Omega$ uniformly at random with $|\Omega| = np$, where the observation probability is fixed at $p = 0.8$ in all experiments.

    \item \textit{1-bit matrix completion \cite{davenport20141}}: This problem extends matrix completion by observing only binary (1-bit) measurements of the underlying matrix $\mTheta^\star$. Each entry $(i, j)$ is observed as $1$ with probability $\sigma(\mTheta^\star_{i,j})$ and $0$ otherwise, Here, $\sigma(x) = (1 + e^{-x})^{-1}$ denotes the sigmoid function. The corresponding expected loss function is 
\begin{equation}
    L(\mTheta) = \sum_{i, j} \left( \log\left(1 + \exp\left(\mTheta_{i, j}\right)\right) - \sigma(\mTheta^\star_{i, j}) \mTheta_{i, j} \right).
\end{equation}   

\end{itemize}

In all of our experiments, we set $n_1 = n_2 = n = 20$, $r'=20$, and $r = 3$. For the symmetric variants, the ground-truth matrix is constructed as $\mTheta^\star = \mathbf{U}^\star \mathbf{\Sigma}^\star \mathbf{U}^{\star\top}$, where $\mathbf{U}^\star \in \mathbb{R}^{n \times r}$ is a randomly generated orthogonal matrix, and $\mathbf{\Sigma}^\star \in \mathbb{R}^{r \times r}$ is a diagonal matrix with entries $10$, $5$, and $1$. Similarly, for the asymmetric variants, the ground-truth matrix is constructed as $\mTheta^\star = \mathbf{U}^\star \mathbf{\Sigma}^\star \mathbf{V}^{\star\top}$, where $\mathbf{U}^\star \in \mathbb{R}^{n_1 \times r}$ and $\mathbf{V}^\star \in \mathbb{R}^{n_2 \times r}$ are randomly generated orthogonal matrices, and $\mathbf{\Sigma}^\star$ is the same diagonal matrix as in the symmetric case. In all experiments, the initial points are initialized as all-zero matrices. The perturbation radius is set to $\gamma = 1 \times 10^{-15}$, and the gradient norm threshold is set to $G = 1 \times 10^{-7}$. We use $F = 1 \times 10^{-10}$ and $T_{\mathrm{escape}} = 50$ in all settings. The step size is set to $\eta = 0.1$ for matrix sensing and 1-bit matrix completion, and to $\eta = 0.02$ for matrix completion.

\Cref{fig:simu} presents the behavior of IPGD+ (\Cref{alg::tsp-gd-local-refinement}) applied to the symmetric and asymmetric variants of matrix sensing, matrix completion, and 1-bit matrix completion.
Our first key observation is that IPGD+ not only stays close to the defined implicit region $\cM$---as evidenced by the consistently small residual norm---but also efficiently converges to a ground-truth solution in $\XM$. Notably, the final distance to $\XM$ precisely matches the residual norm across all instances considered. This phenomenon indicates that our signal-residual decomposition provides an accurate characterization of the performance of IPGD+.  
Moreover, a closer examination of \Cref{fig:simu} reveals that the cumulative effect of the deviation rate $r(\cdot)$ (\Cref{def::deviation_rate}) ultimately governs the final residual norm. Specifically, it suggests that a larger cumulative deviation rate, quantified as $\bar r_T = \sum_{t=0}^{T-1} r(\x_t)$, leads to a faster growth in the residual norm. This observation aligns with the bound established in \Cref{eq: x perp bound}. Later in this section, we provide a more detailed analysis of the relationship between the cumulative deviation rate and the final residual norm.

\begin{figure}[t]
     \centering
     \subfigure[Sym.~Matrix~sensing]{
         \centering
         \includegraphics[width=0.26\textwidth]{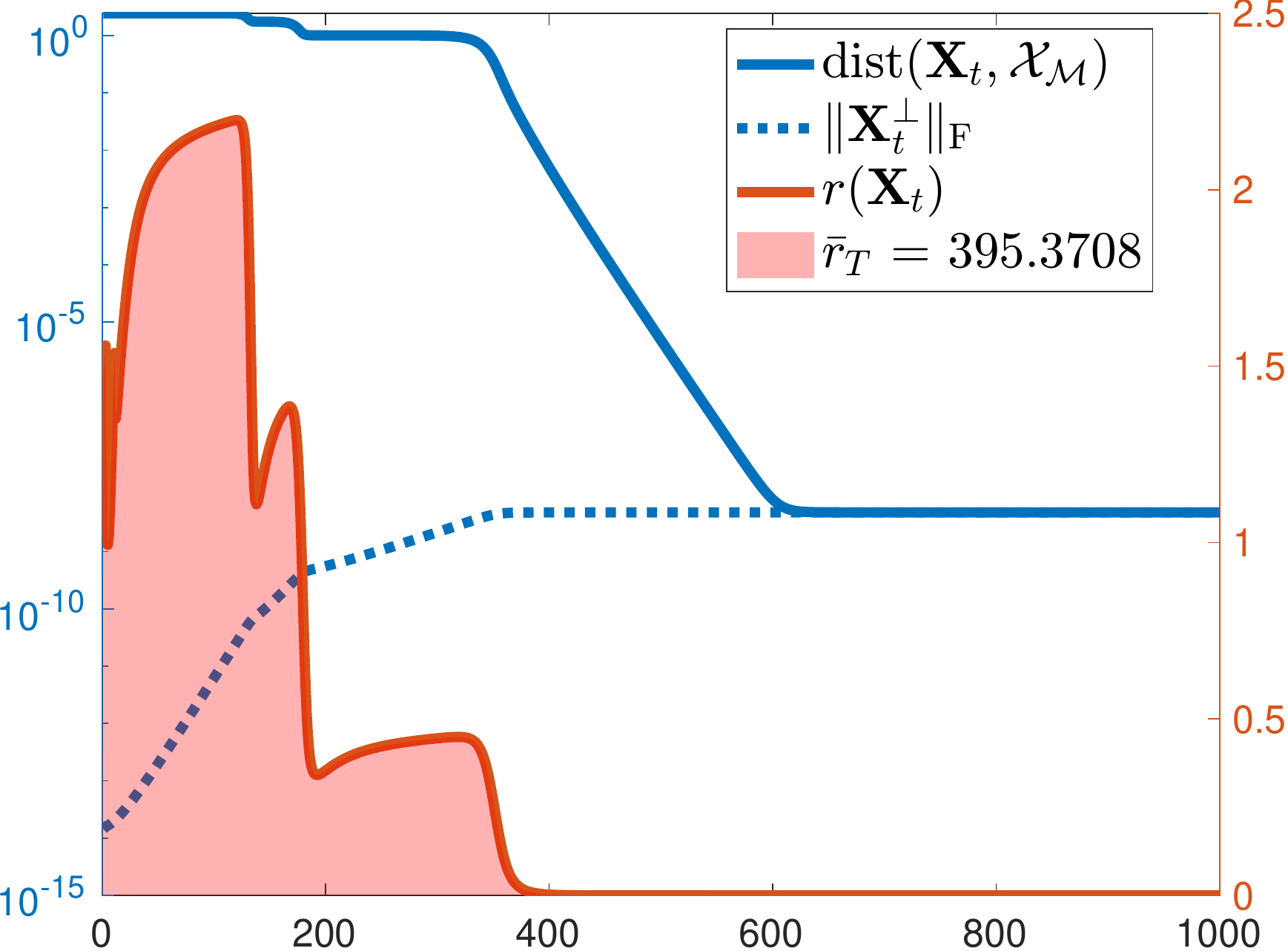}
         \label{fig:ms sym}
     }
     \subfigure[Sym. matrix completion]{
         \centering
         \includegraphics[width=0.26\textwidth]{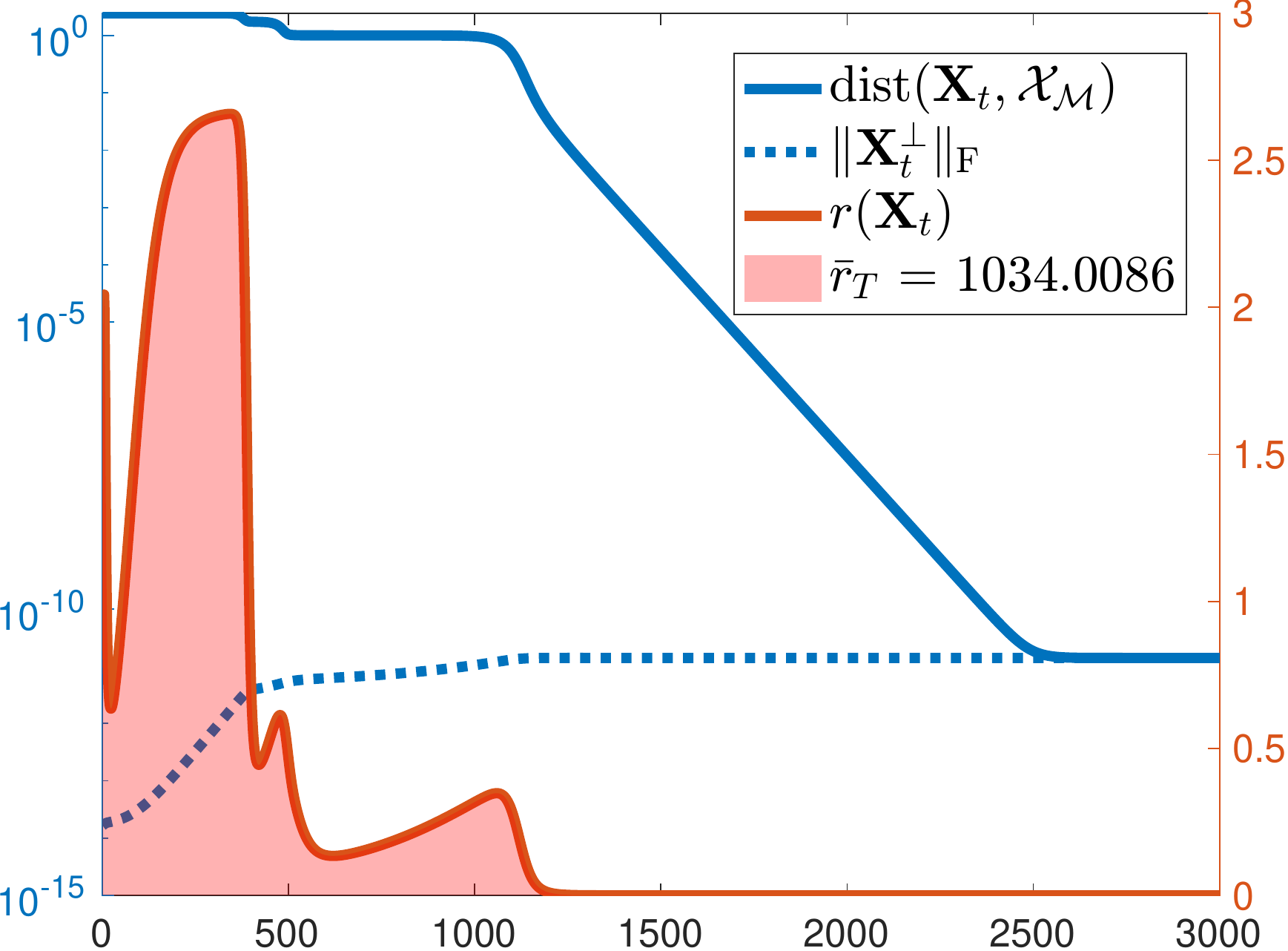}
         \label{fig:mc sym}
     }
     \subfigure[Sym.~1-bit~completion]{
         \centering
         \includegraphics[width=0.26\textwidth]{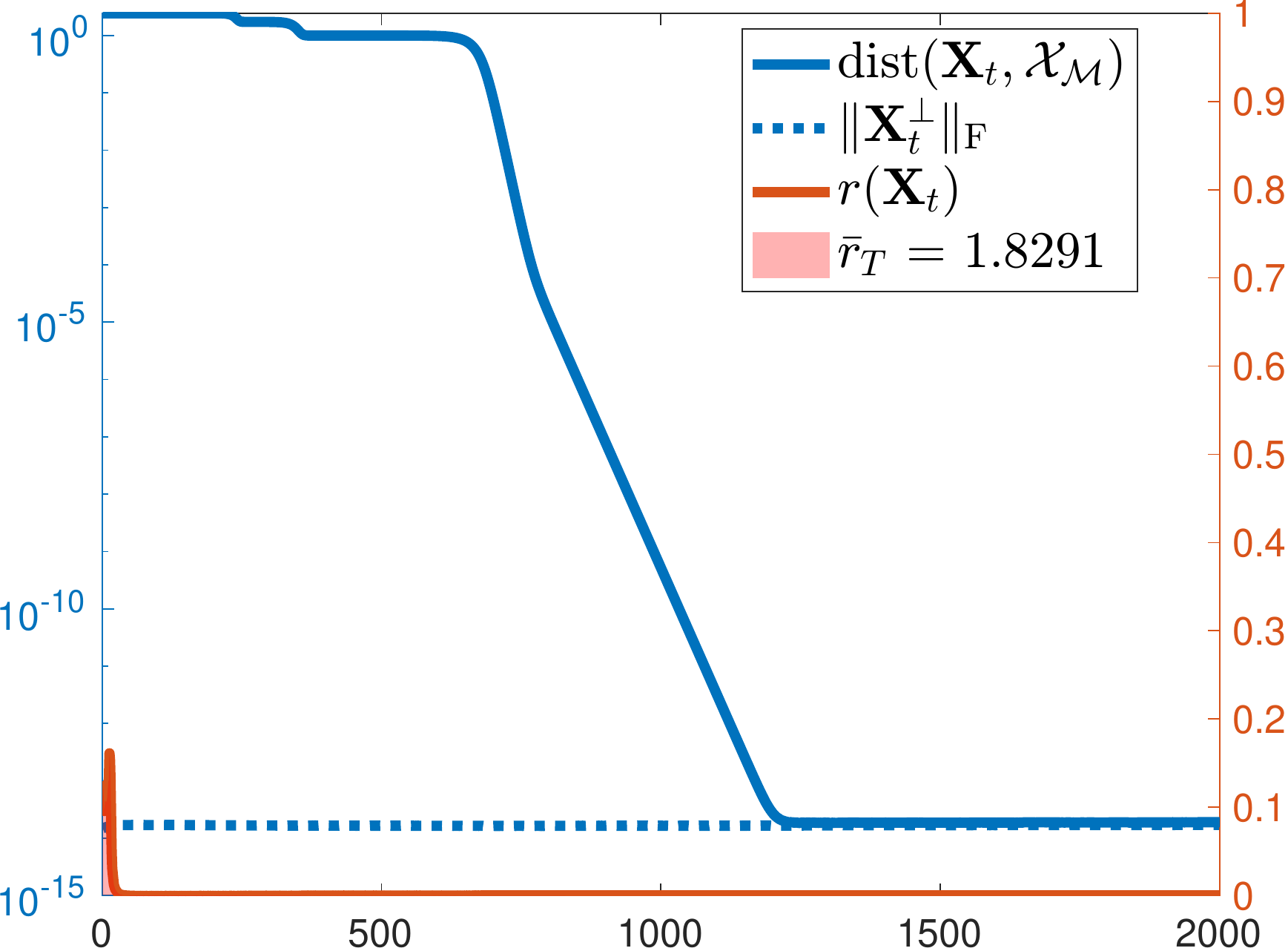}
         \label{fig:1bit sym}
     }
     \\
     \subfigure[Asym. matrix sensing]{
         \centering
         \includegraphics[width=0.26\textwidth]{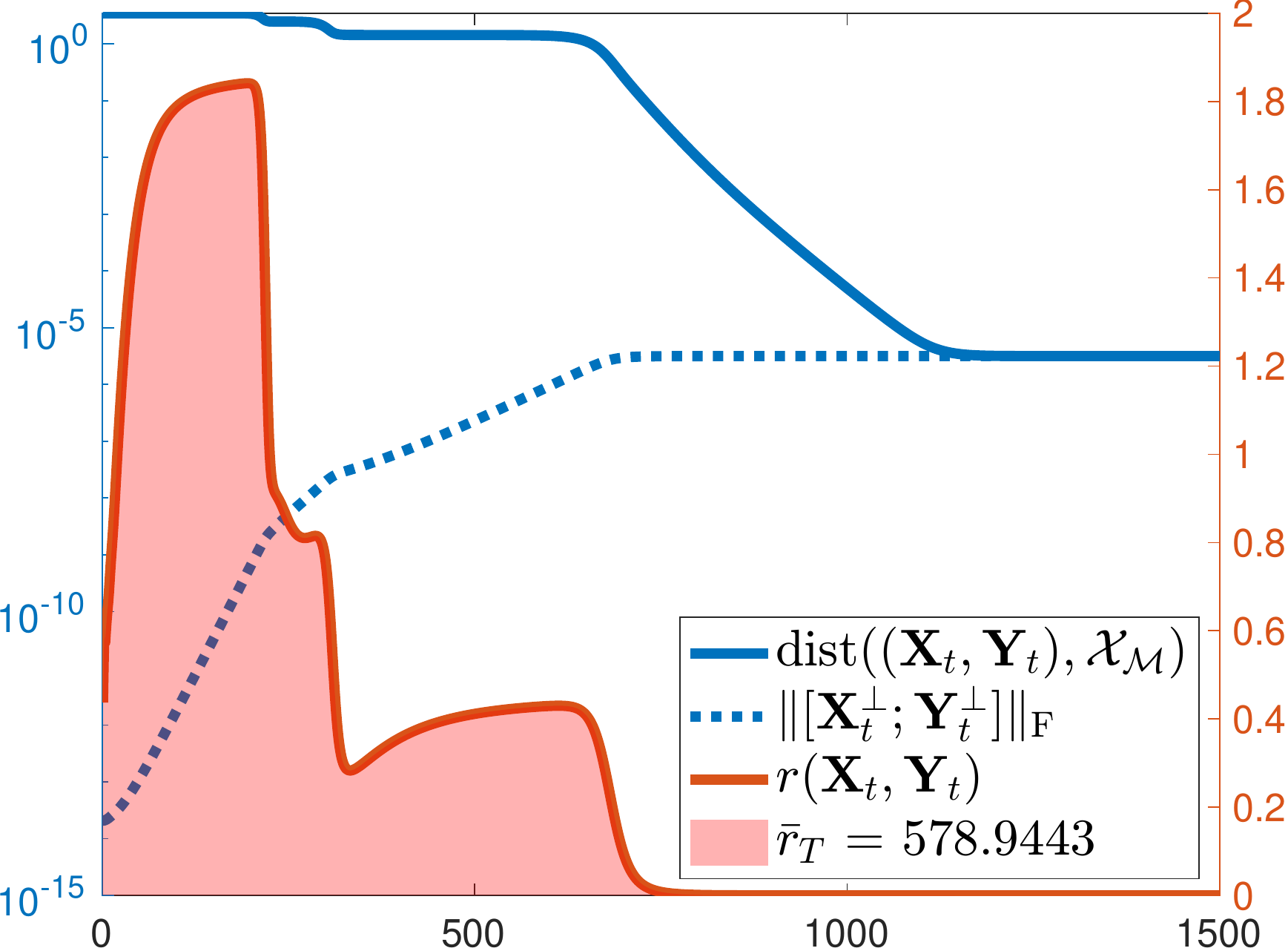}
         \label{fig:ms asym}
     }
     \subfigure[Asym. matrix completion]{
         \centering
         \includegraphics[width=0.26\textwidth]{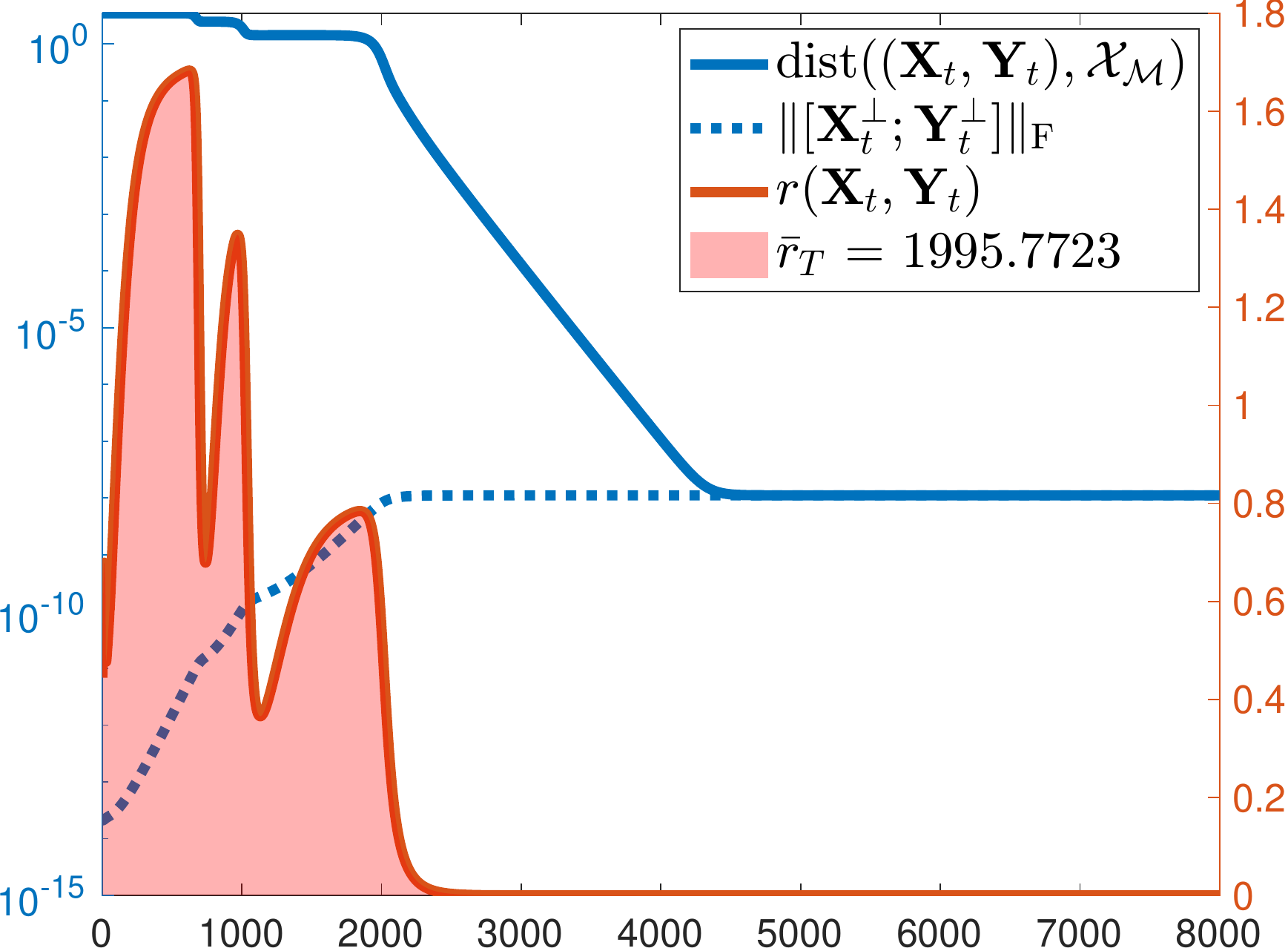}
         \label{fig:mc asym}
     }
     \subfigure[Asym. 1-bit completion]{
         \centering
         \includegraphics[width=0.26\textwidth]{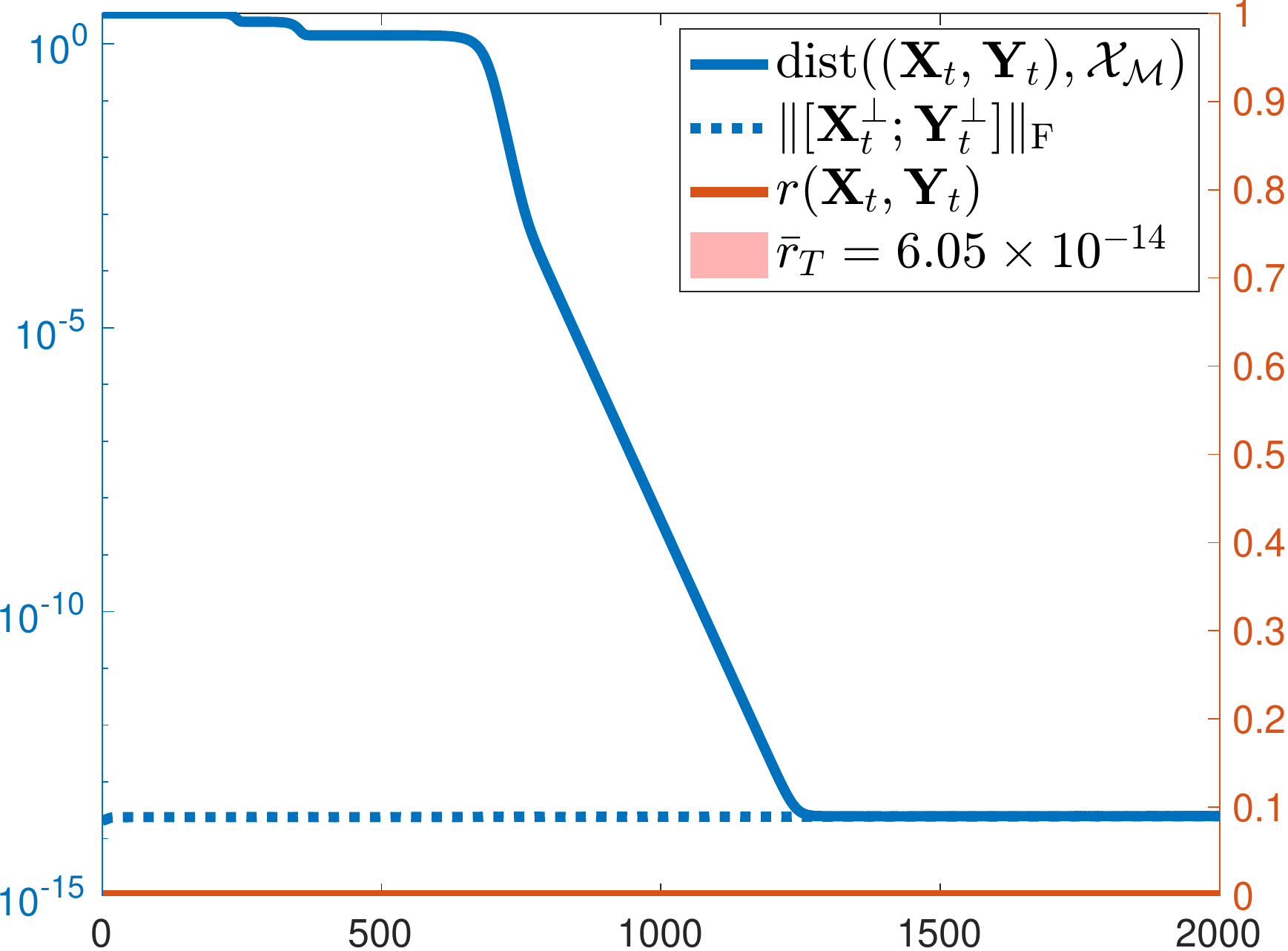}
         \label{fig:1bit asym}
     }
        \caption{{\bf Behavior of IPGD+ on low-rank matrix recovery.} The deviation rate (denoted as $r(\cdot)$) and its cumulative counterpart (shown as $\bar r_T$) serve as strong indicators of whether IPGD+ converges to the ground truth. A large value of $\bar r_T$ suggests that the residual norm grows significantly, indicating that IPGD+ is directing the iterations away from the implicit region $\cM$. Conversely, a small value of $\bar r_T$ implies that the residual norm remains small, thereby keeping the iterations close to the implicit region $\cM$.}
        \label{fig:simu}
\end{figure}

\paragraph{Sparse recovery}\label{subsec::sparse-recovery} In this problem, the goal is to recover a $k$-sparse vector $\mtheta^{\star} \in \bR^d$ from a dataset $\{(\x_i, y_i)\}_{i=1}^N$, where $y_i = \inner{\mtheta^{\star}}{\x_i}$. We use the over-parameterized model $\mtheta^\star = \bu \odot \bv$, and minimize the objective function 
\begin{equation}
    f(\bu, \bv) = \sum_{i=1}^N \left( y_i - \inner{\bu \odot \bv}{\x_i} \right)^2.
    \label{eq::sparse-recovery}
\end{equation}
We use the implicit region defined in \Cref{example::sparse-recovery}: $\cM_{\text{balanced}} = \{(\u,\v) : \u_i = \v_i = 0 \ \text{if} \ \mtheta^\star_i = 0\}$.

\begin{figure}
     \begin{center}
\includegraphics[width=0.4\textwidth]{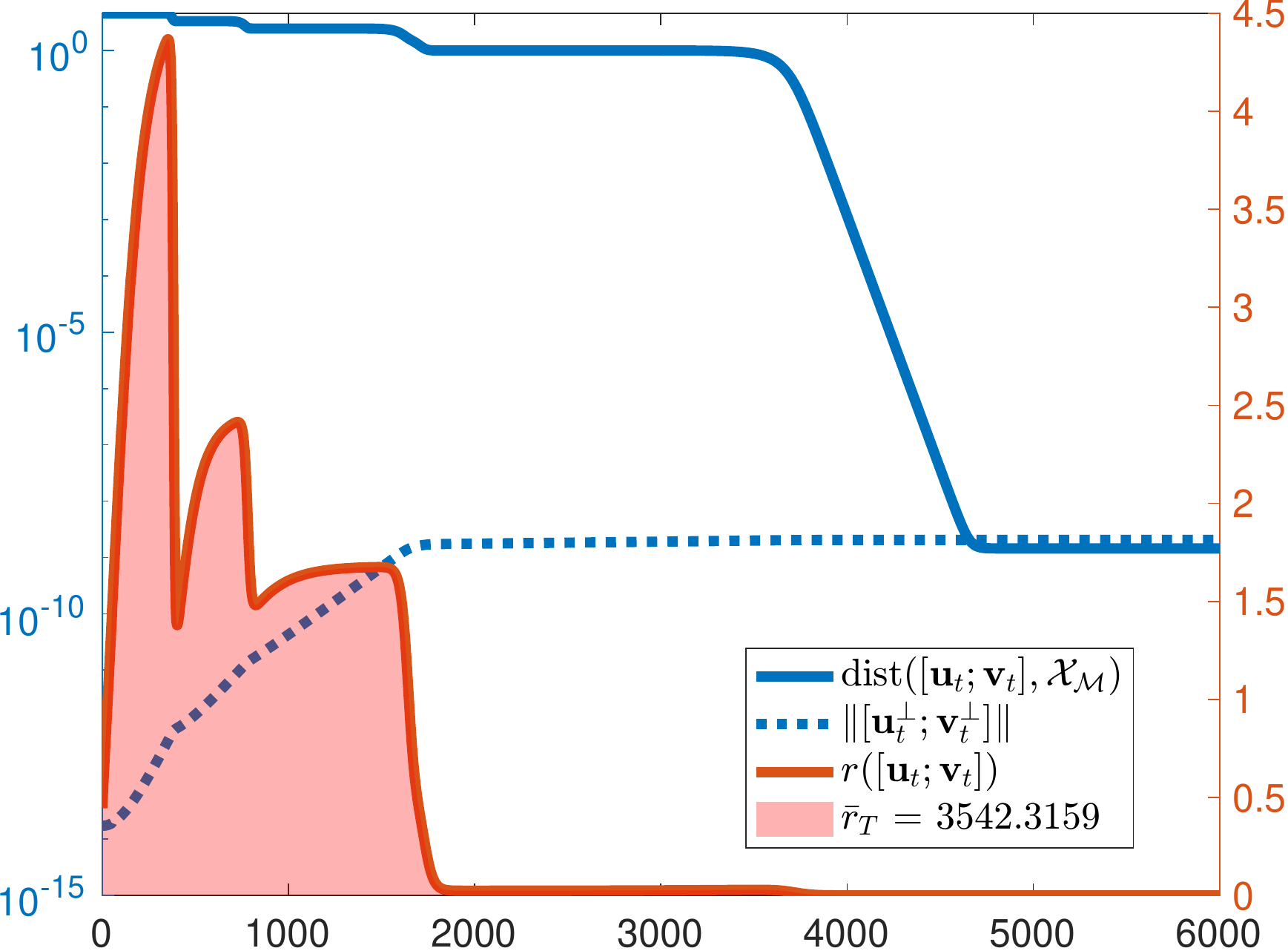}
\caption{{\bf Behavior of IPGD+ on sparse recovery.} Despite the rapid growth of the residual norm during the early iterations of IPGD+, driven by the large cumulative deviation rate, the residual norm eventually stabilizes at approximately $10^{-9}$, resulting in a final solution error of the same magnitude.}\label{fig::sparse-recovery}
  \end{center}
\end{figure}

In our experiments, we set $d = 150$, $N = 300$, and $k = 5$. The ground-truth vector is set to an all-zero vector, except for its first five entries, which are set to \(10\), \(-5\), \(3\), \(-2\), and \(1\), respectively. The entries of $\{\x_i\}_{i=1}^N$ are independently drawn from a standard normal distribution. As in previous case studies, the initialization is set to an all-zero vector, with a perturbation radius of $\gamma = 1 \times 10^{-15}$. We set the step size to $\eta = 1 \times 10^{-2}$, the gradient norm threshold to $G = 1 \times 10^{-7}$, the escape time to $T_{\mathrm{escape}} = 50$ and the termination threshold to $F = 1 \times 10^{-10}$.

\Cref{fig::sparse-recovery} shows the performance of IPGD+ on a sparse recovery instance. Similar to the low-rank matrix recovery, the final solution produced by IPGD+ lies in close proximity to the ground truth, with an error matching the residual norm. Notably, the observed cumulative deviation rate is larger compared to the instances of low-rank matrix recovery considered, leading to a rapid \(10^5\)-fold increase in the residual norm during the early iterations of the algorithm. Nonetheless, despite this steep rise, the final residual norm remains below $10^{-8}$, coinciding with the final error of the recovered solution.

\paragraph{Effect of the deviation rate on the empirical growth of the residual norm} In our experiments, we observed that a faster growth rate of the residual norm is strongly correlated with a larger cumulative deviation rate. This observation is also supported by our analysis, and in particular by \Cref{eq: x perp bound}, which yields
\begin{equation}\label{eq::cdr_growth}
    \log\left(\frac{\norm{\mathbf{x}^\perp_T}}{\norm{\mathbf{x}^\perp_0}}\right) \le \frac{\eta}{2} \left(\sum_{t=0}^{T-1} r(\x_t)\right) + \frac{\eta \rho}{2} \left(\sum_{t=0}^{T-1} \norm{\xperp_t}\right) = \frac{\eta}{2} \cdot \bar{r}_T + \frac{\eta \rho}{2} \left(\sum_{t=0}^{T-1} \norm{\xperp_t}\right),
\end{equation}
where $\bar{r}_T$ denotes the cumulative deviation rate. We note that the above inequality provides only an upper bound on the growth rate of the residual norm. This naturally raises the follow-up question: how tight is this bound in practice? \Cref{fig::r plot} plots the values of $\log\left(\frac{\norm{\mathbf{x}^\perp_T}}{\norm{\mathbf{x}^\perp_0}}\right)$ against $\eta \bar{r}_T$ for different instances of low-rank matrix recovery and sparse recovery. Empirically, we observe that the cumulative deviation rate and the empirical growth rate of the residual norm follow the relationship
\begin{align}
    \log\left(\frac{\norm{\mathbf{x}^\perp_T}}{\norm{\mathbf{x}^\perp_0}}\right) \approx \frac{\eta}{3} \bar{r}_T,
\end{align}
indicating that \Cref{eq::cdr_growth} is tight up to a factor of $1.5$, modulo the additional term $\frac{\eta \rho}{2} \left(\sum_{t=0}^{T-1} \norm{\xperp_t}\right)$, which remains negligible as $\norm{\xperp_t}$ stays small throughout the iterations of IPGD+. This observation confirms that the deviation rate, identified in our theory as the primary driver of the residual norm growth, also manifests in practice, underscoring the sharpness of our theoretical analysis.

\begin{figure}[t]
    \centering
    \includegraphics[width=0.5\textwidth]{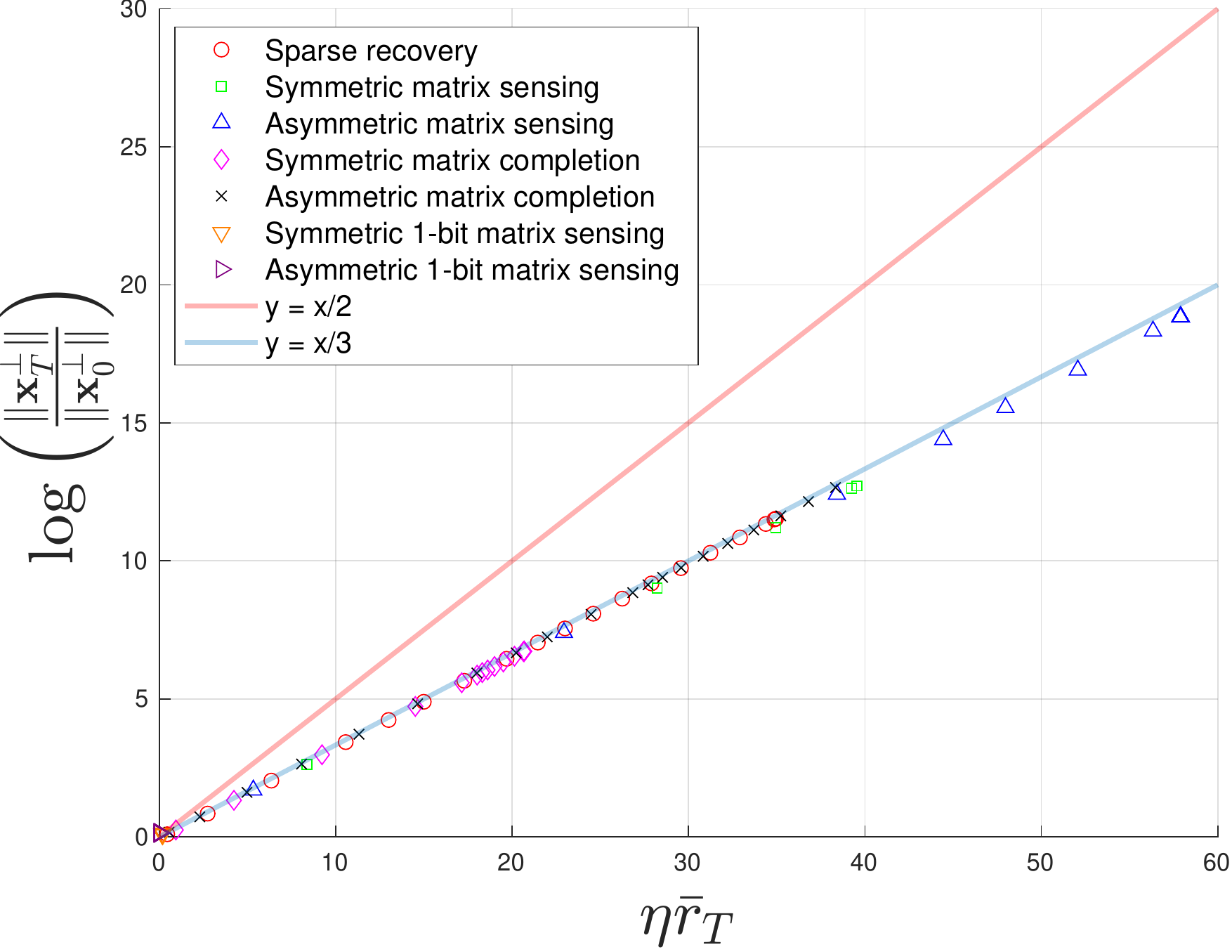}
    \caption{{\bf Cumulative deviation rate vs. empirical growth of the residual norm.}  We report the values of $\log\left(\frac{\|\mathbf{x}^\perp_T\|}{\|\mathbf{x}^\perp_0\|}\right)$ and $\eta \bar r_T$ across different values of $T$ and for various objective functions. Specifically, for each experiment discussed above, we select 20 values of $T$ uniformly spaced over the trajectory. The red line labeled $y=x/2$ represents the bound we derive from \Cref{eq::cdr_growth}. In practice, the values concentrate around the blue line representing $y=x/3$.}
    \label{fig::r plot}
\end{figure}

\section{Conclusion}
\label{sec::conclusion}

In this paper, we study why gradient-based methods in over-parameterized models often enjoy implicit regularization towards low-dimensional solutions, even in the absence of explicit constraints or regularization. To address this question, we investigate the conditions that enable implicit regularization by examining when gradient descent converges to a second-order stationary point (SOSP) within an implicit low-dimensional region of a smooth, nonconvex function. We identify and formalize the key conditions under which such implicit regularization emerges: namely, the combined ability to (i) initialize within a neighborhood of the implicit region, (ii) escape strict saddle points through infinitesimal perturbations, and (iii) remain close to the implicit region by controlling the deviation rate. Building on these insights, we introduce the \textit{infinitesimally perturbed gradient descent} (IPGD) algorithm and show that, with proper initialization and reparameterization, it can provably satisfy all of these conditions. We further analyze the application of IPGD in the context of over-parameterized matrix sensing. Our theoretical results are supported by extensive empirical evidence, suggesting that these principles extend broadly across diverse over-parameterized learning settings.

We now discuss two future directions that our proposed framework could potentially address.

\paragraph*{Beyond gradient descent with infinitesimal perturbations} While this work focuses on GD with infinitesimal perturbations, a natural follow-up question is whether our framework extends to other widely used variants of GD, such as preconditioned~\cite{zhang2021preconditioned, zhang2023preconditioned, tong2021accelerating, xu2023power}, accelerated~\cite{carmon2018accelerated, agarwal2017finding}, or stochastic gradient descent (SGD)~\cite{fang2019sharp}. Take SGD, for instance: unlike IPGD, the perturbations in SGD need not be small; however, they are also not isotropic, meaning they affect the signal and residual norms unevenly. In some cases, such anisotropic perturbations may be advantageous; for example, when they impact the residual norm much less than the signal norm.  An intriguing open question is whether the proposed framework, based on the signal-residual decomposition and the deviation rate from the implicit region, can be generalized to encompass such settings. Developing a broader notion of deviation rate could enable direct, principled comparisons between algorithms. 

\paragraph*{Extension to nonsmooth optimization} One notable limitation of our current framework is its reliance on the gradient and Hessian Lipschitz properties of the objective function within a neighborhood of the implicit region, restricting its applicability to smooth problems. An enticing direction for future work is to explore whether our framework can be extended to nonsmooth settings by leveraging tools from variational analysis. Such an extension would significantly broaden the scope of our framework, encompassing important applications such as robust principal component analysis, robust matrix recovery, and neural networks with nonsmooth activation functions. We emphasize, however, that such an extension is far from trivial, as the challenge of escaping strict saddle points is substantially more delicate in nonsmooth settings. Notably, recent work has shown that even in exactly-parameterized problems, first-order methods such as the stochastic sub-gradient method \cite{davis2025active} and proximal methods \cite{davis2019active} can only escape \textit{active} strict saddle points. This distinction introduces an additional layer of complexity for developing a theoretical understanding of implicit regularization in nonsmooth regimes.

\section*{Acknowledgements}
The authors would like to thank Richard Y. Zhang for his thoughtful comments on the initial draft of this paper.
This research is supported, in part, by NSF CAREER Award CCF-2337776, NSF Award DMS-2152776, and ONR Award N00014-22-1-2127.
The authors have no conflicts of interest or financial ties to disclose.
	
	\bibliographystyle{alpha}
	\bibliography{ref.bib}

	\appendix

\end{document}